\newcolumntype{L}[1]{>{\raggedright\arraybackslash}p{#1}}
\newcommand{\charlie}[1]{}
\newcommand{\rohan}[1]{}
\newcommand{\marcus}[1]{}
\newcommand{\maxH}[1]{}
\newcommand{\halfdan}[1]{}
\newcommand{\joar}[1]{}
\newcommand{\Env}{{E}}
\newcommand{\SSpace}{\mathcal{S}}
\newcommand{\ASpace}{\mathcal{A}}
\newcommand{\Trans}{\mathcal{T}}
\newcommand{\Init}{\mathcal{I}}
\newcommand{\traj}{\xi}
\newcommand{\pol}{\pi}
\newcommand{\PolSpace}{\Pi^{\Env}}
\DeclareMathOperator*{\E}{\mathbb{E}}
\newcommand{\trajexp}[1][]{\E^{\Env\!, \pi_{#1}}_{\! \!\xi}}
\DeclareMathOperator*{\Prob}{\mathbb{P}}
\newcommand{\trajprob}[1][]{\Prob^{\Env, \pi_{#1}}_{ \! \!\xi}}
\newcommand{\indicator}{\mathbbm{1}}
\newcommand{\ObSpec}[1][]{{\mathcal{O}}_{#1}}
\newcommand{\polsucceq}[1][]{\succeq\subalign{ & \Env \\ & \ObSpec[#1]}}
\newcommand{\polsim}[1][]{\sim\subalign{ & \Env \\ & \ObSpec[#1]}}
\newcommand{\Reward}{\mathcal{R}}
\newcommand{\exprsucceq}[1][]{\succeq\subalign{ & \\ & {EXPR}}}
\newcommand{\subalign}[1]{%
  \vcenter{%
    \Let@ \restore@math@cr \default@tag
    \baselineskip\fontdimen10 \scriptfont\tw@
    \advance\baselineskip\fontdimen12 \scriptfont\tw@
    \lineskip\thr@@\fontdimen8 \scriptfont\thr@@
    \lineskiplimit\lineskip
    \ialign{\hfil$\m@th\scriptstyle##$&$\m@th\scriptstyle{}##$\hfil\crcr
      #1\crcr
    }%
  }%
}
\newcommand{\J}[1][]{J\subalign{  & \Env \\ &\ObSpec[#1]  }}
\newcommand{\thmref}[1]
{
  \tl_set:Nx \l_tmpa_tl { \getrefnumber{#1} }
  \seq_set_split:NnV \l_tmpa_seq {.} \l_tmpa_tl
  \hyperlink{\getrefbykeydefault{#1}{anchor}{}}{\seq_item:Nn \l_tmpa_seq {2}}
}
\newcommand{\mthmref}[1]{$\thmref{#1}$}
\newtheorem{theorem}{Theorem}[section]
\newtheorem{corollary}{Corollary}[theorem]
\newtheorem{lemma}[theorem]{Lemma}
\newtheorem{proposition}[theorem]{Proposition}
\theoremstyle{definition}
\newtheorem{definition}{Definition}[section]
\definecolor{myorange}{RGB}{220,130,0}
\definecolor{myblue}{RGB}{50,50,200}
\def\eqref#1{equation~\ref{#1}}
\def\1{\bm{1}}
\DeclareMathAlphabet{\mathsfit}{\encodingdefault}{\sfdefault}{m}{sl}
\SetMathAlphabet{\mathsfit}{bold}{\encodingdefault}{\sfdefault}{bx}{n}
\newcommand{\R}{\mathbb{R}}
\title{On the Expressivity of Objective-Specification Formalisms in Reinforcement Learning}
\author{Rohan Subramani$^{a,b,*}$,
  Marcus Williams$^{a,*}$,
  Max Heitmann$^{a,c,*}$,
  Halfdan Holm$^a$, \\
  \textbf{Charlie Griffin$^{a,c}$,
  Joar Skalse$^{a,c}$} \\[1em]
  \textmd{$^a$AI Safety Hub, $^b$Columbia University, $^c$University of Oxford} \\ \textmd{*Equal contribution. Correspondence to Rohan Subramani at rs4126@columbia.edu. }
}
\begin{document}

\maketitle
\begin{abstract}
  Most algorithms in reinforcement learning (RL) require that the objective is formalised with a Markovian reward function. However, it is well-known that certain tasks cannot be expressed by means of an objective in the Markov rewards formalism, motivating the study of alternative objective-specification formalisms in RL such as Linear Temporal Logic and Multi-Objective Reinforcement Learning. To date, there has not yet been any thorough analysis of how these formalisms relate to each other in terms of their expressivity. We fill this gap in the existing literature by providing a comprehensive comparison of 17 salient objective-specification formalisms. We place these formalisms in a preorder based on their expressive power, and present this preorder as a Hasse diagram. We find a variety of limitations for the different formalisms, and argue that no formalism is both dominantly expressive and straightforward to optimise with current techniques. For example, we prove that each of Regularised RL, (Outer) Nonlinear Markov Rewards, Reward Machines, Linear Temporal Logic, and Limit Average Rewards can express a task that the others cannot. The significance of our results is twofold. First, we identify important expressivity limitations to consider when specifying objectives for policy optimization. Second, our results highlight the need for future research which adapts reward learning to work with a greater variety of formalisms, since many existing reward learning methods assume that the desired objective takes a Markovian form. Our work contributes towards a more cohesive understanding of the costs and benefits of different RL objective-specification formalisms.

\end{abstract}

\section{Introduction} \label{introduction}
There are many ways of specifying objectives in Reinforcement Learning (RL). The most common method is to maximise the expected time-discounted sum of scalar Markovian rewards.\footnote{A reward function is Markovian if it depends only on the most recent transition.}
While this method has achieved wide-ranging success, recent work has identified practical objectives that cannot be specified using standard Markov rewards (\cite{skalse_limitations_2023,abel_expressivity_2022,bowling_settling_2022}). 
Numerous other formalisms have been proposed and utilised for specifying objectives in practice, including Multi-Objective RL 
(\cite{hayes_practical_2022, roijers_survey_2013, coello_coello_evolutionary_2002}), Maximum Entropy RL (\cite{hazan_provably_2019, mutti_challenging_2023, ziebart_maximum_2008}), Linear Temporal Logic (\cite{littman_environment-independent_2017, lahijanian_control_2011, ding_ltl_2011}), and Reward Machines (\cite{icarte_using_2018, toro_icarte_reward_2022, camacho_ltl_2019}). Additionally, there are various abstract formalisms (such as arbitrary functions from trajectories to the real numbers), which are generally intractable to optimise but which capture intuitive classes of objectives. 

In this paper, we comprehensively compare the expressivities of 17 formalisms, which are listed and defined in \Cref{tab:table_of_definitions}. We say that a formalism A can express another formalism B if in all environments, A can express all objectives that B can express. We formalise this notion in \Cref{preliminaries}. Our main results are summarised in Figure \ref{f:Hasse}, a Hasse diagram displaying the relative expressivities of all the formalisms we consider. We find that in many cases, there is no simple answer to the question of which of two formalisms is preferable to use for policy optimisation in practice because each can express objectives that the other cannot. While there are formalisms at the top of our diagram that can express every formalism below them, we suspect that none of these are tractable to optimise in general. Therefore, we advise RL practitioners to familiarise themselves with a variety of formalisms and think carefully about which to select depending on the desired use case. Our analysis describes some of the strengths and limitations of each formalism, which can be used to inform this selection.

Our results are relevant not only for policy optimisation but also for reward learning. Reward learning is often used when it is difficult to hardcode a reward function for a task but easy to evaluate attempts at that task. For instance, (\cite{christiano_deep_2017}) train a reward model which incentivises an RL agent to do backflips in a simulated environment. Notably, the structure of reward models in many prior works have implicitly assumed that the desired task is expressible with Markov rewards, since reward models output real numbers given a transition as input \citep{christiano_deep_2017, hadfieldmenell2016cooperative, skalse_invariance_2023}. Our findings highlight the limitations of Markov rewards and the importance of advancing reward learning methods for the other formalisms we discuss.

In Section \ref{preliminaries}, we briefly introduce core concepts from RL relevant to our paper, make the nature of our formalism comparisons precise, and define the objective-specification formalisms that we are comparing in this work. We provide our primary results in Section \ref{results}. These are summarised in Figure \ref{f:Hasse}, a Hasse diagram that depicts all the expressivity relationships among the formalisms. We also present many of the theorems and propositions on which the diagram is based; the remaining statements and all proofs are found in \Cref{sec: Appendix Theorems and Proofs}. In Section \ref{discussion}, we discuss the implications of our findings in the context of prior research, along with limitations and directions for future work.

\section{Preliminaries}\label{preliminaries}
\subsection{Basic definitions} \label{subsec: Basic definitions}

Many of the concepts in this paper are derived from the study of Markov Decision Processes (MDPs) 
(\cite{sutton2018}). 
An MDP is a 6-tuple $(\SSpace, \ASpace, \Trans, \Init, \Reward, \gamma)$ where $\SSpace$ is a set of states, $\ASpace$ is a set of actions, $\Trans: \SSpace \times \ASpace \to \Delta{\SSpace}$ is a transition function and $\Init \in \Delta \SSpace$ is the initial distribution over states.
In this work, we assume $\SSpace$ and $\ASpace$ are finite 
and only consider stochastic stationary policies $\pol : \SSpace \to \Delta \ASpace$, solutions to the MDP which stochastically output an action in any state.\footnote{We assume that the state and action spaces are finite and only consider stationary policies in order to make our findings more relevant to common RL algorithms. For example, Q-learning only has a convergence guarantee for finite state and action spaces and only considers stationary policies.}
A (rewardless) trajectory is an infinite sequence $\traj = (s_0, a_0, s_1, a_1, \cdots)$ 
such that $a_i \in \ASpace$, $s_i \in \SSpace$ for all $i$, 
where $s_0 \sim \Init$, $a_0 \sim \pol(s_0)$, 
$s_1 \sim \Trans(s_0, a_0)$ 
and so on; we denote the set of (infinite) trajectories as $\Xi = \SSpace \times (\ASpace \times \SSpace)^\omega$.
The reward function $\Reward: \SSpace \times \ASpace \times \SSpace \to \mathbb{R}$ gives a reward at each time step $r_t = \Reward(s_t, a_t, s_{t+1})$, and $\gamma \in [0, 1)$ gives a discount factor.

The return of a trajectory $\xi$ for $\Reward$ and $\gamma$ is $G_{R,\gamma}(\xi) = \sum\limits^{\infty}_{t=0}\gamma^t \Reward (s_t, a_t, s_{t+1})$ and the expected return of a policy is taken over trajectories sampled using the environment and the policy: $\J[MR](\pol) = \mathbb{E}_{\xi \sim \pol,\Trans,\Init}[G(\xi)]$. 

Since we consider various ways to define an objective, we separate our decision process into an \textit{environment} and an \textit{objective specification}. 
\begin{definition}[$\Env$, $\PolSpace$, $Envs$]
    We refer to the 4-tuple $(\SSpace, \ASpace, \Trans, \Init)$ as the environment and denote it $\Env$. We denote the set of finite environments as $Envs$.\footnote{We can formally define $Envs$, the set of environments as $Envs := \{ (\SSpace, \ASpace, \Trans, \Init) \mid S, A \text{ are any two finite sets}, T: \SSpace \times \ASpace \to \Delta \SSpace, \Init \in \Delta(\SSpace)\}$.} A given environment determines the stationary-policy space $\PolSpace = \{\pol \mid \pol : S \to \Delta (A)\}$.
\end{definition}

Using a Markovian Reward $\Reward$ and geometric discounting factor $\gamma$ is one way to specify an objective. However, in this work we want to compare and contrast ways to specify objectives. To do this, we introduce the notion of an \textit{objective specification}.

\begin{definition}[$\ObSpec$, $\polsucceq$] 
    Given an environment $\Env$, an \emph{objective specification} is a tuple $\ObSpec$ that allows us to rank policies in $\PolSpace$. Formally, $\ObSpec$ defines $\polsucceq$ a total preorder over $\PolSpace$. Total preorders are transitive and strongly connected. 
\end{definition}

\begin{definition}(MR, $\ObSpec[MR]$, $\polsucceq$)
    For environment $\Env$, an objective specification in the Markovian reward formalism is a pair $\ObSpec[MR]=(\Reward, \gamma)$ with $\Reward : \SSpace \times \ASpace \times \SSpace \to \mathbb{R}$ and $\gamma \in [0,1)$. The ordering over $\PolSpace$ induced by $\ObSpec[MR]$ is given by: $\pol_1 \polsucceq[MR] \pol_2 \iff J_{(\Reward,\gamma)}(\pol_1) \geq J_{(\Reward,\gamma)}(\pol_2)$. 
\end{definition}

To compare different objective-specification formalisms, it will be important to consider the set of policy orderings possible in a given formalism. Since the set of valid objective specifications and orderings depends on the environment, we define a function $Ord$.

\begin{definition}[Objective specification formalism $X$, $Ord_{X}$ and $Ord_{MR}$]
    An objective-specification formalism $X$ is valid if it defines a function from environments to orderings over policies in that environment.
    Given an objective-specification formalism $X$, we denote the set of possible orderings for a given environment as $Ord_X(E)$ where $Ord_X(E) \subseteq \mathcal{P}(\PolSpace \times \PolSpace)$. For example, $Ord_{MR}(\Env) = \{\polsucceq[MR] \mid R : \SSpace \times \ASpace \times S \to \mathbb{R} \text{ and } \gamma \in [0,1)\}$.
\end{definition}

Finally, we define a partial order expressivity relation over objective specification formalisms.

\begin{definition}[ $\exprsucceq$]. 
    We define an order over objective specification formalisms $\exprsucceq$.  For any two objective specification formalisms $X$ and $Y$, $X \exprsucceq Y$ if and only if $X$ can express all policy orderings that $Y$ can express, in all environments. Formally: 
    \[
    X \exprsucceq Y
    \iff 
    \forall E \in Envs, \  Ord_X(E) \supseteq Ord_Y(E)\]
Note that $\exprsucceq$ is reflexive and transitive.
\end{definition}
 Like \cite{abel_expressivity_2022} and \cite{skalse_limitations_2023}, we focus on the ability of formalisms to express policy orderings, as opposed to alternatives such as their ability to induce a desired optimal policy. One key reason we define expressivity in this way is that it is sometimes infeasible to train an agent to find an optimal policy in practice; in such cases, an objective specification is more likely to provide an effective training signal if it expresses a desired policy ordering than if it merely induces a desired optimal policy. We motivate the choice to focus on policy orderings further in \Cref{subsec: Policy orderings as the measure of expressivity}.\footnote{The definitions in \Cref{subsec: Basic definitions} use detailed indexing to clearly convey that some symbols correspond to specific environments and / or objective specifications. In other sections (particularly \Cref{sec: Appendix Theorems and Proofs}), we often drop these indices when the meaning of the symbols is evident.}

\subsection{Formalism definitions} 

In \Cref{tab:table_of_definitions}, we present the definitions of all the formalisms we consider in this work. These definitions are supplemented with a few additional details in \Cref{additional machinery}. Past work related to many of these formalisms is discussed in \Cref{discussion}. For formalisms which have ambiguous or varying definitions in past literature, we attempt to select the definitions that best allow for meaningful expressivity comparisons. In some cases, the appeal of these formalisms also becomes clearer in the context of our results and proofs.

In \Cref{subsec: Inducing Total Preorders}, we argue that objective specifications in all of the formalisms in \Cref{tab:table_of_definitions} induce total preorders on the set of policies.

\subsubsection{Additional machinery} \label{additional machinery}

Here, we provide a few additional definitions that the formalisms in \Cref{tab:table_of_definitions} depend upon. The first is for Linear Temporal Logic (LTL) formulas, which are required for specifying objectives with the LTL formalism.

\begin{definition} \label{def:LTL  Formula}
    \textit{Linear Temporal Logic Formula}.
    An LTL formula is built up from a set of atomic propositions; the logic connectives: negation ($\neg$), disjunction ($\lor$), conjunction ($\land$) and material implication (→); and the temporal modal operators: next ($\bigcirc$), always ($\square$), eventually ($\diamondsuit$) and until ($\mathcal{U}$). We take the set of atomic propositions to be $\SSpace \times \ASpace \times \SSpace$, the set of transitions. An LTL formula $\varphi$ is either true or false for a trajectory $\xi \in \Xi = \SSpace \times (\ASpace \times \SSpace)^\omega$; we say $\varphi(\xi)=1$ if the formula evaluates to true in $\xi$ and $\varphi(\xi)=0$ if the formula evaluates to false (\cite{littman_environment-independent_2017, manna_temporal_1992, baier_principles_2008}).
\end{definition}

\clearpage
\begin{table}[h] 
\centering
{\tiny
\begin{tabular}{|L{1.7cm}|L{1.2cm}|L{4cm}|L{5.8cm}|}
\hline
\textbf{Formalism Name} & \textbf{Objective Tuple} & \textbf{Types and definitions} & \textbf{Policy ordering method.} \newline For scalar J functions, $\pi_1 \polsucceq \pi_2 \iff$ $J(\pi_1) \geq J(\pi_2)$ \\
\hline
Markov Rewards (MR) & $(\Reward , \gamma)$ & $\Reward : \SSpace \times \ASpace \times \SSpace \to \mathbb{R}$ \newline $\gamma \in [0,1)$ & $J(\pi):=\mathbb{E}_\xi  \left[\sum\limits_{t=0}^\infty \gamma^t \Reward (s_t,a_t,s_{t+1})\right]$   \\
\hline
Limit Average Reward (LAR) & $(\Reward )$ & $\Reward : \SSpace \times \ASpace \times \SSpace \to \mathbb{R}$ & $J(\pi):=\lim\limits_{N \rightarrow \infty} \left[\frac{1}{N}\mathbb{E}_\xi\left[ \sum\limits_{t=0}^{N-1} \Reward (s_t,a_t,s_{t+1})\right]\right]$   \\
\hline
Linear Temporal Logic (LTL) & $(\varphi)$ & $\varphi: \Xi \to \{0,1\}$ is an LTL formula \newline(See \Cref{def:LTL  Formula})

& $J(\pi):=\mathbb{E}_\xi\left[\varphi(\xi)\right]$   \\
\hline

Reward Machines (RM) & $(U,u_0,\delta_U, \newline \delta_{\Reward} , \gamma)$ & $U$ is a finite set of machine states 
\newline $u_0 \in U$ is the start state 
\newline $\delta_U:U \times \SSpace \times \ASpace \times \SSpace \to U$ is a transition function for machine states
\newline $\delta_{\Reward} : U \times U \to [\SSpace \times \ASpace \times \SSpace \to \R]$ 
is a transition function that outputs a reward function given a machine transition 
\newline $\gamma \in [0,1)$ 
& $J(\pi):=\mathbb{E}_\xi\left[\sum\limits_{t=0}^\infty \gamma^t \Reward_t(s_t,a_t,s_{t+1})\right]$, where \newline$\Reward_t = \delta_{\Reward}(u_t,u_{t+1})$ 
 \\
\hline
Inner Nonlinear Markov Rewards (INMR) & $(\Reward,f,\gamma)$ & $\Reward : \SSpace \times \ASpace \times \SSpace \to \mathbb{R}$ \newline $f:\mathbb{R}\to\mathbb{R}$ \newline $\gamma \in [0,1)$ & $J(\pi):=\mathbb{E}_\xi\left[f\left(\sum\limits_{t=0}^\infty \gamma^t \Reward(s_t,a_t,s_{t+1})\right)\right]$   \\
\hline
Inner Multi-Objective RL (IMORL) & \( (k,\Reward,f,\gamma) \) & \( k\in \mathbb{N} \) \newline \( \Reward : \SSpace \times \ASpace \times \SSpace \to \mathbb{R}^k \) is a k-dimensional reward function \newline \( f:\mathbb{R}^k\to\mathbb{R} \)  \newline \( \gamma \in [0,1) \) & \( J(\pi) := \mathbb{E}_\xi\left[f\left(G_1(\xi),...,G_k(\xi)\right) \right]\), where\newline
 \( G_i(\xi) := \sum\limits_{t=0}^\infty \gamma^t \Reward_i(s_t,a_t,s_{t+1}) \) and \newline $\Reward_i$ is the ith dimension of $\Reward$
 \\
\hline
Functions from\newline Trajectories to Reals (FTR) & \( (f) \) & \( f:\Xi \to \mathbb{R} \) & \( J(\pi) := \mathbb{E}_\xi\left[f(\xi)\right] \)   \\
\hline
Regularised RL (RRL) & $(\Reward,\alpha,F,\gamma)$ & $\Reward : \SSpace \times \ASpace \times \SSpace \to \mathbb{R}$ \newline $\alpha \in \mathbb{R}$ \newline $F:\Delta(A)\to\mathbb{R}$ \newline$\gamma \in [0,1)$ & $J(\pi):=\mathbb{E}_\xi\left[\sum\limits_{t=0}^\infty \gamma^t \left(\Reward(s_t,a_t,s_{t+1}) - \alpha F[\pi(s_t)]\right)\right]$   \\
\hline
Outer Nonlinear Markov Rewards (ONMR) & \( (\Reward,f,\gamma) \) & \( \Reward : \SSpace \times \ASpace \times \SSpace \to \mathbb{R} \) \newline \( f:\mathbb{R}\to\mathbb{R} \) \newline \( \gamma \in [0,1) \) & \( J(\pi) := f\left(\mathbb{E}_\xi\left[\sum\limits_{t=0}^\infty \gamma^t \Reward(s_t,a_t,s_{t+1})\right]\right) \)  \\
\hline
Outer Multi-Objective RL (OMORL) & \( (k,\Reward,f,\gamma) \) & \( k\in \mathbb{N} \) \newline \( \Reward : \SSpace \times \ASpace \times \SSpace \to \mathbb{R}^k \) is a k-dimensional reward function \newline \( f:\mathbb{R}^k\to\mathbb{R} \), \( \gamma \in [0,1) \) & \( J(\pi) := f\left(J_1(\pi),...,J_k(\pi)\right) \), where \newline\( J_i(\pi) := \mathbb{E}_\xi\left[\sum\limits_{t=0}^\infty \gamma^t \Reward_i(s_t,a_t,s_{t+1})\right] \)   
\\
\hline
Functions from\newline Occupancy Measures to Reals (FOMR) & \( (f,\gamma) \) & \( f: \vec m(\Pi) \to \mathbb{R} \), \( \gamma \in [0,1) \) & \( J(\pi) := f(\vec m(\pi)) \)\newline(See \Cref{def: Occupancy measures})   \\
\hline
Functions from\newline Trajectory Lotteries to Reals (FTLR) & \( (f) \) & \( f: L_{\Pi_{S,A}} \to \mathbb{R} \) \newline (See \Cref{def:Trajectory Lotteries})
& \( J(\pi) := f(L_{\pi}) \)   \\
\hline
Functions from\newline Policies to Reals (FPR) & $(J)$ & $J: \Pi_{S,A} \to \R$ & $J(\pi) $ (arbitrary)   \\
\hline
Occupancy Measure Orderings (OMO) & $(\gamma,\succeq_m)$ & $\succeq_m$ is a total preorder on $\vec m(\Pi) = \{\vec m(\pi): \pi \in \Pi_{S,A}\}$ \newline $\gamma \in [0,1)$ \newline (See \Cref{def: Occupancy measures})

& $\pi_1 \succeq \pi_2 \iff \vec m(\pi_1) \succeq_m \vec m(\pi_2)$   \\
\hline
Trajectory Lottery Orderings (TLO) & $(\succeq_L)$ & $\succeq_L$ is a total preorder on $L_{\Pi_{S,A}}$ \newline(See \Cref{def:Trajectory Lotteries}) &  $\pi_1 \succeq \pi_2 \iff L_{\pi_1} \succeq_L L_{\pi_2}$  \\
\hline
Generalised Outer Multi-Objective RL (GOMORL) & \( (k,\Reward, \newline \gamma,\succeq_J) \) & \( k\in \mathbb{N} \) \newline \( \Reward : \SSpace \times \ASpace \times \SSpace \to \mathbb{R}^k \) is a k-dimensional reward function \newline \( \gamma \in [0,1) \) \newline \( \succeq_J \) is a total preorder on \( \mathbb{R}^k \) & \( \vec J(\pi) := \left\langle J_1(\pi),...,J_k(\pi) \right\rangle \), where\newline \( J_i(\pi) := \mathbb{E}_\xi\left[\sum\limits_{t=0}^\infty \gamma^t \Reward_i(s_t,a_t,s_{t+1})\right] \) 
\newline $\pi_1 \succeq \pi_2 \iff  \vec J(\pi_1) \succeq_J \vec J(\pi_2) $  \\
\hline
Policy Orderings (PO) & $(\succeq)$ & $\succeq$ is a total preorder on $\Pi_{S,A}$ & $\pi_1 \succeq \pi_2$ (directly
specified)   \\
\hline
\end{tabular}
}
\captionsetup{aboveskip=-.1ex}
\captionsetup{belowskip=-1.2ex}
\caption{This table provides the objective tuple along with definitions for all 17 formalisms considered in this paper. It also explicitly states the method by which each formalism orders policies. All expectations are taken over trajectories in the environment sampled using the policy.}
\label{tab:table_of_definitions}
\end{table}

Next, we define occupancy measures, which are a central object for the formalisms of Functions from Occupancy Measures to Reals and Occupancy Measure Orderings.

\begin{definition} \label{def: Occupancy measures}
    \textit{Occupancy Measures}. Given a policy $\pi$, an environment $\Env = (\SSpace,\ASpace,\Trans,\Init)$, and a discount factor $\gamma$, the occupancy measure $\vec{m}(\pi)$ is a vector in $\mathbb{R}^{|S||A||S|}$, where: $$\vec{m}(\pi)[s,a,s'] := \sum\limits_{t=0}^\infty\gamma^t \mathbb{P}_{\xi \sim \pi, \Trans, \Init}[s_t=s, a_t=a, s_{t+1}=s']$$
\end{definition}
As the name suggests, $\vec m(\pi)$ measures the extent to which a policy "occupies" each transition, in expectation, across the entirety of a trajectory.

For the formalisms of Functions from Trajectory Lotteries to Reals and Trajectory Lottery Orderings, we must define trajectory lotteries. To avoid concerns related to non-measurable sets that arise with generic distributions over the set of all trajectories, we define trajectory lotteries using an infinite sequence of lotteries over finite trajectory segments.
\begin{definition}  \label{def:Trajectory Lotteries}
    \textit{Trajectory Lotteries}. Let $\Xi_k$ be the set of all initial trajectory segments of length $2k+1$. We write $[\xi]_k$ for the first $2k+1$ elements of $\xi$. Define $L_{k,\pi} \in \Delta(\Xi_k) : L_{k,\pi}(\xi_k=( s_0,a_0,...,s_k)) = P_{\xi \sim \pi, T, I}([\xi]_k=\xi_k)$. A trajectory lottery $L_\pi$ is then defined as the infinite sequence of $L_{k,\pi}$ values: $L_\pi := (L_{0,\pi}, L_{1, \pi}, L_{2, \pi}, ...)$. The set of trajectory lotteries generated by any policy in an environment $\Env = (\SSpace,\ASpace,\Trans,\Init)$ is defined to be $L_{\Pi_{S,A}} := \{L_\pi | \pi \in \Pi_{S,A}\}$.
\end{definition}

\section{Results} \label{results}

\begin{wrapfigure}[32]{r}{0.61\textwidth}
    \centering
    \includegraphics[scale=0.12]{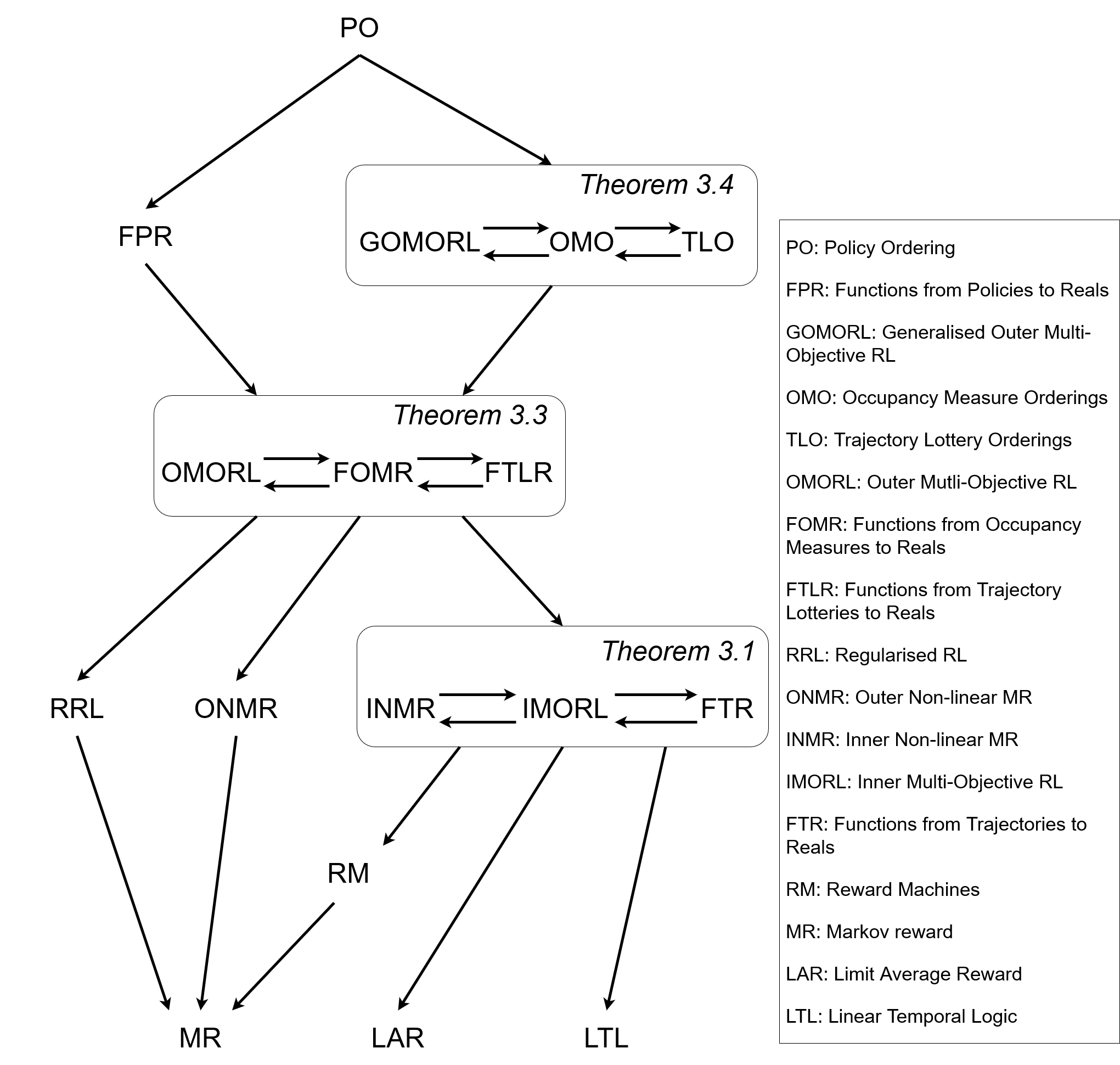}
    \caption{This Hasse diagram displays all expressivity relationships between our formalisms. An arrow or chain of arrows from one formalism to another indicates that the first formalism can express all policy orderings that the second formalism can express, in all environments. Arrows going both directions mean that the formalisms have the same expressivity. If there is no chain of arrows from one formalism to another, there are policy orderings that the latter can express and the former cannot express.}
    \label{f:Hasse}
\end{wrapfigure}

 In this section, we present our expressivity results. For each of the 289 \emph{ordered} pairs of formalisms $(X,Y)$, we prove either that $X \exprsucceq Y$ or that $X \not \exprsucceq Y$.\footnote{We study 17 formalisms, so there are $17^2 = 289$ ordered pairs of formalisms. This includes the ordered pairs of a formalism with itself; trivially, any formalism can express itself. The expressivity results for some other ordered pairs of formalisms are also fairly straightforward, and many results follow from the transitivity of the expressivity relation. \Cref{tab:proofs} in the appendix depicts all proof dependencies for these results.} \Cref{f:Hasse} depicts the expressivity relationships between all formalisms. Note that the \textit{absence} of a sequence of arrows from one formalism to another is as significant as the presence of a sequence of arrows: the presence of a sequence of arrows means that the first formalism can express all policy orderings that the second can express, and the absence of a sequence of arrows means that there exists a policy ordering that the second formalism can express and the first cannot express. A considerable part of our contribution in this work is to demonstrate the expressive limitations of these formalisms, and these limitations are represented by the absence of arrows. In this section we will state and provide intuition for many of the nontrivial positive and negative expressivity results that serve as the basis for the Hasse diagram. The remaining results, and all proofs, can be found in \Cref{sec: Appendix Theorems and Proofs}.


\subsection{Positive Results}

Several of the positive connections in the Hasse diagram are relatively trivial; these are discussed in \Cref{subsec:Trivial Results of Expressivity}. Here, we focus on the most substantive results. Proofs for the results in this subsection are available in \Cref{subsec: Novel Results of Expresivity} and referenced beside each result.

\begin{theorem} \label{thm: INMR = IMORL = FTR}
    \textit{Inner Nonlinear Markov Rewards (INMR), Inner Multi-Objective RL (IMORL), and Functions from Trajectories to Reals (FTR) are equally expressive. }  Proof: \ref{thm:INMR=IMORL=FTR}
\end{theorem}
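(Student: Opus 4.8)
The plan is to prove the three formalisms are equi-expressive by establishing the cyclic chain of inclusions $\text{FTR} \exprsucceq \text{IMORL} \exprsucceq \text{INMR} \exprsucceq \text{FTR}$; since $\exprsucceq$ is reflexive and transitive, this forces all three to express exactly the same set of orderings in every environment. The first two inclusions are essentially definitional. For $\text{IMORL} \exprsucceq \text{INMR}$: an INMR tuple $(\Reward, f, \gamma)$ is literally the $k=1$ instance of IMORL, since the IMORL tuple $(1, \Reward, f, \gamma)$ has $J(\pol) = \mathbb{E}_\xi[f(G_1(\xi))]$, which is the INMR $J$-function verbatim. For $\text{FTR} \exprsucceq \text{IMORL}$: given an IMORL tuple $(k, \Reward, \gamma, f)$, set $\tilde f : \Xi \to \mathbb{R}$, $\tilde f(\xi) := f\big(G_1(\xi), \dots, G_k(\xi)\big)$ with $G_i(\xi) := \sum_{t\ge 0}\gamma^t \Reward_i(s_t,a_t,s_{t+1})$; each $G_i$ is measurable (and the paper's conventions make $f$ measurable), so $\tilde f$ is measurable, and the FTR tuple $(\tilde f)$ reproduces the IMORL $J$-function exactly. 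In both cases $J$, hence the induced ordering, is unchanged, so the relevant $Ord$-set inclusion holds in every environment.

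The substantive step is $\text{INMR} \exprsucceq \text{FTR}$: I must simulate an \emph{arbitrary} trajectory function using only a Markovian reward, a discount factor, and a scalar post-processing function. The key idea is to pick $\Reward$ and $\gamma$ so that the discounted-return map $G_{\Reward,\gamma}(\xi) = \sum_{t\ge 0}\gamma^t \Reward(s_t,a_t,s_{t+1})$ is \emph{injective} on $\Xi$; then $\xi$ can be recovered from $G_{\Reward,\gamma}(\xi)$ and $f$ applied post hoc. Concretely: fix $\Env = (\SSpace,\ASpace,\Trans,\Init)$, let $N := |\SSpace||\ASpace||\SSpace|$ be the number of transitions, enumerate them $\tau_1,\dots,\tau_N$, and set $\Reward(\tau_i) := i$ and $\gamma := \tfrac{1}{N+2} \in [0,1)$. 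A trajectory is determined by its sequence of transitions, so writing $d_t \in \{1,\dots,N\}$ for the index of the transition taken at time $t$, the return equals the base-$(N+2)$ expansion $\sum_{t\ge 0} d_t (N+2)^{-t}$, whose digits avoid both $0$ and the maximal digit $N+1$. A short estimate shows injectivity: if two such digit sequences first differ at position $t_0$, their returns differ by at least $(N+2)^{-t_0}\cdot\tfrac{2}{N+1} > 0$.

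To finish I need the scalar function. Since $\Xi = \SSpace \times (\ASpace\times\SSpace)^\omega$ is compact metrizable (a product of finite discrete spaces) and $G_{\Reward,\gamma}$ is continuous (the defining series converges uniformly), the injection $G_{\Reward,\gamma}$ is a homeomorphism onto its image $G_{\Reward,\gamma}(\Xi)$, which is compact and hence Borel, so $G_{\Reward,\gamma}^{-1}$ is continuous there. Define $g:\mathbb{R}\to\mathbb{R}$ by $g(x) := f\big(G_{\Reward,\gamma}^{-1}(x)\big)$ for $x \in G_{\Reward,\gamma}(\Xi)$ and $g(x) := 0$ otherwise; as a composition of measurable maps (and $0$ on a Borel set), $g$ is measurable. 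By injectivity, $g\big(G_{\Reward,\gamma}(\xi)\big) = f(\xi)$ for every $\xi \in \Xi$, so for every policy $\pol \in \PolSpace$ the INMR objective $(\Reward, g, \gamma)$ satisfies $J(\pol) = \mathbb{E}_\xi\big[g(G_{\Reward,\gamma}(\xi))\big] = \mathbb{E}_\xi[f(\xi)]$; it therefore induces exactly the FTR ordering of $(f)$. Since $(f)$ and $\Env$ were arbitrary, $Ord_{\text{FTR}}(\Env) \subseteq Ord_{\text{INMR}}(\Env)$ for all $\Env$, closing the cycle.

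The main obstacle is this last construction: getting an injective return map of the required Markovian-discounted form, and then checking that the recovered scalar $g$ is well-defined and measurable. Compactness of $\Xi$ together with continuity of $G_{\Reward,\gamma}$ makes the inverse automatically well-behaved, so the only genuine content is the base-$(N+2)$ injectivity estimate. (Throughout, as is implicit in the definitions of FTR/INMR/IMORL, I restrict to functions for which the stated expectations are well-defined; because $g\circ G_{\Reward,\gamma}$ and $f$ agree pointwise on $\Xi$, well-definedness of the FTR expectation transfers directly to the constructed INMR objective.)
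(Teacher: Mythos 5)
Your proposal is correct and follows essentially the same route as the paper: the two easy inclusions $\text{IMORL} \exprsucceq \text{INMR}$ and $\text{FTR} \exprsucceq \text{IMORL}$ are handled definitionally, and the substantive direction $\text{INMR} \exprsucceq \text{FTR}$ is proved by choosing $\Reward$ and $\gamma$ so that the discounted return $G_{\Reward,\gamma}$ is injective on $\Xi$ and then setting the wrapper to $f \circ G_{\Reward,\gamma}^{-1}$, exactly as in the paper's Lemma on injective returns. Your base-$(N+2)$ digit-expansion estimate is a clean equivalent of the paper's constraint $\gamma < 1/|X|$ with rewards $\{0, m, \dots, m(|X|-1)\}$, and your added measurability remarks are harmless extras the paper does not require.
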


It is straightforward to see that IMORL can express all policy orderings that INMR can express and that FTR can express all policy orderings that IMORL can express (and we show these results explicitly in \Cref{subsec:Trivial Results of Expressivity}). Therefore, the following proposition is sufficient to conclude that INMR, IMORL, and FTR are equally expressive:

\begin{proposition} \label{prop:FTR in INMR}
    \textit{Any policy ordering expressible with Functions from Trajectories to Reals (FTR) can be expressed with Inner Nonlinear Markov Rewards (INMR).} Proof: \ref{thm:INMR=IMORL=FTR}
\end{proposition}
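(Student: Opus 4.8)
The plan is to losslessly encode an entire rewardless trajectory into its discounted return, using a place-value (base-expansion) trick, and then let the outer function of the INMR objective decode it. Fix an arbitrary environment $\Env = (\SSpace, \ASpace, \Trans, \Init)$ and an FTR objective $f : \Xi \to \mathbb{R}$, so $J_{\mathrm{FTR}}(\pi) = \mathbb{E}_\xi[f(\xi)]$. Since $\SSpace$ and $\ASpace$ are finite, the transition set $\SSpace \times \ASpace \times \SSpace$ is finite; set $n := |\SSpace|^2|\ASpace|$ and fix a bijection $\iota : \SSpace \times \ASpace \times \SSpace \to \{0, 1, \dots, n-1\}$. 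Take the Markovian reward $\Reward(s,a,s') := \iota(s,a,s')$ and the discount $\gamma := 1/(n+1)$, and let
\[
\Phi : \Xi \to \mathbb{R}, \qquad \Phi(\xi) := \sum_{t=0}^\infty \gamma^t\, \Reward(s_t, a_t, s_{t+1}),
\]
which converges since the summand lies in $\{0,\dots,n-1\}$ and $\gamma < 1$. Note that $\Phi(\xi)$ is precisely the inner discounted sum appearing in the INMR objective built from $\Reward$ and $\gamma$.

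The crux is that $\Phi$ is injective. A trajectory $\xi = (s_0, a_0, s_1, a_1, \dots)$ is recovered from its transition sequence $(\tau_t)_{t\ge0}$ with $\tau_t = (s_t,a_t,s_{t+1})$ (read $s_0,a_0$ off $\tau_0$, then $a_1,s_2$ off $\tau_1$, and so on), so distinct trajectories yield distinct digit sequences $(\iota(\tau_t))_{t\ge0} \in \{0,\dots,n-1\}^\omega$. It therefore suffices that $\sum_t d_t\gamma^t$ determines $(d_t)$ among all sequences with $d_t \in \{0,\dots,n-1\}$. If $(d_t)\neq(e_t)$, let $t_0$ be the first index where they differ, say $d_{t_0} \ge e_{t_0}+1$; then, using $\sum_{t>t_0}\gamma^t = \gamma^{t_0}\tfrac{\gamma}{1-\gamma} = \gamma^{t_0}/n$,
\begin{align*}
\sum_t d_t\gamma^t - \sum_t e_t\gamma^t
&= (d_{t_0}-e_{t_0})\gamma^{t_0} + \sum_{t>t_0}(d_t-e_t)\gamma^t \\
&\ge \gamma^{t_0} - (n-1)\sum_{t>t_0}\gamma^t = \gamma^{t_0}\bigl(1 - \tfrac{n-1}{n}\bigr) > 0,
\end{align*}
so the two returns differ and $\Phi$ is injective. (Taking the base $n+1$ strictly larger than the maximal digit $n-1$ is exactly what rules out the usual $0.999\ldots = 1$ style collisions.)

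Finally, define $\tilde f : \mathbb{R} \to \mathbb{R}$ by $\tilde f(x) := f(\Phi^{-1}(x))$ on the image $\Phi(\Xi)$ and $\tilde f(x) := 0$ elsewhere, so that $\tilde f \circ \Phi = f$ on all of $\Xi$. Then for the INMR objective $(\Reward, \tilde f, \gamma)$ and every policy $\pi$,
\[
J_{\mathrm{INMR}}(\pi) = \mathbb{E}_\xi\bigl[\tilde f(\Phi(\xi))\bigr] = \mathbb{E}_\xi[f(\xi)] = J_{\mathrm{FTR}}(\pi).
\]
Since both formalisms rank policies by the $\ge$-order on these scalars, this INMR objective induces exactly the same total preorder on $\PolSpace$ as $f$, which proves the claim. (No measurability obstruction arises: $\Phi$ is a pointwise limit of functions of finitely many coordinates, hence Borel; $\Phi(\Xi)$ is compact, hence Borel; and $\Phi$ is a continuous bijection out of a compact space, so $\Phi^{-1}$ is continuous, whence $\tilde f$ is as measurable as $f$.)

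\textbf{Main obstacle.} The only nontrivial step is the injectivity of $\Phi$ — that is, choosing $\Reward$ and $\gamma$ so the discounted sum behaves like a carry-free positional encoding, which forces the digit range to sit strictly below the base. Once that is established, defining $\tilde f$ as the inverse map and equating the two $J$ functions is immediate.
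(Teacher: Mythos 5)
Your construction is correct and is essentially the paper's own argument (Lemma \ref{lem:invG} plus Theorem \ref{thm:INMR=IMORL=FTR}): choose an injective integer-valued reward on the finite transition set together with a discount factor strictly below the reciprocal of the number of transitions, so that the discounted return acts as a carry-free positional encoding of the trajectory, and then set the INMR wrapper to $f \circ G^{-1}$ on the image of the return map. Your write-up is in fact slightly tighter than the paper's (the explicit first-difference bound $\gamma^{t_0}\bigl(1-\tfrac{n-1}{n}\bigr)>0$, the remark that the trajectory is recoverable from its transition sequence, and the measurability aside), but the underlying idea is identical.
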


Briefly, our proof demonstrates that it is possible to express an arbitrary function from trajectories to reals $f_{FTR}$ with an INMR specification $(R,f_{INMR},\gamma)$ by selecting $R$ and $\gamma$ so that the trajectory return function $G$ is injective, and then setting $f_{INMR}$ equal $f_{FTR} \circ G^{-1}$. The full proof is in \Cref{subsec: Novel Results of Expresivity}. This proof notably relies on allowing $f_{INMR}$ to be an arbitrary function that need not satisfy properties such as differentiability, continuity, or monotonicity. Since many possible nonequivalent constraints might be of interest, we choose to define a very general version of INMR; we discuss the implications of this choice in the next section, and in more detail in \Cref{subsec: restricting conditions}.

\begin{theorem} \label{thm:OMORL = FOMR = FTLR}
    \textit{Outer Multi-Objective Reinforcement Learning (OMORL), Functions from Occupancy Measures to Reals (FOMR), and Functions from Trajectory Lotteries to Reals (FTLR) are equally expressive.} Proof: \ref{thm:OMORL=FOMR=FTLR}
\end{theorem}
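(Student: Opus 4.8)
The plan is to establish a cycle of expressivity inclusions among the three formalisms, mirroring the structure of the proof of \Cref{thm: INMR = IMORL = FTR}. Concretely, I would show (i) OMORL $\exprsucceq$ FOMR is false to worry about — rather, I'd aim for the chain FTLR $\exprsucceq$ FOMR $\exprsucceq$ OMORL $\exprsucceq$ FTLR, or whichever orientation is cleanest, together with the reverse inclusions that are trivial. The trivial direction is OMORL $\preceq$ FOMR: every OMORL objective $(k,\Reward,\gamma,f)$ computes $\vec J(\pi)$, and each coordinate $J_i(\pi)$ is a \emph{linear} function of the occupancy measure $\vec m(\pi)$ — namely $J_i(\pi) = \langle \Reward_i, \vec m(\pi)\rangle$ — so $J(\pi) = f(\langle \Reward_1,\vec m(\pi)\rangle,\dots,\langle \Reward_k,\vec m(\pi)\rangle)$ is a function of $\vec m(\pi)$ alone, hence realizable in FOMR with the same $\gamma$. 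Similarly FOMR $\preceq$ FTLR should be easy: the occupancy measure $\vec m(\pi)$ is determined by the trajectory lottery $L_\pi$ (each $\vec m(\pi)[s,a,s']$ is a $\gamma$-discounted sum of marginals $L_{t,\pi}$-probabilities of the event $\{s_t=s,a_t=a,s_{t+1}=s'\}$), so any function of $\vec m(\pi)$ factors through $L_\pi$.

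The substantive content is therefore the single "wrap-around" inclusion, and I expect the natural choice is \textbf{FTLR $\exprsucceq$ OMORL reduced to OMORL $\exprsucceq$ FTLR}, i.e., showing that an arbitrary function $f_{FTLR}: L_{\Pi_{S,A}} \to \mathbb{R}$ can be realized by an OMORL specification. Following the template of \Cref{prop:FTR in INMR}, I would (1) exhibit a $\gamma$ and a reward vector $\Reward: \SSpace\times\ASpace\times\SSpace \to \mathbb{R}^k$ for suitable $k$ such that the induced map $\pi \mapsto \vec J(\pi) \in \mathbb{R}^k$ is \emph{injective} on $\Pi_{S,A}$ — or more precisely, such that $\vec J(\pi)$ determines $L_\pi$ — and then (2) set $f_{\text{OMORL}} := f_{FTLR} \circ (\text{the map recovering } L_\pi \text{ from } \vec J(\pi))$, which is a well-defined (if wildly discontinuous) function $\mathbb{R}^k \to \mathbb{R}$, exactly the generality OMORL permits. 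The key technical question is what information about $L_\pi$ is needed and whether finitely many linear functionals of the occupancy measure suffice to recover it. Here I would use the fact that $L_\pi$ is \emph{not} arbitrary: it is generated by a stationary policy $\pi \in \Pi_{S,A}$ in a \emph{fixed finite} environment $\Env$, so $L_\pi$ is parametrized by the finitely many numbers $\{\pi(a\mid s)\}_{s\in\SSpace, a\in\ASpace}$ — but only up to the equivalence of inducing the same trajectory distribution (policies differing only on unreachable states, or at states of measure zero, give the same $L_\pi$). So it suffices to recover $L_\pi$ up to that equivalence, and a natural approach is to take $k = |\SSpace||\ASpace|$ with $\Reward$ chosen so that $J_{(s,a)}(\pi)$ encodes the (discounted) state-action visitation frequency, from which one reconstructs the reachable part of $\pi$ and hence $L_\pi$; alternatively one picks a single clever real-valued reward whose return is injective, as in the INMR proof, if one can encode the occupancy vector into one real.

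The main obstacle I anticipate is precisely this injectivity/recovery step: unlike the FTR case, where $G$ maps into $\mathbb{R}$ and one has total freedom to design $R$ and $\gamma$ so distinct trajectories get distinct returns, here the target object $L_\pi$ (equivalently $\vec m(\pi)$) lives in a higher-dimensional space and we must recover it from finitely many \emph{linear} functionals of $\vec m(\pi)$ — but since $\vec m(\pi)$ itself has dimension $|\SSpace||\ASpace||\SSpace|$, taking $k = |\SSpace||\ASpace||\SSpace|$ and $\Reward_i$ equal to the $i$-th standard basis indicator recovers $\vec m(\pi)$ exactly, making the map $\pi \mapsto \vec J(\pi)$ literally equal to $\pi \mapsto \vec m(\pi)$. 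Then OMORL $\exprsucceq$ FOMR is immediate with the \emph{same} $\gamma$, and combined with the trivial inclusions FOMR $\exprsucceq$ OMORL and FTLR $\exprsucceq$ FOMR $\exprsucceq$ OMORL and FTLR $\preceq$ FOMR (this last via recovering $L_\pi$ from... wait, this needs $\vec m(\pi)$ to determine $L_\pi$). The one genuinely delicate point is whether $\vec m(\pi)$ determines $L_\pi$: this should hold because from the discounted occupancy measure one can extract, for each $t$, the time-$t$ marginal over transitions (e.g.\ via the structure of the power series in $\gamma$, or by noting $\vec m$ determines the reachable states and the policy's action distribution there, which determines every $L_{t,\pi}$), and I would verify this lemma carefully as the crux of the argument, after which FTLR, FOMR, and OMORL all collapse to "functions of $\vec m(\pi)$" and the theorem follows by chaining the inclusions.
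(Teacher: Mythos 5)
Your proposal is correct and takes essentially the same route as the paper: the indicator-reward construction with $k=|\SSpace||\ASpace||\SSpace|$ forcing $\vec J(\pi)=\vec m(\pi)$ is exactly \Cref{lem:OM:J}, and the crux you flag---that $\vec m(\pi)$ determines $L_\pi$ for stationary policies---is exactly \Cref{lem:OM:TL}. Of your two suggested strategies for that lemma, the ``recover the policy's action distribution on reachable states'' one is what the paper does (arguing via ratios of occupancy-measure entries); the power-series-in-$\gamma$ idea would not work, since $\gamma$ is a fixed scalar and individual time-$t$ marginals cannot be read off a single discounted sum.
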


We show that a) two policies have the same occupancy measure if and only if they generate the same trajectory lottery, and b) it is always possible to specify an OMORL objective such that two policies have the same policy evaluation vector ($\vec J(\pi) := \langle J_1(\pi),...,J_k(\pi)\rangle$) if and only if they have the same occupancy measure. Therefore, the expressivity of a function from policy evaluation vectors to reals is exactly the same as a function from occupancy measures to reals and a function from trajectory lotteries to reals. It is worth noting that in order to possess this expressive power, OMORL requires access to $|\SSpace||\ASpace||\SSpace|$ reward functions for any environment $E = (\SSpace,\ASpace,\Trans,\Init)$. It may be infeasible to specify and optimise an objective with so many reward functions in practice; in \Cref{discussion} and \Cref{subsec: restricting conditions}, we discuss the possibility of restricting the number of reward functions a multi-reward objective can include.

Our third theorem is very similar to \Cref{thm:OMORL = FOMR = FTLR}:

\begin{theorem} \label{thm: GOMORL = OMO = TLO}
    \textit{Generalised Outer Multi-Objective RL (GOMORL), Occupancy Measures Orderings (OMO), and Trajectory Lottery Orderings (TLO) are equally expressive.}  Proof: \ref{thm:GOMORL=OMO=TLO}
\end{theorem}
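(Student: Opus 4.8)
The plan is to mirror the structure of the proof of \Cref{thm:OMORL = FOMR = FTLR} exactly, replacing scalar evaluation functions with preorders. The key observation is that \Cref{thm:OMORL = FOMR = FTLR} already gives us most of the combinatorial work: we know (a) two policies $\pi_1, \pi_2$ have the same occupancy measure $\vec m(\pi_1) = \vec m(\pi_2)$ if and only if they induce the same trajectory lottery $L_{\pi_1} = L_{\pi_2}$, and (b) for any environment $E$ there is a choice of $k$ and a $k$-dimensional reward $\Reward$ (together with a fixed $\gamma$) such that the policy-evaluation vector map $\pi \mapsto \vec J(\pi) = \langle J_1(\pi), \dots, J_k(\pi)\rangle$ separates exactly the same policies as the occupancy-measure map: $\vec J(\pi_1) = \vec J(\pi_2) \iff \vec m(\pi_1) = \vec m(\pi_2)$. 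Concretely, one can take $k = |\SSpace||\ASpace||\SSpace|$ with $\Reward$ the indicator reward on each transition, so that $\vec J(\pi)$ literally equals $\vec m(\pi)$; this is presumably how (b) is established in \Cref{thm:OMORL=FOMR=FTLR}, and I would reuse that exact construction.

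First I would show $\text{OMO} \equiv \text{TLO}$. Given an OMO objective $(\gamma, \succeq_m)$ on occupancy measures, define $\succeq_L$ on $L_{\Pi_{S,A}}$ by declaring $L_{\pi_1} \succeq_L L_{\pi_2}$ iff $\vec m(\pi_1) \succeq_m \vec m(\pi_2)$; this is well-defined precisely because of fact (a) (equal trajectory lotteries force equal occupancy measures, so the definition does not depend on the choice of representative policy), it is a total preorder because $\succeq_m$ is, and it induces the same policy ordering. The reverse direction is symmetric, using the other half of (a). Next I would show $\text{GOMORL} \equiv \text{OMO}$: given an OMO objective, use the separating reward $\Reward$ from (b) so that $\vec J(\pi) = \vec m(\pi)$ (up to the fixed bijection on the image), and transport $\succeq_m$ to a total preorder $\succeq_J$ on $\mathbb{R}^k$ — extended arbitrarily (e.g. by "everything outside $\vec m(\Pi)$ is equivalent and minimal") to all of $\mathbb{R}^k$ so that it is genuinely a total preorder on the whole space as the GOMORL definition requires. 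Conversely, any GOMORL objective $(k, \Reward, \gamma, \succeq_J)$ induces a policy ordering that factors through $\pi \mapsto \vec J(\pi)$, which in turn factors through $\pi \mapsto \vec m(\pi)$ (since the occupancy measure with discount $\gamma$ determines every $J_i(\pi)$ as a linear functional of it), so we can push $\succeq_J$ down to a well-defined total preorder $\succeq_m$ on $\vec m(\Pi)$; transitivity and strong connectedness are inherited. Chaining these equivalences with transitivity of $\exprsucceq$ gives all three formalisms equal expressivity.

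The main obstacle — really the only subtlety beyond bookkeeping — is the well-definedness of the pushed-down preorders, i.e. making sure that when we collapse several policies to a single occupancy measure (or a single trajectory lottery, or a single evaluation vector) the candidate preorder does not assign them conflicting ranks. This is exactly where facts (a) and (b) from \Cref{thm:OMORL = FOMR = FTLR} do the heavy lifting: they guarantee that the fibers of the three quotient maps ($\pi \mapsto \vec m(\pi)$, $\pi \mapsto L_\pi$, $\pi \mapsto \vec J(\pi)$) coincide, so any ordering defined on one quotient transports unambiguously to the others. A secondary, purely formal point is handling the "total preorder on all of $\mathbb{R}^k$" requirement in GOMORL versus the "total preorder on $\vec m(\Pi)$" in OMO: one direction needs an arbitrary but order-preserving extension off the image, and the other needs a restriction, both of which trivially preserve the transitive and strongly-connected properties. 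I expect no technical difficulty here; the proof is essentially a transposition of the preceding theorem's argument from the "scalar $J$" setting to the "preorder" setting.
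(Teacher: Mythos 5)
Your proposal is correct and follows essentially the same route as the paper: both directions rest on the same two lemmas (the indicator-reward construction making $\vec J(\pi) = \vec m(\pi)$, and the equivalence $\vec m(\pi_1)=\vec m(\pi_2) \iff L_{\pi_1}=L_{\pi_2}$), with the orderings transported across the resulting quotients. Your treatment is somewhat more explicit than the paper's about well-definedness of the pushed-down preorders and about extending $\succeq_J$ to all of $\mathbb{R}^k$, but these are refinements of the same argument, not a different one.
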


 The proof of this theorem utilises the same lemmas as the proof of \Cref{thm:OMORL = FOMR = FTLR}. Total preorders over policy evaluation vectors, occupancy measure, and trajectory lotteries are all equally expressive, because two policies share a trajectory lottery if and only if they share an occupancy measure if and only if they share a policy evaluation vector (with appropriately selected reward functions). 

\subsection{Negative Results}

All proofs for the results in this subsection can be found in \Cref{subsec:Novel Results of Non-Expressivity}.

\begin{proposition}
    \textit{There exists a policy ordering that Linear Temporal Logic (LTL) and Limit Average Rewards (LAR) can express but Reward Machines (RM) and Markov Rewards (MR) cannot express.} Proof: \ref{prop:RM_MRNLAR_LTL}
\end{proposition}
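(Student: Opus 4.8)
The plan is to exhibit a single environment $\Env$ and a target policy ordering realized by both an LTL formula and a LAR reward, then show no Markov reward and no reward machine can realize it. The natural candidate is an environment with a choice that determines the \emph{long-run frequency} of some transition: e.g.\ two states $\sleft, \sright$ where from $\sleft$ the agent chooses to stay or move to $\sright$, and from $\sright$ it stays forever. Consider the LTL formula $\varphi = \diamondsuit\square(\text{agent is in }\sright)$, i.e. ``eventually always in $\sright$''. Then $J_{LTL}(\pi) = \Prob_\xi[\varphi(\xi)]$ is the probability the agent eventually commits to $\sright$; the induced ordering ranks policies by this probability. A LAR reward $\Reward$ that is $1$ on transitions into/within $\sright$ and $0$ elsewhere gives $J_{LAR}(\pi)$ equal to the expected asymptotic frequency of being in $\sright$, which (in this absorbing setup) equals exactly $\Prob_\xi[\text{absorbed in }\sright]$, so LAR induces the same ordering. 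That establishes the positive half; I would verify the two $J$ functions agree as orderings (not just that both are monotone in the absorption probability — they literally coincide here).

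For the negative half, the key obstacle — and the main work — is showing \emph{Reward Machines} (hence also MR, which RM subsumes) cannot express this ordering. The standard technique, following \cite{abel_expressivity_2022} and \cite{skalse_limitations_2023}, is to find three policies $\pi_1, \pi_2, \pi_3$ in $\Env$ whose required ranking is inconsistent with any discounted (machine-)reward evaluation. The leverage against discounting is that $\varphi$ (and the LAR objective) is a \emph{tail} property: it is invariant to what happens in any finite prefix, whereas any RM with discount $\gamma < 1$ puts vanishing weight on the tail and dominant weight on early transitions. Concretely I would build policies that agree on long-run behavior (same absorption probabilities, hence must be ranked equal by the target ordering) but differ on early transitions in a way that any nonconstant discounted reward machine must rank strictly — or conversely, policies that differ in long-run absorption probability but can be made to look identical over any bounded horizon to a fixed finite-state machine by delaying the ``commitment'' step past the machine's effective horizon. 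Since an RM has a \emph{fixed finite} state set $U$ and a \emph{fixed} $\gamma$ chosen in advance, one can push the distinguishing behavior late enough that $\gamma$-discounting washes it out while the machine's finite memory cannot track it.

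The cleanest route is probably the ``delay'' construction: fix an arbitrary candidate RM objective $(U, u_0, \delta_U, \delta_\Reward, \gamma)$; because $|U|$ is finite and $\gamma<1$, choose $N$ so large that (i) the total discounted reward contribution from timesteps $\ge N$ is smaller than the minimal gap the machine would need, and (ii) by a pigeonhole/periodicity argument on $U$ the machine's state sequence on the relevant trajectories has stabilized. Then construct two policies whose behaviors coincide up to time $N$ but whose asymptotic $\sright$-frequencies differ (so the LTL/LAR ordering ranks them strictly), and show the RM must rank them equal (or with the wrong sign), a contradiction. I would present this as: assume for contradiction some RM expresses the ordering; instantiate $N$ from its parameters; derive the contradiction. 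The main obstacle is making the horizon/pigeonhole bookkeeping rigorous — in particular handling that $\delta_\Reward$ can output \emph{arbitrary} bounded reward functions per machine-transition, so I must first argue the relevant rewards are bounded (finitely many machine transitions, each a fixed real-valued function on the finite set $\SSpace\times\ASpace\times\SSpace$) before the discounting-washout estimate goes through. Once boundedness is in hand, the geometric-tail bound is routine, and the finite-memory argument is a standard eventually-periodic-on-$U$ observation.
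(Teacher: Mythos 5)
Your environment and positive half are essentially the paper's: the paper uses the same two-state absorbing construction, with LTL formula $\neg\lozenge s_A$ and a LAR reward on the transient state (yours is the complementary formula and the reward on the absorbing state, which yields the same ordering up to reversal), and the resulting policy evaluation $J(\pi_\alpha)=\indicator[\alpha=0]$ is exactly the discontinuity both proofs exploit. The negative half also rests on the same ingredients you identify --- bounded rewards (finitely many machine transitions, each a real-valued function on a finite set) plus a geometric tail bound.

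However, the ``delay'' construction as you state it has a genuine gap: for \emph{stationary} policies (the only policies this paper considers), two distinct policies cannot ``coincide up to time $N$.'' In this environment a stationary policy is determined by a single parameter $\alpha=\pi(a_{\text{move}}\mid s_L)$, so two distinct policies already differ in distribution at step $0$; there is no way to hide the distinguishing behaviour past a horizon. The correct instantiation of your intuition is the one-parameter family $\pi_\alpha$ with $\alpha\to 0^+$: a small-$\alpha$ policy \emph{is} the delayed policy, in the sense that it is close in total variation to $\pi_0$ on any bounded horizon. Moreover, the contradiction is not that the RM ``must rank them equal (or with the wrong sign)'' --- two policies with nearby $J$-values need not be ranked equal, so closeness alone proves nothing. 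What closes the argument is that the target ordering forces $J_{RM}(\pi_\alpha)$ to be \emph{constant} on all of $\alpha\in(0,1]$ (all these policies are absorbed almost surely, hence equivalent under the LTL/LAR ordering), while the tail bound shows $J_{RM}$ is continuous at $\alpha=0$; a function constant on $(0,1]$ and continuous at $0$ satisfies $J_{RM}(\pi_0)=\lim_{\alpha\to 0^+}J_{RM}(\pi_\alpha)=J_{RM}(\pi_1)$, contradicting the required strict preference between $\pi_0$ and $\pi_1$. This is precisely the paper's Lemma on continuity of $J_{RM}$ at $\alpha=0$. Once you adopt this framing, the pigeonhole/periodicity bookkeeping on the machine-state set $U$ is unnecessary: the estimate only needs the reward bound $c$ and the probabilities $(1-\alpha)^t$ of remaining in the transient state, with no analysis of how the machine state evolves.
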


RM specifications can only induce policy evaluation functions that are continuous (in the sense that infinitesimal changes to a policy can only lead to infinitesimal changes to the policy evaluation). Note that it must also be true that Markov Rewards (MR) can only induce continuous policy evaluation functions, because as shown in \Cref{subsec:Trivial Results of Expressivity}, RM can express any policy evaluation function that MR can express. LTL and LAR can induce discontinuous policy evaluation functions.

\begin{proposition}
    \textit{There exists a policy ordering that Markov Rewards (MR) and Limit Average Rewards (LAR) can express but Linear Temporal Logic (LTL) cannot express.} Proof: \ref{thm:OMORL=FOMR=FTLR}
\end{proposition}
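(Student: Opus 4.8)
The plan is to exhibit a single environment $\Env$ and a policy ordering realisable in both MR and LAR, and then show that no LTL formula $\varphi$ can induce that ordering. The key constraint on LTL is that its policy evaluation function $J(\pi) = \mathbb{E}_\xi[\varphi(\xi)]$ takes values in $[0,1]$ and, more importantly, is a fixed convex-combination-style functional: since $\varphi$ is a $\{0,1\}$-valued measurable function of the trajectory, $J(\pi)$ equals the probability under $\pi$ of the (tail) event $\{\varphi = 1\}$. The natural obstruction to exploit is that LTL can only ever distinguish policies by such an event probability, so if we choose an MR/LAR ordering that is strict between two policies $\pi_1 \succ \pi_2$ but forces $\pi_1$ and $\pi_2$ to assign the same probability to \emph{every} LTL-definable event, we are done.

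Concretely, I would take a small environment with a single "flow" structure in which two distinct stationary policies $\pi_1, \pi_2$ generate trajectories that are almost-surely eventually identical in their tail behaviour but differ in a way that MR (via discounting, which is sensitive to early transitions) and LAR (via average reward, which here can be tuned to the common tail) can both rank, yet which LTL — whose formulas over the atomic propositions $\SSpace\times\ASpace\times\SSpace$ can only test Boolean temporal combinations of transitions — cannot separate. A clean construction: let both policies reach the same absorbing structure with probability $1$, so that for \emph{any} LTL formula $\varphi$ the events $\{\varphi=1\}$ under $\pi_1$ and under $\pi_2$ have equal probability (because the trajectory distributions agree on all "eventually/always" tail events and the only remaining freedom — finitely many initial steps — is also matched), forcing $J_{\varphi}(\pi_1) = J_{\varphi}(\pi_2)$ for every $\varphi$; meanwhile choose a Markov reward $R$ and discount $\gamma$ assigning different discounted returns to the two policies (e.g.\ a reward that fires on a transition taken at different \emph{times} by $\pi_1$ vs $\pi_2$), and similarly a limit-average reward that differs. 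Then $\pi_1 \polsucc[MR] \pi_2$ and $\pi_1 \polsucc[LAR] \pi_2$, but every LTL specification yields $\pi_1 \polsim \pi_2$, so LTL cannot express this ordering.

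The key steps, in order: (1) Describe the environment and the two policies, verifying they induce the same trajectory distribution on the tail $\sigma$-algebra generated by LTL-expressible events (this is where I would be careful — it suffices that the two trajectory distributions, viewed as distributions on $\Xi$, agree on the sub-$\sigma$-algebra of sets definable by LTL formulas; a safe way is to make the \emph{entire} trajectory distributions agree except on a time-shift that LTL's $\bigcirc,\square,\diamondsuit,\mathcal{U}$ operators cannot detect, or more simply to make them agree outright on all measurable sets while still differing in discounted/average reward — impossible, so instead make them agree on all LTL events but not all measurable events). (2) Show $J_{\varphi}(\pi_1)=J_{\varphi}(\pi_2)$ for every LTL formula $\varphi$, hence no LTL objective ranks $\pi_1$ strictly above $\pi_2$. (3) Exhibit $(R,\gamma)$ with $J_{(R,\gamma)}(\pi_1) > J_{(R,\gamma)}(\pi_2)$ and a LAR reward with $J_{LAR}(\pi_1) > J_{LAR}(\pi_2)$ (routine computations). (4) Conclude that the strict ordering $\pi_1 \succ \pi_2$ (with all other policies, say, tied below) lies in $Ord_{MR}(\Env)\cap Ord_{LAR}(\Env)$ but not in $Ord_{LTL}(\Env)$.

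The main obstacle is step (1)–(2): pinning down precisely \emph{which} events are LTL-expressible and arguing the two policies cannot be separated by any of them. The cleanest route is probably to pick $\pi_1,\pi_2$ whose induced trajectory distributions are mutually absolutely continuous and in fact \emph{identical} on the tail $\sigma$-algebra — for instance, both policies almost surely enter the same recurrent communicating class and behave identically thereafter, differing only in transient behaviour that affects the discounted sum (MR) and the transient-averaged-out limit is handled by making LAR sensitive to a recurrent-class reward that still differs because the two policies induce different stationary distributions \emph{within} that class. I would reconcile the tension — "same tail behaviour" vs "different limit-average" — by having the two policies converge to different stationary distributions on a shared state set, so LAR differs, MR differs, but every LTL formula, being a Borel function that is (after the transient) determined by a stationary ergodic process, evaluates to its limiting truth-probability which I arrange to coincide; making that last coincidence rigorous (e.g.\ via a symmetry of the recurrent class that fixes all atomic propositions' marginal probabilities appropriately) is the crux of the argument.
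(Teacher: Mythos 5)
There is a genuine gap here, and it is not just a missing detail: the strategy you build everything around cannot work. You propose to find two policies $\pi_1,\pi_2$ that assign the same probability to \emph{every} LTL-definable event while differing in discounted return and in limit-average return. But LTL over atomic propositions $\SSpace\times\ASpace\times\SSpace$ with the $\bigcirc$ operator can express, for every finite $t$ and every transition $(s,a,s')$, the event ``the transition at time $t$ is $(s,a,s')$'' (as $\bigcirc^{t}(s,a,s')$), and via conjunction it can express every finite cylinder event ``the first $k$ transitions are exactly such-and-such.'' Hence agreement on all LTL events forces $\mathbb{P}_{\pi_1}[s_t{=}s,a_t{=}a,s_{t+1}{=}s'] = \mathbb{P}_{\pi_2}[s_t{=}s,a_t{=}a,s_{t+1}{=}s']$ for all $t$ and all transitions, i.e.\ identical trajectory lotteries. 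Both $J_{MR}(\pi)=\sum_t \gamma^t \sum_{(s,a,s')}\mathbb{P}[s_t{=}s,a_t{=}a,s_{t+1}{=}s']\,\Reward(s,a,s')$ and $J_{LAR}$ are functions of exactly these per-timestep transition probabilities, so LTL-indistinguishable policies are automatically MR- and LAR-indistinguishable for every choice of reward. The tension you flag at the end (``same tail behaviour'' vs.\ ``different limit-average,'' and the need to also match ``finitely many initial steps'') is therefore not a technicality to be finessed by a clever recurrent-class symmetry; it is a contradiction, and no environment can realise your step (1).

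The obstruction that actually works is not pairwise indistinguishability but the two-valued range of $\varphi$ on a single trajectory. The paper's proof takes a one-state environment with three self-loop actions $a_A,a_B,a_C$ and the three deterministic policies $\pi_A,\pi_B,\pi_C$, each of which produces a single deterministic trajectory $\xi_A,\xi_B,\xi_C$. For any LTL formula, $J_{LTL}(\pi_X)=\varphi(\xi_X)\in\{0,1\}$, so LTL can split these three policies into at most two equivalence classes and cannot express the strict ordering $\pi_A\succ\pi_B\succ\pi_C$; MR and LAR express it immediately with rewards $1,0,-1$ on the three actions. If you want to keep a ``no formula can do it'' argument in your own write-up, this cardinality-of-the-range argument is the one to use; the indistinguishable-pair route should be abandoned.
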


An LTL specification can only assign a value of $\varphi(\xi)= 0$ or $\varphi(\xi)= 1$ to a trajectory, since an LTL formula $\varphi$ is either true or false for any given trajectory $\xi$. MR and LAR, on the other hand, can give any scalar-valued rewards. In certain cases, this prevents LTL from differentiating between trajectories --- and lotteries over trajectories generated by policies --- that MR and LAR can distinguish. For instance, LTL cannot induce any strict ordering of 3 deterministic policies in a deterministic environment.

\begin{proposition}
    \textit{There exists a policy ordering that Linear Temporal Logic (LTL) and Markov Rewards (MR) can express but Limit Average Rewards (LAR) cannot express.} Proof: \ref{prop:LARNMR_LTL}
\end{proposition}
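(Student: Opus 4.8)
The plan is to construct a single finite environment $\Env$ together with a policy ordering that is realised both by an LTL formula $\varphi$ and by a Markov reward pair $(\Reward,\gamma)$, but which no limit-average reward function can reproduce. The guiding intuition is that LAR is insensitive to any finite prefix of a trajectory: since $J_{LAR}(\pi) = \lim_{N\to\infty} \frac1N \E_\xi[\sum_{t=0}^{N-1}\Reward(s_t,a_t,s_{t+1})]$, the contribution of the first few transitions is washed out in the Ces\`aro average. So I want a task whose value is determined \emph{entirely} by behaviour in a bounded initial window --- something like ``reach a good transition on the very first step'' --- which MR captures trivially (put all the reward on step $0$, or use a discount factor) and which LTL captures via a formula about the initial transition, but which LAR is forced to evaluate purely by asymptotic ``tail'' behaviour that we can arrange to be identical for the policies we care about.

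Concretely, I would take a small environment --- say two or three states, with an initial choice of action at $s_0$ leading to one of two ``sink-like'' regions that are internally identical (same transition structure, so the same long-run transition frequencies), differing only in which transition is taken on step $0$. Let $\pi_g$ be the policy that makes the ``good'' first move and $\pi_b$ the one that makes the ``bad'' first move; by construction they induce the \emph{same} occupancy of every transition in the limiting-frequency sense, hence $J_{LAR}(\pi_g) = J_{LAR}(\pi_b)$ for \emph{every} $\Reward$, so LAR must rank $\pi_g \polsim[LAR] \pi_b$. Now pick the target ordering to be the one in which $\pi_g \polsucc \pi_b$ strictly (and order everything else however is convenient). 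For MR, choosing $\Reward$ to be $1$ exactly on the good first transition and $0$ elsewhere, with any $\gamma\in[0,1)$, gives $J_{MR}(\pi_g) = 1 > 0 = J_{MR}(\pi_b)$, and one checks the rest of the order comes out right. For LTL, the formula $\varphi$ asserting ``the first transition is the good transition'' (an atomic proposition in $\SSpace\times\ASpace\times\SSpace$, possibly combined with $\bigcirc$ for any further discrimination needed) gives $\E_\xi[\varphi(\xi)] = 1$ for $\pi_g$ and $0$ for $\pi_b$, again with the remaining policies handled as needed. Since $J_{LAR}$ is constant on $\{\pi_g,\pi_b\}$ for every reward function, LAR cannot produce a strict ordering between them, so the constructed ordering lies in $Ord_{LTL}(\Env)\cap Ord_{MR}(\Env)$ but not in $Ord_{LAR}(\Env)$, which is exactly the claim.

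The one subtlety to be careful about --- and the main obstacle --- is making sure the \emph{entire} target preorder (not just the $\pi_g$ versus $\pi_b$ comparison) is simultaneously realisable in both MR and LTL while remaining unrealisable in LAR. If the environment has only the policies $\pi_g$ and $\pi_b$ this is immediate, but with a richer policy space I need to verify that the MR instance and the LTL instance induce \emph{the same} total preorder on all of $\PolSpace$, and that this common preorder genuinely fails to be in $Ord_{LAR}(\Env)$. The cleanest route is to design the environment so that after step $0$ all policies behave identically (e.g. every post-initial state has a single available action, or a forced deterministic continuation), so that the only policy-relevant degree of freedom is the first action; then both the MR return and the LTL expectation are simple functions of $\Pr[\text{good first transition}]$ under $\pi$, they induce identical orderings, and LAR collapses that distinction entirely. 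I would also double-check the edge case that LAR's limit exists for the chosen environment (it does for finite MDPs under stationary policies when the long-run transition frequencies converge, which my symmetric construction guarantees), so that $J_{LAR}$ is well-defined and the ``LAR must tie $\pi_g$ and $\pi_b$'' step is rigorous.
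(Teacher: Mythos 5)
Your proposal is correct and follows essentially the same route as the paper: the paper's construction (Proposition \ref{prop:LARNMR_LTL}) is a four-state environment where the only policy-relevant choice is the first action at $s_0$, both branches lead deterministically into the same absorbing loop, MR rewards the good first transition, LTL uses $\lozenge s_A$, and LAR necessarily ties the two policies because the limit average depends only on the tail. The subtlety you flag about realising the whole preorder is handled exactly as you suggest --- all post-initial states have a single available action, so both $J_{MR}$ and $J_{LTL}$ reduce to the probability of the good first action and induce the same total preorder.
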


 LAR cannot distinguish between policies that generate the same trajectory lottery after a finite amount of time, even if the lotteries they generate over initial trajectory segments are different. LTL and MR can distinguish between policies based on differences that appear only early on in trajectories.

\begin{proposition}
    \textit{There exists a policy ordering that Outer Nonlinear Markov Rewards (ONMR) can express but Functions from Trajectories to Reals (FTR) cannot express.} Proof: \ref{prop:FTRNONMR}
\end{proposition}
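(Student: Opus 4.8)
The plan is to exploit the fact that any FTR objective is \emph{affine in the trajectory distribution}: since $J_{\mathrm{FTR}}(\pi) = \mathbb{E}_{\xi \sim \pi}[h(\xi)]$ is an integral of $h$ against the law of $\xi$ under $(\Env,\pi)$, whenever one policy's trajectory distribution is a convex combination of two others' distributions, its $J_{\mathrm{FTR}}$ value is the \emph{same} convex combination of their $J_{\mathrm{FTR}}$ values. In particular, along a family of policies whose trajectory laws trace out a line segment, $J_{\mathrm{FTR}}$ is affine, so a midpoint policy can never be strictly preferred to both endpoints. By contrast, an ONMR objective $f\bigl(\mathbb{E}_{\xi\sim\pi}[\sum_t \gamma^t \mathcal{R}]\bigr)$ routes the expected discounted return (itself affine in the trajectory law) through an arbitrary $f$, which we may take to be a ``tent'' function that is strictly maximised at an interior value. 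The witness ordering is the one induced this way, in which a midpoint policy strictly beats both endpoints.

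Concretely, I would take $\Env$ with $\SSpace=\{s_0,s_1\}$, $\ASpace=\{a,c\}$, $\Init=\delta_{s_0}$, and $\Trans(s_0,a)=\Trans(s_0,c)=\Trans(s_1,\cdot)=\delta_{s_1}$, so the agent makes one meaningful choice in $s_0$ and is then absorbed in $s_1$. Fix the ONMR specification $(\mathcal{R},f,\gamma)$ with $\gamma\in[0,1)$ arbitrary, $\mathcal{R}(s_0,c,s_1)=2$, $\mathcal{R}$ equal to $0$ on every other transition, and $f(x)=-(x-1)^2$. Writing $q:=\pi(s_0)(c)$, the only nonzero reward is collected at $t=0$, so the expected discounted return of $\pi$ is $2q$ and $J_{\mathrm{ONMR}}(\pi)=-(2q-1)^2$; this is a legitimate ONMR specification and hence induces a total preorder on $\PolSpace$. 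Now consider three stationary policies $\pi_a,\pi_{\mathrm{mix}},\pi_c$ that all play $a$ in $s_1$ and play, respectively, $a$, the uniform distribution over $\{a,c\}$, and $c$ in $s_0$; their $q$-values are $0,\tfrac12,1$, giving ONMR values $-1,0,-1$. Hence the induced ordering satisfies $\pi_{\mathrm{mix}}\succ\pi_a$ and $\pi_{\mathrm{mix}}\succ\pi_c$.

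Finally I would derive the contradiction. Put $\xi_a=(s_0,a,s_1,a,s_1,\dots)$ and $\xi_c=(s_0,c,s_1,a,s_1,\dots)$, two distinct trajectories; then the trajectory law of $\pi_a$ is $\delta_{\xi_a}$, that of $\pi_c$ is $\delta_{\xi_c}$, and that of $\pi_{\mathrm{mix}}$ is exactly $\tfrac12\delta_{\xi_a}+\tfrac12\delta_{\xi_c}$. Suppose some FTR specification $(h)$ expressed the same ordering. Then $J_{\mathrm{FTR}}(\pi_a)=h(\xi_a)$, $J_{\mathrm{FTR}}(\pi_c)=h(\xi_c)$, and $J_{\mathrm{FTR}}(\pi_{\mathrm{mix}})=\tfrac12 h(\xi_a)+\tfrac12 h(\xi_c)=\tfrac12\bigl(J_{\mathrm{FTR}}(\pi_a)+J_{\mathrm{FTR}}(\pi_c)\bigr)$. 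But matching the ordering forces $J_{\mathrm{FTR}}(\pi_{\mathrm{mix}})>J_{\mathrm{FTR}}(\pi_a)$ and $J_{\mathrm{FTR}}(\pi_{\mathrm{mix}})>J_{\mathrm{FTR}}(\pi_c)$, hence $J_{\mathrm{FTR}}(\pi_{\mathrm{mix}})>\max\{J_{\mathrm{FTR}}(\pi_a),J_{\mathrm{FTR}}(\pi_c)\}\ge\tfrac12\bigl(J_{\mathrm{FTR}}(\pi_a)+J_{\mathrm{FTR}}(\pi_c)\bigr)$, a contradiction. I expect the only delicate point to be the bookkeeping that makes the three trajectory laws an honest line segment; this is exactly why all three policies are pinned down to play $a$ in the absorbing state $s_1$, so that $\pi_{\mathrm{mix}}$'s trajectory law is literally the average of the other two (rather than merely sharing their averaged occupancy measure), which is what lets $J_{\mathrm{FTR}}$ inherit the affine structure and collapse at the midpoint.
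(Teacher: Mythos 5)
Your proof is correct and rests on the same core idea as the paper's: in an environment with exactly two possible trajectories, $J_{\mathrm{FTR}}$ is affine in the mixing probability (being an expectation over the two-point trajectory lottery), while ONMR's outer wrapper $f$ can be made nonlinear in the expected return, and no affine function can reproduce the resulting ordering. The only difference is in the choice of witness --- the paper uses a threshold $f(x)=\indicator[x\geq \tfrac12]$ and derives the contradiction from the pair $\pi_A \succ \pi_B$ together with $\pi_A \sim \pi_{\mathrm{mix}}$, whereas you use a concave $f(x)=-(x-1)^2$ so that the midpoint strictly beats both endpoints; both contradict affinity equally well, and your care in pinning all three policies to the same behaviour in the absorbing state (so the trajectory laws form a genuine segment) is exactly the right bookkeeping.
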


ONMR can express the objective of making a trajectory occur with at least some desired threshold probability, while being indifferent to achieving probabilities higher than the threshold. For example, this can be achieved (for a particular environment and reward specification) with the wrapper function $f_{ONMR}(x) = 
\begin{cases} 
1 & \text{if } x \geq 0.5, \\
0 & \text{otherwise}
\end{cases}$. By contrast, for an FTR specification $(f_{FTR})$ to incentivise a trajectory, $f_{FTR}$ must assign a higher value to that trajectory than others. However, this would incentivise maximizing the probability of that trajectory, not merely meeting a threshold.

\begin{proposition}
    \textit{There exists a policy ordering that Functions from Policies to Reals (FPR) can express but Generalised Outer Multi-Objective RL (GOMORL) cannot express.} Proof: \ref{prop:GOMORLNFPR}
\end{proposition}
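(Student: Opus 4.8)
The plan is to exhibit a single environment $\Env$ together with two policies whose \emph{policy evaluation vectors} under \emph{every} reward function coincide (forcing GOMORL to rank them equivalently), yet which are distinct as policies (so FPR can separate them). The key observation is that GOMORL, despite ranking via an arbitrary total preorder $\succeq_J$ on $\mathbb{R}^k$, only ever sees a policy through the vector $\vec J(\pi) = \langle J_1(\pi), \dots, J_k(\pi)\rangle$, and each $J_i(\pi)$ is a $\gamma$-discounted expected return. By the lemmas underlying \Cref{thm: GOMORL = OMO = TLO}, two policies share a policy evaluation vector for \emph{all} choices of reward functions precisely when they share an occupancy measure $\vec m(\pi)$ (for the chosen $\gamma$). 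So it suffices to find two distinct policies with the same occupancy measure; GOMORL will be forced to treat them identically regardless of $k$, $\Reward$, $\gamma$, or $\succeq_J$, while FPR can assign them different real values directly.

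First I would construct the environment. The simplest witness is a single state $s$ with two actions $a_1, a_2$, both self-looping deterministically, so $\Trans(s,a_1) = \Trans(s,a_2) = \delta_s$ and $\Init = \delta_s$. Take $\pi_1$ to play $a_1$ always and $\pi_2$ to play $a_2$ always — these are genuinely different elements of $\Pi_{S,A}$. However, this pair does \emph{not} have the same occupancy measure, since $\vec m(\pi_1)$ puts mass on $(s,a_1,s)$ and $\vec m(\pi_2)$ on $(s,a_2,s)$. To make occupancy measures collide I instead want a richer construction: for example, two states $s_0, s_1$ where from $s_0$ action $a$ goes to $s_1$ and action $b$ stays, and symmetric structure from $s_1$, arranged so that two different stationary policies induce the same discounted state-action-state visitation frequencies. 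Concretely, a standard trick is to use a symmetric two-state environment and two policies related by the state-relabelling symmetry, which forces identical occupancy vectors (up to the symmetry, which one arranges to be the identity on transitions by a suitable choice of dynamics). I would pick the smallest such $\Env$ and verify by direct computation that $\vec m(\pi_1) = \vec m(\pi_2)$ while $\pi_1 \neq \pi_2$.

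Given such a pair, the argument concludes quickly. For GOMORL: fix any GOMORL specification $(k, \Reward, \gamma', \succeq_J)$. If $\gamma' = \gamma$, then $J_i(\pi) = \langle \vec m(\pi), \Reward_i \rangle$ (the occupancy measure contracts with the $i$-th reward), so $\vec m(\pi_1) = \vec m(\pi_2)$ gives $\vec J(\pi_1) = \vec J(\pi_2)$, hence $\pi_1 \polsim \pi_2$; one must also handle $\gamma' \neq \gamma$, which is why I would actually choose the environment so that $\vec m_\gamma(\pi_1) = \vec m_\gamma(\pi_2)$ for \emph{all} $\gamma \in [0,1)$ simultaneously — the symmetry construction achieves this since relabelling commutes with discounting. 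Thus no GOMORL specification can strictly order $\pi_1$ above $\pi_2$. For FPR: take $J(\pi_1) = 1$, $J(\pi_2) = 0$, and $J$ arbitrary elsewhere; this is a valid FPR objective and strictly orders $\pi_1 \succ \pi_2$. Hence the policy ordering with $\pi_1 \succ \pi_2$ is FPR-expressible but not GOMORL-expressible.

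The main obstacle is the construction in the second paragraph: producing two genuinely distinct stationary policies whose $\gamma$-discounted occupancy measures agree for all $\gamma$. One clean route is the environment symmetry argument sketched above; another is to set up a small system of linear equations encoding $\vec m(\pi_1) = \vec m(\pi_2)$ and exhibit a solution with $\pi_1 \neq \pi_2$. I expect verifying that the chosen small environment genuinely has this property — and that the two policies are not accidentally equal — to be the only place requiring care; everything downstream is immediate from the occupancy-measure/policy-evaluation-vector equivalence already established for \Cref{thm: GOMORL = OMO = TLO}.
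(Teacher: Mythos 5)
Your overall framing is the right one --- GOMORL sees a policy only through $\vec J(\pi)$, which is a function of the occupancy measure, so it suffices to exhibit two \emph{distinct} policies that GOMORL is forced to rank as equivalent while FPR separates them --- and this matches the paper's strategy. However, the construction you propose for realising this has a genuine gap. You hope to find two distinct stationary policies, both actively visiting the states on which they differ, whose discounted occupancy measures nevertheless coincide (via a state-relabelling symmetry or by solving $\vec m(\pi_1)=\vec m(\pi_2)$ as a linear system). No such pair exists: by \Cref{lem:OM:TL} together with Corollary \ref{cor3}, equality of occupancy measures is equivalent to equality of trajectory lotteries, which in turn forces $\pi_1(s)=\pi_2(s)$ on \emph{every} state that either policy visits with nonzero probability. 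Concretely, if the policies disagree at a reachable state $s$, the ratio $\vec m(\pi)[s,a_1,s']/\vec m(\pi)[s,a_2,s']$ already differs between them, so the occupancy measures cannot collide. A symmetry merely permutes the coordinates of $\vec m$; it does not make two genuinely different visitation profiles equal. Your linear-system fallback would, if pushed through, reveal that the only solutions are policies differing on measure-zero states --- but as written your plan is pinned on a construction that cannot succeed.

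The repair is exactly the degenerate case you are implicitly excluding: make the two policies differ \emph{only} on a state that neither visits with positive probability. The paper uses a two-state environment where both policies deterministically self-loop at $s_0$ and so never reach $s_A$, but prescribe different actions at $s_A$. They then induce literally the same trajectory distribution, so every $J_i$ agrees for every $k$, $\Reward$, and $\gamma$ simultaneously --- which also dissolves your worry about handling all discount factors at once --- and GOMORL must declare them equivalent, while FPR assigns them $1$ and $0$. With that substitution your argument closes; without it, the key existence claim in your second paragraph is false.
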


A function from policies to reals can assign different values to any two distinct policies, even if they only differ on states that both policies have probability zero of ever visiting. GOMORL cannot express preferences between policies that are identical on all states visited with nonzero probability.

\begin{proposition}
    \textit{There exists a policy ordering that Generalised Outer Multi-Objective RL (GOMORL) can express but Functions from Policies to Reals (FPR) cannot express.} Proof: \ref{prop:FPRNGOMORL}
\end{proposition}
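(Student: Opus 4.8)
The plan is to exploit the gap between FPR, which can only express total preorders that admit a representation by a real-valued function (Debreu-representable orders), and GOMORL, which can realise a \emph{lexicographic} order on a two-parameter continuum of policies --- and lexicographic orders on such continua are the classical examples of non-representable total orders. Concretely, I would fix an environment $\Env$ whose stationary-policy space is in bijection with the square $[0,1]^2$, exhibit a GOMORL tuple inducing the lexicographic order under that identification, and then show that no $J:\PolSpace\to\mathbb{R}$ can induce the same policy ordering.

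For the environment, take $\SSpace=\{s_1,s_2\}$, $\ASpace=\{a_1,a_2\}$, with $\Init$ uniform on $\{s_1,s_2\}$ and $\Trans(s_i,a)=\delta_{s_i}$ for both actions, so each state is absorbing and a stationary policy is determined by the pair $(p_1,p_2)\in[0,1]^2$ with $p_i=\pi(s_i)(a_1)$; hence $\PolSpace\cong[0,1]^2$. For the GOMORL tuple I would take $k=2$, any $\gamma\in[0,1)$, and reward components $R_1$ that assign $1$ to transitions leaving $s_1$ under action $a_1$ and $0$ otherwise, and $R_2$ that do the same with $s_1$ replaced by $s_2$. A direct computation gives $J_1(\pi)=\frac{p_1}{2(1-\gamma)}$ and $J_2(\pi)=\frac{p_2}{2(1-\gamma)}$, so $\vec J(\pi)=\frac{1}{2(1-\gamma)}(p_1,p_2)$ is injective in $(p_1,p_2)$. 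Letting $\succeq_J$ be the lexicographic order on $\mathbb{R}^2$ (a total preorder), and using that multiplication by a positive scalar preserves lexicographic comparisons, the induced ordering is exactly $\pi\succeq^{*}\pi' \iff (p_1,p_2)\succeq_{\mathrm{lex}}(p_1',p_2')$, so $\succeq^{*}\in Ord_{GOMORL}(\Env)$.

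To show $\succeq^{*}\notin Ord_{FPR}(\Env)$, suppose for contradiction that $J:\PolSpace\to\mathbb{R}$ satisfies $\pi_1\succeq^{*}\pi_2 \iff J(\pi_1)\geq J(\pi_2)$. For each $a\in[0,1]$ let $\pi^{-}_a,\pi^{+}_a$ be the policies with first coordinate $a$ and second coordinate $1/3$ and $2/3$; since $\pi^{+}_a\succ^{*}\pi^{-}_a$ we get $J(\pi^{+}_a)>J(\pi^{-}_a)$, so choose a rational $q_a\in(J(\pi^{-}_a),J(\pi^{+}_a))$. For $a<a'$, lexicographic dominance gives $\pi^{+}_a\prec^{*}\pi^{-}_{a'}$, whence $q_a<J(\pi^{+}_a)<J(\pi^{-}_{a'})<q_{a'}$; thus $a\mapsto q_a$ injects the uncountable set $[0,1]$ into $\mathbb{Q}$, a contradiction. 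Hence no FPR specification expresses $\succeq^{*}$.

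The main obstacle is care rather than depth: one must ensure the chosen environment genuinely yields a two-dimensional policy space that the formalism can resolve. In particular $\Init$ must reach both $s_1$ and $s_2$ with positive probability (otherwise the unreachable state's parameter is invisible, the policy space collapses to an interval, and its lexicographic order \emph{is} representable), and the rewards must be picked so that $\vec J$ is injective, so that pulling $\succeq_J$ back along $\vec J$ does not collapse policies we need to keep separate. The non-representability step is standard but should be written out, since it is the crux of the separation.
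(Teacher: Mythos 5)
Your proof is correct and takes essentially the same route as the paper: a two-parameter family of policies, a lexicographic $\succeq_J$ over a $k=2$ policy-evaluation vector, and the classical non-representability argument that injects an uncountable set into $\mathbb{Q}$ via disjoint rational witnesses. Your environment (two absorbing states with a uniform initial distribution) differs cosmetically from the paper's three-state chain, and your contradiction step is phrased with explicit witness policies rather than sup/inf of value-slices, but the substance is identical.
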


There exist lexicographic preference orderings of GOMORL policy evaluation vectors that cannot be represented by any real-valued function (\cite{steen_counterexamples_2012-1}).  These orderings of policy evaluation vectors can be used to express tasks in GOMORL that are inexpressible in FPR.

\section{Discussion and Related Work}  \label{discussion}

Previous work has attempted to settle the reward hypothesis, which states that "all of what we mean by goals and purposes can be well thought of as maximization of the expected value of the cumulative sum of a received scalar signal (called reward)" (\cite{sutton_reward_2004}). Related to this is the von Neumann-Morgenstern (VNM) utility theorem, which states that an agent's preferences between lotteries over a finite set of outcomes can be represented as ordered according to the expected value of a scalar utility function of the outcomes if and only if these preferences satisfy four axioms of rationality (the VNM axioms) (\cite{von_neumann_theory_1944}). Consequently, if one's preferences cannot be represented by a utility function, the preferences must violate at least one of these axioms.

The VNM theorem is typically considered in the context of single-decision problems, while RL is applied to sequential decision-making. Without modification, the VNM theorem can be applied to RL settings by taking the outcome space to be a finite set of (full) trajectories. It then states that an agent's preferences about lotteries over a finite number of trajectories can be expressed as a function from trajectories to the reals (i.e., in the FTR formalism) if and only if those preferences satisfy the VNM axioms. Close relatives of the FTR formalism as defined in this work have been studied in \cite{chatterji2022} and \cite{pacchiano2021}, though in these works the formalism is restricted to trajectories of finite length (among other differences). Other recent work has proven stronger statements about the required \emph{form} of the trajectory return function under the supposition of additional axioms for preferences over temporally extended outcomes that supplement the VNM axioms.

\cite{pitis_rethinking_2019}, \cite{shakerinava_utility_2022}, and \cite{bowling_settling_2022} all identify axioms from which it is possible to prove that preferences regarding temporally extended outcomes can be expressed using Markovian rewards. All three papers identify slightly different axioms that, when supplemented with the VNM axioms, enable this proof. One caveat is that all three papers show that preferences can be represented with a Markovian reward function along with a \textit{transition-dependent} discount factor, rather than the traditional constant discount factor in MDPs.
 
 The VNM theorem and the work that has adapted it to sequential decision-making settings present axioms that must be violated for an objective to be inexpressible with Markov rewards or a function from trajectories to reals. Our work demonstrates several objectives that Markov rewards and functions from trajectories to reals cannot express, which raises the question: are there reasonable objectives that violate the axioms, or are the objectives we present unreasonable? We elaborate on this tension in \Cref{subsec: Connection to the VNM Theorem}, particularly for violations of the VNM axioms. In our work we also assume a conventional constant discount factor, but future work could compare the expressive power of formalisms that allow transition-dependent discount factors with the formalisms we consider.
 
\cite{abel_expressivity_2022} formalise a task in three different ways. One is a set of acceptable policies, another is an ordering over policies and the third is an ordering over trajectories. They find that for each notion of a task there exists an instance that no Markov reward function can capture. The second formalisation, policy orderings, is the one that we use, and we adapt their proof that Markov Rewards cannot express an "XOR" policy ordering to display the expressive limitations of several formalisms.

Multi-Objective RL (MORL) has received considerable attention in the literature. \cite{silver_reward_2021} argues that maximization of a scalar reward is enough to drive most behaviour in natural and artificial intelligence, while \cite{vamplew_scalar_2022} argue that multi-objective rewards are necessary. In particular, \cite{vamplew_scalar_2022} argue that scalar reward is insufficient to develop and align Artificial General Intelligence. \cite{miura_expressivity_2023} shows that multi-dimensional reward functions are also not universally expressive; they state necessary and sufficient conditions under which a task defined as a set of acceptable policies can be expressed with a multi-dimensional reward function. They use a definition of MORL where a policy is acceptable if each dimension of reward is above a given lower bound --- we generalise this notion for one of our formulations of MORL . Algorithms for solving multi-objective problems have been developed and used in several applications (\cite{hayes_practical_2022, jalalimanesh_multi-objective_2017, castelletti_multiobjective_2013}).

\cite{skalse_limitations_2023} demonstrate that some multi-objective, risk-sensitive, and modal tasks cannot be expressed with Markovian rewards. We analyse several formalisms related to this paper: Generalised Outer MORL (GOMORL) is based on this paper's definition of MORL, and our Inner Nonlinear Markov Rewards (INMR) is partially motivated as a generalisation for risk-sensitive tasks. We extend these definitions to study what we call Outer MORL (OMORL), Inner MORL (IMORL), and Outer Nonlinear Markov Rewards (ONMR), which enables a robust comparison of the expressive power of wrapper functions that are applied before and after taking an expected value over trajectories. \cite{skalse_limitations_2023} also use occupancy measures in a proof to demonstrate limitations of Markov Rewards, and we find occupancy measures a sufficiently useful tool for reasoning about formalism expressivity that we include Functions from Occupancy Measures to Reals and Occupancy Measure Orderings as formalisms in our analysis. Our extensions are further justified by their relationships to other prior work: Convex RL (e.g. \cite{zahavy2023,mutti2023}), also referred to as RL with general utilities \citep{zhang2020}, is equivalent to Functions from Occupancy Measures to Reals (FOMR) except with a convexity constraint on the functions, and RL with vectorial rewards \citep{cheung2019} is a special case of IMORL in the time-average reward setting.

Prior work has studied Reward Machines (RM), Limit Average Rewards (LAR), and Linear Temporal Logic (LTL) as well. \cite{icarte_using_2018} point out that reward machines can trivially express all objectives that Markov Rewards can express and provide rewards that depend on a finite number of features of the history so far, rather than just the most recent transition. This paper also applies RM specifications to toy tasks. \cite{mahadevan_average_1996} investigates algorithms for learning goals specified with Limit Average Rewards (LAR), but we have not found any work on the expressivity of LAR.

\cite{littman_environment-independent_2017} study LTL, which was previously utilised in control theory \citep{bacchus_rewarding_1970, puterman1994}, in an RL setting. The authors mention that many common tasks can be expressed with LTL, and are sometimes easier to express with LTL than with Markov Rewards. 
Related to LTL is Signal Temporal Logic (STL), an extension of LTL to the continuous time domain, which has a number of desirable properties \citep{balakrishnan2019,wang2023}. However, in this work we focus on objective specifications in discrete environments and using discrete time, for which STL is equivalent to LTL.

When attempting to reverse engineer an agent's utility function from its behavior, Maximum Entropy Inverse RL makes an assumption that an agent's distribution over actions has maximum entropy (\cite{ziebart_maximum_2008}). Our formalism of Regularised RL relates to this concept by allowing any function of the distributions over actions to be included in the RL objective.


\subsection{Limitations and Future Work} \label{subsec: Limitations and Future Work}

In this paper, we focus on stationary policies, and define objectives as orderings of stationary policies. In principle, however, policies can be history-dependent; that is, an agent can select actions based on the history rather than just the most recent state. We make the choice to focus on stationary policies because common RL algorithms (such as Q-learning (\cite{watkins_q-learning_1992})) derive exclusively stationary policies from state-action value functions. Many, but not all, of our expressivity results carry over to the history-dependent setting. We hope to see future research extend our study to history-dependent policies, and we discuss this topic in more detail in \Cref{subsec: History-dependent policies}.

A few of our results rely heavily on the technical details of the formalisms in ways that may reduce their practical significance. For instance, \Cref{thm: INMR = IMORL = FTR} relies on the ability of Inner Nonlinear Markov Rewards to utilise an arbitrary wrapper function for the trajectory return, and Theorems \ref{thm:OMORL = FOMR = FTLR} and \ref{thm: GOMORL = OMO = TLO} require allowing multi-objective formalisms access to an arbitrary number of reward functions. One direction for future work is to replicate our analysis under various restricting conditions; we discuss some conditions that may be of interest in \Cref{subsec: restricting conditions}.

Another limitation of our results is that we do not assess the important tradeoff between expressivity and tractability. An important area for future work might be to identify which of the more expressive formalisms considered here could be implemented in practice and provide guidance for balancing expressivity and tractability in various practical settings. The research community can address this question gradually by developing new algorithms to solve a range of problems with a variety of formalisms. Understanding the expressivity-tractability tradeoff might also involve considering restricting conditions for the formalisms, since the conditions required to make a formalism tractable to optimise could reduce its expressivity.

Until the tractability of these formalisms is studied more carefully, it is difficult to draw a clear conclusion about which formalisms are best to use in practice. A more immediate takeaway from our results is that many of the formalisms are expressively incommensurable; for example, Limit Average Reward, Linear Temporal Logic, Reward Machines, Regularised RL, and Outer Nonlinear Markov Rewards can each express policy orderings that none of the others can express. This highlights the importance of carefully selecting a formalism for any task of interest; our results elucidate strengths and limitations of different formalisms that can inform these selections.

Future research can also explore reward learning with formalisms other than Markov Rewards. There have been preliminary efforts in the direction of reward learning for Reward Machines (\cite{icarte_learning_2019}), Linear Temporal Logic (\cite{neider_learning_2018}), and Limit Average Reward (\cite{wu_inverse_2023}), but given the range of objectives that we show cannot be specified with Markov Rewards, it may be important to further develop reward learning alternatives that allow us to reliably express the objectives we desire across a variety of domains.

\section{Conclusion} \label{conclusion}
This paper provides a complete map of the relative expressivities of seventeen formalisms for specifying objectives in RL. We discovered meaningful expressive limitations to all of these formalisms,\footnote{With the exception of Policy Orderings, which by our definition can express all objectives.} including many of the alternatives and extensions to Markov rewards that have been discussed in the existing literature. We also related practical formalisms to a number of theoretical constructs, such as trajectory lotteries and occupancy measures, which help to fill out a richer intuitive picture of where each formalism stands in the expressivity hierarchy. We hope our work will serve as a reference point for future discussions of these methods for specifying temporally extended objectives, as well as provide an impetus for future work that contextualises these results in light of the tradeoff between expressivity and tractability that appears in both policy optimisation and reward learning.

\section*{Reproducibility Statement}

All of our results are theoretical. All proofs can be found in Appendix \ref{sec: Appendix Theorems and Proofs}. For our definitions of environments, objectives, total order and formalisms see Section \ref{preliminaries}. Any further assumptions have been stated in the proofs.

\bibliographystyle{gubbins/iclr2024_conference}

\appendix

\begin{landscape}

\begin{table}[H]
    \centering
    \tiny    

    \caption{Comprehensive table of expressivity comparisons. A green cell indicates that the formalism in the ROW can express the formalism in the COLUMN, and a red cell indicates that it cannot. The numbers in the cells are the numbers of the propositions which prove the result in question.}
    \label{tab:proofs}
\end{table}
\end{landscape}

\section{Appendix: Further Discussion} \label{sec: Appendix Discussion}

\subsection{Policy orderings as the measure of expressivity} \label{subsec: Policy orderings as the measure of expressivity}
We opted to compare the expressivities of different objective-specification formalisms in terms of the policy orderings they can induce because the policy ordering is the most fine-grained unit of analysis (that we are aware of) for this purpose. We explain why we believe it would be less informative to consider only a formalism's ability to induce a desired optimal policy in \Cref{subsec: Basic definitions}. Another alternative is to consider a formalism's ability to induce a desired set of acceptable policies (SOAP), as proposed in \cite{abel_expressivity_2022}. SOAPs are constructed by choosing a cutoff value to divide the policies into two classes (i.e., the acceptable and the unacceptable), and discarding all information about how the policies are ordered within each class. Thus, it is straightforward to see that policy orderings contain strictly more information than do SOAPs, and so any difference in expressivity that is captured by a SOAP task definition is also captured by the policy ordering task definition. The latter is therefore the one that is maximally sensitive to expressive differences.


Additionally, our choice to use the total preorder over policies as the unit of analysis is more continuous with the theoretical framework standardly employed in the formal theory of individual decision-making. In that context, the central question takes the form: Which preference orderings can be represented by a utility function (of a given sort)? For each of the formalisms considered in our paper, we ask a similar question: Which policy orderings can be represented by an objective specification in the formalism? We thought it would be valuable to maintain the parallels between these two questions in our paper (and discuss the connection explicitly in \Cref{discussion} and \Cref{subsec: Connection to the VNM Theorem}).

\subsection{Inducing total preorders} \label{subsec: Inducing Total Preorders}
Most of the formalisms in our work induce a scalar policy evaluation function, which in turn produces a total preorder on the set of policies because the "$\geq$" relation on $\R$ is a total preorder. The four formalisms that do not induce a scalar policy evaluation function (Occupancy Measure Orderings, Trajectory Lottery Orderings, Generalised Outer Multi-Objective RL, and Policy Orderings) all include a total preorder on some set in their objective specifications, and the policy orderings they produce inherit transitivity and strong connectedness from the specified total preorder.

\subsection{History-dependent policies} \label{subsec: History-dependent policies}
As mentioned in \Cref{subsec: Limitations and Future Work}, one limitation of our work is that an analysis of formalisms' ability to express orderings over the set of stationary policies may not match a similar analysis that considers orderings over the set of history-dependent policies. It is worth noting that many of our expressivity results directly carry over to history-dependent policies. Firstly, if formalism A can express a stationary policy ordering that is not expressible by formalism B, then clearly A can express an ordering over history-dependent policies that B cannot. So every negative expressivity result (i.e., every red box in \Cref{tab:proofs}) carries over directly. Further, several of our positive proofs do not assume that we are considering only stationary policies. For example, all trivial results in \Cref{subsec:Trivial Results of Expressivity} apply for history-dependent policies as well. Unfortunately, not all of our results carry over to
the history-dependent setting: it is fairly straightforward to show that a reward machine can provide different amounts of reward to two history-dependent policies that have the same occupancy measure, so Occupancy Measure Orderings (OMO) cannot express all history-dependent policy orderings that Reward Machines (RM) can express. This is a significant departure from our findings, in which OMO is near the top of the Hasse diagram and RM is near the bottom. We consider it an interesting direction for future research to round out a comprehensive comparison of expressivity for history-dependent policies, especially given that some formalisms (such as Functions from Trajectories to Reals and Reward Machines) seem particularly well-suited to expressing tasks where history-dependent policies are essential. To support such an endeavor, we include an incomplete table of expressivity relationships for history-dependent policies below (\Cref{tab: history-dependent-table}).

\subsection{Expressivity under restricting conditions} \label{subsec: restricting conditions}
One concern about the practical utility of some of our results as they stand is that they are largely blind to the ease or difficulty of using the theoretically available capabilities of the more expressive formalisms for training models on objectives that cannot be captured by less expressive formalisms. Some of our proofs lean heavily on technicalities of the formal definitions we offer, with little regard for whether those technicalities can reasonably be trusted to lie available to RL algorithms in practice. For example, the proof that INMR can express FTR relies on the fact that the Markovian trajectory return can be chosen to be an injective function into the reals. In practice, however, computers can only store numbers with finite precision, so we cannot assume that arbitrarily close real numbers will always be distinguishable to an RL algorithm.

An obvious suggestion for how this problem might be solved is to attempt to redefine each formalism in such a way that unreasonable technicalities are eliminated. Unfortunately, this strategy runs into a familiar problem: virtually any formal definition of an informal concept can be ``gamed'' to flaunt the intended spirit of that concept. For this reason, we believe an alternative strategy might be more fruitful: Replicate these results under various carefully chosen restricting conditions, whose purpose is to ensure that our theoretically sound results can be trusted to hold up in practice.

We offer the following (highly incomplete) list of potential restricting conditions:
\begin{enumerate}
    \item Introduce a cutoff to the precision of the inputs available to the functions wrapping the trajectory return(s) in INMR and IMORL, and wrapping the policy value(s) in ONMR and OMORL. This cutoff could correspond to the precision with which modern-day computers can store real numbers. 
    \item Introduce various other constraints on these wrapper functions. For instance, we might require that they be continuous, differentiable, monotonic, analytic, or computable, among other options.
    \item Introduce an upper limit to the number of reward functions that may be employed in IMORL, OMORL, and GOMORL. In particular, do not assume that this number may reach (or exceed) the number of transitions in an environment $\vert S \times A \times S \vert$.    
\end{enumerate}

\subsection{Connection to the VNM Theorem} \label{subsec: Connection to the VNM Theorem}

It is interesting to consider our results in light of the Von Neumann-Morgenstern (VNM) Utility Theorem. In particular, subject to a few assumptions, we have the following: \\ \textit{Any trajectory lottery ordering that cannot be induced by FTR must violate at least one of the VNM axioms}.

To apply the VNM axioms, we need to restrict to an environment $E$ in which only a finite number of trajectories are possible, $\Xi_E = \{ \xi_1, ..., \xi_n \}$. Then, the VNM axioms read as follows:
\begin{enumerate}
    \item \textit{Total Preorder}: $\succeq$ over $\Delta(\Xi_E)$ is complete and transitive.
    \item \textit{Continuity}: For all $L_{\pi_1}, L_{\pi_2}, L_{\pi_3} \in \Delta(\Xi_E)$ such that $L_{\pi_1} \succ L_{\pi_2}$ and $L_{\pi_2} \succ L_{\pi_3}$, there exist $\alpha, \beta \in (0,1)$ such that $L_{\pi_1} \alpha L_{\pi_3} \succ L_{\pi_2}$ and $L_{\pi_2} \succ L_{\pi_1} \beta L_{\pi_3}$.
    \item \textit{Independence}: For all $L_{\pi_1}, L_{\pi_2}, L_{\pi_3} \in \Delta(\Xi_E)$ and for all $\alpha \in (0,1)$: \\ $L_{\pi_1} \succ L_{\pi_2} \; \Longleftrightarrow \; L_{\pi_1} \alpha L_{\pi_3} \succ L_{\pi_2} \alpha L_{\pi_3}$.
\end{enumerate}
(Here $L\alpha M = \alpha L + (1- \alpha)M$ is the mixing operation.)

\textbf{VNM Theorem}: $\succeq$ over $\Delta(\Xi_E)$ respects \textit{Total Preorder}, \textit{Continuity}, and \textit{Independence} \textbf{if and only if}
there exists a function $u: \Xi_E \rightarrow \mathbb{R}$ such that for all $L_{\pi_1}, L_{\pi_2} \in \Delta(\Xi_E)$, \\
\[ L_{\pi_1} \succeq L_{\pi_2} \Longleftrightarrow \mathbb{E}_{\xi \sim \pi_1, T, I}[u(\xi)] \geq  \mathbb{E}_{\xi \sim \pi_2, T, I}[u(\xi)] \]

Under the assumption that our policy preferences are determined by our preferences over trajectory-lotteries (such that $L_{\pi_1} \succeq L_{\pi_2} \Longrightarrow \pi_1 \succeq \pi_2$), the RHS simply states the condition that there exists an FTR objective specification, given by $f_{FTR} = u$, which induces this policy ordering. Thus, \textbf{in an environment in which there are only finitely many possible trajectories, any policy ordering that FTR cannot induce must violate either \textit{Continuity} or \textit{Independence}} (\textit{Total Preorder} is guaranteed to hold by definition). Insofar as it is reasonable to evaluate policies based on the trajectory-lotteries that they give rise to, and insofar as our preferences over trajectory lotteries ought to conform to \textit{Continuity} and \textit{Independence}, we might justifiably wonder whether there is much to be gained from moving up the Hasse diagram beyond FTR.

\begin{landscape}

\begin{table}[H] 
    \centering
    \tiny    

    \caption{Incomplete table of expressivity comparisons using history-dependent policies. A green cell indicates that the formalism in the ROW can express all history-dependent policy orderings that the formalism in the COLUMN can express, a red cell indicates that the formalism in the ROW cannot do this, and a white cell indicates that the result is not yet known. The numbers in the cells are the numbers of the propositions which prove the result in question.}
    \label{tab: history-dependent-table}
\end{table}
\end{landscape}

\section{Appendix: Theorems and Proofs} \label{sec: Appendix Theorems and Proofs}

\subsection{Trivial Results of Expressivity($X \succeq Y$)} \label{subsec:Trivial Results of Expressivity}
\begin{figure}[H]
\includegraphics[height=10cm]{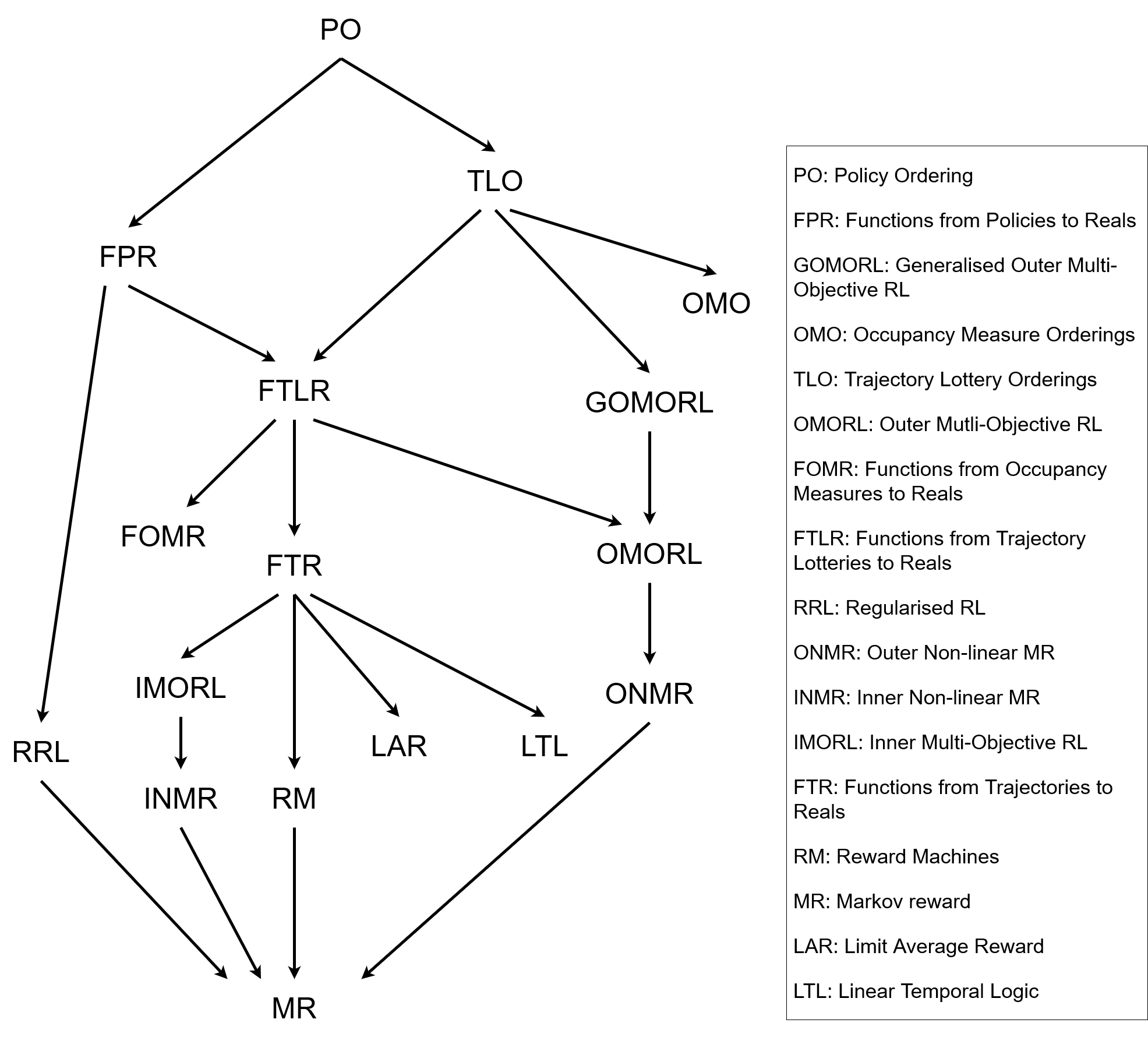}
\caption{This diagram displays straightforward inclusions of formalisms that are not necessarily strict. Unlike in \Cref{f:Hasse}, the absence of a sequence of arrows between two formalisms does not mean anything here. This diagram is simply a useful guide to some basic positive results of expressivity proven in this section.}
\label{fig:trivialDiagram}
\end{figure}

The propositions in this section are proven formally, but each proof is preceded by an intuitive argument. It will likely be helpful to have \Cref{tab:table_of_definitions} readily available for reference while reading these results and proofs.

\begin{proposition}[$  RM \succeq_{EXPR} MR$]\label{prop:RM>MR}
Any policy ordering expressible with Markov Rewards can be expressed with Reward Machines. 
\end{proposition}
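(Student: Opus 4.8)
The plan is to observe that Markov Rewards is literally the one-state special case of Reward Machines, and to make this precise by an explicit embedding.

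First, fix an arbitrary environment $\Env = (\SSpace, \ASpace, \Trans, \Init)$ and an arbitrary MR specification $\ObSpec[MR] = (\Reward, \gamma)$. I would construct the RM specification $(U, u_0, \delta_U, \delta_\Reward, \gamma)$ by taking the machine-state set $U := \{u_0\}$ to be a singleton, defining $\delta_U(u_0, s, a, s') := u_0$ for every transition $(s,a,s')$ (the only possible choice), setting $\delta_\Reward(u_0, u_0) := \Reward$ (this is the only machine transition, so $\delta_\Reward$ is thereby fully specified on $U \times U$), and reusing the same discount factor $\gamma \in [0,1)$.

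Next, I would verify that along any trajectory $\xi = (s_0, a_0, s_1, a_1, \dots)$ the induced machine-state sequence is constantly $u_0$, so that $\Reward_t = \delta_\Reward(u_t, u_{t+1}) = \delta_\Reward(u_0, u_0) = \Reward$ for all $t$. Consequently,
\[
J_{RM}(\pi) = \mathbb{E}_\xi\!\left[\sum_{t=0}^\infty \gamma^t \Reward_t(s_t, a_t, s_{t+1})\right] = \mathbb{E}_\xi\!\left[\sum_{t=0}^\infty \gamma^t \Reward(s_t, a_t, s_{t+1})\right] = J_{(\Reward,\gamma)}(\pi)
\]
for every $\pi \in \PolSpace$. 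Since both formalisms rank policies by $\pi_1 \succeq \pi_2 \iff J(\pi_1) \geq J(\pi_2)$, the constructed RM specification induces exactly the same total preorder on $\PolSpace$ as $\ObSpec[MR]$. As $\Env$ and $(\Reward, \gamma)$ were arbitrary, $Ord_{MR}(\Env) \subseteq Ord_{RM}(\Env)$ for every $\Env \in Envs$, which is precisely $RM \exprsucceq MR$.

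There is essentially no real obstacle here: the content of the proof is just the recognition that a Reward Machine with a trivial (one-state) automaton behaves identically to a plain Markov reward function. The only points requiring a moment's care are that $\delta_\Reward$ must be defined on the whole of $U \times U$ --- immediate when $|U| = 1$ --- and that the RM formalism indeed allows the discount factor to be chosen freely in $[0,1)$, so we may simply copy $\gamma$ across.
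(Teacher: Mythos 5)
Your construction is exactly the paper's: a singleton machine-state set with the trivial transition function, $\delta_\Reward(u_0,u_0) := \Reward$, and the same discount factor, followed by the observation that $u_t \equiv u_0$ forces $J_{RM} = J_{MR}$ pointwise on $\PolSpace$. The proof is correct and matches the paper's argument.
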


\textbf{Intuition:} A reward machine can use different reward functions depending on aspects of the history it has seen so far. However, it can also use a constant reward function, in which case it is equivalent to a Markov Rewards specification.

\begin{proof}[Proof.]
Let $(\Reward_{MR},\gamma_{MR})$ be an arbitrary MR objective specification in an arbitrary environment. Then the following RM specification $(U,u_0,\delta_U,\delta_{\Reward}, \gamma)$ expresses the same policy ordering:
\begin{itemize}
    \item $U:=\{u_0\}$ 
    \item $u_0:=u_0$
    \item $\forall s \in S, \delta_U(u_0,s):=u_0 $, where $S$ is the set of states in the environment
    \item $\delta_{\Reward}(u_0,u_0):=\Reward_{MR}$
    \item $\gamma:=\gamma_{MR}$
\end{itemize}
Since the specified reward machine always utilises the same reward function and discount factor as the MR specification, it yields the same exact policy evaluation function. 
\begin{align*}
    J_{RM}(\pi) &= \mathbb{E}_{\xi \sim \pi,T,I,\delta_U,u_0}\left[\sum\limits_{t=0}^\infty \gamma^t \Reward_t(s_t,a_t,s_{t+1})\right], \text{ where } \Reward_t = \delta_{\Reward}(u_t,u_{t+1})\\
    &= \mathbb{E}_{\xi \sim \pi,T,I}\left[\sum\limits_{t=0}^\infty \gamma_{MR}^t \Reward_{MR}(s_t,a_t,s_{t+1})\right] \text{, since } u_t=u_0 \forall t \text{ and } \delta_{\Reward}(u_0,u_0)=\Reward_{MR}\\
    &=J_{MR}(\pi)
\end{align*} 
Both RM and MR derive policy orderings directly from the policy evaluation functions, so this means the two specifications induce the same policy ordering. Therefore, for any MR specification in any environment, it is possible to construct an RM specification which expresses the same policy ordering. 
\end{proof}

\begin{proposition}[$ INMR \succeq_{EXPR}  MR $]
    Any Markovian Reward specification can be captured by an Inner Nonlinear Markovian Reward specification. ($\forall E \in Envs, \  Ord_{MR}(E) \subseteq Ord_{INMR}(E)$) \label{prop:INMR>MR}
\end{proposition}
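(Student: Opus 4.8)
The plan is to exhibit, for an arbitrary environment $\Env$ and an arbitrary MR specification $(\Reward_{MR}, \gamma_{MR})$, an INMR specification $(\Reward, f, \gamma)$ that induces the identical policy-evaluation function, hence the identical policy ordering. The natural choice is to ``do nothing'': set $\Reward := \Reward_{MR}$, $\gamma := \gamma_{MR}$, and let $f := \mathrm{id}_{\mathbb{R}}$ be the identity map on $\mathbb{R}$. This is a legal INMR specification since the definition in \Cref{tab:table_of_definitions} permits $f$ to be any function $\mathbb{R} \to \mathbb{R}$, and the identity certainly qualifies; no continuity, monotonicity, or similar properties are required.

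The key step is then the computation
\begin{align*}
J_{INMR}(\pi)
&= \mathbb{E}_\xi\!\left[f\!\left(\sum_{t=0}^\infty \gamma^t \Reward(s_t,a_t,s_{t+1})\right)\right] \\
&= \mathbb{E}_\xi\!\left[\sum_{t=0}^\infty \gamma_{MR}^t \Reward_{MR}(s_t,a_t,s_{t+1})\right] \\
&= J_{MR}(\pi),
\end{align*}
where the middle equality uses $f = \mathrm{id}$, $\Reward = \Reward_{MR}$, and $\gamma = \gamma_{MR}$. One should briefly note that the inner sum is well defined and finite: since $\SSpace$ and $\ASpace$ are finite, $\Reward_{MR}$ is bounded, and $\gamma_{MR} \in [0,1)$, so the geometric series converges absolutely and the expectation exists; this is exactly the standing assumption under which $J_{MR}$ itself is defined, so nothing new is needed.

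Finally, I would close by observing that both MR and INMR derive their policy orderings from their scalar $J$ functions via $\pi_1 \succeq \pi_2 \iff J(\pi_1) \ge J(\pi_2)$ (as recorded in \Cref{tab:table_of_definitions}). Since $J_{INMR} = J_{MR}$ as functions on $\PolSpace$, the induced total preorders on $\PolSpace$ coincide. As $\Env$ and $(\Reward_{MR}, \gamma_{MR})$ were arbitrary, this gives $Ord_{MR}(\Env) \subseteq Ord_{INMR}(\Env)$ for all $\Env \in Envs$, which is the claim. I do not anticipate any genuine obstacle here — the result is essentially immediate once one picks the identity wrapper function; the only thing worth being careful about is making explicit that the identity is an admissible choice of $f$ and that the return series is summable.
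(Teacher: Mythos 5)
Your proposal is correct and matches the paper's own proof essentially verbatim: both set $\tilde{\Reward} := \Reward_{MR}$, $\tilde{\gamma} := \gamma_{MR}$, and take the wrapper $f$ to be the identity, after which $J_{INMR} = J_{MR}$ and the induced orderings coincide. The extra remark on absolute convergence of the return series is fine but not needed beyond what the paper already assumes.
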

\textbf{Intuition:} If the wrapper function in INMR is set to be the identity function, then that function becomes idle, and the INMR policy evaluation function reduces to the MR policy evaluation function:
\[J_{INMR}(\pi):= \trajexp[][f(G(\xi))] = \trajexp[][G(\xi)] =: J_{MR}(\pi)\]

\begin{proof}[Proof.]
    We must show that for any environment $E$ and Markovian Reward specification $O_{MR}=(\Reward, \gamma)$, there is an Inner-nonlinear specification $O_{INMR}=(\tilde \Reward, \tilde f, \tilde \gamma)$ such that $\preceq_{E, O_{INMR}}$ is identical to $\preceq_{E, O_{MR}}$. We construct $O_{INMR}$ as follows:
    \begin{itemize}
        \item $\tilde \Reward \coloneqq \Reward$
        \item $\tilde f: \mathbb{R} \to \mathbb{R}$ is the identity ($\tilde f(x)=x$).
        \item $\tilde \gamma \coloneqq \gamma$
    \end{itemize}

We can see immediately that $J_{E, O_{MR}}(\pi) = J_{E, O_{INMR}}(\pi)$ for all $\pi \in \Pi_E$, and thus these two specifications induce the same policy ordering.
%
\end{proof}

\begin{proposition}[$  IMORL \succeq_{EXPR} INMR $] \label{prop:IMORL>INMR}
Any policy ordering expressible with an Inner Nonlinear Markov Reward specification can be expressed with an Inner Multi-Objective RL specification. 
\end{proposition}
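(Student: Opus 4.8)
The plan is to mirror the structure of the preceding trivial inclusions (\Cref{prop:INMR>MR}, \Cref{prop:RM>MR}): given an arbitrary environment $E$ and an INMR specification, exhibit an explicit IMORL specification that induces the identical policy evaluation function, and hence the identical policy ordering. The key observation is that IMORL with $k=1$ is essentially INMR: a $1$-dimensional reward function is just a scalar reward function, and a function $f : \mathbb{R}^1 \to \mathbb{R}$ is just a function $\mathbb{R} \to \mathbb{R}$, with $G_1(\xi)$ coinciding with the INMR trajectory return $G(\xi)$.

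First I would fix an arbitrary $E \in Envs$ and an arbitrary INMR specification $O_{INMR} = (\Reward, f, \gamma)$ with $\Reward : \SSpace \times \ASpace \times \SSpace \to \mathbb{R}$, $f : \mathbb{R} \to \mathbb{R}$, $\gamma \in [0,1)$. Then I would define the IMORL specification $O_{IMORL} = (k, \tilde\Reward, \tilde f, \tilde\gamma)$ by setting $k := 1$, $\tilde\Reward := \Reward$ (now regarded as a map into $\mathbb{R}^1 \cong \mathbb{R}$), $\tilde f := f$ (regarded as a map $\mathbb{R}^1 \to \mathbb{R}$), and $\tilde\gamma := \gamma$. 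The next step is the short computation that the two policy evaluation functions agree: for any $\pi \in \PolSpace$,
\begin{align*}
    J_{O_{IMORL}}(\pi) &= \mathbb{E}_\xi\!\left[\tilde f\!\left(G_1(\xi)\right)\right], \quad G_1(\xi) = \sum_{t=0}^\infty \tilde\gamma^{\,t}\tilde\Reward_1(s_t,a_t,s_{t+1}) \\
    &= \mathbb{E}_\xi\!\left[f\!\left(\sum_{t=0}^\infty \gamma^t \Reward(s_t,a_t,s_{t+1})\right)\right] = J_{O_{INMR}}(\pi).
\end{align*}
Since both INMR and IMORL order policies by $\pi_1 \succeq \pi_2 \iff J(\pi_1) \geq J(\pi_2)$, equal evaluation functions give equal orderings, so $\preceq_{E, O_{IMORL}}$ coincides with $\preceq_{E, O_{INMR}}$. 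As $E$ and $O_{INMR}$ were arbitrary, $Ord_{INMR}(E) \subseteq Ord_{IMORL}(E)$ for all $E$, i.e. $IMORL \exprsucceq INMR$.

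There is essentially no obstacle here: the only thing worth a sentence of care is the harmless identification $\mathbb{R}^1 \cong \mathbb{R}$, which makes a $1$-dimensional reward function literally a scalar reward function and a function on $\mathbb{R}^1$ literally a function on $\mathbb{R}$; once that is noted, the IMORL objective tuple with $k=1$ unfolds to exactly the INMR objective tuple. I would also include a one-line intuitive remark before the proof, in the style of the surrounding propositions, stating that IMORL reduces to INMR when only a single reward dimension is used.
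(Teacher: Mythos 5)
Your proposal is correct and matches the paper's own proof essentially verbatim: both set $k=1$, reuse the INMR reward, wrapper function, and discount factor, and observe that the resulting $J_{IMORL}$ coincides with $J_{INMR}$, hence the orderings agree. The extra remark about the identification $\mathbb{R}^1 \cong \mathbb{R}$ is a harmless elaboration the paper also makes parenthetically.
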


\textbf{Intuition:} IMORL can always choose to use the same reward function, wrapper function, and discount factor as INMR, and can always choose not to use more than one reward function.

\begin{proof}[Proof.]
Let $(\Reward_{INMR},f_{INMR},\gamma_{INMR})$ be an arbitrary INMR objective specification in an arbitrary environment. Construct the IMORL specification $(k,\Reward,f,\gamma)$ as follows:
\begin{itemize}
    \item $k=1$
    \item $\Reward = \Reward_{INMR}$ (since $k=1$, $\R^k = \R$ and $\Reward$ is a function from $\SSpace \times \ASpace \times \SSpace$ to $\R$)
    \item $f = f_{INMR}$
    \item $\gamma=\gamma_{INMR}$
\end{itemize}
We can see immediately that $J_{IMORL}(\pi) = J_{INMR}(\pi)$ for all $\pi \in \Pi_E$, and thus these two specifications induce the same policy ordering.
\end{proof}

\begin{proposition}[$  FTR \succeq_{EXPR} IMORL$] \label{prop:FTR>IMORL}
Any policy ordering expressible with an Inner Multi-Objective RL specification can be expressed with a Function from Trajectories to Reals.
\end{proposition}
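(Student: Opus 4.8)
The plan is to show directly that any IMORL specification can be simulated by an FTR specification, by composing the (measurable) map from trajectories to the vector of discounted returns with the IMORL wrapper function. Let $(k,\Reward,f,\gamma)$ be an arbitrary IMORL objective specification in an arbitrary environment $E$. Recall that its policy evaluation function is $J_{IMORL}(\pi) = \mathbb{E}_\xi\left[f\left(G_1(\xi),\dots,G_k(\xi)\right)\right]$, where $G_i(\xi) = \sum_{t=0}^\infty \gamma^t \Reward_i(s_t,a_t,s_{t+1})$.

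First I would define the function $g\colon \Xi \to \mathbb{R}^k$ by $g(\xi) = \langle G_1(\xi),\dots,G_k(\xi)\rangle$; this is well-defined because $\gamma \in [0,1)$ and rewards are bounded on the finite transition set $\SSpace\times\ASpace\times\SSpace$, so each series converges. Then I would set the FTR specification to be $(f_{FTR})$ with $f_{FTR} \coloneqq f \circ g$, i.e. $f_{FTR}(\xi) = f(G_1(\xi),\dots,G_k(\xi))$. Since FTR places no constraints whatsoever on $f_{FTR}$ (it need not be continuous, measurable-in-any-special-sense, etc.), this is a legitimate FTR objective. Then for every policy $\pi\in\Pi_E$,
\begin{align*}
    J_{FTR}(\pi) &= \mathbb{E}_\xi\left[f_{FTR}(\xi)\right] = \mathbb{E}_\xi\left[f\left(G_1(\xi),\dots,G_k(\xi)\right)\right] = J_{IMORL}(\pi).
\end{align*}
Since both formalisms order policies by the $\geq$ relation on their scalar evaluation functions, the two specifications induce the same policy ordering on $\Pi_E$. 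As $E$ and the IMORL specification were arbitrary, this gives $Ord_{IMORL}(E) \subseteq Ord_{FTR}(E)$ for all $E$, i.e. $FTR \exprsucceq_{EXPR} IMORL$.

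There is essentially no hard part here — this is one of the trivial inclusions, and the only thing to be even slightly careful about is that the definition of FTR in \Cref{tab:table_of_definitions} takes the expectation $\mathbb{E}_\xi[f(\xi)]$ over trajectories, so I should make sure the composed function is exactly the integrand appearing in $J_{IMORL}$, which it is by construction. If the paper's conventions require $f_{FTR}$ to be measurable for the expectation to be defined, I would note that $g$ is a pointwise limit of measurable functions of the finite trajectory prefixes and hence measurable, so $f\circ g$ is measurable whenever $f$ is; but since FTR is stated without such a restriction I expect no issue, mirroring how Proposition~\ref{prop:IMORL>INMR} and Proposition~\ref{prop:INMR>MR} were handled by simply exhibiting the construction and observing the evaluation functions coincide.
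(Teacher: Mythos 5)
Your proposal is correct and takes essentially the same approach as the paper: both define $f_{FTR}(\xi) = f\left(G_1(\xi),\dots,G_k(\xi)\right)$ and observe that the two policy evaluation functions coincide. Your additional remarks on convergence and measurability are fine but not needed given the paper's unrestricted definition of FTR.
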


\textbf{Intuition:} The expression $f(G_1(\xi),...,G_k(\xi))$ in the IMORL policy evaluation function is a function from trajectories to reals.

\begin{proof}[Proof.]
Let $(k,\Reward,f_{IMORL},\gamma)$ be an arbitrary IMORL objective specification in an arbitrary environment. Then the following FTR specification $(f_{FTR})$ expresses the same policy ordering:
\begin{itemize}
    \item $f_{FTR}(\xi) = f_{IMORL}\left(\sum\limits_{t=0}^\infty \gamma^t \Reward_1(s_t,a_t,s_{t+1}),...,\sum\limits_{t=0}^\infty \gamma^t \Reward_k(s_t,a_t,s_{t+1})\right)$
\end{itemize}

Here, $\Reward_i(s,a,s') := \Reward(s,a,s')[i]$. This is a function from trajectories to reals because a trajectory $\xi = (s_0,a_0,s_1,a_1,...)$ uniquely defines $(s_t,a_t,s_{t+1})$ for all $t$.
\end{proof}


\begin{proposition}[$RRL \succeq_{EXPR} MR $] \label{prop:RRL>MR}

Any policy ordering expressible with Markov Rewards can be expressed with Regularised RL. ($\forall E \in Envs, \  Ord_{MR}(E) \subseteq Ord_{RRL}(E)$)
\end{proposition}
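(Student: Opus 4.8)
The plan is to mimic the structure of the earlier trivial inclusion proofs (for instance \Cref{prop:INMR>MR}): exhibit, for an arbitrary environment $E$ and arbitrary Markov Reward specification $(\Reward_{MR}, \gamma_{MR})$, an explicit Regularised RL specification that induces exactly the same policy evaluation function, and hence the same policy ordering. The natural choice is to ``switch off'' the regularisation term by taking the regularisation coefficient $\alpha$ to be zero (or, equivalently, taking $F$ to be the zero function), while copying over the reward function and discount factor verbatim.

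Concretely, given $(\Reward_{MR}, \gamma_{MR})$, I would set $\Reward \coloneqq \Reward_{MR}$, $\alpha \coloneqq 0$, $F \colon \Delta(A) \to \mathbb{R}$ to be any fixed function (say the constant-zero function), and $\gamma \coloneqq \gamma_{MR}$. Then for every $\pi \in \PolSpace$,
\begin{align*}
    J_{RRL}(\pi) &= \mathbb{E}_\xi\left[\sum_{t=0}^\infty \gamma^t \left(\Reward(s_t,a_t,s_{t+1}) - \alpha F[\pi(s_t)]\right)\right] \\
    &= \mathbb{E}_\xi\left[\sum_{t=0}^\infty \gamma_{MR}^t \Reward_{MR}(s_t,a_t,s_{t+1})\right] = J_{MR}(\pi),
\end{align*}
since the $\alpha F[\pi(s_t)]$ term vanishes when $\alpha = 0$. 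Because both RRL and MR order policies directly by the ``$\geq$'' relation on their scalar $J$ values, identical $J$ functions give identical orderings, so the constructed RRL specification expresses the same policy ordering as the given MR specification. As $E$ and $(\Reward_{MR},\gamma_{MR})$ were arbitrary, this gives $\forall E \in Envs,\ Ord_{MR}(E) \subseteq Ord_{RRL}(E)$.

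There is essentially no obstacle here; the only point requiring a word of care is making sure $\alpha = 0$ is permitted by the definition of RRL, which it is since $\alpha \in \mathbb{R}$ in \Cref{tab:table_of_definitions} — so no constraint like $\alpha > 0$ needs to be worked around. If one preferred to avoid relying on $\alpha = 0$, the alternative is to keep $\alpha$ arbitrary and instead take $F$ to be constant, so that $\alpha F[\pi(s_t)] = \alpha c$ is a policy-independent constant; then $J_{RRL}(\pi) = J_{MR}(\pi) - \alpha c \sum_{t} \gamma^t$ differs from $J_{MR}(\pi)$ by a constant, which preserves the ordering. Either way the argument is a one-line computation, exactly parallel to the preceding trivial-inclusion propositions, so I would present it in the same terse style with the explicit construction followed by the short chain of equalities.
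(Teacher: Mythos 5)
Your construction (copy $\Reward$ and $\gamma$, set $\alpha = 0$, let $F$ be arbitrary) is exactly the paper's proof, and your verification that the $J$ functions coincide is the same one-line computation. The proposal is correct and matches the paper's approach.
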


\textbf{Intuition:} The additional term in the RRL objective, $\alpha F[\pi(s)]$, can be set to zero by selecting $\alpha=0$. This makes the RRL objective identical to the MR objective.

\begin{proof}[Proof.]
    We must show that, for any environment $E$ and Markovian Reward specification $O_{MR}=(\Reward, \gamma)$, there is a Regularised RL Specification $O_{RRL}=(\tilde \Reward, \tilde \alpha, \tilde F, \tilde \gamma)$ such that $\preceq_{E, O_{RRL}}$ is identical to $\preceq_{E, O_{MR}}$. 
    We construct $O_{RRL}$ as follows:
    \begin{itemize}
        \item $\tilde \Reward \coloneqq \Reward$
        \item $\tilde F: \Delta A \to \mathbb{R}$ is any function.
        \item $\tilde \alpha \coloneqq 0$
        \item $\tilde \gamma \coloneqq \gamma$
    \end{itemize}

We can see immediately that $J_{E, O_{MR}}(\pi) = J_{E, O_{RRL}}(\pi)$ for all $\pi \in \Pi_E$, and so these two specifications induce the same policy ordering.
\end{proof}

\begin{proposition}[$  ONMR \succeq_{EXPR} MR $] \label{prop:ONMR>MR}
Any policy ordering expressible with a Markov Reward specification can be expressed with an Outer Nonlinear Markov Reward specification.
\end{proposition}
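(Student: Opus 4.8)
The plan is to mimic the pattern of the preceding trivial embeddings (\Cref{prop:INMR>MR} and \Cref{prop:RRL>MR}): exhibit, for an arbitrary environment $E$ and MR specification $O_{MR} = (\Reward, \gamma)$, an ONMR specification that reduces to it by neutralising the extra structural component. Here the extra component is the outer wrapper function $f$ applied to the expected discounted return; setting $f$ to the identity makes it idle.

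Concretely, I would construct $O_{ONMR} = (\tilde\Reward, \tilde f, \tilde\gamma)$ with $\tilde\Reward \coloneqq \Reward$, $\tilde f : \mathbb{R} \to \mathbb{R}$ the identity map $\tilde f(x) = x$, and $\tilde\gamma \coloneqq \gamma$. Then by the definition of the ONMR policy evaluation function in \Cref{tab:table_of_definitions},
\[
J_{E,O_{ONMR}}(\pi) = \tilde f\!\left(\mathbb{E}_\xi\!\left[\sum_{t=0}^\infty \tilde\gamma^{\,t}\,\tilde\Reward(s_t,a_t,s_{t+1})\right]\right) = \mathbb{E}_\xi\!\left[\sum_{t=0}^\infty \gamma^{t}\,\Reward(s_t,a_t,s_{t+1})\right] = J_{E,O_{MR}}(\pi)
\]
for every $\pi \in \Pi_E$. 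Since both MR and ONMR order policies by the "$\geq$" relation on their scalar evaluation functions, identical evaluation functions yield identical policy orderings, so $\preceq_{E,O_{ONMR}}$ coincides with $\preceq_{E,O_{MR}}$. As $E$ and $O_{MR}$ were arbitrary, this gives $Ord_{MR}(E) \subseteq Ord_{ONMR}(E)$ for all $E \in Envs$, i.e., $ONMR \succeq_{EXPR} MR$.

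There is no real obstacle here — the only thing to be slightly careful about is confirming that the identity function is a legitimate choice of wrapper $f$ under the definition of ONMR (it is, since ONMR places no constraints such as monotonicity or continuity on $f$), and that the discount factor $\gamma \in [0,1)$ carries over verbatim, which it does. The proof is essentially a one-line substitution argument.
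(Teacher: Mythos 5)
Your proposal is correct and matches the paper's own proof exactly: both set $\tilde\Reward \coloneqq \Reward$, $\tilde\gamma \coloneqq \gamma$, and take the wrapper $f$ to be the identity, so that $J_{E,O_{ONMR}} = J_{E,O_{MR}}$ pointwise and the induced orderings coincide. Nothing further is needed.
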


\textbf{Intuition:} ONMR can specify the same reward function and discount factor as MR, then select the identity as a wrapper function so that the function does not affect anything.

\begin{proof}[Proof.]
Let $(\Reward_{MR},\gamma_{MR})$ be an arbitrary MR objective specification in an arbitrary environment. Construct an ONMR specification $(\Reward_{ONMR},f,\gamma_{ONMR})$ as follows:
\begin{itemize}
    \item $\Reward_{ONMR} = \Reward_{MR}$
    \item $f: \R \to \R$ is the identity function, $f(x)=x$
    \item $\gamma_{ONMR} = \gamma_{MR}$
\end{itemize}

We can see immediately that $J_{E, O_{ONMR}}(\pi) = J_{E, O_{MR}}(\pi)$ for all $\pi \in \Pi_E$, and so these two specifications induce the same policy ordering.
%
\end{proof}

\begin{proposition}[$  OMORL \succeq_{EXPR} ONMR$] \label{prop:OMORL>ONMR}
Any policy ordering expressible with an Outer Nonlinear Markov Reward specification can be expressed with an Outer Multi-Objective RL specification. 
\end{proposition}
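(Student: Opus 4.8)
OMORL generalises ONMR by allowing the reward function to be vector-valued rather than scalar-valued. Setting the dimension to $k=1$ recovers exactly the ONMR setup: the single component $J_1(\pi)$ is the expected discounted return, and the wrapper function is applied to it after taking the expectation.

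The plan is to mimic the structure of the preceding trivial-inclusion proofs: exhibit, for an arbitrary environment $E$ and arbitrary ONMR specification, an OMORL specification that induces the identical scalar policy evaluation function, whence the induced policy orderings coincide. First I would fix an arbitrary environment $E$ and an arbitrary ONMR objective specification $(\Reward_{ONMR}, f_{ONMR}, \gamma_{ONMR})$. Then I would construct the OMORL specification $(k, \Reward_{OMORL}, f_{OMORL}, \gamma_{OMORL})$ by taking $k = 1$ (so that $\mathbb{R}^k = \mathbb{R}$ and a $k$-dimensional reward function is just an ordinary scalar reward function), $\Reward_{OMORL} = \Reward_{ONMR}$, $f_{OMORL} = f_{ONMR}$, and $\gamma_{OMORL} = \gamma_{ONMR}$. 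Next I would compute $J_1(\pi) = \mathbb{E}_\xi\!\left[\sum_{t=0}^\infty \gamma_{ONMR}^t\, \Reward_{ONMR}(s_t,a_t,s_{t+1})\right]$ and hence $J_{OMORL}(\pi) = f_{OMORL}(J_1(\pi)) = f_{ONMR}\!\left(\mathbb{E}_\xi\!\left[\sum_{t=0}^\infty \gamma_{ONMR}^t\, \Reward_{ONMR}(s_t,a_t,s_{t+1})\right]\right) = J_{ONMR}(\pi)$ for every $\pi \in \Pi_E$. Since both formalisms order policies by the $\geq$ relation on their scalar $J$ values, the two specifications induce the same total preorder on $\Pi_E$, so $Ord_{ONMR}(E) \subseteq Ord_{OMORL}(E)$; as $E$ was arbitrary, this gives $OMORL \exprsucceq ONMR$.

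There is essentially no obstacle here — the only point requiring a word of care is the identification $\mathbb{R}^1 = \mathbb{R}$, i.e. that a $1$-dimensional reward function in the OMORL sense is literally an ordinary Markovian reward function, so that the constructed specification is genuinely well-formed as an OMORL objective. Everything else is a direct unwinding of the two policy-evaluation formulas.
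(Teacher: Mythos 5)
Your proposal is correct and is essentially identical to the paper's proof: both set $k=1$, reuse the ONMR reward function, wrapper function, and discount factor, and observe that $J_{OMORL}(\pi) = J_{ONMR}(\pi)$ for all $\pi$, so the induced orderings coincide. Nothing further is needed.
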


\textbf{Intuition:} The OMORL specification can use the same reward function, discount factor, and wrapper function as ONMR, and can simply not use more than one reward function.

\begin{proof}[Proof.]
Let $(\Reward_{ONMR},f_{ONMR},\gamma_{ONMR})$ be an arbitrary ONMR objective specification in an arbitrary environment. Construct an OMORL specification $(k,\Reward,f,\gamma)$ as follows:
\begin{itemize}
    \item $k=1$
    \item $\Reward = \Reward_{ONMR}$ (since $k=1$, $\R^k = \R$ and $\Reward$ is a function from $\SSpace \times \ASpace \times \SSpace$ to $\R^k=\R$)
    \item $f = f_{ONMR}$
    \item $\gamma=\gamma_{ONMR}$
\end{itemize}

We can see immediately that $J_{E, O_{OMORL}}(\pi) = J_{E, O_{ONMR}}(\pi)$ for all $\pi \in \Pi_E$, and so these two specifications induce the same policy ordering.
\end{proof}

\begin{proposition}[$  FTLR \succeq_{EXPR} FTR$] \label{prop:FTLR>FTR}
Any policy ordering expressible with Functions from Trajectories to Reals can be expressed with  Functions from Trajectory Lotteries to Reals. 
\end{proposition}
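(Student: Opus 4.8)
The plan is to observe that the FTR objective $J_{FTR}(\pi)=\mathbb{E}_{\xi\sim\pi,T,I}[f_{FTR}(\xi)]$ depends on $\pi$ only through the probability measure $\mathbb{P}^{E,\pi}_\xi$ that $\pi$ induces on the trajectory space $\Xi$, and that this measure is in turn determined by the trajectory lottery $L_\pi$. Concretely, given an arbitrary FTR specification $(f_{FTR})$ in an arbitrary environment $E$, I would define an FTLR specification $(f_{FTLR})$ by setting $f_{FTLR}(L_\pi) := \mathbb{E}_{\xi\sim\pi,T,I}[f_{FTR}(\xi)]$ for each $L_\pi\in L_{\Pi_{S,A}}$ (every element of $L_{\Pi_{S,A}}$ has this form by definition). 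Then $J_{FTLR}(\pi)=f_{FTLR}(L_\pi)=J_{FTR}(\pi)$ for every $\pi$, so the two specifications induce the same policy evaluation function and hence the same policy ordering; since $E$ was arbitrary this gives $Ord_{FTLR}(E)\supseteq Ord_{FTR}(E)$ for all $E$, i.e. $FTLR \succeq_{EXPR} FTR$.

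The one point requiring care — and the main obstacle — is showing that $f_{FTLR}$ is \emph{well-defined}: if two policies $\pi_1,\pi_2$ satisfy $L_{\pi_1}=L_{\pi_2}$, we must have $\mathbb{E}_{\xi\sim\pi_1}[f_{FTR}(\xi)]=\mathbb{E}_{\xi\sim\pi_2}[f_{FTR}(\xi)]$. The key step is the observation that $L_{k,\pi}$ is exactly the marginal of $\mathbb{P}^{E,\pi}_\xi$ on the first $2k+1$ coordinates, that these marginals form a consistent projective family, and that — since $\SSpace$ and $\ASpace$ are finite, so each coordinate space is standard Borel — the Kolmogorov extension theorem guarantees there is a \emph{unique} probability measure on $\Xi=\SSpace\times(\ASpace\times\SSpace)^\omega$ with these finite-dimensional marginals. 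Equivalently, the cylinder sets form a generating $\pi$-system for the Borel $\sigma$-algebra on $\Xi$, and the probability of every cylinder set is read off directly from the entries of $L_\pi$, so a measure with these cylinder probabilities is unique. Since the FTR semantics $J(\pi)=\mathbb{E}_\xi[f(\xi)]$ already presuppose that $f_{FTR}$ is (Borel) measurable and integrable, its expectation is determined by $\mathbb{P}^{E,\pi}_\xi$ alone; hence $L_{\pi_1}=L_{\pi_2}$ implies $\mathbb{P}^{E,\pi_1}_\xi=\mathbb{P}^{E,\pi_2}_\xi$ and therefore $J_{FTR}(\pi_1)=J_{FTR}(\pi_2)$, as required.

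With well-definedness in hand the rest is immediate: by construction $J_{FTLR}(\pi)=f_{FTLR}(L_\pi)=\mathbb{E}_{\xi\sim\pi,T,I}[f_{FTR}(\xi)]=J_{FTR}(\pi)$, so $\pi_1\succeq_{FTLR}\pi_2 \iff J_{FTLR}(\pi_1)\geq J_{FTLR}(\pi_2) \iff J_{FTR}(\pi_1)\geq J_{FTR}(\pi_2) \iff \pi_1\succeq_{FTR}\pi_2$. I expect the measure-theoretic step to be the only substantive content; everything else is bookkeeping. In fact the proof can be stated almost informally by noting that the trajectory lottery records precisely the data — the distributions over all finite initial trajectory segments — to which an expectation $\mathbb{E}_\xi[f(\xi)]$ over infinite trajectories can possibly be sensitive, so passing from $\pi$ to $L_\pi$ loses nothing relevant to FTR.
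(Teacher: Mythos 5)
Your construction is exactly the paper's: it defines $f_{FTLR}(L_\pi) := \mathbb{E}_{\xi\sim\pi,T,I}[f_{FTR}(\xi)]$ and declares the equality $J_{FTLR}(\pi)=J_{FTR}(\pi)$ immediate, so your proposal takes essentially the same approach. The only difference is that you additionally verify that $f_{FTLR}$ is well-defined on $L_{\Pi_{S,A}}$ (via uniqueness of the measure on $\Xi$ determined by its cylinder-set probabilities), a point the paper's proof of this particular proposition leaves implicit even though it checks the analogous well-definedness condition explicitly in its proofs that FTLR expresses RRL and OMORL --- so your version is, if anything, slightly more complete.
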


\textbf{Intuition:} One way to evaluate a trajectory lottery is to assign values to each individual trajectory and then use the probabilities from the lottery to take an expectation. A function from trajectory lotteries to the reals that does this is equivalent to an FTR specification that assigns the same values to individual trajectories.

\begin{proof}[Proof.]
    Let $(f_{FTR})$ be an arbitrary FTR objective specification in an arbitrary environment. Then the following FTLR specification $(f_{FTLR})$ expresses the same policy ordering:
\begin{itemize}
    \item $f_{FTLR}(L_\pi) = \trajexp [][f_{FTR}(\xi)]$
\end{itemize}
Here, $L_\pi$ is the lottery over trajectories that is produced by policy $\pi$ in the environment.  Since $J_{FTLR}(\pi) := f_{FTLR}(L_\pi)$, the equivalency of $J_{FTLR}$ and $J_{FTR}$ is immediate: $J_{FTLR}(\pi) := f_{FTLR}(L_\pi) = \trajexp [][f_{FTR}(\xi)]=:J_{FTR}(\pi)$. 
\end{proof}

\begin{proposition}[$  FPR \succeq_{EXPR} FTLR $]\label{prop:FPR>FTLR}
Any policy ordering expressible with Functions from Trajectory Lotteries to Reals can be expressed with Functions from Policies to Reals. 
\end{proposition}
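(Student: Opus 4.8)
The plan is to exploit the fact that every policy $\pi$ determines a unique trajectory lottery $L_\pi$, so that precomposing an FTLR wrapper function with the map $\pi \mapsto L_\pi$ yields an FPR specification inducing exactly the same policy evaluation function. Concretely, given an arbitrary FTLR objective specification $(f_{FTLR})$ in an arbitrary environment $\Env$, I would define the FPR specification $(J_{FPR})$ by $J_{FPR}(\pi) := f_{FTLR}(L_\pi)$ for every $\pi \in \Pi_{S,A}$.

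First I would note that this is a legitimate FPR specification. By \Cref{def:Trajectory Lotteries}, each $L_{k,\pi}$ is determined by $\pi$ together with $T$ and $I$, so $\pi \mapsto L_\pi$ is a well-defined (single-valued) function from $\Pi_{S,A}$ into $L_{\Pi_{S,A}}$; hence $J_{FPR} = f_{FTLR} \circ (\pi \mapsto L_\pi)$ is a function from policies to reals, and FPR imposes no further constraints on such functions. Then I would check that the two induced orderings coincide: since $J_{FTLR}(\pi) := f_{FTLR}(L_\pi)$ by the table of definitions, we have $J_{FPR}(\pi) = J_{FTLR}(\pi)$ for all $\pi$, and both formalisms order policies by $\pi_1 \succeq \pi_2 \iff J(\pi_1) \ge J(\pi_2)$, so the two specifications express the same policy ordering. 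As $\Env$ and $f_{FTLR}$ were arbitrary, this gives $Ord_{FTLR}(\Env) \subseteq Ord_{FPR}(\Env)$ for every $\Env \in Envs$, i.e. $FPR \succeq_{EXPR} FTLR$.

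There is essentially no obstacle here: the only point worth stating carefully is that $\pi \mapsto L_\pi$ is genuinely a function, which is immediate from the definition of trajectory lotteries, and this is what licenses the composition. Like the other inclusions in \Cref{subsec:Trivial Results of Expressivity}, the result follows purely from unfolding the definitions of the two formalisms.
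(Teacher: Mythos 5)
Your proof is correct and takes essentially the same approach as the paper: both define $J_{FPR}(\pi) := f_{FTLR}(L_\pi)$ and observe that this immediately reproduces the FTLR policy evaluation function. Your additional remark that $\pi \mapsto L_\pi$ is single-valued is a harmless elaboration of the same argument.
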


\textbf{Intuition:} An FTLR specification assigns a value to each policy based on the trajectory lottery it generates. This is evidently expressible as a policy evaluation function.

\begin{proof}[Proof.]
Let $(f_{FTLR})$ be an arbitrary FTLR objective specification in an arbitrary environment. Then the following FPR specification $(J_{FPR})$ expresses the same policy ordering:
\begin{itemize}
    \item $J_{FPR}(\pi) := f_{FTLR}(L_\pi)$
\end{itemize}
Here, $L_\pi$ is the lottery over trajectories that is produced by policy $\pi$ in the environment. Since $J_{FTLR}(\pi) := f_{FTLR}(L_\pi)$, the equivalency of $J_{FTLR}$ and $J_{FPR}$ is immediate. 
\end{proof}

\begin{proposition}[$  TLO \succeq_{EXPR} FTLR $] \label{prop:TLO>FTLR}
Any policy ordering expressible with a Function from Trajectory Lotteries to Reals can be expressed with a Trajectory Lottery Ordering. 
\end{proposition}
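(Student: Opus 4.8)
The plan is to mimic the pattern used for the other trivial inclusions (e.g.\ \mthmref{prop:FPR>FTLR}): given an arbitrary FTLR specification, read off an ordering on trajectory lotteries directly from the scalar values that its wrapper function assigns. First I would fix an arbitrary environment $\Env = (\SSpace,\ASpace,\Trans,\Init)$ and an arbitrary FTLR objective specification $(f_{FTLR})$, with $f_{FTLR} : L_{\Pi_{S,A}} \to \R$. I would then define a relation $\succeq_L$ on $L_{\Pi_{S,A}}$ by
\[
L \succeq_L L' \iff f_{FTLR}(L) \geq f_{FTLR}(L').
\]
Since $f_{FTLR}$ is defined on all of $L_{\Pi_{S,A}}$, this relation is well-defined on the whole set, so it is a legitimate candidate for the order appearing in a TLO objective tuple.

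Next I would check that $\succeq_L$ is a total preorder, i.e.\ transitive and strongly connected. Both properties are inherited from the relation $\geq$ on $\R$: transitivity of $\succeq_L$ follows because $f_{FTLR}(L_1) \geq f_{FTLR}(L_2)$ and $f_{FTLR}(L_2) \geq f_{FTLR}(L_3)$ imply $f_{FTLR}(L_1) \geq f_{FTLR}(L_3)$, and strong connectedness follows because for any $L, L'$ at least one of $f_{FTLR}(L) \geq f_{FTLR}(L')$ or $f_{FTLR}(L') \geq f_{FTLR}(L)$ holds. Hence $(\succeq_L)$ is a valid TLO objective specification in $\Env$.

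Finally I would verify that the TLO specification $(\succeq_L)$ and the original FTLR specification $(f_{FTLR})$ induce the same policy ordering. By the definition of the TLO ordering method, $\pi_1 \succeq \pi_2 \iff L_{\pi_1} \succeq_L L_{\pi_2}$, which by construction of $\succeq_L$ holds iff $f_{FTLR}(L_{\pi_1}) \geq f_{FTLR}(L_{\pi_2})$. On the other hand, the FTLR ordering method gives $\pi_1 \succeq \pi_2 \iff J_{FTLR}(\pi_1) \geq J_{FTLR}(\pi_2)$, and $J_{FTLR}(\pi) := f_{FTLR}(L_\pi)$, so this is exactly the same condition. Since $\Env$ and $(f_{FTLR})$ were arbitrary, every policy ordering expressible in FTLR is expressible in TLO. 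I do not expect any genuine obstacle here; the only points requiring (minimal) care are confirming that $\succeq_L$ is defined on the entire set $L_{\Pi_{S,A}}$ and that it inherits the total-preorder properties from $\geq$ on $\R$, both of which are immediate.
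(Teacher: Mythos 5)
Your proposal is correct and follows essentially the same route as the paper's proof: define $\succeq_L$ by $L \succeq_L L' \iff f_{FTLR}(L) \geq f_{FTLR}(L')$, note that it inherits transitivity and strong connectedness from $\geq$ on $\R$, and then unwind the definitions to see that both specifications induce the identical policy ordering. No gaps.
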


\textbf{Intuition:} A function from trajectory lotteries to the reals induces a total preorder because the "$\geq$" relation on $\R$ is a total preorder. This same total preorder can always be expressed directly over the trajectory lotteries as well.

\begin{proof}[Proof.]
Let $(f_{FTLR})$ be an arbitrary FTLR objective specification in an arbitrary environment. Then the following TLO specification $(\succeq_L)$ expresses the same policy ordering:
\begin{itemize}
    \item $L_{\pi_1} \succeq_L L_{\pi_2} \iff f_{FTLR}(L_{\pi_1}) \geq f_{FTLR}(L_{\pi_2})$
\end{itemize}
Here, $L_\pi$ is the lottery over trajectories that is produced by policy $\pi$ in the environment. This $\succeq_L$ is a valid total preorder on $L_{\Pi}$ because it is transitive and strongly connected.

Now:
\begin{align*}
\pi_1 \succeq_{O_{TLO}} \pi_2 &\iff L_{\pi_1} \succeq_L L_{\pi_2} \\
&\iff f_{FTLR}(L_{\pi_1}) \geq f_{FTLR}(L_{\pi_2}) \\
&\iff J_{FTLR}(\pi_1) \geq J_{FTLR}(\pi_2) \\
&\iff \pi_1 \succeq_{O_{FTLR}} \pi_2
\end{align*}
\end{proof}

\begin{proposition}[$  GOMORL \succeq_{EXPR} OMORL$ ] \label{prop:GOMORL>OMORL}
Any policy ordering expressible with an Outer Multi-Objective RL specification can be expressed with a Generalised Outer Multi-Objective RL specification. 
\end{proposition}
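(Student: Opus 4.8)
The plan is to follow exactly the template used for the other trivial inclusions (such as \Cref{prop:TLO>FTLR}): given an arbitrary OMORL specification in an arbitrary environment, exhibit a GOMORL specification that induces the identical policy ordering, and conclude $Ord_{OMORL}(\Env) \subseteq Ord_{GOMORL}(\Env)$ for every $\Env$. The key observation is that GOMORL differs from OMORL only in that the scalar wrapper $f \colon \R^k \to \R$ (together with the canonical order $\geq$ on $\R$) is replaced by an arbitrary total preorder $\succeq_J$ on $\R^k$; so all that is needed is to realise the composite ``apply $f$, then compare with $\geq$'' as one such total preorder.

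First I would fix an environment $\Env$ and an arbitrary OMORL specification $(k,\Reward,f,\gamma)$, and construct the GOMORL specification $(k,\Reward,\gamma,\succeq_J)$ with the same dimension $k$, the same $k$-dimensional reward function $\Reward$, the same discount factor $\gamma$, and with $\succeq_J$ defined by $\vec{x} \succeq_J \vec{y} \iff f(\vec{x}) \geq f(\vec{y})$ for all $\vec{x},\vec{y} \in \R^k$. Then I would verify that $\succeq_J$ is a legitimate GOMORL ingredient, i.e.\ a total preorder on $\R^k$: transitivity and strong connectedness both transfer to $\succeq_J$ from the corresponding properties of $\geq$ on $\R$ via the pullback along $f$, which is the same reasoning invoked in \Cref{subsec: Inducing Total Preorders} and in the proof of \Cref{prop:TLO>FTLR}.

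Next, since the two specifications share $k$, $\Reward$, and $\gamma$, the policy evaluation vectors coincide for every policy, $\vec{J}(\pi) = \langle J_1(\pi),\dots,J_k(\pi)\rangle$ in both formalisms. Hence for any $\pi_1,\pi_2$ we have $\pi_1 \succeq \pi_2$ under the GOMORL specification $\iff \vec{J}(\pi_1) \succeq_J \vec{J}(\pi_2) \iff f(\vec{J}(\pi_1)) \geq f(\vec{J}(\pi_2)) \iff J(\pi_1) \geq J(\pi_2) \iff \pi_1 \succeq \pi_2$ under the original OMORL specification, where the last equivalence is just the OMORL policy-ordering rule. Since $\Env$ and the OMORL specification were arbitrary, this gives $Ord_{OMORL}(\Env) \subseteq Ord_{GOMORL}(\Env)$ for all $\Env$, i.e.\ $GOMORL \exprsucceq OMORL$.

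There is essentially no obstacle here; the only point requiring (brief) care is confirming that the constructed $\succeq_J$ satisfies the total-preorder requirement built into the GOMORL definition, and this is immediate from the fact that $\geq$ on $\R$ is a total preorder and preimages of total preorders are total preorders. In writing up, I would present the iff chain on a single line (or an \texttt{align*} block broken only with \texttt{\textbackslash\textbackslash}) so as not to interrupt the display.
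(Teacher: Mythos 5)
Your proposal is correct and matches the paper's own proof: both construct the GOMORL specification by keeping $k$, $\Reward$, and $\gamma$ unchanged and defining $\succeq_J$ as the pullback of $\geq$ along $f$, then run the same chain of equivalences. Your explicit check that the pulled-back relation is a total preorder is a small addition the paper leaves implicit, but otherwise the arguments are identical.
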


\textbf{Intuition:} An OMORL specification orders policies according to the values assigned by a function from policy-evaluation vectors $\vec J(\pi)$ to the reals. A GOMORL specification can directly order the policy evaluation vectors the same way. That is, $f(\vec J_1) \geq f(\vec J_2) \iff \vec J_1 \succeq_J \vec J_2$. 

\begin{proof}[Proof.]
Let $(k_{OMORL},\Reward_{OMORL},f,\gamma_{OMORL})$ be an arbitrary OMORL objective specification in an arbitrary environment. Then the following GOMORL specification $(k_{GOMORL},\Reward_{GOMORL},\gamma_{GOMORL},\succeq_J)$ expresses the same policy ordering:
\begin{itemize}
    \item $k_{GOMORL}=k_{OMORL} = k$
    \item $\Reward_{GOMORL} = \Reward_{OMORL}$
    \item $\gamma_{GOMORL} = \gamma_{OMORL}$
    \item $\forall \vec J_1,\vec J_2 \in \R^{k}: f(\vec J_1) \geq f(\vec J_2) \iff \vec J_1 \succeq_J \vec J_2$
\end{itemize}


With this specification:
\begin{align*}
    \pi_1 \succeq_{O_{OMORL}} \pi_2 &\iff f(\vec J(\pi_1)) \geq f(\vec J(\pi_2)) \\ &\iff J(\pi_1) \succeq_J \vec J(\pi_2) \\ &\iff \pi_1 \succeq_{O_{GOMORL}} \pi_2
\end{align*}
\end{proof}

\begin{proposition}[$  PO \succeq_{EXPR} F$, for any objective specification formalism $F$] \label{prop:PO>everything} 
\end{proposition}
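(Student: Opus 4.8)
The claim is that Policy Orderings (PO) can express every policy ordering that any formalism $F$ can express. The plan is to unwind the definitions: fix an arbitrary environment $E \in Envs$ and an arbitrary objective-specification formalism $F$. By definition, $F$ assigns to $E$ a set $Ord_F(E)$ of total preorders on $\PolSpace$ that $F$ can realise via its objective tuples. We must show $Ord_F(E) \subseteq Ord_{PO}(E)$, i.e.\ every total preorder $\succeq$ on $\PolSpace$ arising from $F$ is also in $Ord_{PO}(E)$.

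The key observation is simply that a PO objective specification \emph{is} a total preorder $(\succeq)$ on $\Pi_{S,A}$, and the induced policy ordering is that very preorder, specified directly. So $Ord_{PO}(E) = \{\succeq \;\mid\; \succeq \text{ is a total preorder on } \Pi_{S,A}\}$, the set of \emph{all} total preorders on $\PolSpace$. The only nontrivial input we need is that every objective specification in every formalism $F$ induces a total preorder on policies — this is exactly what is argued in \Cref{subsec: Inducing Total Preorders} (and asserted when objective specifications are defined): the orderings produced are transitive and strongly connected, either because they are pulled back from "$\geq$" on $\R$ through a scalar policy evaluation function, or because they inherit these properties from a total preorder included in the objective tuple. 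Hence any $\succeq \in Ord_F(E)$ is a total preorder on $\PolSpace$, and therefore $\succeq \in Ord_{PO}(E)$.

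Concretely, the steps I would carry out are: (i) let $E$ and $F$ be arbitrary, and let $\succeq \in Ord_F(E)$; (ii) invoke \Cref{subsec: Inducing Total Preorders} to conclude $\succeq$ is a total preorder on $\PolSpace$; (iii) take the PO objective specification to be the tuple $(\succeq)$ itself, which by the definition in \Cref{tab:table_of_definitions} directly specifies the policy ordering $\succeq$; (iv) conclude $\succeq \in Ord_{PO}(E)$. Since $\succeq$ and $E$ were arbitrary, $\forall E \in Envs,\; Ord_F(E) \subseteq Ord_{PO}(E)$, which is precisely $PO \exprsucceq F$.

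There is essentially no obstacle here beyond bookkeeping; the only point requiring care is making sure the statement "$F$ is a valid objective-specification formalism" is being used, so that $Ord_F(E)$ consists only of total preorders — a formalism that produced a non-transitive or non-total relation would not be admissible in the first place, so this is guaranteed by the standing definitions. If one wanted to be maximally careful one could note that this also covers the degenerate case where $F = PO$ itself, which is immediate.
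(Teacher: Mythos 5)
Your proposal is correct and matches the paper's argument, which is simply the one-line observation that any policy ordering is by definition expressible as a PO specification $(\succeq)$; your version just spells out the bookkeeping (that every formalism's induced orderings are total preorders, per the discussion of inducing total preorders, and that $Ord_{PO}(E)$ is the set of all such preorders). No gap.
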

By definition, any policy ordering in any environment is expressible with a Policy Ordering specification $(\succeq_\pi)$.
\begin{proposition}[$  FTR \succeq_{EXPR} LAR $] \label{prop:FTR>LAR}
Any policy ordering expressible with a Limit Average Reward specification can be expressed with a Function from Trajectories to Reals.
\end{proposition}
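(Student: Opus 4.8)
The plan is to realise the given LAR objective as an FTR objective by letting $f$ be the long-run (Cesàro) average of per-step rewards along a trajectory, and then to show that $J_{FTR}$ agrees with $J_{LAR}$ on every policy, so the two specifications induce the same total preorder. Concretely, given an arbitrary LAR specification $(R)$ in an arbitrary finite environment $E$, I would set
$$f(\xi) := \begin{cases} \displaystyle\lim_{N\to\infty}\frac{1}{N}\sum_{t=0}^{N-1} R(s_t,a_t,s_{t+1}) & \text{if this limit exists,}\\[1ex] 0 & \text{otherwise,}\end{cases}$$
which is a genuine function $\Xi \to \mathbb{R}$, hence a legal FTR specification $(f)$.

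The heart of the proof is the identity $\mathbb{E}_\xi[f(\xi)] = J_{LAR}(\pi)$ for every policy $\pi \in \Pi^E$. First, since $S$ and $A$ are finite, $R$ is bounded, say $|R| \le M$, so the partial averages $A_N(\xi) := \frac{1}{N}\sum_{t=0}^{N-1} R(s_t,a_t,s_{t+1})$ all satisfy $|A_N| \le M$. Second, under a fixed stationary policy $\pi$ the process $(s_t,a_t)_{t\ge0}$ is a time-homogeneous Markov chain on the finite set $S \times A$, and $R(s_t,a_t,s_{t+1})$ is a fixed function of two consecutive states of this chain; by the ergodic theorem for finite Markov chains the chain almost surely spends only finitely many steps in transient states and then remains forever in a single recurrent communicating class, on which time averages of a bounded function converge almost surely (the finite transient prefix does not affect the Cesàro limit). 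Hence $A_N(\xi) \to f(\xi)$ for $\mathbb{P}^{E,\pi}_\xi$-almost every $\xi$. The bounded convergence theorem then licenses the interchange
$$\mathbb{E}_\xi[f(\xi)] = \mathbb{E}_\xi\Big[\lim_{N\to\infty} A_N(\xi)\Big] = \lim_{N\to\infty}\mathbb{E}_\xi[A_N(\xi)] = \lim_{N\to\infty}\frac{1}{N}\mathbb{E}_\xi\Big[\sum_{t=0}^{N-1}R(s_t,a_t,s_{t+1})\Big] = J_{LAR}(\pi),$$
which in passing also shows that the limit defining $J_{LAR}$ exists in every finite environment, so no separate well-definedness argument is needed. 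Since $J_{FTR}(\pi) := \mathbb{E}_\xi[f(\xi)]$, we obtain $J_{FTR}(\pi) = J_{LAR}(\pi)$ for all $\pi$; because both formalisms rank policies by comparing their scalar $J$-values, the two specifications induce exactly the same ordering on $\Pi^E$. As $E$ and $(R)$ were arbitrary, $Ord_{LAR}(E) \subseteq Ord_{FTR}(E)$ for all $E$, i.e. $FTR \succeq_{EXPR} LAR$.

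I expect the main obstacle to be measure-theoretic bookkeeping rather than anything conceptually deep: one must ensure $f$ is well-defined on all of $\Xi$ (handled by the case split), and one must justify the almost-sure convergence of $A_N$ carefully via the Markov-chain ergodic theorem applied to the chain $(s_t,a_t)$ induced by the stationary policy, noting that the transient prefix is irrelevant to the Cesàro limit. Once almost-sure convergence and the uniform bound $|A_N|\le M$ are established, bounded convergence closes the argument and simultaneously certifies that $J_{LAR}$ is well-defined.
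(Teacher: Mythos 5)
Your proposal is correct and follows essentially the same route as the paper's proof: define $f_{FTR}$ as the Ces\`aro limit of the per-step rewards, use boundedness of $R$, and apply dominated convergence to interchange the limit and the expectation. Your treatment is in fact somewhat more careful than the paper's, since you explicitly justify the almost-sure existence of the limit via the ergodic theorem for the finite Markov chain induced by a stationary policy and handle the exceptional null set with a case split, whereas the paper simply asserts convergence from boundedness.
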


\textbf{Intuition:} The limit average reward is a value assigned to a trajectory. A function from trajectories to reals can assign the limit average reward of a particular reward function to each trajectory.

\begin{proof}[Proof.]
Let $(\Reward)$ be an arbitrary LAR objective specification in an arbitrary environment. Then the following FTR specification $(f_{FTR})$ expresses the same policy ordering:
\begin{align*}
f_{FTR}(\xi) = \lim\limits_{N \rightarrow \infty} \left[
\frac{1}{N} \sum\limits_{t=0}^{N-1} \Reward(s_t,a_t,s_{t+1})\right]
\end{align*}

Then:
\begin{align*}
J_{LAR}(\pi)=\lim_{N \rightarrow \infty} \left[
\mathbb{E}_{\pi,T,I} \left[\frac{1}{N} \sum_{t=0}^{N-1} R(s_t,a_t,s_{t+1})\right]\right]
\end{align*}

We have bounded rewards, i.e. $\forall (s_t,a_t,s_{t+1}), \; R(s_t,a_t,s_{t+1})< R_{max}$, where $R_{max}$ is the maximum reward assigned to any transition by the reward function. This means that the average reward converges: $\lim_{N \rightarrow \infty} \left[ \frac{1}{N} \sum_{t=0}^{N-1} R(s_t,a_t,s_{t+1})\right] =X<\infty$. These two facts and Lebesgue’s Dominated Convergence Theorem imply that we can move the expectation outside the limit. So: 

\begin{align*}
J_{LAR}(\pi)= \mathbb{E}_{\pi,T,I}\left[
 \lim_{N \rightarrow \infty}\left[\frac{1}{N} \sum_{t=0}^{N-1} R(s_t,a_t,s_{t+1})\right]\right]
\end{align*}

The expression inside the expectation is equal to $f_{FTR}(\xi)$, therefore:
\begin{align*}
    J_{LAR}(\pi)=\mathbb{E}_{\pi,T,I}\left[
 f_{FTR}(\xi]\right]=J_{FTR}(\pi)
\end{align*}
\end{proof}

\begin{proposition}[$  FTR \succeq_{EXPR} LTL $] \label{prop:FTR>LTL}
Any policy ordering expressible with Linear Temporal Logic can be expressed with Functions from Trajectories to Reals.
\end{proposition}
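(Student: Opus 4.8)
## Proof Proposal for Proposition \ref{prop:FTR>LTL} ($FTR \succeq_{EXPR} LTL$)

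The plan is to exploit the fact that an LTL formula $\varphi$ is itself already a function from trajectories to $\{0,1\} \subseteq \mathbb{R}$, and that the LTL policy-ordering method and the FTR policy-ordering method are defined identically (both take the expectation of the trajectory-level function over trajectories sampled by $\pi$). So the construction is essentially the identity construction: given an arbitrary LTL specification $(\varphi)$ in an arbitrary environment $E$, I would define the FTR specification $(f_{FTR})$ by $f_{FTR}(\xi) := \varphi(\xi)$, which is well-defined because $\varphi : \Xi \to \{0,1\}$ (by Definition \ref{def:LTL  Formula}) and $\{0,1\} \subseteq \mathbb{R}$, and because the atomic propositions of the LTL language are exactly the transitions $\SSpace \times \ASpace \times \SSpace$, so $\varphi$ evaluates on precisely the trajectories in $\Xi = \SSpace \times (\ASpace \times \SSpace)^\omega$ that arise in the environment $E$.

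Next I would verify that the two specifications induce the same policy evaluation function. By definition of the FTR ordering method, $J_{FTR}(\pi) = \mathbb{E}_\xi[f_{FTR}(\xi)] = \mathbb{E}_\xi[\varphi(\xi)]$, where the expectation is over trajectories sampled using $\pi$, $\Trans$, and $\Init$; and by definition of the LTL ordering method, $J_{LTL}(\pi) = \mathbb{E}_\xi[\varphi(\xi)]$ over the same distribution. Hence $J_{FTR}(\pi) = J_{LTL}(\pi)$ for every $\pi \in \PolSpace$. Since both FTR and LTL derive their policy orderings directly via $\pi_1 \succeq \pi_2 \iff J(\pi_1) \geq J(\pi_2)$, identical $J$ functions yield identical orderings, so the FTR specification expresses the same policy ordering as the given LTL specification. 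As this holds for an arbitrary environment and arbitrary LTL specification, we conclude $\forall E \in Envs,\ Ord_{LTL}(E) \subseteq Ord_{FTR}(E)$, i.e. $FTR \succeq_{EXPR} LTL$.

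There is essentially no hard step here — this is one of the "trivial" containments. The only point requiring a moment's care is the measurability / well-definedness of $\mathbb{E}_\xi[\varphi(\xi)]$, i.e. that $\varphi(\xi)$ is a measurable function of the trajectory so that the expectation in the LTL (and hence FTR) definition makes sense; but since the paper already takes $J_{LTL}(\pi) = \mathbb{E}_\xi[\varphi(\xi)]$ as part of the definition of the LTL formalism (see Table \ref{tab:table_of_definitions}), this is assumed throughout and need not be re-established. Thus the main "obstacle" is really just noting that the FTR policy-ordering method and the LTL policy-ordering method are syntactically the same once $f_{FTR}$ is taken to be $\varphi$, and writing this out cleanly.
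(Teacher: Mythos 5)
Your proposal is correct and matches the paper's own proof exactly: both take $f_{FTR}(\xi) := \varphi(\xi)$, observing that an LTL formula is already a function from trajectories to $\{0,1\} \subseteq \mathbb{R}$, so the induced $J$ functions (and hence orderings) coincide. Your additional remark on measurability is a reasonable aside but, as you note, is already subsumed by the paper's definition of $J_{LTL}$.
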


\textbf{Intuition:} An LTL formula is a function that assigns the value 0 to trajectories in which the formula is false and the value 1 to trajectories in which the formula is true. This is a special case of a function from trajectories to reals.

\begin{proof}
    Let $(\varphi)$ be an arbitrary LTL objective specification in an arbitrary environment. Then the following FTR specification $(f_{FTR})$ expresses the same policy ordering:
    \begin{itemize}
        \item $f_{FTR}(\xi) := \varphi(\xi)$
    \end{itemize}
    (An LTL formula assigns a truth value of 0 or 1 to any given trajectory, so a formula is a function from trajectories to $\{0,1\}$. This is a special case of a function from trajectories to reals.) 
\end{proof}

\begin{proposition}[$  FTR \succeq_{EXPR} RM$] \label{prop:FTR>RM}
Any policy ordering expressible with Reward Machines (RM) can be expressed with Functions from Trajectories to Reals (FTR).
\end{proposition}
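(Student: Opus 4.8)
The plan is to reuse the strategy behind \Cref{prop:FTR>LTL} and \Cref{prop:FTR>LAR}: observe that, for a fixed Reward Machine specification, the discounted reward sum sitting inside the expectation defining $J_{RM}(\pi)$ is already a deterministic function of the trajectory alone, and then take that function as the FTR wrapper. Concretely, I would fix an arbitrary environment $E=(\SSpace,\ASpace,\Trans,\Init)$ and an arbitrary RM specification $(U,u_0,\delta_U,\delta_{\Reward},\gamma)$. The crucial point is that the sequence of machine states is a deterministic function of the trajectory: given $\xi=(s_0,a_0,s_1,a_1,\dots)\in\Xi$, set $u_0(\xi):=u_0$ and recursively $u_{t+1}(\xi):=\delta_U(u_t(\xi),s_t,a_t,s_{t+1})$; since $\delta_U$ is a function and $u_0$ is fixed, this produces a well-defined sequence $(u_t(\xi))_{t\ge 0}$ depending only on $\xi$. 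I would then define the FTR specification $(f_{FTR})$ by
\[
f_{FTR}(\xi):=\sum_{t=0}^{\infty}\gamma^{t}\,\delta_{\Reward}\bigl(u_t(\xi),u_{t+1}(\xi)\bigr)(s_t,a_t,s_{t+1}).
\]

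Next I would check that this is a legitimate element of FTR, i.e.\ that the series converges for every $\xi$. Since $U$, $\SSpace$, and $\ASpace$ are finite, the set of possible reward values $\{\delta_{\Reward}(u,u')(s,a,s') : u,u'\in U,\ (s,a,s')\in\SSpace\times\ASpace\times\SSpace\}$ is finite and hence bounded by some $\Reward_{\max}<\infty$; with $\gamma\in[0,1)$ the series is dominated by the convergent geometric series $\sum_t\gamma^t\Reward_{\max}$, so it converges absolutely and $f_{FTR}:\Xi\to\mathbb{R}$ is well-defined.

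Finally I would verify that $J_{FTR}(\pi)=J_{RM}(\pi)$ for every policy $\pi$. Along a sampled trajectory $\xi$, the rewards $\Reward_t=\delta_{\Reward}(u_t,u_{t+1})$ appearing in the RM evaluation use exactly the machine states $u_t=u_t(\xi)$ defined above, so the bracketed quantity $\sum_t\gamma^t\Reward_t(s_t,a_t,s_{t+1})$ equals $f_{FTR}(\xi)$ pointwise; taking the expectation over $\xi\sim\pi,\Trans,\Init$ yields $J_{RM}(\pi)=\mathbb{E}_\xi[f_{FTR}(\xi)]=J_{FTR}(\pi)$. Since both formalisms rank policies via the $\geq$ relation on their scalar evaluation functions, the two specifications induce the same policy ordering, establishing $FTR\exprsucceq RM$.

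I expect the only genuine subtlety — and it is a mild one — to be making fully precise that the machine-state trajectory is a deterministic function of $\xi$, so that the bracketed expression in $J_{RM}$ carries no randomness beyond that of $\xi$ itself; once this is spelled out, together with the routine boundedness/convergence bookkeeping for the infinite discounted sum, the remainder is a direct unwinding of the definitions in \Cref{tab:table_of_definitions}.
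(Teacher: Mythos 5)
Your construction is exactly the one the paper uses: define $f_{FTR}(\xi)$ as the discounted sum of machine-transition rewards, observe that the machine-state sequence $(u_t)$ is determined by $\xi$ via iterating $\delta_U$ from $u_0$, and conclude $J_{FTR}=J_{RM}$ pointwise under the expectation. The only difference is your explicit boundedness/convergence check, which the paper omits here but which is a harmless (and welcome) addition.
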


\textbf{Intuition:} Reward machines give step-by-step rewards in a trajectory based on aspects of the history so far. The behavior of a reward machine for an entire trajectory is fixed by the states and actions in the trajectory, so the discounted sum of rewards that a reward machine yields is a function from trajectories to reals.

\begin{proof}[Proof.]
    Let $(U,u_0,\delta_U,\delta_{\Reward},\gamma)$ be an arbitrary RM objective specification in an arbitrary environment. Then the following FTR specification $(f_{FTR})$ expresses the same policy ordering:
\begin{itemize}
    \item $f_{FTR}(\xi) := \sum\limits_{t=0}^\infty \gamma^t \left(\delta_{\Reward}(u_t,u_{t+1})(s_t,a_t,s_{t+1})\right)$
\end{itemize}
Here, $u_t$ is the state of the specified reward machine at time step $t$. This value is well-defined given $U,u_0,\delta_U,$ and $\xi$, because $u_0$ is the starting machine state, and given a machine state at any time $t$, $u_{t+1} = \delta_U(u_t,s_t,a_t,s_{t+1})$. $\xi$ specifies $s_t$ and $a_t$ for all $t$, and given $u_0$ and $\delta_U$, the machine state at any time step can be derived using this rule iteratively starting from $u_0$.

Since $u_t$ is well-defined for all $t$, so is $\delta_{\Reward}(u_t,u_{t+1})$, and so is $f_{FTR}(\xi)$. 

This FTR specification induces the same policy ordering as the RM specification above:
\begin{align*}
    J_{FTR}(\pi) &:= \trajexp[] [f_{FTR}(\xi)]\\
    &= \E_\xi^{E,\pi,u_0,\delta_U}\left[\sum\limits_{t=0}^\infty \gamma^t \left(\delta_{\Reward}(u_t,u_{t+1})(s_t,a_t,s_{t+1})\right)\right]\\
    &= J_{RM}(\pi)
\end{align*}
Both FTR and RM derive policy orderings directly from the policy evaluation functions, so this means the two specifications induce the same policy ordering. 
\end{proof}

\begin{proposition}[$  FTLR \succeq_{EXPR} RRL$] \label{prop:FTLR>RRL}
Any policy ordering expressible with Regularised RL (RRL) can be expressed with Functions from Trajectory Lotteries to Reals (FTLR).
\end{proposition}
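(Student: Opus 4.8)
The plan is to route the argument through Functions from Occupancy Measures to Reals (FOMR). I will first prove $FOMR \succeq_{EXPR} RRL$ directly, and then invoke \Cref{thm:OMORL = FOMR = FTLR}, which states that FOMR and FTLR are equally expressive; transitivity of $\succeq_{EXPR}$ then gives $FTLR \succeq_{EXPR} FOMR \succeq_{EXPR} RRL$, as required. (Note that we cannot simply route through FTR: the proof that FTLR can express FTR does not help here, since FTR \emph{cannot} express RRL.)

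For $FOMR \succeq_{EXPR} RRL$, fix an environment $E$ and an RRL specification $(\Reward, \alpha, F, \gamma)$. I will construct an FOMR specification $(f, \gamma)$ --- crucially keeping the same discount factor, so that the occupancy measure $\vec m(\pi)$ is taken with respect to $\gamma$ --- with $J_{FOMR}(\pi) = J_{RRL}(\pi)$ for every policy $\pi$. Since $\SSpace$ and $\ASpace$ are finite, the rewards are bounded and the set $\{F[\pi(s)] : s \in \SSpace\}$ is finite, hence bounded; combined with $\gamma < 1$, this justifies interchanging the infinite time-sum with the expectation, yielding
\[ J_{RRL}(\pi) = \sum_{s,a,s'} \vec m(\pi)[s,a,s']\,\Reward(s,a,s') \;-\; \alpha \sum_{s \in \SSpace} \mu_\pi(s)\, F[\pi(s)], \]
where $\mu_\pi(s) := \sum_{a,s'} \vec m(\pi)[s,a,s'] = \sum_{t=0}^\infty \gamma^t \, \mathbb{P}_{\xi \sim \pi, \Trans, \Init}[s_t = s]$ is the discounted state-occupancy at $s$. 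The first sum is visibly a function of $\vec m(\pi)$. For the second, observe that $\sum_{s'} \vec m(\pi)[s,a,s'] = \mu_\pi(s)\,\pi(s)[a]$, so on every state with $\mu_\pi(s) > 0$ the action distribution $\pi(s) \in \Delta(\ASpace)$ is recovered from $\vec m(\pi)$ as $\bigl( \sum_{s'}\vec m(\pi)[s,a,s'] / \mu_\pi(s) \bigr)_{a \in \ASpace}$, while states with $\mu_\pi(s) = 0$ contribute nothing. Hence I define, writing $\mu_{\vec m}(s) := \sum_{a,s'} \vec m[s,a,s']$ for a generic occupancy vector $\vec m$,
\[ f(\vec m) := \sum_{s,a,s'} \vec m[s,a,s']\,\Reward(s,a,s') \;-\; \alpha \!\!\sum_{s : \mu_{\vec m}(s) > 0}\!\! \mu_{\vec m}(s)\, F\!\left[ \left( \tfrac{\sum_{s'} \vec m[s,a,s']}{\mu_{\vec m}(s)} \right)_{a \in \ASpace} \right], \]
and the discussion above shows $f(\vec m(\pi)) = J_{RRL}(\pi)$, so $(f, \gamma)$ induces the same policy ordering as $(\Reward, \alpha, F, \gamma)$; hence $FOMR \succeq_{EXPR} RRL$.

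The remaining steps --- combining $FOMR \succeq_{EXPR} RRL$ with $FTLR \succeq_{EXPR} FOMR$ from \Cref{thm:OMORL = FOMR = FTLR} and applying transitivity --- are pure bookkeeping. The step requiring the most care is the recovery of $\pi(s)$ from $\vec m(\pi)$: one must verify that $\bigl( \sum_{s'} \vec m[s,a,s'] / \mu_{\vec m}(s) \bigr)_{a}$ is genuinely an element of $\Delta(\ASpace)$ (its entries sum to $\mu_{\vec m}(s)/\mu_{\vec m}(s) = 1$) and must explicitly exclude the zero-occupancy states so that $F$ is never fed an ill-defined argument. I expect that to be the main obstacle, though it is routine once set up properly. (An essentially equivalent but slightly more cumbersome proof works directly with trajectory lotteries, recovering the per-timestep state marginals and the conditional action distributions $\mathbb{P}[a_t = a \mid s_t = s] = \pi(s)[a]$ from $L_\pi$.)
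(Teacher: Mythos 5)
Your proof is correct, but it is organised differently from the paper's. The paper defines the FTLR objective implicitly as $f_{FTLR}(L_\pi) := J_{RRL}(\pi)$ and then spends the whole proof establishing well-definedness: it shows that equal trajectory lotteries force equal per-timestep transition probabilities (its Lemma~\ref{TL_prob_lemma}) and, via a chain of corollaries culminating in Corollary~\ref{cor3}, that the two policies must agree on every state visited with positive probability, so each term of $J_{RRL}$ matches. You instead give an \emph{explicit} formula: you prove $FOMR \succeq_{EXPR} RRL$ by writing $J_{RRL}(\pi)$ as a closed-form function of the occupancy measure --- recovering $\pi(s)$ on positive-occupancy states as the ratio $\sum_{s'}\vec m[s,a,s']/\mu_{\vec m}(s)$ --- and then import $FTLR \succeq_{EXPR} FOMR$ from \Cref{thm:OMORL = FOMR = FTLR} and close with transitivity. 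The underlying mathematical content is the same (your ratio formula is exactly the paper's Corollary~\ref{cor3} in constructive form, and the FOMR--FTLR bridge you invoke rests on \Cref{lem:OM:TL}, whose Direction~1 is essentially the paper's Lemma~\ref{TL_prob_lemma} restated for occupancy measures), and there is no circularity since \Cref{thm:OMORL = FOMR = FTLR} is proved independently. What your route buys is modularity and an explicit witness for the wrapping function; what the paper's route buys is a self-contained argument at the level of trajectory lotteries whose intermediate corollaries it reuses elsewhere. Your handling of the two genuine delicacies --- the interchange of expectation and sum (justified by boundedness of $\Reward$ and of the finitely many values $F[\pi(s)]$), and the exclusion of zero-occupancy states so that $F$ is never evaluated on an ill-defined argument --- matches the care the paper takes in its Case~1/Case~2 split in Lemma~\ref{TL_prob_lemma_2}. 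Your aside that one cannot route through FTR is also correct, by Proposition~\ref{prop:FTRNRRL}.
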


\textbf{Intuition:} Two policies generate the same trajectory lottery if and only if they are identical on all states that either policy ever visits with nonzero probability. If two policies are identical on all states visited with nonzero probability, then an RRL specification must assign them the same value. Therefore, a well-defined function can take a trajectory lottery as input and output the value assigned by an RRL specification to all policies that generate the given trajectory lottery.

\begin{proof}[Proof.]
    Let $(\Reward,\alpha,F,\gamma)$ be an arbitrary RRL objective specification in an arbitrary environment. Then the following FTLR specification $(f_{FTLR})$ expresses the same policy ordering:
\begin{itemize}
    \item $f_{FTLR}(L_\pi) := J_{RRL}(\pi):=\trajexp \left[\sum\limits_{t=0}^\infty \gamma^t \left(\Reward(S_t,A_t,S_{t+1}) - \alpha F[\pi(S_t)]\right)\right]$
\end{itemize}
Here, $L_\pi$ is the lottery over trajectories that is produced by policy $\pi$ in the environment. Since $J_{FTLR}(\pi) := f_{FTLR}(L_\pi)$, the equality of $J_{FTLR}$ and $J_{RRL}$ on all policies is immediate.
However, we must verify that this function from trajectory lotteries to reals is well-defined, i.e. that $L_{\pi_1} =L_{\pi_2} \implies J_{RRL}(\pi_1) = J_{RRL}(\pi_2)$.

Since rewards in Regularised RL are given step-by-step, the policy evaluation function can be rewritten as follows:

\begin{align*}
    J_{RRL}(\pi)&:=\trajexp \left[\sum\limits_{t=0}^\infty \gamma^t \left(\Reward(S_t,A_t,S_{t+1}) - \alpha F[\pi(S_t)]\right)\right]\\
    J_{RRL}(\pi)&:= \sum\limits_{t=0}^\infty \sum\limits_{(s,a,s') \in \SSpace \times \ASpace \times \SSpace} \mathbb{P}_{\xi}^{E,\pi}[S_t=s,A_t=a,S_{t+1}=s']\left(\gamma^t(\Reward(s,a,s') - \alpha F[\pi(s)])\right)\\
\end{align*}

\begin{lemma} \label{TL_prob_lemma}
    If  $L_{\pi_1}=L_{\pi_2}$, then for all $t$ and for all $(s,a,s') \in \SSpace \times \ASpace \times \SSpace$:\\
\begin{align*}
    \mathbb{P}_{\xi}^{E,\pi_1}[S_t=s,A_t=a,S_{t+1}=s']=\mathbb{P}_{\xi}^{E,\pi_2}[S_t=s,A_t=a,S_{t+1}=s']
\end{align*}
\end{lemma}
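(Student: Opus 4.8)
The plan is to unwind the definition of trajectory lotteries and exploit the fact that the event $\{S_t = s,\ A_t = a,\ S_{t+1} = s'\}$ is measurable with respect to a \emph{finite} prefix of the trajectory. By \Cref{def:Trajectory Lotteries}, the hypothesis $L_{\pi_1} = L_{\pi_2}$ means precisely that $L_{k,\pi_1} = L_{k,\pi_2}$ as elements of $\Delta(\Xi_k)$ for every $k$; i.e., the two policies induce the same law on initial trajectory segments of length $2k+1$, for all $k$. Since $A_t$ and $S_{t+1}$ are the final action and final state of the segment $[\xi]_{t+1} = (s_0, a_0, s_1, \ldots, s_t, a_t, s_{t+1})$, which has length $2(t+1)+1$, the event in question depends only on $[\xi]_{t+1}$, and the relevant index into the lottery sequence is $k = t+1$.

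Concretely, I would write, for either policy $\pi \in \{\pi_1, \pi_2\}$,
\begin{align*}
\mathbb{P}_{\xi}^{E,\pi}[S_t = s,\ A_t = a,\ S_{t+1} = s']
&= \sum_{\substack{(s_0, a_0, \ldots, s_{t+1}) \in \Xi_{t+1} \\ s_t = s,\ a_t = a,\ s_{t+1} = s'}} \mathbb{P}_{\xi}^{E,\pi}\big([\xi]_{t+1} = (s_0, a_0, \ldots, s_{t+1})\big) \\
&= \sum_{\substack{(s_0, a_0, \ldots, s_{t+1}) \in \Xi_{t+1} \\ s_t = s,\ a_t = a,\ s_{t+1} = s'}} L_{t+1,\pi}\big((s_0, a_0, \ldots, s_{t+1})\big),
\end{align*}
where the first equality holds because the cylinder sets $\{[\xi]_{t+1} = \xi_{t+1}\}$ over $\xi_{t+1} \in \Xi_{t+1}$ form a finite partition of $\Xi$ (finite since $\SSpace$ and $\ASpace$ are finite) that refines the event $\{S_t = s,\ A_t = a,\ S_{t+1} = s'\}$, and the second equality is just the definition of $L_{t+1,\pi}$ as the law of $[\xi]_{t+1}$ under $\mathbb{P}_{\xi}^{E,\pi}$.

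To finish, observe that the index set of the final sum does not depend on the policy, while by hypothesis $L_{t+1,\pi_1} = L_{t+1,\pi_2}$, so each summand agrees for $\pi_1$ and $\pi_2$; hence the two totals agree, which is exactly the claim. I do not anticipate any real obstacle: the only care needed is bookkeeping — confirming that the needed prefix length is $2(t+1)+1$, so that the correct lottery index is $k = t+1$, and that the event genuinely decomposes as the stated disjoint union of cylinders — both of which follow immediately from \Cref{def:Trajectory Lotteries} and the construction of trajectory probabilities in \Cref{subsec: Basic definitions}.
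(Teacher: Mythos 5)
Your proposal is correct and matches the paper's own argument: both decompose the event $\{S_t=s, A_t=a, S_{t+1}=s'\}$ into the finitely many initial trajectory segments ending with that transition and observe that each term is given by $L_{t+1,\pi}$, so the sum is fully determined by the trajectory lottery. Your bookkeeping of the prefix length and lottery index $k=t+1$ agrees with \Cref{def:Trajectory Lotteries}, so there is nothing to add.
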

\begin{proof}[Proof of Lemma.]

First recall the definition of a trajectory lottery:

Let $\Xi_k$ be the set of all initial trajectory segments of length $2(k+1)$. We write $[\xi]_k$ for the first $2(k+1)$ elements of $\xi$. Define $L_{k,\pi} \in \Delta(\Xi_k) : L_{k,\pi}(\xi_k=\langle s_0,a_0,...,s_k,a_k \rangle) = P_{\xi \sim \pi, T, I}([\xi]_k=\xi_k)$. A trajectory lottery $L_\pi$ is then defined as the infinite sequence of $L_{k,\pi}$, $L_\pi := (L_{0,\pi}, L_{1, \pi}, L_{2, \pi}, ...)$. 

Now, let $\Xi_{t+1,(s,a,s')}$ be the set of trajectory segments of length $2(t+2)$ which have $(s,a,s')$ as the most recently completed transition. We can then see that for a given $t$ and $\pi$:\\ 
\[\mathbb{P}_{\xi}^{E,\pi}[S_t=s,A_t=a,S_{t+1}=s']=\sum\limits_{\xi_{t+1,(s,a,s')} \in \Xi_{t+1,(s,a,s')}} L_{t+1, \pi}(\xi_{t+1,(s,a,s')})\]
Equivalently, in words, the probability that the transition $(s,a,s')$ is taken from time step $t$ to $t+1$ is equal to the sum of the probabilities of all the trajectory segments of length $t+2$ that have the transition $(s,a,s')$ from time step $t$ to $t+1$. (The reason we look at trajectory segments of length $t+2$ is that time indexing starts at 0, so we need $t+2$ steps to get to the step indexed by $t+1$.)

The right-hand side of this equation is fully determined by $L_\pi$, so the left-hand side is also fully determined by $L_\pi$. This completes the proof of Lemma \ref{TL_prob_lemma}. 
\end{proof}

\begin{corollary} \label{cor1}
    If  $L_{\pi_1}=L_{\pi_2}$, then for all $t$ and for all $s \in S, \mathbb{P}_{\xi}^{E,\pi_1}[S_t=s]=\mathbb{P}_{\xi}^{E,\pi_2}[S_t=s]$.
\end{corollary}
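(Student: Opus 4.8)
The plan is to obtain the claim directly from Lemma \ref{TL_prob_lemma} by marginalising out the action and the successor state. First I would invoke Lemma \ref{TL_prob_lemma}: since $L_{\pi_1} = L_{\pi_2}$, for every time step $t$ and every transition $(s,a,s') \in \SSpace \times \ASpace \times \SSpace$ we have
\[
\mathbb{P}_{\xi}^{E,\pi_1}[S_t=s,\,A_t=a,\,S_{t+1}=s'] = \mathbb{P}_{\xi}^{E,\pi_2}[S_t=s,\,A_t=a,\,S_{t+1}=s'].
\]

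Next I would note that along any trajectory the random variables $A_t$ and $S_{t+1}$ are always defined (the policy samples an action from $\pi(S_t)$ and the environment samples a successor from $\Trans(S_t,A_t)$), so for a fixed state $s$ the events $\{S_t=s,\,A_t=a,\,S_{t+1}=s'\}$, indexed by $(a,s') \in \ASpace \times \SSpace$, are pairwise disjoint and their union is exactly $\{S_t=s\}$. Since $\ASpace$ and $\SSpace$ are finite, finite additivity of $\mathbb{P}_{\xi}^{E,\pi}$ gives, for $i \in \{1,2\}$,
\[
\mathbb{P}_{\xi}^{E,\pi_i}[S_t=s] = \sum_{a \in \ASpace} \sum_{s' \in \SSpace} \mathbb{P}_{\xi}^{E,\pi_i}[S_t=s,\,A_t=a,\,S_{t+1}=s'].
\]
The two right-hand sides, for $i=1$ and $i=2$, agree term by term by the previous display, so the left-hand sides are equal, which is the desired conclusion.

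There is no real obstacle here: the argument is a one-line marginalisation, and the only point needing a moment of care is the (routine) justification that $\{S_t=s\}$ decomposes as the stated finite disjoint union, which rests on the standing finiteness of $\SSpace$ and $\ASpace$ and on trajectories being infinite so that $A_t$ and $S_{t+1}$ genuinely exist for every $t \ge 0$. If it turns out to be useful downstream, the same marginalisation — summing over $s'$ only — gives the slightly stronger statement that the joint law of $(S_t, A_t)$ is likewise determined by the trajectory lottery, at no extra cost.
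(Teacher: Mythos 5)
Your proposal is correct and matches the paper's own argument, which likewise obtains the corollary from Lemma \ref{TL_prob_lemma} by marginalising over $a$ and $s'$. You simply spell out the finite disjoint-union decomposition of $\{S_t=s\}$ that the paper leaves implicit.
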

This follows straightforwardly from Lemma \ref{TL_prob_lemma} by marginalising over $a$ and $s'$.

\begin{corollary} \label{cor2}
    If  $L_{\pi_1}=L_{\pi_2}$, then for all $t$ and for all $(s,a) \in S \times A, \mathbb{P}_{\xi}^{E,\pi_1}[S_t=s,A_t=a]=\mathbb{P}_{\xi}^{E,\pi_2}[S_t=s,A_t=a]$.
\end{corollary}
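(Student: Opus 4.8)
The plan is to deduce the claim directly from Lemma \ref{TL_prob_lemma} by marginalising over the final state of the transition. First I would observe that, for any fixed $t$, any policy $\pi$, and any $(s,a) \in S \times A$, the event $\{S_t = s,\, A_t = a\}$ decomposes (up to a null set) as the disjoint union over $s' \in S$ of the events $\{S_t = s,\, A_t = a,\, S_{t+1} = s'\}$: since $S$ is finite and, conditional on $(S_t,A_t) = (s,a)$, the next state $S_{t+1}$ is drawn from $T(s,a) \in \Delta(S)$, some next state is taken with probability $1$. Hence, for each $\pi$,
\[
\mathbb{P}_{\xi}^{E,\pi}[S_t = s,\, A_t = a] \;=\; \sum_{s' \in S} \mathbb{P}_{\xi}^{E,\pi}[S_t = s,\, A_t = a,\, S_{t+1} = s'].
\]

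Next I would invoke Lemma \ref{TL_prob_lemma}: under the hypothesis $L_{\pi_1} = L_{\pi_2}$, every summand on the right-hand side agrees between $\pi_1$ and $\pi_2$, and since the sum is finite, the two sums agree as well. Applying the displayed identity for both $\pi_1$ and $\pi_2$ then yields $\mathbb{P}_{\xi}^{E,\pi_1}[S_t = s,\, A_t = a] = \mathbb{P}_{\xi}^{E,\pi_2}[S_t = s,\, A_t = a]$ for all $t$ and all $(s,a) \in S \times A$, which is precisely the claim.

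There is essentially no obstacle here; the only point warranting a moment's care is the marginalisation identity itself, namely that the transition structure of the environment guarantees that some next state always occurs, so that summing the joint probabilities over $s'$ recovers the $(s,a)$-marginal. Everything else is immediate from Lemma \ref{TL_prob_lemma} and the finiteness of $S$.
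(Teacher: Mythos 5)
Your proof is correct and matches the paper's argument exactly: the paper also obtains this corollary from Lemma \ref{TL_prob_lemma} by marginalising over $s'$, and you have simply spelled out the (finite) sum and the fact that some next state occurs with probability one. No gaps.
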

This follows straightforwardly from Lemma \ref{TL_prob_lemma} by marginalising over $s'$.

\begin{corollary} \label{cor3}
    If  $L_{\pi_1}=L_{\pi_2}$, then $\pi_1(s)=\pi_2(s)$ for all states $s$ that either policy ever visits with nonzero probability.
\end{corollary}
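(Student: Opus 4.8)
The plan is to reduce the claim to the two marginal identities already established in Corollaries \ref{cor1} and \ref{cor2}, together with the defining factorization of the trajectory distribution through a \emph{stationary} policy. First I would fix a state $s$ that is visited with nonzero probability by at least one of the two policies; without loss of generality suppose $\mathbb{P}_\xi^{E,\pi_1}[S_t = s] > 0$ for some time step $t$. By Corollary \ref{cor1}, the hypothesis $L_{\pi_1} = L_{\pi_2}$ forces $\mathbb{P}_\xi^{E,\pi_2}[S_t = s] = \mathbb{P}_\xi^{E,\pi_1}[S_t = s]$, so in particular this quantity is strictly positive for both policies at this $t$.

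Next I would invoke the generative structure of trajectories: because each action is drawn from the stationary policy applied to the current state, for every $a \in \ASpace$ and every $i \in \{1,2\}$ we have $\mathbb{P}_\xi^{E,\pi_i}[S_t = s, A_t = a] = \mathbb{P}_\xi^{E,\pi_i}[S_t = s]\,\pi_i(s)(a)$. Dividing through by the (nonzero) marginal $\mathbb{P}_\xi^{E,\pi_i}[S_t = s]$ expresses $\pi_i(s)(a)$ as a ratio of two quantities which, by Corollaries \ref{cor1} and \ref{cor2} respectively, agree between $\pi_1$ and $\pi_2$. Hence $\pi_1(s)(a) = \pi_2(s)(a)$ for every action $a$, i.e. $\pi_1(s) = \pi_2(s)$ as distributions over $\ASpace$. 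The case in which $s$ is visited with nonzero probability only by $\pi_2$ is handled by the symmetric argument, so the corollary follows.

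The only real subtlety — and the step I would be most careful to spell out — is the factorization $\mathbb{P}_\xi^{E,\pi}[S_t = s, A_t = a] = \mathbb{P}_\xi^{E,\pi}[S_t = s]\,\pi(s)(a)$. This is immediate from the definition of a trajectory in \Cref{subsec: Basic definitions} (given $s_t$, the action $a_t$ is sampled from $\pi(s_t)$ and is conditionally independent of $s_0, a_0, \dots, s_{t-1}, a_{t-1}$), but it relies essentially on the policy being stationary; I would state it explicitly as a one-line consequence of that definition rather than leave it implicit, since a history-dependent policy would invalidate it. No limiting arguments or estimates are needed beyond this point: everything reduces to elementary conditioning once the marginal equalities from the preceding corollaries are in hand.
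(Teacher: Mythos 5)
Your proposal is correct and follows essentially the same route as the paper's proof: both rest on the factorization $\mathbb{P}_\xi^{E,\pi}[S_t = s, A_t = a] = \mathbb{P}_\xi^{E,\pi}[S_t = s]\,\pi(a\mid s)$ together with Corollaries \ref{cor1} and \ref{cor2}, followed by division by the common nonzero state marginal. The only differences are cosmetic --- the paper phrases the final step as a (vestigial) contradiction while you argue directly for every action, and you make the stationarity-dependence of the factorization explicit, which the paper leaves implicit.
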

\begin{proof}[Proof of \ref{cor3}]
    First, note that if either $\pi_1$ or $\pi_2$ ever visits a state $s^*$ with nonzero probability, then by \ref{cor1}, there exists $t^*$ such that $\mathbb{P}_{\xi}^{E,\pi_1}[S_{t^*}=s^*]=\mathbb{P}_{\xi}^{E,\pi_2}[S_{t^*}=s^*] > 0$.
    If $\pi_1(s^*)\neq \pi_2(s^*)$, there must be an action $a^*$ such that $\pi_1(a^*|s^*) > \pi_2(a^*|s^*)$. But by \ref{cor2}, we know that since $L_{\pi_1}=L_{\pi_2}$, $\mathbb{P}_{\xi}^{E,\pi_1}[S_t=s^*,A_t=a^*]=\mathbb{P}_{\xi}^{E,\pi_2}[S_t=s^*,A_t=a^*]$. Thus:
    \begin{align*}
        \mathbb{P}_{\xi}^{E,\pi_1}[S_t=s^*,A_t=a^*]&=\mathbb{P}_{\xi}^{E,\pi_2}[S_t=s^*,A_t=a^*]\\
        \mathbb{P}_{\xi}^{E,\pi_1}[S_t=s^*] \pi_1(a^*|s^*)&=\mathbb{P}_{\xi}^{E,\pi_2}[S_t=s^*] \pi_2(a^*|s^*)\\
        \mathbb{P}_{\xi}^{E,\pi_1}[S_t=s^*] \pi_1(a^*|s^*)&=\mathbb{P}_{\xi}^{E,\pi_1}[S_t=s^*] \pi_2(a^*|s^*) &\text{(by Corollary \ref{cor1})}\\
        \pi_1(a^*|s^*)&=\pi_2(a^*|s^*)\\
    \end{align*}
So the two policies must agree on $s^*$ after all. 
\end{proof}

We require one more lemma to show that $L_{\pi_1} =L_{\pi_2} \implies J_{RRL}(\pi_1) = J_{RRL}(\pi_2)$.

\begin{lemma} \label{TL_prob_lemma_2}
    If $L_{\pi_1} =L_{\pi_2}$, then for all $t$ and for all $(s,a,s') \in \SSpace \times \ASpace \times \SSpace$:\\
    $\mathbb{P}_{\xi}^{E,\pi_1}[S_t=s,A_t=a,S_{t+1}=s']\left(\gamma^t(\Reward(s,a,s') - \alpha F[\pi_1(s)])\right) = \mathbb{P}_{\xi}^{E,\pi_2}[S_t=s,A_t=a,S_{t+1}=s']\left(\gamma^t(\Reward(s,a,s') - \alpha F[\pi_2(s)])\right)$ for any RRL specification $(\Reward,\alpha,F,\gamma)$.
\end{lemma}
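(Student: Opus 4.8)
\textbf{Proof proposal for Lemma \ref{TL_prob_lemma_2}.}

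The plan is to reduce everything to the facts already established: Lemma \ref{TL_prob_lemma} tells us the relevant transition-visitation probabilities coincide under $\pi_1$ and $\pi_2$, and Corollary \ref{cor3} tells us the two policies agree on every state they ever visit with positive probability. First I would fix an arbitrary $t$ and transition $(s,a,s') \in \SSpace \times \ASpace \times \SSpace$, and write $p := \mathbb{P}_{\xi}^{E,\pi_1}[S_t=s,A_t=a,S_{t+1}=s']$, which by Lemma \ref{TL_prob_lemma} equals $\mathbb{P}_{\xi}^{E,\pi_2}[S_t=s,A_t=a,S_{t+1}=s']$. The claimed identity then becomes $p\,\gamma^t(\Reward(s,a,s') - \alpha F[\pi_1(s)]) = p\,\gamma^t(\Reward(s,a,s') - \alpha F[\pi_2(s)])$, so the only possible source of discrepancy is the term $F[\pi_1(s)]$ versus $F[\pi_2(s)]$.

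Next I would split into two cases on the value of $p$. If $p = 0$, both sides are $0$ regardless of the values of $F[\pi_1(s)]$ and $F[\pi_2(s)]$, so the equality holds trivially. If $p > 0$, then marginalising the joint event over $a$ and $s'$ gives $\mathbb{P}_{\xi}^{E,\pi_1}[S_t=s] \geq p > 0$, so $\pi_1$ visits state $s$ with nonzero probability. By Corollary \ref{cor3} (which applies since $L_{\pi_1} = L_{\pi_2}$), we get $\pi_1(s) = \pi_2(s)$ as distributions over $\ASpace$, and hence $F[\pi_1(s)] = F[\pi_2(s)]$. Substituting this into the two sides makes them identical, completing the case and the proof.

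I do not expect a genuine obstacle here: the only mild subtlety is being careful that the equality is multiplicative and that a zero common probability annihilates any difference in the $F$-terms, so that we never need to argue $\pi_1(s) = \pi_2(s)$ for states visited with probability zero — which is exactly the edge case Corollary \ref{cor3} does not cover. Handling the $p=0$ case separately sidesteps this cleanly. Everything else is a direct appeal to Lemma \ref{TL_prob_lemma} and Corollary \ref{cor3}.
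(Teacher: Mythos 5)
Your proposal is correct and follows essentially the same route as the paper's proof: apply Lemma \ref{TL_prob_lemma} to identify the two probabilities, then case-split on whether that common probability is zero (both sides vanish) or positive (Corollary \ref{cor3} gives $\pi_1(s)=\pi_2(s)$, hence $F[\pi_1(s)]=F[\pi_2(s)]$). Your explicit marginalisation step justifying that $s$ is visited with positive probability is a slightly more careful spelling-out of what the paper leaves implicit, but the argument is the same.
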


\begin{proof}[Proof of \ref{TL_prob_lemma_2}]
    Lemma \ref{TL_prob_lemma} states that if  $L_{\pi_1}=L_{\pi_2}$, then for all $t$ and for all $(s,a,s') \in \SSpace \times \ASpace \times \SSpace$:\\
\begin{align*}
    \mathbb{P}_{\xi}^{E,\pi_1}[S_t=s,A_t=a,S_{t+1}=s']=\mathbb{P}_{\xi}^{E,\pi_2}[S_t=s,A_t=a,S_{t+1}=s']
\end{align*}
So it remains to be shown that $\mathbb{P}_{\xi}^{E,\pi_1}[S_t=s,A_t=a,S_{t+1}=s']\left(\gamma^t(\Reward(s,a,s') - \alpha F[\pi_1(s)])\right) = \mathbb{P}_{\xi}^{E,\pi_1}[S_t=s,A_t=a,S_{t+1}=s']\left(\gamma^t(\Reward(s,a,s') - \alpha F[\pi_2(s)])\right)$. Consider the following two cases.\\
\textit{Case 1:} $\mathbb{P}_{\xi}^{E,\pi_1}[S_t{=}s,A_t{=}a,S_{t+1}{=}s'] = 0$. In this case, because both sides are equal to 0:
\begin{align*}
    \mathbb{P}_{\xi}^{E,\pi_1}[S_t{=}s,A_t{=}a,S_{t+1}{=}s']\left(\gamma^t(\Reward(s,a,s') - \alpha F[\pi_1(s)])\right) = \ldots \\
    \mathbb{P}_{\xi}^{E,\pi_1}[S_t{=}s,A_t{=}a,S_{t+1}{=}s']\left(\gamma^t(\Reward(s,a,s') - \alpha F[\pi_2(s)])\right)
\end{align*}

\textit{Case 2:} $\mathbb{P}_{\xi}^{E,\pi_1}[S_t=s,A_t=a,S_{t+1}=s'] > 0$. In this case, $\pi_1(s) = \pi_2(s)$ by Corollary \ref{cor3}. Therefore, $\left(\gamma^t(\Reward(s,a,s') - \alpha F[\pi_1(s)])\right) = \left(\gamma^t(\Reward(s,a,s') - \alpha F[\pi_2(s)])\right)$, and by extension:
\begin{align*}
    \mathbb{P}_{\xi}^{E,\pi_1}[S_t=s,A_t=a,S_{t+1}=s']\left(\gamma^t(\Reward(s,a,s') - \alpha F[\pi_1(s)])\right) = \ldots \\
    \mathbb{P}_{\xi}^{E,\pi_1}[S_t=s,A_t=a,S_{t+1}=s']\left(\gamma^t(\Reward(s,a,s') - \alpha F[\pi_2(s)])\right)
\end{align*}

Since these cases are exhaustive, this concludes the proof of the lemma.
\end{proof}

Finally, recall that:
\[J_{RRL}(\pi):= \sum\limits_{t=0}^\infty \sum\limits_{(s,a,s') \in \SSpace \times \ASpace \times \SSpace} \mathbb{P}_{\xi}^{E,\pi}[S_t=s,A_t=a,S_{t+1}=s']\left(\gamma^t(\Reward(s,a,s') - \alpha F[\pi(s)])\right)\]
By Lemma \ref{TL_prob_lemma_2}, $L_{\pi_1} =L_{\pi_2} \implies J_{RRL}(\pi_1) = J_{RRL}(\pi_2)$. This means that the FTLR specification given by $f_{FTLR}(L_\pi) := J_{RRL}(\pi)$ is well-defined.

Both FTLR and RRL derive policy orderings directly from the policy evaluation functions, so this means the two specifications induce the same policy ordering. Therefore, we've shown that for any RRL specification in any environment, it is possible to construct an FTLR specification which expresses the same policy ordering. This concludes the proof of the proposition.
\end{proof}

\begin{proposition}[$  FTLR \succeq_{EXPR} OMORL$] \label{prop:FTLR>RRL}
Any policy ordering expressible with Outer Multi-Objective RL (OMORL) can be expressed with Functions from Trajectory Lotteries to Reals (FTLR).
\end{proposition}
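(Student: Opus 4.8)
The plan is to mirror the proof of the preceding proposition (the FTLR $\succeq_{EXPR}$ RRL result). Given an arbitrary OMORL specification $(k,\Reward,f,\gamma)$ in an arbitrary environment $E$, I would define the FTLR specification by $f_{FTLR}(L_\pi) := J_{OMORL}(\pi) = f\!\left(J_1(\pi),\dots,J_k(\pi)\right)$. Since $J_{FTLR}(\pi) := f_{FTLR}(L_\pi)$, the equality of $J_{FTLR}$ and $J_{OMORL}$ on every policy is then immediate, and because both formalisms order policies directly by their scalar policy-evaluation functions, the two specifications induce the same policy ordering. The only thing that genuinely requires argument is that $f_{FTLR}$ is \emph{well-defined} as a function on $L_{\Pi_{S,A}}$, i.e.\ that $L_{\pi_1}=L_{\pi_2}$ implies $J_{OMORL}(\pi_1)=J_{OMORL}(\pi_2)$.

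To establish well-definedness, I would first rewrite each coordinate of the policy-evaluation vector in terms of discounted transition occupancies. Since $\Reward$ is bounded and $\gamma\in[0,1)$, the relevant series are absolutely convergent, so the expectation may be exchanged with the sum, yielding
\[
J_i(\pi) = \sum_{t=0}^\infty \sum_{(s,a,s')\in S\times A\times S} \mathbb{P}^{E,\pi}_\xi[S_t=s,A_t=a,S_{t+1}=s']\,\gamma^t\Reward_i(s,a,s')
\]
for each $i\in\{1,\dots,k\}$. Now I would invoke \Cref{TL_prob_lemma} from the previous proof, which states that $L_{\pi_1}=L_{\pi_2}$ forces $\mathbb{P}^{E,\pi_1}_\xi[S_t=s,A_t=a,S_{t+1}=s']=\mathbb{P}^{E,\pi_2}_\xi[S_t=s,A_t=a,S_{t+1}=s']$ for every $t$ and every transition $(s,a,s')$. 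Substituting this into the expression above gives $J_i(\pi_1)=J_i(\pi_2)$ for every $i$, hence $\vec J(\pi_1)=\vec J(\pi_2)$, and applying $f$ yields $J_{OMORL}(\pi_1)=J_{OMORL}(\pi_2)$, which is exactly the well-definedness condition.

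Note that, unlike the RRL case, no appeal to \Cref{cor3} (agreement of the two policies on visited states) is needed here: an OMORL objective depends on a policy only through its discounted transition occupancies, not through the per-state action distributions $\pi(s)$ appearing \emph{inside} the expectation, so \Cref{TL_prob_lemma} alone suffices and the argument is strictly shorter than that of the RRL proposition. The main (and essentially only) obstacle is the routine bookkeeping in the interchange-of-summation step justifying the occupancy-measure rewriting of $J_i$, which is immediate from boundedness of $\Reward$ together with $\gamma\in[0,1)$ (e.g.\ by dominated convergence applied to the partial sums, dominated by $\sum_t\gamma^t R_{\max}$); everything else follows directly from the already-established lemma.
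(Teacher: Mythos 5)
Your proposal is correct and follows essentially the same route as the paper: define $f_{FTLR}(L_\pi) := J_{OMORL}(\pi)$, observe the induced orderings coincide, and reduce everything to well-definedness. The paper's own well-definedness argument is terser (it simply asserts that a trajectory lottery ``fixes all relevant quantities'' for computing each $J_i$), whereas you make the dependence explicit via the occupancy rewriting and \Cref{TL_prob_lemma}; your version is the more careful rendering of the same idea, and your observation that \Cref{cor3} is not needed here is accurate.
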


\textbf{Intuition:} Two policies generate the same trajectory lottery if and only if they are identical on all states that either policy ever visits with nonzero probability. If two policies are identical on all states visited with nonzero probability, then an OMORL specification must assign them the same value. Therefore, a well-defined function can take a trajectory lottery as input and output the value assigned by an OMORL specification to all policies that generate the given trajectory lottery.

\begin{proof}[Proof.]
    Let $k,\Reward,f,\gamma$ be an arbitrary OMORL objective specification in an arbitrary environment. Then the following FTLR specification $(f_{FTLR})$ expresses the same policy ordering:
\begin{itemize}
    \item $f_{FTLR}(L_\pi) := J_{OMORL}(\pi) := f\left(J_1(\pi),...,J_k(\pi)\right)$, where \newline$ J_i(\pi) := \mathbb{E}_\xi\left[\sum\limits_{t=0}^\infty \gamma^t \Reward_i(s_t,a_t,s_{t+1})\right]$
\end{itemize}
Here, $L_\pi$ is the lottery over trajectories that is produced by policy $\pi$ in the environment. Since $J_{FTLR}(\pi) := f_{FTLR}(L_\pi)$, the equivalency of $J_{FTLR}$ and $J_{OMORL}$ is immediate: $J_{FTLR}(\pi) := f_{FTLR}(L_\pi):=J_{OMORL}(\pi)$. However, we must verify that this function from trajectory lotteries to reals is well-defined, i.e. that $L_{\pi_1} =L_{\pi_2} \implies J_{OMORL}(\pi_1) = J_{OMORL}(\pi_2)$. A trajectory lottery and an OMORL specification together fix all relevant quantities for computing $ J_i(\pi) := \mathbb{E}_\xi\left[\sum\limits_{t=0}^\infty \gamma^t \Reward_i(s_t,a_t,s_{t+1})\right]$ for all $i \in [k]$. Given all the $J_i$ values, the OMORL specification also fixes $f\left(J_1(\pi),...,J_k(\pi)\right)$. Therefore, a trajectory lottery and an OMORL specification together uniquely specify a value of $J_{OMORL}(\pi) := f\left(J_1(\pi),...,J_k(\pi)\right)$, and $f_{FTLR}$ above is well-defined.

Both FTLR and OMORL derive policy orderings directly from the policy evaluation functions, so this means the two specifications induce the same policy ordering. Therefore, we've shown that for any OMORL specification in any environment, it is possible to construct an FTLR specification which expresses the same policy ordering.
\end{proof}

\newpage

\subsection{Novel Results of Expresivity ($X \succeq Y$)} \label{subsec: Novel Results of Expresivity}
\begin{lemma}
For any environment E = (S,A,T,I), there exists a reward function $\Reward: \SSpace \times \ASpace \times \SSpace \rightarrow  \R$ and discount factor $\gamma \in [0,1)$ such that the trajectory return function $G(\xi) := \sum\limits_{t=0}^\infty \gamma^t \Reward(S_t,A_t,S_{t+1})$ is injective. \label{lem:invG}
\end{lemma}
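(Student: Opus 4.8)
The plan is to realize each trajectory as a positional (``base-$N$'') expansion of a real number, choosing the base large enough relative to the number of distinct transitions that the classical non-uniqueness of positional expansions (the $0.0\overline{9} = 0.1\overline{0}$ phenomenon) cannot arise. Since $S$ and $A$ are finite, the transition set $S \times A \times S$ is finite, so this is essentially the statement that a suitably spread-out digit assignment makes the discounted sum recover its digit sequence.

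Concretely, I would write $m := |S \times A \times S| \ge 1$ (if $S$ or $A$ is empty then $\Xi = \emptyset$ and the claim is vacuous), fix any bijection $\iota : S \times A \times S \to \{0,1,\dots,m-1\}$, and set $\Reward := \iota$ (viewed as a real-valued function) and $\gamma := \tfrac{1}{m+1} \in [0,1)$. For $\xi = (s_0,a_0,s_1,a_1,\dots)$ put $d_t(\xi) := \iota(s_t,a_t,s_{t+1}) \in \{0,\dots,m-1\}$, so that $G(\xi) = \sum_{t=0}^{\infty} \gamma^t\, d_t(\xi)$, a convergent sum since the rewards are bounded and $\gamma < 1$. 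First I would note that $\xi \mapsto (d_t(\xi))_{t\ge 0}$ is injective: applying $\iota^{-1}$ recovers the transition $(s_t,a_t,s_{t+1})$ from $d_t(\xi)$, and from the sequence of transitions one reads off $s_0$, then $a_0,s_1$, then $a_1,s_2$, and so on, reconstructing $\xi$. So it remains to show the map $(d_t)_{t\ge0} \mapsto \sum_{t\ge0}\gamma^t d_t$ is injective on sequences with entries in $\{0,\dots,m-1\}$.

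The key step is a ``first differing digit'' estimate. Given two such sequences $(d_t)\neq(d_t')$, let $k$ be the least index with $d_k\neq d_k'$; relabelling, assume $d_k\ge d_k'+1$. Then
\[
\sum_{t\ge0}\gamma^t d_t - \sum_{t\ge0}\gamma^t d_t' \;=\; \gamma^k(d_k-d_k') + \sum_{t>k}\gamma^t(d_t-d_t') \;\ge\; \gamma^k - (m-1)\sum_{t>k}\gamma^t \;=\; \gamma^k\!\left(1-(m-1)\tfrac{\gamma}{1-\gamma}\right),
\]
and since $\gamma=\tfrac1{m+1}$ gives $\tfrac{\gamma}{1-\gamma}=\tfrac1m$, the bracket equals $1-\tfrac{m-1}{m}=\tfrac1m>0$. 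Hence the two sums differ, the positional map is injective, and therefore $G = \big(\,(d_t)\mapsto\sum\gamma^t d_t\,\big)\circ\big(\xi\mapsto(d_t(\xi))_t\big)$ is injective as a composition of injective maps.

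The only genuine subtlety — and the reason for taking the base $m+1$ rather than $m$ and digits only up to $m-1$ — is precisely the non-uniqueness of positional representations; the estimate above is the device that rules it out, by leaving enough ``headroom'' between the largest digit and the base so that a difference in the first digit cannot be cancelled by the tail. Everything else (finiteness of $S$ and $A$, recoverability of $\xi$ from its transition sequence, boundedness of $\Reward$) is routine.
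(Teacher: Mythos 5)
Your proof is correct and takes essentially the same approach as the paper's: an injective reward function onto an arithmetic progression of values together with a discount factor small enough (you take $\gamma = 1/(m+1)$, the paper requires $\gamma < 1/|X|$) that the geometric tail $\sum_{t>k}\gamma^t(d_t - d'_t)$ cannot cancel the first differing term $\gamma^k(d_k - d'_k)$. Your base-$(m+1)$ framing and the explicit note that the transition sequence determines the trajectory are slightly cleaner presentationally, but the key estimate is identical to the paper's.
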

 
\begin{proof}
Let $X$ be a finite set with $\vert X\vert >1$ and let $\Xi=X^\omega$ be the set of infinite sequences of members of $X$.  Let $\Reward:X \rightarrow \mathbb{R}$ be a reward function and let $\gamma \in [0,1)$ be a discount factor.  Define $G_{\Reward}:\Xi \rightarrow \mathbb{R}$ such that $G(\xi) \coloneqq \sum_{t=0}^\infty \gamma^t \Reward(x_t)$.  

Let the reward function $\Reward$ be any injective function. Since $X$ is finite and $\Reward$ is injective, there exists a positive real number $M=\max_{x,y \in X} (\vert \Reward(x)-\Reward(y)\vert )$. Also, since $\Reward$ is injective, there exists a nonzero positive real number $m=\min_{x,y \in X}(\vert \Reward(x)-\Reward(y)\vert)$ (for $x\neq y$).

Now let $\xi_1$ and $\xi_2$ be two sequences that differ first at position $s$. Then:

$$
G(\xi_1)-G(\xi_2)= \sum_{t=0}^\infty \gamma^t (\Reward(x_t)-\Reward(y_t)) = \gamma^s \sum_{r=0}^\infty \gamma^r (\Reward(x_r)-\Reward(y_r))
$$

Since $\xi_1$ and $\xi_2$ must differ at position $s$ ($r=0$), the difference of rewards at that position must be at least $m$. Subsequently, the difference of rewards can be at most $M$. So we can try to compensate for the difference of $m$ at $r=0$ by subtracting differences of $M$ at positions $r>0$.

We will never be able to fully compensate if $m > \frac{\gamma M}{1-\gamma} \: \Rightarrow \: m(1 - \gamma) > \gamma M \quad (*)$ . 

Note that since $M>m$, we must have $\gamma < 1/2$. 

But also, the maximum difference between reward assignments must be at least $(\vert X \vert -1)$ times the minimum difference, so we have the constraint: $M \geq m(\vert X\vert -1)$. Combined with $(*)$, this entails that $\gamma<1/\vert X \vert$ . 

Thus, for \textbf{any} $\gamma$  satisfying this constraint, simply take an injective reward function $\Reward: X\rightarrow \{0, m, ..., m(\vert X\vert -1)\}$. Then for any two sequences $\xi_1$ and $\xi_2$ that differ anywhere, the difference between their corresponding trajectory returns $G(\xi_1)$ and $G(\xi_2)$ will be nonzero. Thus, we have chosen $\Reward$ and $\gamma$ such that $G:\Xi \rightarrow \mathbb{R}$ is injective (and we can always make it surjective by restricting the codomain to the image of $\Xi$ under $G$).
\end{proof}
Note that this scheme will result in a very small $\gamma$ for large enviroments. Next we will use this lemma to prove that INMR, IMORL, and FTR are equivalent. 

\begin{theorem}
[$INMR  \sim_{EXPR} IMORL \sim_{EXPR} FTR$ (\Cref{thm: INMR = IMORL = FTR} in \Cref{results})]
    \textit{Inner Nonlinear MR (INMR), Inner Multi-Objective RL (IMORL) and Functions from Trajectories to Reals (FTR) can each express every policy ordering on any environment that the other two formalisms can express}
\label{thm:INMR=IMORL=FTR}
\end{theorem}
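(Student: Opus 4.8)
The plan is to close a cycle of expressivity inclusions. Propositions~\ref{prop:IMORL>INMR} and~\ref{prop:FTR>IMORL} already give $IMORL \succeq_{EXPR} INMR$ and $FTR \succeq_{EXPR} IMORL$, so together with the trivial $INMR \succeq_{EXPR} INMR$ etc., it suffices to prove the single remaining inclusion $INMR \succeq_{EXPR} FTR$. Chaining $INMR \succeq_{EXPR} FTR \succeq_{EXPR} IMORL \succeq_{EXPR} INMR$ then forces all three formalisms to have exactly the same set of expressible policy orderings in every environment, which is the claim.

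For the remaining inclusion, I would fix an arbitrary environment $E$ and an arbitrary FTR specification $(f_{FTR})$ with $f_{FTR}: \Xi \to \mathbb{R}$. By Lemma~\ref{lem:invG}, choose a reward function $R: \SSpace \times \ASpace \times \SSpace \to \mathbb{R}$ and a discount factor $\gamma \in [0,1)$ so that the trajectory return $G(\xi) = \sum_{t=0}^\infty \gamma^t R(s_t,a_t,s_{t+1})$ is injective on $\Xi$; then $G$ is a bijection onto its image $G(\Xi) \subseteq \mathbb{R}$, so $G^{-1}$ is well-defined on $G(\Xi)$. Define $f_{INMR}: \mathbb{R} \to \mathbb{R}$ by $f_{INMR}(x) = f_{FTR}(G^{-1}(x))$ for $x \in G(\Xi)$ and, say, $f_{INMR}(x) = 0$ otherwise. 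This is permissible precisely because the INMR formalism imposes no regularity conditions (continuity, monotonicity, measurability) on the wrapper function. Take the INMR specification to be $(R, f_{INMR}, \gamma)$.

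It then remains to check that this INMR specification induces the same policy ordering as $(f_{FTR})$. Since $G$ is injective, $f_{INMR}(G(\xi)) = f_{FTR}(G^{-1}(G(\xi))) = f_{FTR}(\xi)$ for every trajectory $\xi$, and hence
\[
J_{INMR}(\pi) = \mathbb{E}_\xi\!\left[f_{INMR}\!\left(\sum_{t=0}^\infty \gamma^t R(s_t,a_t,s_{t+1})\right)\right] = \mathbb{E}_\xi\bigl[f_{FTR}(\xi)\bigr] = J_{FTR}(\pi)
\]
for all $\pi \in \PolSpace$. Both $INMR$ and $FTR$ order policies directly by the scalar policy-evaluation function, so the two specifications induce identical orderings, establishing $INMR \succeq_{EXPR} FTR$ and completing the cycle.

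The only substantive ingredient is Lemma~\ref{lem:invG}, which is already proved above; once an injective $G$ is in hand, everything else is bookkeeping. The single point that needs care is that $f_{INMR}$ must be a genuine function $\mathbb{R} \to \mathbb{R}$ and not merely $f_{FTR}$ in disguise — this is exactly where the deliberately permissive definition of INMR (allowing arbitrary, possibly non-measurable wrapper functions) does the work, and it is the feature the authors flag in Section~\ref{discussion} as limiting the practical reach of the result. No difficulty arises from the expectation itself, since $f_{INMR}(G(\xi))$ agrees pointwise with $f_{FTR}(\xi)$, whose expectation is already presumed meaningful in the FTR formalism.
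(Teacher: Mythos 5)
Your proposal is correct and follows essentially the same route as the paper: reduce to the single nontrivial inclusion $INMR \succeq_{EXPR} FTR$, invoke Lemma~\ref{lem:invG} to obtain an injective trajectory return $G$, and set the wrapper to $f_{FTR} \circ G^{-1}$. Your explicit extension of $f_{INMR}$ by $0$ off $G(\Xi)$ is a slightly cleaner way of handling the domain issue than the paper's restriction of the codomain, but the argument is the same.
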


\begin{proof}
It is trivial that IMORL can express INMR and that FTR can express IMORL and INMR, so if we prove that INMR can express FTR we have proved that the three formalisms are equivelent.

We need to show that, given any FTR objective specification $( f)$ , there exists an INMR objective specification $( \Reward, \gamma ,h)$  such that $f=h\circ G_{\Reward}$.

To show this, we use \ref{lem:invG}.

If the state space $S$ and action space $A$ are both finite, then it is always possible to specify a reward function $\Reward: \SSpace \times \ASpace \times \SSpace \rightarrow \mathbb{R}$ and a discount factor $\gamma \in [0,1)$ such that the function $G_{\Reward,\gamma}: \Xi \rightarrow \mathbb{R}$ is an injective function.

Hence, choose $( \Reward,\gamma)$  such that $G_{\Reward, \gamma}(\xi)$ is injective. We can also make it surjective by restricting the codomain $\mathbb{R}$ to the image of $\Xi$ under $G_{\Reward,\gamma}$. Thus, we can make $G_{\Reward,\gamma}$  invertible.

Then, construct $h: \mathbb{R} \rightarrow \mathbb{R}$ as: $h = f \circ G^{-1}_{\Reward, \gamma}$. 
\end{proof}
As discussed previously this proof requires arbitrarily complex functions and infinite precision to hold in the general case. In many practical situations it would be reasonable to say that FTR$\succ_{EXPR}$ IMORL $\succ_{EXPR}$ INMR but it is still interesting that these formalisms are equivalent in the unrestricted mathematical sense.

\begin{lemma} \label{lem: Can always make (G)OMORL J vec that matches OM} \label{lem:OM:J}
    \textit{Let $\vec J_{OMORL}(\pi) = \langle J_1(\pi), ... , J_k(\pi)\rangle \in \R^k$ for an OMORL specification $\mathcal{O}_{OMORL} = (k,\Reward,f,\gamma)$. Then for any environment $\Env = (\SSpace, \ASpace, \Trans, \Init) $ and any FOMR specification $(f_{FOMR},\gamma_{FOMR})$, there exists an OMORL specification $\mathcal{O}_{OMORL} = (k,\Reward,f,\gamma)$ such that $\vec J_{OMORL}(\pi) = \vec m(\pi)$.}
\end{lemma}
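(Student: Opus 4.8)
The plan is to construct an OMORL specification whose policy-evaluation vector $\vec J_{OMORL}(\pi)$ equals the occupancy measure $\vec m(\pi)$ componentwise. The occupancy measure lives in $\R^{|S||A||S|}$, so the natural choice is $k = |S||A||S|$, with one reward dimension dedicated to each transition $(s,a,s')$. I would index the dimensions of the reward function by transitions, writing $\Reward_{(s,a,s')}$ for the component of $\Reward$ associated with the transition $(s,a,s')$, and set $\Reward_{(s,a,s')}(\bar s, \bar a, \bar s') := \indicator[(\bar s, \bar a, \bar s') = (s,a,s')]$, i.e. the indicator reward that pays out $1$ exactly when the transition taken matches $(s,a,s')$ and $0$ otherwise. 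I would also set $\gamma := \gamma_{FOMR}$ (the discount factor from the FOMR specification, since the occupancy measure in Definition~\ref{def: Occupancy measures} is defined relative to a fixed $\gamma$), and $f := \mathrm{id}$ (or indeed any function — the lemma statement only asks for the existence of an OMORL specification realising the vector $\vec m(\pi)$; the wrapper $f$ is irrelevant to the claimed equality of vectors).

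**The key computation** is then to verify that with these choices, $J_{(s,a,s')}(\pi) = \vec m(\pi)[s,a,s']$ for every transition. Unfolding the definition of the OMORL component evaluation,
\begin{align*}
J_{(s,a,s')}(\pi) &= \E_{\xi \sim \pi, \Trans, \Init}\!\left[\sum_{t=0}^\infty \gamma^t \Reward_{(s,a,s')}(s_t, a_t, s_{t+1})\right] \\
&= \sum_{t=0}^\infty \gamma^t \,\E_{\xi}\!\left[\indicator[(s_t,a_t,s_{t+1}) = (s,a,s')]\right] \\
&= \sum_{t=0}^\infty \gamma^t \,\Prob_{\xi \sim \pi, \Trans, \Init}[s_t = s, a_t = a, s_{t+1} = s'],
\end{align*}
which is exactly $\vec m(\pi)[s,a,s']$. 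The interchange of expectation and the infinite sum is justified by the monotone (or dominated) convergence theorem, since the rewards are bounded in $[0,1]$ and $\gamma < 1$, so the partial sums are dominated by the geometric series $\sum \gamma^t = 1/(1-\gamma)$. Hence $\vec J_{OMORL}(\pi) = \langle J_{(s,a,s')}(\pi) \rangle_{(s,a,s')} = \vec m(\pi)$ as vectors in $\R^k$.

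**I do not expect a serious obstacle here** — this lemma is essentially bookkeeping, and its role in the paper is as a stepping stone toward \Cref{thm:OMORL = FOMR = FTLR} and \Cref{thm: GOMORL = OMO = TLO}. The only point requiring a small amount of care is the identification of the index sets: the occupancy measure is indexed by $S \times A \times S$, so I should be explicit that the reward function $\Reward : S \times A \times S \to \R^k$ with $k = |S||A||S|$ has its output coordinates in bijective correspondence with transitions, and that "$\vec J_{OMORL}(\pi) = \vec m(\pi)$" is to be read relative to that fixed bijection. A secondary subtlety worth flagging is that the lemma as stated fixes $\gamma_{FOMR}$ from the FOMR specification; I would note explicitly that the OMORL $\gamma$ must be chosen to match, since occupancy measures depend on the discount factor, and that once the discount factors agree the componentwise identity is immediate. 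Beyond that, the proof is just the displayed calculation above together with the convergence justification.
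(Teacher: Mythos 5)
Your construction is exactly the paper's: the same choice $k=|\SSpace||\ASpace||\SSpace|$, the same Kronecker-delta/indicator reward functions indexed by transitions, the same $\gamma=\gamma_{FOMR}$, and the same unfolding of $J_{(s,a,s')}(\pi)$ into $\sum_t \gamma^t\,\Prob[s_t{=}s,a_t{=}a,s_{t+1}{=}s']=\vec m(\pi)[s,a,s']$. Your added remarks on the sum--expectation interchange and the irrelevance of $f$ are harmless refinements of the same argument.
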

\begin{proof}

Since $\vec m(\pi) \in \R^{|\SSpace||\ASpace||\SSpace|}$ and $\vec J_{OMORL}(\pi) \in \R^k$, $k$ must equal $|\SSpace||\ASpace||\SSpace|$ in order to allow $\vec J_{OMORL}(\pi) = \vec m(\pi)$.
Let us define \( |\SSpace| \times |\ASpace| \times |\SSpace| \) reward functions indexed by a \( s, a, s' \) triple, as follows: \( \Reward_{ijk}(s_l, a_m, s_n) = \delta_{il} \delta_{jm} \delta_{kn} \). Here, \( \delta \) is the Kronecker delta function: 

\[
\delta_{ab} :=
\begin{cases}
  1, & \text{if } a = b, \\
  0, & \text{otherwise.}
\end{cases}
\]

Essentially, reward function \( \Reward_{ijk} \) provides reward 1 for the transition \( (s_i, a_j, s_k) \) and reward 0 for all other transitions. Also, let \( \gamma = \gamma_{FOMR} \). Then,
\begin{align*}
J_{ijk}(\pi) &= \sum_{l,m,n} \sum_t \gamma^t P(s_t=s_l, a_t=a_m, s_{t+1}=s_n) \Reward_{ijk}(s_l,a_m,s_n) \\ 
&= \sum_{l,m,n} \sum_t \gamma_{FOMR}^t P(s_t=s_l, a_t=a_m,s_{t+1}=s_n) \,\delta_{il} \delta_{jm} \delta_{kn}\\  
&=\sum_t \gamma_{FOMR}^tP(s_t=s_i, a_t=a_j,s_{t+1}=s_k) \quad \quad =: \vec m(\pi)[s_i,a_j,s_k] 
\end{align*}

Since all components of $\vec J_{OMORL}(\pi) $ and $ \vec m(\pi)$ match, $\vec J_{OMORL}(\pi) = \vec m(\pi)$.
\end{proof}
$S\times A \times S$ could be a very large number of reward functions and therefore in many practical situations FOMR $\succ_{EXPR}$ OMORL and OMO $\succ_{EXPR}$ GOMORL.

\begin{lemma}
    For any environment $\Env = (\SSpace,\ASpace,\Trans,\Init)$ and discount factor $\gamma$, $\vec m(\pi_1)=\vec m(\pi_2) \iff L_{\pi_1}=L_{\pi_2}$.
    \label{lem:OM:TL}
\end{lemma}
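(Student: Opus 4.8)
The plan is to prove the two implications separately; the reverse implication ($L_{\pi_1}=L_{\pi_2}\Rightarrow\vec m(\pi_1)=\vec m(\pi_2)$) is immediate, and the forward one carries all the weight.

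\textbf{Reverse direction.} Suppose $L_{\pi_1}=L_{\pi_2}$. Lemma~\ref{TL_prob_lemma} (established earlier in the excerpt) gives that for every $t$ and every transition $(s,a,s')$, $\mathbb{P}^{\Env,\pi_1}_{\xi}[S_t{=}s,A_t{=}a,S_{t+1}{=}s']=\mathbb{P}^{\Env,\pi_2}_{\xi}[S_t{=}s,A_t{=}a,S_{t+1}{=}s']$. Multiplying each such equality by $\gamma^t$ and summing over $t\ge 0$ — which is exactly the definition of $\vec m(\pi)[s,a,s']$ — yields $\vec m(\pi_1)=\vec m(\pi_2)$. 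This argument requires nothing about $\gamma$.

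\textbf{Forward direction.} Suppose $\vec m(\pi_1)=\vec m(\pi_2)$; here I take $\gamma\in(0,1)$. Marginalising the occupancy measure over $s'$ and over $(a,s')$ produces the state--action occupancy $\rho(\pi)[s,a]:=\sum_{s'}\vec m(\pi)[s,a,s']=\sum_{t}\gamma^t\mathbb{P}^{\Env,\pi}_{\xi}[S_t{=}s,A_t{=}a]$ and the state occupancy $d(\pi)[s]:=\sum_{a,s'}\vec m(\pi)[s,a,s']=\sum_t\gamma^t\mathbb{P}^{\Env,\pi}_{\xi}[S_t{=}s]$, both determined by $\vec m(\pi)$ and hence equal for $\pi_1$ and $\pi_2$. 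Since the policy is stationary, $\mathbb{P}^{\Env,\pi}_{\xi}[S_t{=}s,A_t{=}a]=\mathbb{P}^{\Env,\pi}_{\xi}[S_t{=}s]\,\pi(a\mid s)$, so $\rho(\pi)[s,a]=\pi(a\mid s)\,d(\pi)[s]$. Because $\gamma>0$, $d(\pi)[s]>0$ exactly when $s$ is visited with positive probability at some time step, and for every such $s$ we recover $\pi(a\mid s)=\rho(\pi)[s,a]/d(\pi)[s]$. Combined with $d(\pi_1)=d(\pi_2)$, this shows that $\pi_1$ and $\pi_2$ have the same set of reachable states and agree on every one of them. To finish, fix $k$ and an initial segment $\sigma=(s_0,a_0,s_1,\dots,s_k)$. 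Unwinding the definition of the trajectory distribution, $\mathbb{P}^{\Env,\pi}_{\xi}([\xi]_k=\sigma)=\Init(s_0)\prod_{i=0}^{k-1}\pi(a_i\mid s_i)\,\Trans(s_{i+1}\mid s_i,a_i)$, an expression referencing $\pi$ only at $s_0,\dots,s_{k-1}$. If this is positive under $\pi_1$, then marching along $\sigma$ shows $s_0,\dots,s_{k-1}$ are all reachable under $\pi_1$, so $\pi_1$ and $\pi_2$ coincide on them and the two products agree; running the same march with the roles exchanged shows the probability is positive under $\pi_1$ iff it is positive under $\pi_2$, and if it is zero under both the values trivially agree. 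Hence $L_{k,\pi_1}(\sigma)=L_{k,\pi_2}(\sigma)$ for all $\sigma$ and $k$, i.e.\ $L_{\pi_1}=L_{\pi_2}$.

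The main obstacle is this last step: since occupancy and reachability only pin down the policies on \emph{reachable} states, one must verify that a segment of probability zero under one policy also has probability zero under the other — otherwise a hidden disagreement on an off-path state could break the equality — which is precisely what the marching argument rules out. One should also flag that the forward direction genuinely uses $\gamma>0$: when $\gamma=0$ the occupancy measure constrains behaviour only at $t=0$, and the stated equivalence fails (e.g.\ two policies that differ only on a state reached at $t\ge 1$ share an occupancy measure but not a trajectory lottery), so the lemma should be read with $\gamma\in(0,1)$.
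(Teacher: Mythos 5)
Your proof is correct, and the forward direction takes a genuinely different route from the paper's. The paper proves that direction by contraposition: it asserts that $L_{\pi_1}\neq L_{\pi_2}$ forces a state $s$ that both policies reach with the same positive probability but on which they disagree, picks actions $a_1,a_2$ with $\pi_1(a_1\mid s)>\pi_2(a_1\mid s)$ and $\pi_2(a_2\mid s)>\pi_1(a_2\mid s)$, and then compares the ratios $\vec m(\pi)[s,a_1,s']/\vec m(\pi)[s,a_2,s']$, which stationarity reduces to $\pi(a_1\mid s)\Trans(s'\mid s,a_1)/\bigl(\pi(a_2\mid s)\Trans(s'\mid s,a_2)\bigr)$. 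You instead argue directly: marginalise $\vec m$ to the state and state--action occupancies, recover $\pi(a\mid s)$ on every reachable state as the quotient $\rho/d$, and then rebuild each $L_{k,\pi}$ from the product formula, using the reachability march to handle segments of probability zero. Your version buys two things. First, it is more watertight: the paper's opening claim (the existence of a commonly-visited disagreement state) is precisely the ``first point of divergence'' fact that itself needs an argument, and its ratio computation silently divides by $\vec m(\pi_1)[s,a_2,s']$, which can vanish when $\pi_1(a_2\mid s)=0$ or $\Trans(s'\mid s,a_2)=0$; your construction divides only by $d(\pi)[s]>0$ on reachable states and treats the zero-probability segments explicitly. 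Second, you correctly flag that the forward implication fails at $\gamma=0$ --- the lemma as stated quantifies over ``any discount factor,'' and the paper's ratio argument also degenerates there --- so the statement should indeed be read with $\gamma\in(0,1)$. Both arguments use stationarity at the same essential point (factorising the law of $(S_t,A_t)$ through a time-independent $\pi(a\mid s)$), and your reverse direction coincides with the paper's Direction 1 via Lemma~\ref{TL_prob_lemma}.
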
  
\begin{proof}
\textbf{Direction 1:} \(L_{\pi_1}=L_{\pi_2} \implies \vec m(\pi_1)=\vec m(\pi_2)\)

Consider the set \(\Xi_k^*= \{\xi_k \in \Xi_k \; | \;s_k=s, a_k=a, s_{k+1} = s'\}\). Then, 

\[ P_{\pi}(s_k=s,a_k=a, s_{k+1} = s')= \sum\limits_{\xi_k \in \Xi_k^*} L_{k,\pi}(\xi_k) \]

If \(L_{\pi_1}=L_{\pi_2}\), then \(L_{k,\pi_1}=L_{k,\pi_2} \; \forall k \in \mathbb{N}\). Thus, at every time-step \(t\):

\[ P_{\pi_1}(s_t=s,a_t=a, s_{t+1} = s') = \sum\limits_{\xi_t \in \Xi_t^*} L_{t,\pi_1}(\xi_t) = \sum\limits_{\xi_t \in \Xi_t^*} L_{t, \pi_2}(\xi_t) = P_{\pi_2}(s_t=s, a_t=a, s_{t+1} = s') \]

Therefore, 
\[ \vec m(\pi_1)[s,a, s'] = \sum_{t=0}^\infty \gamma^t P_{\pi_1}(s_t=s, a_t=a, s_{t+1} = s') = \sum_{t=0}^\infty \gamma^t P_{\pi_2}(s_t=s, a_t=a, s_{t+1} = s') = \vec m(\pi_2)[s,a, s']\]

\textbf{Direction 2:} \(\vec m(\pi_1)=\vec m(\pi_2) \implies L_{\pi_1} = L_{\pi_2}\)

To prove this direction we will show the contrapositive, i.e. $L_{\pi_1}\neq L_{\pi_2}  \implies \vec m(\pi_1)\neq \vec m(\pi_2)$.

Suppose \(L_{\pi_1}\neq L_{\pi_2}\). This means there exists some state \(s\) and a subsequent state \(s'\) that both policies visit with the same nonzero probability such that \(\pi_1(s) \neq \pi_2(s)\). (This might be the initial state, if they diverge right away.) If they were identical on every visited state, they would produce the same trajectory lottery. There must be two actions, \(a_1\) and \(a_2\), such that \(\pi_1(a_1|s)>\pi_2(a_1|s)\) and \(\pi_2(a_2|s)>\pi_1(a_2|s)\). For both sets of probabilities to sum to one, \(\pi_1\) can't assign a greater probability to all actions at \(s\).

Now, considering the transition to \(s'\):

\begin{align*}
\frac{\vec m(\pi_1)[s,a_1, s']}{\vec m(\pi_1)[s,a_2, s']} &= \frac{\sum_t \gamma^t P_{\pi_1}(s_t=s, a_t=a_1, s_{t+1} = s')}{\sum_t \gamma^t P_{\pi_1}(s_t=s, a_t=a_2, s_{t+1} = s')} \\
&= \frac{\sum_t \gamma^t P_{\pi_1}(s_t=s)\pi_1(a_t = a_1\vert s_t = s)T(s_{t+1}=s' \vert a_t = a_1, s_t = s)}{\sum_t \gamma^t P_{\pi_1}(s_t=s)\pi_1(a_t = a_2\vert s_t = s)T(s_{t+1}=s' \vert a_t = a_2, s_t = s)}
\end{align*}

Since we're considering stationary policies only, \(\pi_1(a_t=a \vert s_t =s) = \pi_1(a\vert s)\). The transition function is also time independent. Therefore:

\[
\frac{\vec m(\pi_1)[s,a_1, s']}{\vec m(\pi_1)[s,a_2, s']} = \frac{\pi_1(a_1\vert s) T(s' \vert a_1, s) \sum_t \gamma^t P_{\pi_1}(s_t=s)}{\pi_1(a_2\vert s) T(s' \vert a_2, s) \sum_t \gamma^t P_{\pi_1}(s_t=s)} = \frac{\pi_1(a_1\vert s)T(s'\vert a_1,s)}{\pi_1(a_2 \vert s)T(s' \vert a_2,s)}
\]

Similarly, 

\[
\frac{\vec m(\pi_2)[s,a_1, s']}{\vec m(\pi_2)[s,a_2, s']} = \frac{\pi_2(a_1\vert s)T(s' \vert a_1,s)}{\pi_2(a_2 \vert s)T(s' \vert a_2,s)}
\]

From the inequalities \(\pi_1(a_1|s)>\pi_2(a_1|s)\) and \(\pi_2(a_2|s)>\pi_1(a_2|s)\), we deduce:

\[
\frac{\pi_1(a_1\vert s)}{\pi_1(a_2 \vert s)} > \frac{\pi_2(a_1\vert s)}{\pi_2(a_2 \vert s)} \Rightarrow \frac{\vec m(\pi_1)[s,a_1, s']}{\vec m(\pi_1)[s,a_2, s']} > \frac{\vec m(\pi_2)[s,a_1, s']}{\vec m(\pi_2)[s,a_2, s']}
\]

Thus, \(\vec m(\pi_1) \neq \vec m(\pi_2)\).

\end{proof}
Note that this proof requires stationary policies; on non-stationary policies, the above will not necessarily hold.

Theorems \ref{thm:OMORL=FOMR=FTLR} and \ref{thm:GOMORL=OMO=TLO} follow fairly straightforwardly from Lemmas \ref{lem:OM:J} and \ref{lem:OM:TL}.

\begin{theorem}[OMORL $\sim_{EXPR}$ FOMR $\sim_{EXPR}$ FTLR (\Cref{thm:OMORL = FOMR = FTLR} in \Cref{results})] \label{thm:OMORL = FOMR = FTLR in appendix} \label{thm:OMORL=FOMR=FTLR} 
    \textit{Outer Multi-Objective RL (OMORL), Functions from Occupancy Measures to Reals (FOMR), and Functions from Trajectory Lotteries to Reals (FTLR) can each express every stationary policy ordering on any environment that the other two formalisms can express.}
\end{theorem}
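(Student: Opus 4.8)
The plan is to close a cycle of expressivity inclusions among the three formalisms, using the two lemmas just established. We already have FTLR $\succeq_{EXPR}$ OMORL from the proposition in \Cref{subsec:Trivial Results of Expressivity} stating that any OMORL ordering is expressible in FTLR. Hence it suffices to prove OMORL $\succeq_{EXPR}$ FOMR and FOMR $\succeq_{EXPR}$ FTLR; transitivity of $\succeq_{EXPR}$ then forces all three to be equally expressive.

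\textbf{Step 1: OMORL $\succeq_{EXPR}$ FOMR.} Fix an environment $\Env$ and an FOMR specification $(f_{FOMR}, \gamma_{FOMR})$. By \Cref{lem:OM:J} there is an OMORL specification $(k, \Reward, f, \gamma)$ with $k = |\SSpace||\ASpace||\SSpace|$, with the indicator reward functions $\Reward_{ijk}$ from that lemma, and with $\gamma = \gamma_{FOMR}$, such that $\vec J_{OMORL}(\pi) = \vec m(\pi)$ for every $\pi$. Take the wrapper $f : \R^k \to \R$ to be any extension of $f_{FOMR}$ from $\vec m(\Pi)$ to all of $\R^k$ (the values off $\vec m(\Pi)$ never get evaluated, so this is harmless). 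Then $J_{OMORL}(\pi) = f(\vec J_{OMORL}(\pi)) = f_{FOMR}(\vec m(\pi)) = J_{FOMR}(\pi)$ for all $\pi$, so the two specifications induce the same policy ordering.

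\textbf{Step 2: FOMR $\succeq_{EXPR}$ FTLR.} Fix an environment $\Env$ and an FTLR specification $(f_{FTLR})$. Pick any $\gamma \in (0,1)$ and define $f_{FOMR} : \vec m(\Pi) \to \R$ by $f_{FOMR}(\vec m(\pi)) := f_{FTLR}(L_\pi)$. The one thing requiring argument is that this is well-defined: if $\vec m(\pi_1) = \vec m(\pi_2)$, then by \Cref{lem:OM:TL} we have $L_{\pi_1} = L_{\pi_2}$, hence $f_{FTLR}(L_{\pi_1}) = f_{FTLR}(L_{\pi_2})$, so the value attached to the common occupancy measure is unambiguous. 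Then $J_{FOMR}(\pi) = f_{FOMR}(\vec m(\pi)) = f_{FTLR}(L_\pi) = J_{FTLR}(\pi)$, giving the same policy ordering.

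Chaining FTLR $\succeq_{EXPR}$ OMORL $\succeq_{EXPR}$ FOMR $\succeq_{EXPR}$ FTLR and invoking transitivity yields OMORL $\sim_{EXPR}$ FOMR $\sim_{EXPR}$ FTLR. The genuinely load-bearing point is the well-definedness check in Step 2, which rests on the hard direction of \Cref{lem:OM:TL} (that $\vec m(\pi_1) = \vec m(\pi_2)$ implies $L_{\pi_1} = L_{\pi_2}$); this is also precisely where the restriction to \emph{stationary} policy orderings enters, since that implication can fail for history-dependent policies, and it is why a strictly positive discount factor is used (the ratio computation in \Cref{lem:OM:TL} divides by $\sum_t \gamma^t P_\pi(s_t = s)$, which would vanish at $\gamma = 0$ for states first visited after time zero). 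Everything else — the construction in Step 1 and the algebraic identities — is routine bookkeeping once the lemmas are in hand.
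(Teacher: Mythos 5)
Your proof is correct and follows essentially the same route as the paper's: both arguments reduce to \Cref{lem:OM:J} and \Cref{lem:OM:TL}, with your version merely reorganising the paper's two pairwise equivalences into a cycle of one-way inclusions closed by the already-established FTLR $\succeq_{EXPR}$ OMORL and transitivity. Your explicit well-definedness check in Step 2, and the observation that one should take $\gamma>0$ there because the hard direction of \Cref{lem:OM:TL} degenerates at $\gamma=0$, are points the paper's terser write-up leaves implicit.
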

\begin{proof}
From lemma \ref{lem:OM:J} we can choose the OMORL reward functions $\Reward_{ijk}$ such that the associated Markovian policy evaluation functions $J_{ijk}$ form the components of the policy's occupancy measure. If we apply the same function $f$ to both the vector of $J_{ijk}$ functions and the occupancy measure we get that $J_{OMORL}(\pi) = f(\vec{J}_{OMORL} (\pi))= f(\vec m(\pi)) = J_{FOMR}(\pi)$.
From lemma \ref{lem:OM:TL} we have that occupancy measures and trajectory lotteries uniquely determine each other and as such we can choose corresponding functions $f_{FOMR}$ and $f_{FTLR}$ such that both FOMR and FTLR induce the same policy orderings.
We have now proven that OMORL, FOMR, and FTLR are equally expressive.
\end{proof}

\begin{theorem}[GOMORL $\sim_{EXPR}$ OMO $\sim_{EXPR}$ TLO (\Cref{thm: GOMORL = OMO = TLO} in \Cref{results})] \label{thm:GOMORL=OMO=TLO} 
    \textit{ Generalised Outer Multi-Objective RL (GOMORL), Occupancy Measures Orderings (OMO), and Trajectory Lottery Orderings (TLO) can each express every stationary policy ordering on any environment that the other two formalisms can express.}
\end{theorem}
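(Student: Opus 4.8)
The plan is to follow the proof of \Cref{thm:OMORL=FOMR=FTLR}, replacing the scalar wrapper functions $f$ by total preorders and reusing \Cref{lem:OM:J} and \Cref{lem:OM:TL}. Each of GOMORL, OMO, and TLO orders policies by pulling back a total preorder along a fixed map ($\pi \mapsto \vec J(\pi)$, $\pi \mapsto \vec m(\pi)$, and $\pi \mapsto L_\pi$ respectively), so it suffices to show these three objects determine one another when the discount factors agree, and then transport preorders in both directions.

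First I would establish $\mathrm{GOMORL} \sim_{EXPR} \mathrm{OMO}$. For $\mathrm{OMO} \succeq_{EXPR} \mathrm{GOMORL}$, take a GOMORL specification $(k,\Reward,\gamma,\succeq_J)$ and observe that $J_i(\pi) = \sum_{(s,a,s')} \Reward_i(s,a,s')\,\vec m(\pi)[s,a,s']$ when $\vec m$ is computed with the same $\gamma$, i.e. $\vec J(\pi) = M\,\vec m(\pi)$ for the fixed matrix $M$ whose rows are the reward functions $\Reward_1,\dots,\Reward_k$. Define $\succeq_m$ on $\vec m(\Pi)$ by $\vec m_1 \succeq_m \vec m_2 :\iff M\vec m_1 \succeq_J M\vec m_2$; this inherits transitivity and strong connectedness from $\succeq_J$, and $(\gamma,\succeq_m)$ induces exactly the GOMORL ordering. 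For $\mathrm{GOMORL} \succeq_{EXPR} \mathrm{OMO}$, take an OMO specification $(\gamma,\succeq_m)$, set $k := |\SSpace||\ASpace||\SSpace|$ and use the Kronecker-delta reward functions of \Cref{lem:OM:J} so that $\vec J(\pi) = \vec m(\pi)$; then extend $\succeq_m$ (defined only on $\vec m(\Pi) \subseteq \R^k$) to any total preorder $\succeq_J$ on all of $\R^k$, which is always possible (for instance, by placing every point of $\R^k \setminus \vec m(\Pi)$ strictly below $\vec m(\Pi)$ and mutually $\succeq_J$-equivalent). Since only values in $\vec m(\Pi)$ are realized, the induced GOMORL ordering agrees with the OMO ordering regardless of the extension.

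Next I would prove $\mathrm{OMO} \sim_{EXPR} \mathrm{TLO}$ using \Cref{lem:OM:TL}: for any fixed $\gamma$, $\vec m(\pi_1) = \vec m(\pi_2) \iff L_{\pi_1} = L_{\pi_2}$, so realized occupancy measures and realized trajectory lotteries are in bijection. For $\mathrm{TLO} \succeq_{EXPR} \mathrm{OMO}$, given $(\gamma,\succeq_m)$ define $\succeq_L$ on $L_{\Pi_{S,A}}$ by $L_{\pi_1} \succeq_L L_{\pi_2} :\iff \vec m(\pi_1) \succeq_m \vec m(\pi_2)$; \Cref{lem:OM:TL} makes this well-defined (equal lotteries force equal occupancy measures), it is a total preorder, and it reproduces the ordering. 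The reverse is symmetric: given $(\succeq_L)$, pick any $\gamma \in [0,1)$ — the choice is immaterial since \Cref{lem:OM:TL} holds for every $\gamma$ — and define $\succeq_m$ on $\vec m(\Pi)$ by $\vec m(\pi_1) \succeq_m \vec m(\pi_2) :\iff L_{\pi_1} \succeq_L L_{\pi_2}$. Transitivity of $\exprsucceq$ then yields all three equivalences.

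The remaining work — checking that each pulled-back relation is well-defined, transitive, and strongly connected — is routine once the two lemmas are in hand, so I do not expect a real obstacle. The one point needing genuine care is the observation that GOMORL's value vector $\vec J(\pi)$ is a \emph{linear} image of the occupancy measure $\vec m(\pi)$ under a shared discount factor; this is what lets OMO recover an arbitrary GOMORL ordering by pulling $\succeq_J$ back through $M$, and it is the analogue of the role played by composition with $f$ in \Cref{thm:OMORL=FOMR=FTLR}.
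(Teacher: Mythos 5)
Your proof is correct and takes essentially the same route as the paper's: both treat GOMORL, OMO, and TLO as total preorders pulled back along $\pi \mapsto \vec J(\pi)$, $\pi \mapsto \vec m(\pi)$, and $\pi \mapsto L_\pi$, and both rest on \Cref{lem:OM:J} and \Cref{lem:OM:TL}. You are somewhat more explicit than the paper on the direction OMO $\succeq_{EXPR}$ GOMORL --- observing that $\vec J(\pi) = M\vec m(\pi)$ for a fixed matrix $M$ so that $\succeq_J$ can be pulled back through $M$ --- and on extending $\succeq_m$ from $\vec m(\Pi)$ to a total preorder on all of $\mathbb{R}^k$, two points the paper leaves implicit.
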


\begin{proof}
A GOMORL objective specification consists of an ordering on the set of policy evaluation functions $\{ \vec{J}_{OMORL}(\pi) \vert \pi \in \Pi_E \}$, while an OMO objective specification consists of an ordering on the occupation measures $\{ \vec m(\pi) \vert \pi \in \Pi_E \}$. As in \Cref{thm:OMORL=FOMR=FTLR}, \Cref{lem:OM:J} says that we can choose our reward functions $\Reward_{ijk}$ such that $\vec{J}_{OMORL}(\pi) = \vec m(\pi), \forall \pi \in \Pi_E$. Naturally, if we apply the same ordering to both the policy evaluation vectors and to the occupancy measures, this then results in the same induced policy ordering.
From \Cref{lem:OM:TL} we have that occupancy measures and trajectory lotteries uniquely determine each other, so any ordering of one uniquely determines an ordering of the other. Thus, orderings over occupation measures can induce all and only those policy orderings that can be induced by orderings over trajectory lotteries. 
Therefore, GOMORL, OMO, and TLO are equally expressive.
\end{proof}

\newpage
\subsection{Novel Results of Non-Expressivity ($X \not \succeq Y$)} \label{subsec:Novel Results of Non-Expressivity}
\begin{proposition}[$RM, MR \not \succeq_{EXPR} LAR, LTL$]
    There is an environment and an ordering over policies in that environment that $LAR$ and $LTL$ can induce, but $RM$ and $MR$ cannot. \label{prop:RM_MRNLAR_LTL}
\end{proposition}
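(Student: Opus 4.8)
The plan is to exhibit a single environment together with a policy ordering that is realizable in both LAR and LTL but provably not realizable in RM (and hence not in MR, since $RM \succeq_{EXPR} MR$ by \Cref{prop:RM>MR}). The key structural fact to exploit is that RM specifications, like MR, can only induce \emph{continuous} policy evaluation functions: an RM specification gives a discounted sum $J_{RM}(\pi) = \trajexp[][\sum_t \gamma^t \Reward_t(s_t,a_t,s_{t+1})]$, which depends on $\pi$ only through the occupancy-type quantities $\Prob_\xi^{E,\pi}[\text{machine-state } u_t, s_t, a_t, s_{t+1}]$, and these vary continuously (indeed, real-analytically) in the finitely many parameters $\pi(a \mid s)$. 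By contrast, both LAR and LTL can induce \emph{discontinuous} policy evaluation functions, because an LTL formula assigns a $\{0,1\}$-value that can flip as soon as a trajectory's probability becomes positive, and LAR's limit-average value can jump when a policy switches from almost surely reaching an absorbing region to almost surely avoiding it.

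Concretely, first I would set up a small environment — two or three states suffices — with a designated "good" absorbing state $g$ and a "bad" absorbing state $b$, reachable from an initial state $s_0$ by two actions $a_g, a_b$, where $a_g$ deterministically leads to $g$ and $a_b$ deterministically leads to $b$. Parametrize a policy by $p = \pi(a_g \mid s_0) \in [0,1]$. Second, I would pick the target ordering to be the one induced by the evaluation $V(p) = \indicator[p > 0]$ (reach $g$ with any positive probability is strictly better than never reaching it, and all $p>0$ are equivalent). Third, I would show this ordering is expressible in LTL: take $\varphi = \diamondsuit(\text{transition into } g)$, so $J_{LTL}(\pi) = \Prob_\xi^{E,\pi}[\diamondsuit g]$; this is positive iff $p>0$ but — wait, that gives $J_{LTL}(\pi) = p$, not $\indicator[p>0]$. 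I would instead use a structure where $g$ is reached with probability exactly $0$ or exactly $1$: e.g. a self-loop at $s_0$ under $a_b$ and a one-shot move to $g$ under $a_g$, but with the environment rigged so that any stationary policy that ever plays $a_g$ with positive probability reaches $g$ with probability $1$ (e.g. infinitely many visits to $s_0$). Then $J_{LTL}(\pi) = \indicator[p>0] = V(p)$ exactly. For LAR, put reward $1$ on the self-loop at $g$ and $0$ elsewhere; then the limit-average reward is $1$ if $g$ is reached a.s. and $0$ otherwise, again matching $V(p)$. Fourth, I would show RM (and MR) cannot express this ordering: the function $p \mapsto J_{RM}(\pi_p)$ is continuous on $[0,1]$ for every RM specification, yet any function representing the ordering $\{p : p>0\}$ all-equivalent-and-strictly-above $\{p=0\}$ must be discontinuous at $p=0$ (it takes a single value $c$ on $(0,1]$ and a strictly smaller value on $\{0\}$, so its limit as $p \to 0^+$ is $c \neq J(\pi_0)$). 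This contradiction completes the argument. To make the continuity claim rigorous I would note that each reachable machine state $u$ is reached with a probability that is a polynomial (or convergent power series) in the entries of $\pi$, and $J_{RM}$ is a convergent sum of such terms times bounded rewards, hence continuous; alternatively I would cite that $J_{RM}$ factors through the occupancy measure over the product state space $S \times U \times A \times S$, which is continuous in $\pi$.

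The main obstacle is getting the environment exactly right so that LTL and LAR both yield a genuinely $\{0,1\}$-valued (hence discontinuous-in-$\pi$) evaluation while RM provably cannot: I need the "reach $g$" event to have probability jumping discontinuously from $0$ to $1$ as $p$ crosses $0$, which requires that playing $a_g$ with \emph{any} positive stationary probability forces a.s. absorption in $g$ — achievable by making $s_0$ recurrent under the "stay" action so that $a_g$ is eventually taken with probability $1$. I would also need to double-check that LTL's atomic propositions (transitions in $S \times A \times S$) are expressive enough to name "transition into $g$," which they are by definition (\Cref{def:LTL Formula}), and that the LAR limit exists for the relevant policies (it does, since the induced Markov chains are finite and the relevant states are absorbing). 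Finally, a small subtlety: I should confirm the target relation is a genuine total preorder on \emph{all} policies of the environment, not just the one-parameter family — this is immediate since policies differing only off $s_0$'s reachable set produce identical behavior, so the evaluation extends consistently.
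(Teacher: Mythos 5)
Your proposal is correct and follows essentially the same route as the paper: an environment with a transient initial state, a self-loop action, and an escape to an absorbing state; a target ordering given by the indicator of a probability-zero/positive boundary event; LTL and LAR realizing it via an eventuality formula and an absorbing-state reward respectively; and RM/MR ruled out because their policy evaluation is continuous in the policy parameters. The only difference is presentational — you invoke the general continuity of $J_{RM}$ (which does hold, by uniform convergence of the discounted sum via bounded rewards), whereas the paper proves the same limit $\lim_{\alpha \to 0^+} J_{RM}(\pi_\alpha) = J_{RM}(\pi_A)$ by an explicit bound for the specific environment.
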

\begin{proof}[Proof by construction.]
Note that since reward machines are strictly more expressive than Markov Rewards ($RM \succeq_{EXPR} MR$;), we only need to show that $RM \not \succeq_{EXPR} LAR$ and $RM \not \succeq_{EXPR} LTL$. However, the reason $RM$ and $MR$ cannot express $LTL$ and $LAR$ is the same: $LAR$ and $LTL$ can give $\J$ functions that are discontinuous in the policy space, but $RM$ and $MR$ cannot. The proof will proceed in three stages. First, we construct the environment $E$ and a subset of policy space $\Pi' \subset \Pi$. Second, \Cref{lem:LAR_and_LTL_discont} shows that $LAR$ and $LTL$ can achieve a particular ordering over $\Pi'$. Third, \Cref{lem:J_RM_lim} demonstrates that $MR$ and $RM$ cannot achieve that policy ordering.

\begin{figure}[H]
\centering
\includegraphics[scale=0.4]{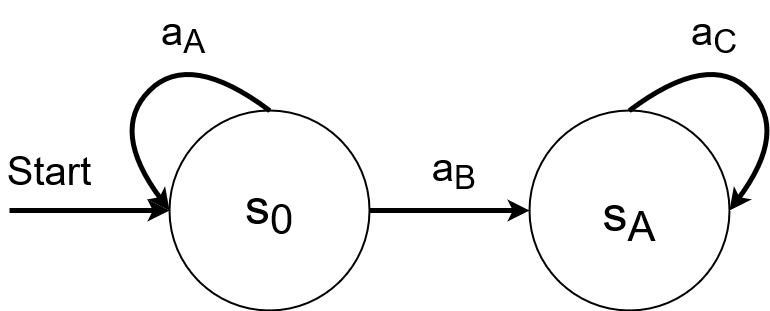}
\caption{A simple environment consisting of two states $s_0$ and $s_A$ and three actions $a_A$ which leads from $s_0$ to itself ,$a_B$ which leads from $s_0$ to $s_A$ and $a_C$ which leads from $s_A$ to itself. The starting state is $s_0$}
\label{f:Ex1a}
\end{figure}

\begin{lemma}\label{lem:LAR_and_LTL_discont}
    Consider the following ordering over $\Pi': \pi_A \succ \pi_B \sim \pi_\alpha    \quad \forall \ \alpha \ \ 0 < \alpha < 1$
    Where:
    \begin{itemize}
    \item $\pi_A$ is the policy which takes action $a_A$ in state $s_0$ 
    \item $\pi_B$ is the policy which takes action $a_B$ in state $s_0$
    \item $\pi_\alpha$ is a policy which takes action $a_B$ with probability $\alpha$ and action $a_A$ with probability $1-\alpha$
    \end{itemize}

    There exists an $LAR$ specification and an $LTL$ specification that each induces this ordering over $\Pi'$. In particular, they do so by setting $\J(\pi_\alpha)=\indicator[\alpha = 0]$.
    \begin{itemize}
        \item (A) $\ObSpec[LAR] \coloneqq ( \Reward[LAR] )$, where $\Reward[LAR](s, a, s') = \indicator[s = s_0]$, gives expected return $\J[LAR](\pi_\alpha)=\indicator[\alpha=0]$. Therefore, $\polsucceq[LAR]$ induces the ordering.
        \item (B) $\ObSpec[LTL] \coloneqq ( \psi )$, where $\psi(\traj) = \neg \lozenge s_A$, gives expected return $\J[LAR](\pi_\alpha) = \indicator[\alpha = 0]$. Therefore, $\polsucceq[LAR]$ induces the ordering.
    \end{itemize}
\end{lemma}

See \cref{lem:LAR_and_LTL_discont} for proof of this. 
Intuitively, both $LAR$ and $LTL$ allow a discontinuity in $\J(\pi_\alpha)$ at $\alpha=0$, allowing $\pi_A \succ  \pi_\alpha$ for arbitrarily small, but nonzero, values of $\alpha$. Any $\alpha>0$ gives \emph{some} probability of transitioning to $s_A$ at each time-step, and therefore a certainty of transitioning and getting stuck at some point in an infinite trajectory. 
For $LAR$, the finite sequence of $\Reward[LAR](s_0, a_A,s_0)=1$ rewards will be dominated by the infinite sequence of $\Reward[LAR](s_A, a_C,s_A)=0$ rewards.
For $LTL$, $\lozenge s_A$ will be true with probability 1, and therefore $\neg \lozenge s_A$ will be true with probability 0. 
If $\alpha=0$, the agent will always loop at $s_0$ indefinitely, getting a consistent reward of $1$ in $LAR$. Similarly, since there is 0 probability of entering $s_A$, $\neg \lozenge s_A$ is true.

To finish the counter-example, it remains to show that the ordering over $\Pi'$ cannot be captured by \emph{any} objective specification $\ObSpec[MR]$ or $\ObSpec[RM]$. To do this, we first show that $\J[MR](\pi_\alpha)$ and $\J[RM](\pi_\alpha)$ must be continuous at $\alpha=0$.

\begin{lemma} \label{lem:J_RM_lim}
    For all $\ObSpec[RM]$ and $\ObSpec[MR]$, $\lim_{\alpha \to 0^+}\J(\pi_{\alpha})=\J(\pi_A)$
\end{lemma}

For a proof of this lemma, see \cref{proof:lem:J_RM_lim}. Intuitively, for any finite-time horizon, arbitrarily small values $\alpha$ lead to small probabilities of entering $s_A$. Since the infinite-horizon return for $MR$ and $RM$ is closely approximated by a finite-time horizon return, arbitrarily small values of $\alpha$ should lead to small changes to the expected return.

\begin{corollary}
    For all $\ObSpec[RM]$ and $\ObSpec[MR]$, if $\forall \alpha \in (0,1)$, $\pi_B \polsim \pi_\alpha$, then $\pi_B \sim \pi_A$.
\end{corollary}
This corollary follows from the fact that $\pi_B \sim \pi_\alpha$ implies that $\J(\pi_{\alpha})=\J(\pi_B)$ and therefore the limit is given by: $\lim_{\alpha \to 0^+}\J(\pi_{\alpha})=\J(\pi_B)$. Since \cref{lem:J_RM_lim} also shows that $\lim_{\alpha \to 0^+}\J(\pi_{\alpha})=\J(\pi_A)$, we have that $\J(\pi_A)=\J(\pi_B)$ for any $\ObSpec[MR]$ or $\ObSpec[RM]$.

Combining $\cref{lem:LAR_and_LTL_discont}$ and $\cref{lem:J_RM_lim}$ tells us that, no $MR$ and $RM$ cannot induce an ordering with $\pi_A \succ \pi_B$, and $\pi_\alpha \sim \pi_B$, even though $LAR$ and $LTL$ can. Therefore, for this particular $\Env$, there is an ordering over $\Pi'$ that $LAR$ and $LTL$ can induce but $MR$ and $RM$ cannot. This is sufficient to show that $RM \not \succeq_{EXPR} LAR$, $RM \not \succeq_{EXPR} LTL$, $MR \not \succeq_{EXPR} LAR$, and $MR \not \succeq_{EXPR} LTL$.

\end{proof}

\subsubsection{Proof of \cref{lem:LAR_and_LTL_discont}}

\begin{proof}[Proof of \cref{lem:LAR_and_LTL_discont} (A)]\label{proof:lem:LAR_and_LTL_discont}
    We will show that $\J[LAR](\pi_\alpha)=\indicator[\alpha=0]$.
    First recall that $\Reward[LAR](s, a, s')$ 
    $s_0$
    $\coloneqq \indicator[s_t =$
    $s_0$
    $ s_0]$. 
    Next, recall the definition of $J_{E, O_{LAR}}$: 

    \begin{align*}
        J_{E, O_{LAR}}(\pi_\alpha) &= \lim_{N \to \infty} \frac{1}{N} \trajexp \bigg [ \sum_{t=0}^{N-1} { \Reward[LAR](s_t, a_t, s_{t+1})} \bigg ] & \text{(by definition)} \\
          &= \lim_{N \to \infty} \frac{1}{N} \sum_{t=0}^{N-1} \trajexp \big [ \indicator(s_t = s_0) \big ]  
          & \text{(by linearity of expectation)} \\
          &= \lim_{N \to \infty} \frac{1}{N} \sum_{t=0}^{N-1} \trajprob \big[ s_t = s_0 \big ]  
          & \text{(Since } \E[\indicator(X)] = \Prob[X] \text{)} \\
    \end{align*}
    We consider two cases. First, if $\alpha = 0$, then $\trajprob \big[ s_t = s_0 \big ]=1$ for all $t$. Therefore $J_{E, O_{LAR}}(\pi_A) = \lim_{N \to \infty} \frac{1}{N} \sum_{t=0}^{N-1} 1 = 1$. Second, if $\alpha > 0$, then $\trajprob \big[ s_t = s_0 \big ] = (1-\alpha)^t$ $\sum_{t=0}^{N-1} (1-\alpha)^t$ is the finite sum of a geometric series with $a=1$ and $r=(1-\alpha) < 1$:
    $$
    J_{E, O_{LAR}}(\pi_\alpha) = \lim_{N \to \infty} \frac{1}{N} \sum_{t=0}^{N-1} (1-\alpha)^t = \lim_{N \to \infty} \frac{1}{N} \frac{1-(1-\alpha)^N}{1-(1-\alpha)} = \frac{1}{\alpha} \lim_{N \to \infty} \frac{1}{N} - \frac{(1-\alpha)^N}{N} = 0
    $$
\end{proof}

\begin{proof}[Proof of \cref{lem:LAR_and_LTL_discont} (B)] 
    We will show that $\J[LTL](\pi_\alpha)=\indicator[\alpha=0]$.
    First define $\ObSpec[LTL] = ( \psi
    )$ with $\psi(\traj) \coloneqq \neg \lozenge s_A$. Next, recall the definition of $\J[LTL]$: 

    \begin{align*}
        \J[LTL](\pi_\alpha) &= \trajexp \bigg [ \psi(\xi) \bigg ] & \text{(by definition)} \\
          &= \trajprob \bigg [ \neg \lozenge s_A \bigg ]  
          & \text{} \\
          &= 1 - \trajprob \bigg [ \lozenge s_A \bigg ]  
          & \text{} \\
    \end{align*}
    We consider two cases. First, if $\alpha = 0$, then $\trajprob \bigg [ \lozenge s_A \bigg ]=0$ since $a_B$ is never chosen. Second, if $\alpha > 0$, then $\trajprob \bigg [ \lozenge s_A \bigg ]=1$, since $a_B$ will eventually be chosen.
\end{proof}

\newpage

\subsubsection{Proof of \cref{lem:J_RM_lim}}

\begin{proof}[Proof of \cref{lem:J_RM_lim}]\label{proof:lem:J_RM_lim}
We will show that 
$$
\lim_{\alpha \to 0^+}\J(\pi_{\alpha}))=\J(\pi_{0})
$$
for all $\ObSpec[RM]$ and $\ObSpec[MR]$. Since $MR$ is a special case of $RM$ where $\mid U \mid = 1$, to prove the lemma, it will suffice to show that $\lim_{\alpha \to 0^+}(\mid \J[RM](\pi_{\alpha}) - \J[RM](\pi_A) \mid) = 0$.
Again, recall the definition of a reward machine specification, $O_{RM}=(U,u_0,\delta_U,\delta_{\Reward}, \gamma)$ with return given by: 
\begin{equation*}
    \J[RM](\pi) = \trajexp \left[\sum\limits_{t=0}^\infty \gamma^t \Reward_t(s_t,a_t,s_{t+1})\right]
\end{equation*}
Here $\Reward_t = \delta_{\Reward}(u_t,u_{t+1})$ and $u_{t+1} \coloneqq \delta_u(u_{t}, s_t)$. 
Noting that $\Reward_t(s_t, a_t, s_{t+1})$ depends only on $s_t$, $a_t$, $s_{t+1}$ and $u_t$, we can rewrite it as $\Reward'(s_t, a_t, s_{t+1}, u_t) \coloneqq \delta_{\Reward}(u_t, \delta_u(u_t, s_t))(s_t, a_t, s_{t+1})$. 
Furthermore, because we have bounded rewards the series is absolutely convergent meaning we can, we can rewrite the whole expression:

\begin{equation*}
    \J[RM](\pi) 
    = \sum\limits_{t=0}^\infty \gamma^t  \trajexp \left[ \Reward'(s_t,a_t,s_{t+1},u_t) \right] \\
\end{equation*}

We can then rewrite the expression in terms of the difference between expected rewards at a given timestep:
\begin{equation*}
    \J[RM](\pi_\alpha) - \J[RM](\pi_A)
    = \sum\limits_{t=0}^\infty \gamma^t  \left ( \trajexp \left[ \Reward'(s_t,a_t,s_{t+1},u_t) \right] - \trajexp \left[ \Reward'(s_t,a_t,s_{t+1},u_t) \right]\right )\\
\end{equation*}

Label this difference $\delta_t  \coloneqq \trajexp \left[ \Reward'(s_t,a_t,s_{t+1},u_t) \right] - \trajexp \left[ \Reward'(s_t,a_t,s_{t+1},u_t) \right]$.
We will find a particular expression for $\delta_t$. First, note that there are a finite number of transitions $(s,a,s')$. This allows us to marginalise the expectation for the reward at a particular timestep:
\begin{equation*}
    \trajexp \left[ \Reward'(s_t,a_t,s_{t+1},u_t) \right] = \sum\limits_{s,a,s'} p^{t, 
    \pi}_{s,a,s'} r^{t, \pi}_{s,a,s'}
\end{equation*}

Where:
\begin{align*}
    p^{t,\pi}_{s,a,s'} &\coloneqq \trajprob [][(s_t, a_t, s_{t+1})=(s,a,s')] \\
    r^{t, \pi}_{s,a,s'} &\coloneqq \trajexp\left[ \Reward'(s,a,s',u_t) \mid (s_t, a_t, s_{t+1})=(s,a,s') \right]
\end{align*}

Now, consider our particular $E$ and $\pi_\alpha$ for $\alpha \in [0,1]$. At any timestep, there are only three possible transitions: $(s_0, a_A, s_0)$, $(s_0, a_B, s_A)$, and $(s_A, a_B, s_A)$. We can, therefore write out a case-by-case expression for $p^{t,\pi_\alpha}_{s,a,s'}$:

\begin{align*}
    p^{t,\pi_\alpha}_{s_0,a_A,s_0} &= (1-\alpha)^{t+1} \\
    p^{t,\pi_\alpha}_{s_0,a_B,s_A} &= \alpha(1-\alpha)^{t} \\
    p^{t,\pi_\alpha}_{s_0,a_B,s_A} &= 1-(1-\alpha)^{t} \\
    p^{t,\pi_\alpha}_{s,a,s'} &= 0 \text{ otherwise} \\
\end{align*}

Noting that $p^{t,\pi_A}_{s_0,a_A,s_0} = 1$, we can write the difference between the expected rewards between $\pi_\alpha$ and $\pi_A$ at a given timestep as:

\begin{equation}
    \delta_t = 
    -(1- p^{t,\pi_\alpha}_{s_0,a_A,s_0})r^{t, \pi}_{s_0,a_A,s_0} + 
    p^{t,\pi_\alpha}_{s_0,a_B,s_A}r^{t,\pi_\alpha}_{s_0,a_B,s_A} + 
    p^{t,\pi_\alpha}_{s_0,a_B,s_A}r^{t,\pi_\alpha}_{s_0,a_B,s_A}
\end{equation}

Note that $\Reward'(s_t, a_t, s_{t+1}, u_t)$ is bounded, since $\SSpace$, $\ASpace$ and $U$ are finite. Therefore, there exists some $c$ such that $|\Reward'(s_t, a_t, s_{t+1}, u_t)| \leq c$. It follows that:
\begin{align*}
    \mid \delta_t \mid 
    &= \big |
    p^{t,\pi_\alpha}_{s_0,a_B,s_A}r^{t,\pi_\alpha}_{s_0,a_B,s_A} + 
    p^{t,\pi_\alpha}_{s_0,a_B,s_A}r^{t,\pi_\alpha}_{s_0,a_B,s_A} 
    -(1- p^{t,\pi_\alpha}_{s_0,a_A,s_0})r^{t, \pi}_{s_0,a_A,s_0} 
    \big | \\
    &= 
    p^{t,\pi_\alpha}_{s_0,a_B,s_A}\big |r^{t,\pi_\alpha}_{s_0,a_B,s_A}\big | + 
    p^{t,\pi_\alpha}_{s_0,a_B,s_A}\big |r^{t,\pi_\alpha}_{s_0,a_B,s_A}\big | 
    +(1- p^{t,\pi_\alpha}_{s_0,a_A,s_0})\big |r^{t, \pi}_{s_0,a_A,s_0}\big | \\
    &\leq c \cdot \big ( \ 
    p^{t,\pi_\alpha}_{s_0,a_B,s_A} 
    + p^{t,\pi_\alpha}_{s_0,a_B,s_A}
    + (1- p^{t,\pi_\alpha}_{s_0,a_A,s_0}) 
    \ \big ) \\
    &= 
    c \cdot 
    \big ( \ 
    1-(1-\alpha)^{t} + \alpha(1-\alpha)^{t}  + (1- (1-\alpha)^{t+1}) 
    \ \big ) \\
    &= 
    2c \cdot 
    \big ( \ 
    1-(1-\alpha)^{t} + \alpha(1-\alpha)^{t} 
    \ \big ) \\
\end{align*}

We can use this to bound the difference in $\J$ values.
\begin{align*}
    \mid \J[RM](\pi_\alpha) - \J[RM](\pi_A) \mid
    &= \mid \sum\limits_{t=0}^\infty \gamma^t \delta_t \mid \\
    & \leq \sum\limits_{t=0}^\infty \gamma^t \mid \delta_t \mid \\
    & \leq \sum\limits_{t=0}^\infty \gamma^t 2c \cdot 
    \big ( \ 
    1-(1-\alpha)^{t} + \alpha(1-\alpha)^{t} \big ) \\
    &= 2c \bigg(
    \sum\limits_{t=0}^\infty \gamma^t - 
    \sum\limits_{t=0}^\infty \gamma^t \cdot (1-\alpha)^{t}
    + \alpha \sum\limits_{t=0}^\infty \gamma^t \cdot (1-\alpha)^{t}
    \bigg) \\
    &= 2c \bigg( \frac{1}{1-\gamma} - \frac{1}{1-\gamma(1-\alpha)} + (\alpha)\frac{1}{1-\gamma(1-\alpha)} 
    \bigg ) \\
    &\coloneqq Z
\end{align*}

If we have that $\mid \J[RM](\pi_\alpha) - \J[RM](\pi_A) \mid \leq Z$, then to show that $\lim_{\alpha \to 0^+}(\J[RM](\pi_\alpha)) = \J[RM](\pi_A)$ it suffices to show that $\lim_{\alpha \to 0^+}(Z) = 0$.
\begin{align*}
    \lim_{\alpha \to 0^+}(Z) &= \lim_{\alpha \to 0^+}
    2c \bigg( \frac{1}{1-\gamma} - \frac{1}{1-\gamma(1-\alpha)} + \frac{\alpha}{1-\gamma(1-\alpha)} 
    \bigg) \\
    &= 2c \bigg( 
    \frac{1}{1-\gamma} - 
    \lim_{\alpha \to 0^+} \frac{1}{1-\gamma(1-\alpha)} 
    + \lim_{\alpha \to 0^+}\frac{\alpha}{1-\gamma(1-\alpha)} 
    \bigg) \\
    &= 2c \bigg(
    \frac{1}{1-\gamma} - 
    \frac{1}{1-\gamma} 
    + 0 
    \bigg) \\
    &= 0 
\end{align*}

\end{proof}

\newpage 
\begin{proposition}[$LAR \not \succeq_{EXPR} MR, LTL$]
    There is an environment and an ordering over policies in that environment that Markov Rewards (MR) and Linear Temporal Logic (LTL) can induce, but Limit Average Reward (LAR) cannot. 
\label{prop:LARNMR_LTL}
\end{proposition}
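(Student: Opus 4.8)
The plan is to build a two‑state environment in which $LAR$ is provably insensitive to the first transition of a trajectory, whereas $MR$ (which puts weight $\gamma^0=1$ on the present) and $LTL$ (which can name ``the first transition'') can both detect it. Take $E$ with $\SSpace=\{s_0,s_A\}$, $\ASpace=\{a_A,a_B\}$, initial state $s_0$, and transitions $\Trans(s_0,a_A)=\Trans(s_0,a_B)=s_A$ and $\Trans(s_A,a_A)=\Trans(s_A,a_B)=s_A$; thus $s_A$ is absorbing and a stationary policy is just a pair $(\pi(\cdot\mid s_0),\pi(\cdot\mid s_A))\in\Delta\ASpace\times\Delta\ASpace$. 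This matches the intuition in the excerpt: two policies that agree from time $1$ onward generate the same tail behaviour (so $LAR$ cannot separate them) but differ at time $0$ (so $MR$ and $LTL$ can).

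First I would record the structural fact that for any reward function $R$ and any policy $\pi$, the limit average reward equals $J_{LAR}(\pi)=\sum_a\pi(a\mid s_A)\,R(s_A,a,s_A)$, hence depends only on $\pi(\cdot\mid s_A)$. This is routine: for every $t\ge 1$ the trajectory sits at $s_A$, so $\mathbb{E}_\xi[R(s_t,a_t,s_{t+1})]$ is the constant $\rho:=\sum_a\pi(a\mid s_A)R(s_A,a,s_A)$, while the single $t=0$ term contributes $O(1/N)$ to $\tfrac1N\mathbb{E}_\xi[\sum_{t=0}^{N-1}R(s_t,a_t,s_{t+1})]$ and vanishes in the limit. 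Then I would fix the target ordering $\succeq^\ast$ on $\Pi^E$ by $\pi_1\succeq^\ast\pi_2\iff\pi_1(a_A\mid s_0)\ge\pi_2(a_A\mid s_0)$; this is a total preorder because it is read off a real‑valued function of $\pi$. To see $\succeq^\ast\in Ord_{MR}(E)$, take $R(s_0,a_A,s_A)=1$ and $R=0$ on all other transitions (any $\gamma\in[0,1)$): the discounted return of a trajectory is $\indicator[a_0=a_A]$, so $J_{MR}(\pi)=\pi(a_A\mid s_0)$. To see $\succeq^\ast\in Ord_{LTL}(E)$, take $\varphi$ to be the atomic proposition $(s_0,a_A,s_A)$, which holds on a trajectory iff its first transition is $(s_0,a_A,s_A)$, i.e.\ iff $a_0=a_A$ (every trajectory has $s_0=s_0$ and $s_1=s_A$); hence $J_{LTL}(\pi)=\mathbb{P}_\xi[a_0=a_A]=\pi(a_A\mid s_0)$. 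Both specifications induce exactly $\succeq^\ast$.

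Finally I would rule out $\succeq^\ast\in Ord_{LAR}(E)$ with a two‑policy witness: let $\pi_1$ play $a_A$ deterministically at both $s_0$ and $s_A$, and $\pi_2$ play $a_B$ at $s_0$ and $a_A$ at $s_A$. Then $\pi_1\succ^\ast\pi_2$ because $\pi_1(a_A\mid s_0)=1>0=\pi_2(a_A\mid s_0)$, but by the structural fact $J_{LAR}(\pi_1)=R(s_A,a_A,s_A)=J_{LAR}(\pi_2)$ for every reward function $R$; so every $LAR$ specification makes $\pi_1$ and $\pi_2$ equivalent, contradicting $\pi_1\succ^\ast\pi_2$. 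Hence no $LAR$ specification induces $\succeq^\ast$, which completes the proof. I do not expect a real obstacle: the only points needing care are the elementary argument that finitely many early transitions do not affect the limit average, and the check that the chosen $LTL$ formula has the intended truth value under this paper's convention that atomic propositions are transitions evaluated at the current position.
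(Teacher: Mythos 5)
Your proposal is correct and takes essentially the same approach as the paper: both exploit the fact that the limit average is blind to any finite prefix of a trajectory, while MR and LTL can reward or name the first transition. Your two-state absorbing construction is a slightly leaner variant of the paper's four-state example, and your treatment of the LTL formula as a bare atomic proposition (a transition evaluated at position zero) is consistent with the paper's conventions.
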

\begin{proof}[Proof by construction.]
\begin{figure}[H]
    \centering
    \includegraphics[scale=0.30]{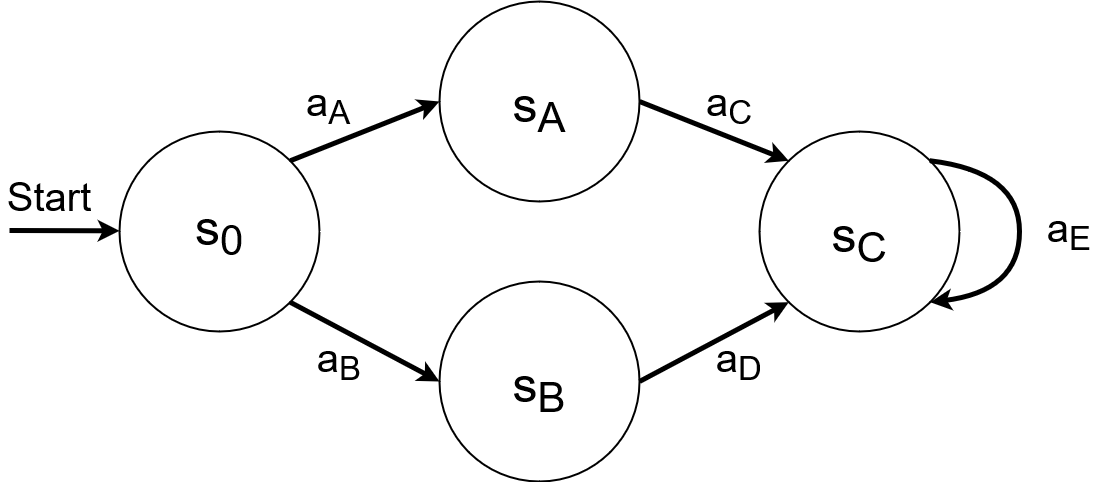}
    \caption{An environment consisting of four states $s_0, s_A, s_B, s_C$ and five actions $a_A,a_B,a_C,a_D,a_E$. The starting state is $s_0$.}
    \label{f:Ex1b}
\end{figure}
In the environment above there are two possible deterministic policies, $\pi_u$ taking the upper path through $s_A$ and $\pi_l$ taking the lower path through $s_B$. 
We argue that LAR cannot express the policy ordering 
\begin{equation*}
   \pi_u \succ \pi_l 
\end{equation*}
while MR and LTL can. 
First we will argue that LAR cannot express this policy ordering: 
Both policies take action $a_E$ an infinite number of times while they only take the other actions at most once.
\begin{align*}
    \J[LAR](\pi_u) &=  \lim\limits_{N \rightarrow \infty} \left[
\frac{1}{N} \sum\limits_{t=0}^{N-1} \Reward(s_t,a_t,s_{t+1})\right] \\ &= \lim\limits_{N \rightarrow \infty} \left[\frac{1}{N}\left( \Reward(s_0,a_A,s_A) + \Reward(s_A,a_C,s_C) + \sum\limits_{t=2}^{N-1} \Reward(s_C,a_E,s_C)\right)\right]\\ & = \Reward(s_C,a_E,s_C)\\
    \J[LAR](\pi_l) & =  \lim\limits_{N \rightarrow \infty} \left[
\frac{1}{N} \sum\limits_{t=0}^{N-1} \Reward(s_t,a_t,s_{t+1})\right]\\ & = \lim\limits_{N \rightarrow \infty} \left[\frac{1}{N}\left( \Reward(s_0,a_B,s_B) + \Reward(s_B,a_D,s_B) + \sum\limits_{t=2}^{N-1} \Reward(s_C,a_E,s_C)\right)\right] \\ & = \Reward(s_C,a_E,s_C)  
\end{align*}
Meaning that $\J[LAR](\pi_u)=\J[LAR](\pi_l)$ and so $\pi_u \sim \pi_l$.
Next we will show that MR can express this policy ordering: 
let $\Reward(s_0, a_A, s_A) \coloneqq 1$ and all other rewards $ \coloneqq 0$.  Then:
\begin{align*}
    \J[MR](\pi_u) &= 1 \\
    \J[MR](\pi_l) &= 0
\end{align*}
resulting in our desired ordering.

Next we will show that LTL can express this policy ordering:
Consider the LTL predicate $\lozenge s_A$ i.e. finally $s_A$ this will give reward 1 to all trajectories which include $s_A$ and 0 otherwise. This gives:
\begin{align*}
    \J[LTL](\pi_u)&=1\\
    \J[LTL](\pi_l)&=0
\end{align*}
i.e. our desired ordering.
\end{proof}

\newpage 
\begin{proposition}[$LTL \not \succeq_{EXPR} MR, LAR$]
    There is an environment and an ordering over policies in that environment that Markov Rewards (MR) and Limit Average Reward (LAR) can induce, but  Linear Temporal Logic (LTL) cannot. 
\label{prop:LTLNMR_LAR}
\end{proposition}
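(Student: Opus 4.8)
The plan is to exploit the observation already flagged in the main text: an LTL formula assigns every trajectory a value in $\{0,1\}$, so $J_{LTL}(\pi)$ is an expectation of a $\{0,1\}$-valued function; in a \emph{deterministic} environment with deterministic policies it therefore takes values in $\{0,1\}$, which leaves no room to strictly separate three policies, whereas MR and LAR easily can.

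First I would construct the environment $E$. Take states $s_0,s_1,s_2,s_3$ and actions $a_1,a_2,a_3$, with $\Init$ the point mass on $s_0$, $\Trans(s_0,a_i)=s_i$, and $\Trans(s_i,a_j)=s_i$ for every $i\in\{1,2,3\}$ and every $j$ (so $s_1,s_2,s_3$ are absorbing and the action chosen inside them never affects the state sequence). For $k\in\{1,2,3\}$ let $\pi_k$ denote any deterministic stationary policy with $\pi_k(s_0)=a_k$; each such $\pi_k$ generates the single deterministic trajectory $\xi_k=(s_0,a_k,s_k,\cdot,s_k,\cdot,\dots)$.

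Next I would exhibit the target ordering $\succeq^\ast$ on $\Pi^{E}$ using both MR and LAR. For MR, set $\Reward(s_0,a_i,s_i):=i$ and $\Reward:=0$ on every other transition, with any $\gamma\in[0,1)$; only the first step contributes, so $J_{MR}(\pi)=\sum_{i=1}^{3} i\,\pi(a_i\mid s_0)$, and in particular $J_{MR}(\pi_k)=k$. For LAR, set $\Reward(s_i,a_j,s_i):=i$ for $i\in\{1,2,3\}$ and $\Reward(s_0,\cdot,\cdot):=0$; once a trajectory leaves $s_0$ it remains in $s_i$ with probability $\pi(a_i\mid s_0)$ forever, the lone step at $s_0$ is washed out in the Cesàro average, and one computes $J_{LAR}(\pi)=\sum_{i=1}^{3} i\,\pi(a_i\mid s_0)=J_{MR}(\pi)$. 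Hence MR and LAR induce the \emph{same} total preorder $\succeq^\ast$ on $\Pi^{E}$, and under $\succeq^\ast$ we have the strict chain $\pi_1\prec\pi_2\prec\pi_3$. Finally I would rule out LTL: if some LTL formula $\varphi$ induced $\succeq^\ast$, then since $E$ and each $\pi_k$ are deterministic, $J_{LTL}(\pi_k)=\mathbb{E}_{\xi\sim\pi_k}[\varphi(\xi)]=\varphi(\xi_k)\in\{0,1\}$, yet $\succeq^\ast$ forces $J_{LTL}(\pi_1)<J_{LTL}(\pi_2)<J_{LTL}(\pi_3)$, i.e.\ three strictly increasing reals all lying in $\{0,1\}$ — impossible. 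Thus $\succeq^\ast\in Ord_{MR}(E)\cap Ord_{LAR}(E)$ but $\succeq^\ast\notin Ord_{LTL}(E)$, so $\mathrm{LTL}\not\succeq_{EXPR}\mathrm{MR}$ and $\mathrm{LTL}\not\succeq_{EXPR}\mathrm{LAR}$.

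I do not expect a serious obstacle here; the argument is essentially a pigeonhole count. The two points that warrant a little care are (i) checking that the LAR limit genuinely exists and equals $\sum_{i} i\,\pi(a_i\mid s_0)$ for stochastic as well as deterministic policies, so that MR and LAR really realise the \emph{same} ordering rather than merely two orderings that coincide on $\{\pi_1,\pi_2,\pi_3\}$, and (ii) being explicit that $\succeq^\ast$ is a total preorder on the whole policy space, with the three-element strict chain embedded inside it — the clash with LTL then only needs those three policies.
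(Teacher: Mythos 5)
Your proof is correct and follows essentially the same route as the paper's: both arguments exploit that an LTL formula is $\{0,1\}$-valued on the single deterministic trajectory generated by a deterministic policy in a deterministic environment, so no LTL specification can strictly separate three such policies, while MR and LAR assign them distinct scalar values (the paper uses a one-state environment with three self-loops rather than your absorbing-state construction, but the pigeonhole argument is identical). Your extra care in checking that the MR and LAR specifications induce the \emph{same} total preorder on the whole policy space, rather than merely agreeing on the three deterministic policies, is a point the paper glosses over and is worth keeping.
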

\begin{proof}[Proof by construction.]

\begin{figure}[H]
    \centering
    \includegraphics[scale=0.4]{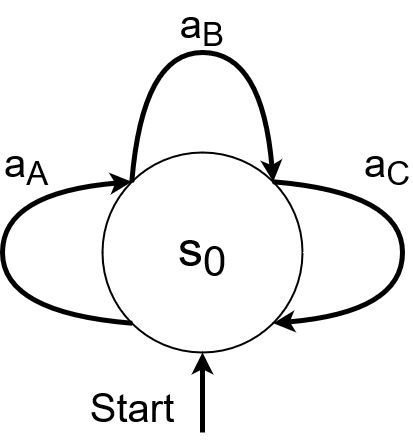}
    \caption{An environment with a single state $s_0$ with three actions $a_A,a_B$ and $a_C$ which all lead back to itself.}
    \label{f:Ex1c}
\end{figure}
Consider the deterministic policies, $\pi_A$, $\pi_B$ and $\pi_C$ corresponding to taking actions $a_A$,$a_B$ or $a_C$ respectively. 

We argue that LTL cannot express the policy ordering:
\begin{equation*}
   \pi_A \succ \pi_B \succ \pi_C 
\end{equation*}
while MR and LAR can.

First we will show that LTL cannot express this policy ordering:
\begin{align*}
\pi_A \succ \pi_B &\implies \varphi\left(\xi_A\right)>\varphi\left(\xi_B\right)\\ 
&\implies \varphi\left(\xi_A\right)=1 \ \ \text{and} \ \ \varphi\left(\xi_B\right)=0 \ \ &\text{(as deterministic environment)}
\end{align*}
however,
\begin{align*}
\pi_B \succ \pi_C &\implies \varphi\left(\xi_B\right)>\varphi\left(\xi_C\right)\\ &\implies \varphi\left(\xi_B\right)=1 \ \ \text{and} \ \ \varphi\left(\xi_C\right)=0 \ \ &\text{(as deterministic environment)}
\end{align*}
leading to a contradiction. For deterministic policies on deterministic environments LTL can only divide policies into two categories and as such cannot order 3 or more policies.

MR can clearly express this ordering by setting $\Reward(s_0,a_A,s_0):=1$, $\Reward(s_0,a_B,s_0):=0$ and $\Reward(s_0,a_B,s_0):=-1$  leading to:

\begin{align*}
    \J[MR](\pi_A)&=\frac{1}{1-\gamma} \\
    \J[MR](\pi_B)&=0 \\
    \J[MR](\pi_C)&=\frac{-1}{1-\gamma} \\
\end{align*}

LAR can also clearly express this ordering by again setting $\Reward(s_0,a_A,s_0):=1$, $\Reward(s_0,a_B,s_0):=0$ and $\Reward(s_0,a_B,s_0):=-1$ which leads to
\begin{align*}
    \J[LAR](\pi_A)&=1\\
    \J[LAR](\pi_B)&=0\\
    \J[LAR](\pi_C)&=-1\\
\end{align*}
\end{proof}

\newpage 
\begin{proposition}[$MR, RRL \not \succeq_{EXPR} RM, ONMR, LTL$]
    There is an environment and an ordering over policies in that environment that Reward Machines (RM), Outer Nonlinear MR (ONMR) and Linear Temporal Logic (LTL) can induce, but Markov Rewards (MR) and Regularised RL (RRL) cannot.
\label{prop:MR_RRLNRM_ONMR_LTL}
\end{proposition}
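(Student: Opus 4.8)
The plan is to exhibit a single environment $E$ together with a total preorder $\succeq^{\ast}$ on its stationary policies such that (i) some Reward Machine, some Outer Nonlinear Markov Reward specification, and some LTL formula each induce $\succeq^{\ast}$, while (ii) no Markov Reward specification and no Regularised RL specification does. For the negative half I would build $E$ out of two binary sub-choices whose outcomes only ever show up in \emph{separate} transitions — an ``XOR gadget'' in the style of Abel et al.\ — so that the policy-evaluation function of \emph{any} Markov reward is forced to be additive in the two sub-choices, and I would pick $\succeq^{\ast}$ to violate that additivity (the two ``mixed'' deterministic policies strictly above the two ``pure'' ones). The same finite-dimensional linear-algebra step that kills MR should also kill RRL once I make $E$ \emph{layered}: each decision state is visited once and all policies merge into a common absorbing tail, so that the regularisation term $\mathbb{E}_{\xi}[\sum_t \gamma^t F[\pi(s_t)]]$ also splits as (something depending only on the first sub-choice) $+$ (something depending only on the second). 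Then $J_{RRL}(\pi)$ is additively separable in the two sub-choice distributions, and by the same argument it cannot realise $\succeq^{\ast}$.

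For the positive half I would write the three specifications down explicitly. The reward machine uses one bit of memory to record the outcome of the first sub-choice and then, on the second transition, emits reward depending on whether the two outcomes agree — giving the XOR ranking directly. The LTL formula is the exclusive-or of the two atomic transition-propositions recording the sub-choices, so $J_{LTL}(\pi)$ is the probability that exactly one sub-choice came out the distinguished way. The ONMR specification takes a Markov reward whose expected return $\mathbb{E}_{\xi}[G]$ is an affine function of the two sub-choice probabilities, post-composed with a non-monotone wrapper $f$ (a ``tent''), chosen so that $f(\mathbb{E}_{\xi}[G])$ reproduces the XOR ranking while not being order-equivalent to any linear functional of the occupancy measure; the non-monotonicity of $f$ is exactly what puts it beyond MR and RRL while staying continuous (hence within reach of RM).

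The hard part — and the step I would spend most of the verification on — is forcing all three positive specifications to induce the \emph{same} total preorder on the full continuous space of stationary policies, not merely on the four deterministic ones. Reward Machines only produce continuous policy-evaluation functions, so $\succeq^{\ast}$ must be continuous; yet it must still be nonlinear in the discounted occupancy measure (to beat MR) and non-separable (to beat RRL); and since ONMR's base reward is Markovian, $f(\mathbb{E}_{\xi}[G])$ is a function of a single linear functional of the occupancy measure, whereas $J_{LTL}(\pi)$ is a multilinear polynomial in the choice probabilities — these only coincide on all of $\Pi^{E}$ if $E$ is engineered carefully (for instance by padding the two branches to equal length so the two sub-choices enter $\mathbb{E}_{\xi}[G]$ symmetrically, or by inserting a ``nature'' coin flip that locks the relevant probabilities), rather than using the naive XOR gadget. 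I expect the construction itself to be the real content of the proof; once the environment is pinned down, checking that MR and RRL fail reduces to the additivity/separability argument above, and checking that RM, ONMR, and LTL succeed is a direct computation of $J(\pi)$ for each, confirming they agree as orderings.

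A secondary obstacle to keep in mind is that the RRL impossibility is not a corollary of the MR impossibility: RRL is already nonlinear, so I cannot simply reuse the MR argument. The layered structure of $E$ is what makes $J_{RRL}$ collapse to an additively separable form, and I would need to state and verify that reduction as a lemma (essentially: in a layered environment with a single absorbing tail, every RRL specification induces the same ordering as some specification in which each decision state contributes an independent scalar term), after which the same two-by-two determinant argument that defeats MR also defeats RRL.
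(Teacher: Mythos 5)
Your construction is essentially the paper's: an XOR gadget on which every Markov-reward (and, via the same separability argument, every Regularised-RL) policy evaluation is forced to be additively separable in the two binary sub-choices, contradicted by ranking the two ``mixed'' deterministic policies strictly above the two ``pure'' ones; the positive side is handled exactly as you describe (a one-bit reward machine, an XOR-type LTL formula, and a Markov reward with $\pm 1$ values composed with the tent $f(x)=|x|$). The paper realises the gadget with a two-state ping-pong environment and a uniformly random initial state rather than a layered chain, but that difference is cosmetic --- both give the separability you need, including for the regularisation term --- and your observation that the RRL impossibility requires its own separability lemma rather than following from the MR case is exactly right and is how the paper argues it.

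The one place you are making the problem harder than it needs to be is the step you flag as ``the real content of the proof'': you do not need the RM, ONMR, and LTL specifications to induce the \emph{same} total preorder on all of $\Pi^{E}$, and the paper does not attempt this. The result is consumed as six pairwise claims (for each $X \in \{MR, RRL\}$ and $Y \in \{RM, ONMR, LTL\}$, some ordering lies in $Ord_Y(E) \setminus Ord_X(E)$), and for each such pair it suffices to (i) fix the XOR ranking on the four deterministic policies, (ii) show that no MR or RRL ordering restricts to that ranking, and (iii) exhibit, for each positive formalism \emph{separately}, one specification whose induced ordering does restrict to it. The three positive orderings will indeed disagree on stochastic policies --- $J_{LTL}$ is a multilinear probability while the ONMR value is a tent function of a single affine functional, exactly as you observe --- but that disagreement is harmless. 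Chasing exact agreement on the full product of simplices is unnecessary and quite possibly unachievable; drop that requirement and your proof closes.
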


\begin{proof}[Proof by construction.]
 \begin{figure}[H]
     \centering
     \includegraphics[scale=0.35]{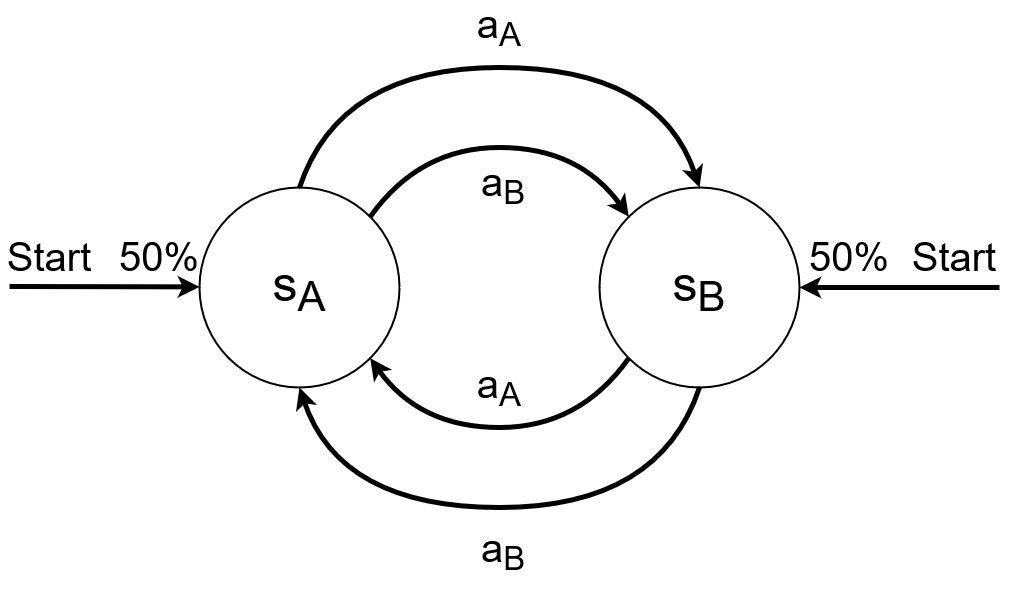}
     \caption{A two state environment with 2 actions $a_A$ and $a_B$ going from each state to the other state. The initial state is random, i.e. $s_A$ and $s_B$ with equal probability.}
     \label{f:Ex2}
 \end{figure}

The figure above shows a simple 2-state system where 2 different actions $a_A$  and $a_B$ are possible in each state. Let $\pi_{ij}$ denote the policy which takes action $i$  in state $s_A$ and action $j$ in state $s_B$. There are 4 possible trajectories following deterministic stationary policies, namely, $(s_Aa_As_Ba_B)^*$, $(s_Aa_Bs_Ba_A)^*$ , $(s_Aa_As_Ba_A)^*$ and $(s_Aa_Bs_Ba_B)^*$ which correspond to policy $\pi_{AB}$, $\pi_{BA}$, $\pi_{AA}$ and $\pi_{BB}$ respectively.

We claim that MR and RRL cannot implement the XOR function, i.e. the policy ordering $\pi_{AB} \sim \pi_{BA} \succ\pi_{AA} \sim \pi_{BB}$  while RM, ONMR and LTL can.

First we will show that MR cannot express this policy ordering:

We want to describe the ordering $\J[MR](\pi_{AB})=\J[MR](\pi_{BA})>\J[MR](\pi_{AA}) = \J[MR](\pi_{BB})$. 
Each trajectory only has two unique state-action pairs, and since the initial state is random, in expectation each action a policy contains will be taken 50\% of the time at each time step.

This means that:

\begin{align*}
    \J[MR](\pi_{AB}) > \J[MR](\pi_{AA}) &\implies\\
\frac{1}{2}\sum\limits_{t=0}^\infty \gamma^t \Reward(s_A,a_A,s_B) + \frac{1}{2}\sum\limits_{t=0}^\infty \gamma^t \Reward(s_B,a_B,s_A) &> \\ \frac{1}{2}\sum\limits_{t=0}^\infty \gamma^t \Reward(s_A,a_A,s_B) + \frac{1}{2}\sum\limits_{t=0}^\infty \gamma^t \Reward(s_B,a_A,s_A) &\implies\\
\sum\limits_{t=0}^\infty \gamma^t \Reward(s_B,a_B,s_A) > \sum\limits_{t=0}^\infty \gamma^t \Reward(s_B,a_A,s_A) &\implies\\
\Reward(s_B,a_B,s_A) > \Reward(s_B,a_A,s_A).
\end{align*}

However,

\begin{align*}
\J[MR](\pi_{BA}) > \J[MR](\pi_{BB}) &\implies \\
\frac{1}{2}\sum\limits_{t=0}^\infty \gamma^t \Reward(s_A,a_B,s_B) + \frac{1}{2}\sum\limits_{t=0}^\infty \gamma^t \Reward(s_B,a_A,s_A) &> \\ \frac{1}{2}\sum\limits_{t=0}^\infty \gamma^t \Reward(s_A,a_B,s_B) + \frac{1}{2}\sum\limits_{t=0}^\infty \gamma^t \Reward(s_B,a_B,s_A) &\implies \\
\sum\limits_{t=0}^\infty \gamma^t \Reward(s_B,a_A,s_A) >  \sum\limits_{t=0}^\infty \gamma^t \Reward(s_B,a_B,s_A) &\implies \\
\Reward(s_B,a_A,s_A) > \Reward(s_B,a_B,s_A),
\end{align*}

in contradiction with the previous relation.


Next we will show that Regularised RL cannot express this policy ordering either. 
Recall the policy evaluation function in Regularised RL:

\begin{align*}
\J[RRL](\pi)= \mathbb{E}_{\xi \sim \pi, T,I}[\sum_{t=0}^\infty \gamma^t (\Reward(s_t,a_t) + \alpha F[\pi(s_t)])]
\end{align*}

In the case above, maintaining full generality we can write $F[\pi(s_t)]=f(P(a_A\vert s_t), P(a_B\vert s_t))$, for arbitrary $f:\Delta(A)\rightarrow \mathbb{R}$. Thus, for the four deterministic policies considered above, we have:

\begin{align*}
F[\pi_{AB}(s_A)]=f(1,0); &\; F[\pi_{AB}(s_B)]=f(0,1)\\
F[\pi_{BA}(s_A)]=f(0,1); &\; F[\pi_{BA}(s_B)]=f(1,0)\\
F[\pi_{AA}(s_A)]=f(1,0); &\; F[\pi_{AA}(s_B)]=f(1,0)\\
F[\pi_{BB}(s_A)]=f(0,1); &\; F[\pi_{BB}(s_B)]=f(0,1)
\end{align*}

Following through the same argument from above, we derive the contradictory conditions:

\begin{align*}
\J[RRL](\pi_{AB}) > \J[RRL](\pi_{AA}) &\implies \Reward(s_B,a_B,s_A)+\alpha f(0,1) > \Reward(s_B,a_A,s_A) + \alpha f(1,0) \quad (1) \\
\J[RRL](\pi_{BA}) > \J[RRL](\pi_{BB}) &\implies \Reward(s_B,a_A,s_A) + \alpha f(1,0) > \Reward(s_B,a_B,s_A) + \alpha f(0,1) \quad (2)
\end{align*}

Next we will show that ONMR can express this policy ordering. Recall that:
\begin{equation*}
\J[ONMR](\pi) = f(\mathbb{E}_{\xi\sim\pi,T,I}[\sum\limits_{t=0}^\infty\gamma^t \Reward(s_t,a_t,s'_t)])
\end{equation*}

Let: 
\begin{align*}
     \Reward(s_A,a_A,s_B)&=-1 \\
     \Reward(s_A,a_B,s_B)&=1 \\
     \Reward(s_B,a_A,s_A)&=1 \\
     \Reward(s_B,a_B,s_A)&=-1 \\
     f(x)&= |x|
\end{align*}

Now:
\begin{align*}
    \J[ONMR](\pi_{AB})&= \left|\frac{1}{2}\sum\limits_{t=0}^\infty \gamma^{t} + \frac{1}{2}\sum\limits_{t=0}^\infty \gamma^{t} \right| = |\frac{1}{1-\gamma}|=\frac{1}{1-\gamma} \\
    \J[ONMR](\pi_{BA})&= \left|\frac{1}{2}\sum\limits_{t=0}^\infty -\gamma^{t} + \frac{1}{2}\sum\limits_{t=0}^\infty -\gamma^{t} \right| = |\frac{-1}{1-\gamma}|=\frac{1}{1-\gamma} \\
    \J[ONMR](\pi_{AA})&= \left|\frac{1}{2}\sum\limits_{t=0}^\infty \gamma^{t} + \frac{1}{2}\sum\limits_{t=0}^\infty -\gamma^{t} \right| = |0|=0 \\
    \J[ONMR](\pi_{BB})&= \left|\frac{1}{2}\sum\limits_{t=0}^\infty -\gamma^{t} + \frac{1}{2}\sum\limits_{t=0}^\infty \gamma^{t} \right| = |0|=0
\end{align*}
giving us our desired policy ordering.

Next we will show that RM can do this policy ordering:
\begin{figure}[H]
     \centering
     \includegraphics[scale=0.35]{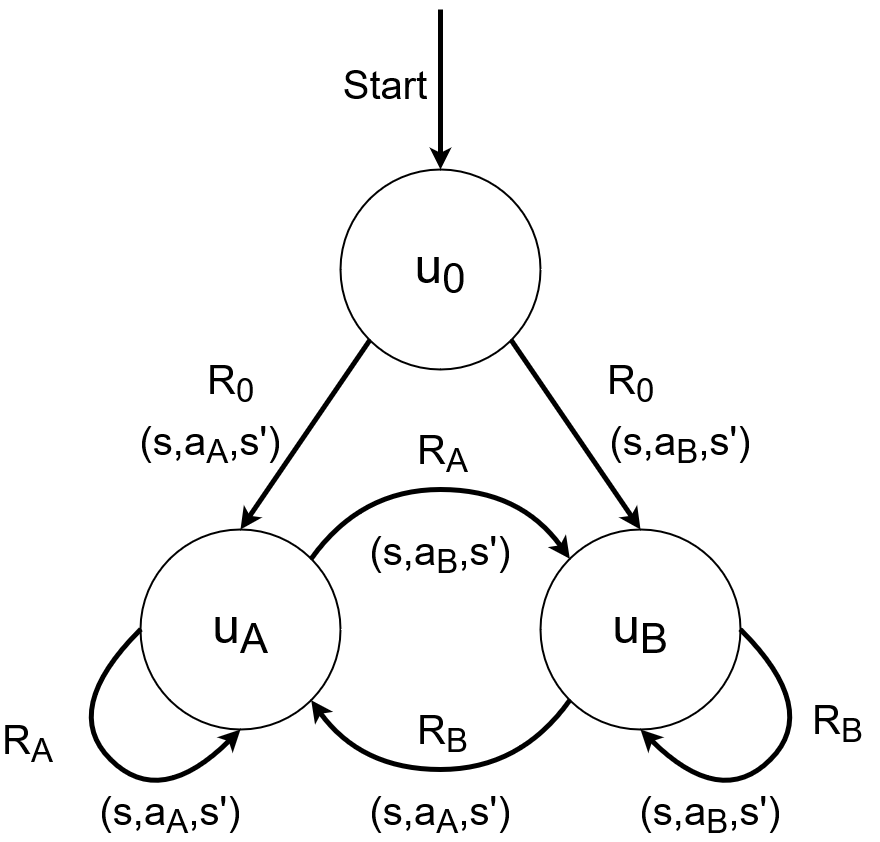}
     \caption{A reward machine which can express the desired policy ordering.}
     \label{f:Ex2RM}
 \end{figure}
The reward machine looks as above.
The reward functions are:
\begin{align*}
\Reward_1(s_0,a_A,s_A)=0 \\
\Reward_1(s_0,a_B,s_A)=1 \\
\Reward_1(s_A,a_A,s_0)=0 \\
\Reward_1(s_A,a_B,s_0)=1 \\
\\
\Reward_2(s_0,a_A,s_A)=1 \\
\Reward_2(s_0,a_B,s_A)=0 \\
\Reward_2(s_A,a_A,s_0)=1 \\
\Reward_2(s_A,a_B,s_0)=0
\end{align*}

For $\pi_{AA}$ we first transition $\delta_u(u_0, s_0,a_A)=u_A$ or
$\delta_u(u_0, s_A,a_A)=u_A$
depending on if we start at 0 or 1. For this $\delta_{\Reward} =\Reward_0$ so we get no reward. Next we will always transition 
$\delta_u(u_A, s_0, a_A)=u_A$ or
$\delta_u(u_A, s_A, a_A)=u_A$
which gives $\delta_{\Reward} =\Reward_1$ which results in 0 reward, therefore
\begin{align*}
    \J[RM](\pi_{AA}) = 0
\end{align*}

For $\pi_{BB}$ we first transition $\delta_u(u_0, s_0,a_B)=u_B$ or
$\delta_u(u_0, s_A,a_B)=u_B$
depending on if we start at 0 or 1. For this $\delta_{\Reward}=\Reward_0$ so we get no reward. Next we will always transition 
$\delta_u(u_A, s_0,a_B)=u_B$ or
$\delta_u(u_A, s_A,a_B)=u_B$
which gives $\delta_{\Reward}= \Reward_2$ which results in 0 reward, therefore
\begin{align*}
    \J[RM](\pi_{BB}) = 0
\end{align*}

Now for $\pi_{12}$ we instead have  
$\delta_u(u_0, s_0,a_A)=u_A$ or
$\delta_u(u_0, s_A,a_B)=u_B$
For this $\delta_{\Reward}=\Reward_0$ so we get no reward.
Next we have 
$\delta_u(u_A, s_A,a_B)=u_B$ or
$\delta_u(u_B, s_0,a_A)=u_A$ which give reward $\delta_{\Reward} =\Reward_1(s_A,a_B,s_0)=1$ or $\delta_{\Reward} =\Reward_2(s_0,a_A,s_A)=1$ meaning 
\begin{align*}
    \J[RM](\pi_{BB}) = \sum_{t=1}^\infty \gamma = \frac{\gamma}{1-\gamma}
\end{align*}

Similarly for $\pi_{21}$ we have
$\delta_u(u_0, s_0,a_B)=u_B$ or
$\delta_u(u_0, s_A,a_A)=u_A$
For this $\delta_{\Reward}= \Reward_0$ so we get no reward.
Next we have 
$\delta_u(u_B, s_A,a_A)=u_A$ or
$\delta_u(u_A, s_0,a_B)=u_B$ which give reward $\delta_{\Reward} =\Reward_2(s_0,a_B,s_A)=1$ or $\delta_{\Reward} = \Reward_1(s_A,a_A,s_0)=1$ meaning 
\begin{align*}
    \J[RM](\pi_{BB}) = \sum_{t=1}^\infty \gamma = \frac{\gamma}{1-\gamma}
\end{align*}

This gives us our desired policy ordering.

Next we will show that LTL can also express this policy ordering. We will use the shorthand of referring to atomic propositions (which we have defined to be transitions in $\SSpace \times \ASpace \times \SSpace$) as actions; a shorthand atomic proposition $a$ evaluates to true at a time $t$ if and only if the transition at time $t$ includes $a$ as its action.

Consider the LTL formula $ ((a_A \rightarrow \bigcirc a_B) \land (a_B \rightarrow \bigcirc a_A)) $, which reads "$a_A $ implies next $a_B$ and $a_B $ implies next $a_A$."
Clearly this is false for $\pi_{AA}$ as the action taken after $a_A$ in not always $a_B$ and false for $\pi_{BB}$ as the action taken after $a_B$ is not always $a_A$. For $\pi_{AB}$ and $\pi_{BA}$ however, the formula is true as action $a_A$ is always followed by $a_B$ and action $a_B$ is always followed by $a_A$. Consequently,
\begin{align*}
    \J[RM](\pi_{AA})=0 \\
    \J[RM](\pi_{BB})=0\\
    \J[RM](\pi_{AB})=1\\
    \J[RM](\pi_{BA})=1
\end{align*}
which gives us the desired policy ordering.
\end{proof}

\newpage 

\begin{proposition}[$ONMR \not \succeq_{EXPR} LAR$]
    There is an environment and an ordering over policies in that environment that Limit Average Reward (LAR) can induce, but Outer Nonlinear MR (ONMR) cannot.
\label{prop:ONMRNLAR}
\end{proposition}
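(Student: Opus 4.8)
The plan is to exploit the rigid structure of ONMR evaluation functions. Recall that an ONMR specification $(\Reward', f, \gamma')$ assigns $J_{ONMR}(\pi) = f\big(\mathbb{E}_\xi[\sum_t (\gamma')^t \Reward'(s_t,a_t,s_{t+1})]\big) = f(\langle \Reward', \vec m_{\gamma'}(\pi)\rangle)$, where $\vec m_{\gamma'}(\pi)$ is the discounted occupancy measure of \Cref{def: Occupancy measures}; so the entire ONMR ordering factors through a single \emph{linear} functional of $\vec m_{\gamma'}$, post-composed with an arbitrary $f$. By contrast, for any stationary $\pi$ the pair (environment, $\pi$) induces a finite Markov chain, and bounded rewards together with Ces\`aro convergence of $\frac{1}{N}\sum_{t<N} \mathbb{P}(s_t=s,a_t=a,s_{t+1}=s')$ give $J_{LAR}(\pi) = \langle \Reward, \bar q_\pi\rangle$, where $\bar q_\pi$ is the long-run (Ces\`aro-limiting) transition frequency. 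Both formalisms thus produce ``one-dimensional'' orderings, but through different summaries of a policy.

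First I would record the following reduction: for a fixed environment and LAR reward $\Reward$, the induced LAR ordering is expressible in ONMR \emph{if and only if} there exist $\gamma' \in [0,1)$ and $\Reward'$ such that $\langle \Reward', \vec m_{\gamma'}(\pi_1)\rangle = \langle \Reward', \vec m_{\gamma'}(\pi_2)\rangle$ implies $J_{LAR}(\pi_1) = J_{LAR}(\pi_2)$. The ``only if'' direction is immediate: $J_{ONMR}$ ties any two policies with equal $\langle \Reward', \vec m_{\gamma'}\rangle$, and a tie in the ONMR ordering forces a tie in the (identical) LAR ordering, hence equal $J_{LAR}$. For ``if'', one defines $f$ on the image of $\pi \mapsto \langle \Reward', \vec m_{\gamma'}(\pi)\rangle$ by sending each attained value to the common $J_{LAR}$ value of its preimage (well-defined by hypothesis) and extends $f$ arbitrarily; then $J_{ONMR} = J_{LAR}$ pointwise, so the two specifications induce the same ordering. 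Note this already explains why no finite counterexample can work: for a finite set of policies with distinct occupancy measures a generic $\Reward'$ separates all of them, after which $f$ can realise any total preorder.

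Given the reduction, it suffices to construct an environment $E$ and reward function $\Reward$ such that $J_{LAR}(\cdot)$ is \emph{not} a function of any single linear functional of $\vec m_{\gamma'}(\cdot)$, for any $\gamma' \in [0,1)$. I would take $E$ to be a small strongly connected MDP (three or four states, two actions per state should suffice) and consider the two-parameter family of stationary policies $\pi_{\alpha,\beta}$ obtained by randomising the action choices at two of the states. Because these choices sit on cycles of the chain, both $\vec m_{\gamma'}(\pi_{\alpha,\beta})$ and $\bar q_{\pi_{\alpha,\beta}}$ are genuinely rational (non-affine) functions of $(\alpha,\beta)$, and they are \emph{different} rational maps: $\vec m_{\gamma'}$ over-weights early visits (which depend on the fixed start state and on $\gamma'$) whereas $\bar q$ records only stationary behaviour. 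The occupancy measures of this family then sweep out a genuinely two-dimensional set, so every linear functional of $\vec m_{\gamma'}$ is non-injective on the family, with curves rather than points as its level sets there. I would then choose $\Reward$ so that the foliation of the family by level sets of $(\alpha,\beta)\mapsto J_{LAR}(\pi_{\alpha,\beta})$ does not coincide with the foliation induced by \emph{any} linear functional of $\vec m_{\gamma'}(\pi_{\alpha,\beta})$ --- concretely, arranging that $J_{LAR}$ has a strict interior local extremum (or a level-set curvature pattern) that no linear functional of $\vec m_{\gamma'}$ can match on this family. By the reduction, no ONMR specification can then reproduce this LAR ordering.

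The main obstacle is this last step: exhibiting the explicit MDP and the explicit reward $\Reward$, and verifying --- by a direct computation of $\vec m_{\gamma'}(\pi_{\alpha,\beta})$ and $\bar q_{\pi_{\alpha,\beta}}$ as rational functions of $(\alpha,\beta)$ --- that $J_{LAR}$ genuinely depends on at least two independent directions of $\vec m_{\gamma'}$, uniformly in $\gamma' \in [0,1)$. Everything else (the factorisations of ONMR and LAR, and the reduction lemma) is routine bookkeeping; the geometry of the occupancy family is where the real work lies, and it amounts to a calculation tuned until the two summaries of a policy are provably ordinally incompatible.
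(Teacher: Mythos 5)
Your reduction lemma is correct and worth stating: an LAR ordering is ONMR-expressible iff some choice of $(\Reward',\gamma')$ makes the scalar $L(\pi) := \langle \Reward', \vec m_{\gamma'}(\pi)\rangle = J_{MR}(\pi)$ a sufficient statistic for $J_{LAR}$, and your observation that any counterexample must therefore involve infinitely many policies is also right. But the proposal stops exactly where the proposition begins. Everything after the reduction is conditional ("I would take\dots", "I would then choose $\Reward$ so that\dots", "arranging that\dots"), and you concede that "the geometry of the occupancy family is where the real work lies." No environment is exhibited, no reward function is exhibited, and no argument is given that the required incompatibility holds \emph{uniformly} over all $\Reward'$ and all $\gamma' \in [0,1)$ --- which is the entire mathematical content here. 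The uniformity is not a formality: your criterion of "a strict interior local extremum of $J_{LAR}$" is not by itself sufficient (a linear functional of $\vec m_{\gamma'}$ restricted to a two-parameter family can itself have a strict local extremum there, with closed level curves on which $J_{LAR}$ could in principle be constant), so one must genuinely compare level-set foliations for every admissible $(\Reward',\gamma')$, including the regime $\gamma' \to 1$ where $L$ approximates $J_{LAR}$ increasingly well. As written, the proof does not establish the proposition.

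For comparison, the paper avoids this computation entirely by exploiting a structural asymmetry rather than level-set geometry. It uses an environment with an absorbing component: three self-loop actions at the start state and an escape action leading to a second state with two self-loops. LAR is blind to transient behaviour, so it ties every policy that escapes with any positive probability to the corresponding deterministic escaping policy, while still strictly ordering the three never-escaping deterministic policies. On the ONMR side, as the escape probability varies, $J_{MR}$ sweeps out an interval ending at the deterministic escaping policy's value, so the wrapper $f$ must collapse three such intervals to a single value; a pigeonhole argument then shows that the $J_{MR}$-values of the three policies that must remain strictly ordered either coincide or land inside a collapsed interval. This is a one-parameter construction plus a case analysis, and it handles all $(\Reward',\gamma')$ at once --- precisely the step your plan leaves open. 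If you want to salvage your route, the most direct fix is to adopt an environment of this absorbing type so that the "incompatibility of the two summaries" becomes an interval-collapsing argument rather than a curvature computation.
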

\begin{proof}[Proof by construction.]

 \begin{figure}[H]
     \centering
     \includegraphics[scale=0.4]{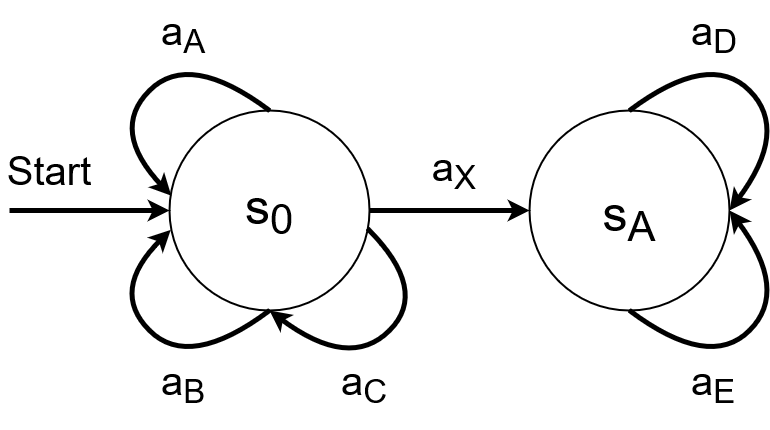}
     \caption{An environment with 2 states $s_0$ and $s_A$ with 4 possible actions in state $s_0$, namely $a_A,a_B,a_C$ and $a_X$ and 2 possible actions in state $s_A$, namely $a_D$ and $a_E$. The starting state is $s_0$. }
     \label{f:Ex4}
 \end{figure}
 
Consider the environment above. We start in state $s_0$ and have 5 deterministic policies, $\pi_A$, $\pi_B$ and $\pi_C$ which correspond to taking action $a_A$, $a_B$ or $a_C$ in state $s_0$ and policies $\pi_D$ and $\pi_E$ which correspond to taking action $a_x$ in state $s_0$ and action $a_D$ or $a_E$ in state $s_A$, respectively. We also have stochastic policies defined by the conditions
\begin{align*}
    &\pi_{\theta I J}(a_I | s_0) = \theta \\
    &\pi_{\theta I J}(a_X | s_0) = 1 - \theta \\ 
    &\pi_{\theta I J}(a_J | s_A) = 1,
\end{align*}
for $I \in \{A,B,C\}$ and $J \in \{D,E\}$. 

We would like the policy ordering: 
\begin{equation*}
\pi_{E} \sim \pi_{\theta AE} \sim \pi_{\theta BE}\sim \pi_{\theta CE} \succ\pi_{D} \sim \pi_{\theta AD} \sim \pi_{\theta BD} \sim \pi_{\theta CD} \succ \pi_{B} \succ \pi_{A}
\end{equation*}

This can be done in LAR by setting $\Reward_E>\Reward_D>\Reward_C>\Reward_B>\Reward_A$ and $\Reward_X = 0$.
For all policies which have a finite probability of taking $a_X$, $\J[LAR](\pi)$ only depends on what the the policy does in state two meaning that:
$\pi_{E} \sim \pi_{\theta AE} \sim \pi_{\theta BE}\sim \pi_{\theta CE}$ and $\pi_{D} \sim \pi_{\theta AD} \sim \pi_{\theta BD} \sim \pi_{\theta CD}$. Setting the rewards as above completes the rest of the ordering.

Now we will show that ONMR cannot express this policy ordering:
\begin{align*}
\J[ONMR](\pi_{\theta I J})=f(\mathbb{E}_{\xi\sim\pi_\theta}[G(\xi)])=f(\J[MR](\pi_{\theta I J})))= \\
f\left(\frac{\Reward_I}{1-\theta\gamma}+\frac{1}{1-(1-\theta)\gamma}\left(\Reward_x+\frac{\Reward_J}{1-\gamma}\right)\right)
\end{align*}

It is possible to choose $\Reward_x$ such that $\J[MR](\pi_{\theta IJ})=\J[MR](\pi_{J})$ however it is not possible to choose $\Reward_x$ such that both $\J[MR](\pi_{\theta I D})=\J[MR](\pi_{D})$ and $\J[MR](\pi_{\theta I E})=\J[MR](\pi_{E})$. Let us assume we have chosen $\Reward_x$ such that $\J[MR](\pi_{\theta I E})=\J[MR](\pi_{E})$ and we can therefore not make $\J[MR](\pi_{\theta I D})=\J[MR](\pi_{D})$. (Of course, an analogous proof goes through under the contrary assumption.) 

We need $\J[ONMR](\pi_{\theta A D})=\J[ONMR](\pi_{\theta B D})=\J[ONMR](\pi_{\theta C D})=\J[ONMR](\pi_D) \ \ \forall \theta\in[0,1)$ so $f$ must map the ranges

\begin{align*}
\left(\J[MR](\pi_{\theta A D})
,\J[MR](\pi_D)\right] \rightarrow f(\J[MR](\pi_D)) \\
\left(\J[MR](\pi_{\theta B D})
,\J[MR](\pi_D)\right] \rightarrow f(\J[MR](\pi_D)) \\
\left(\J[MR](\pi_{\theta C D})
,\J[MR](\pi_D)\right] \rightarrow f(\J[MR](\pi_D))
\end{align*}
Either one or more of $\J[MR](\pi_A)$, $\J[MR](\pi_B)$ or $\J[MR](\pi_C)$ are equal which contradicts our desired policy ordering or one of $\J[MR](\pi_A)$, $\J[MR](\pi_B)$ or $\J[MR](\pi_C)$ must be within the range which is mapped to $\J[ONMR](\pi_D)$ above. This is also against our policy ordering. Therefore ONMR cannot express this policy ordering.
\end{proof}

\newpage 
\begin{proposition}[$ONMR \not \succeq_{EXPR} LTL$]
    There is an environment and an ordering over policies in that environment that Linear Temporal Logic (LTL) can induce, but Outer Nonlinear MR (ONMR) cannot.
\label{prop:ONMRNLTL}
\end{proposition}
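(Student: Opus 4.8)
The plan is to reuse the one-state environment of Figure~\ref{f:Ex1c} --- state $s_0$, actions $a_A, a_B, a_C$, each looping back to $s_0$ --- in which a stationary policy $\pi$ is just a point $p(\pi) = (p_A, p_B, p_C) := (\pi(a_A \mid s_0), \pi(a_B \mid s_0), \pi(a_C \mid s_0))$ of the probability simplex, and the only transitions are $(s_0, a_X, s_0)$ for $X \in \{A,B,C\}$. First I would take the LTL formula $\varphi := (s_0, a_A, s_0) \wedge \bigcirc (s_0, a_B, s_0)$, which holds of a trajectory exactly when its first action is $a_A$ and its second action is $a_B$. Since the successive actions produced by a stationary policy at $s_0$ are independent, $\J[LTL](\pi) = \mathbb{E}_\xi[\varphi(\xi)] = \Prob_\pi[\varphi] = p_A p_B$, so LTL induces the total preorder $\succeq^\ast$ defined by $\pi_1 \succeq^\ast \pi_2 \iff p_A(\pi_1)\,p_B(\pi_1) \ge p_A(\pi_2)\,p_B(\pi_2)$. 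Verifying this probability is the only routine step here.

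The substance is to show that no ONMR specification induces $\succeq^\ast$. I would fix an arbitrary ONMR specification $(\Reward, f, \gamma)$ in this environment, write $\Reward_X := \Reward(s_0, a_X, s_0)$, and let $g(p) := p_A \Reward_A + p_B \Reward_B + p_C \Reward_C$. Every trajectory stays at $s_0$ with per-step expected reward $g(p(\pi))$, so $\J[MR](\pi) = g(p(\pi))/(1-\gamma)$ and hence $\J[ONMR](\pi) = f\big(g(p(\pi))/(1-\gamma)\big)$. In particular $g(p(\pi_1)) = g(p(\pi_2))$ forces $\J[ONMR](\pi_1) = \J[ONMR](\pi_2)$, hence $\pi_1 \sim_{ONMR} \pi_2$; so if ONMR induced $\succeq^\ast$ we would need $g(p(\pi_1)) = g(p(\pi_2)) \implies p_A(\pi_1)p_B(\pi_1) = p_A(\pi_2)p_B(\pi_2)$, i.e.\ the function $q(p) := p_A p_B$ must be constant on every level set of the affine map $g$ on the simplex.

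I would then obtain a contradiction by a short geometric argument. If $g$ is constant (all $\Reward_X$ equal) this already fails, since $\succeq^\ast$ is nontrivial --- for instance $(\tfrac12,\tfrac12,0) \succ^\ast (1,0,0)$. If $g$ is non-constant, its level sets meet the simplex in a family of pairwise disjoint segments, one nondegenerate segment for every value $c$ strictly between the minimum and maximum of $g$ on the simplex --- uncountably many. Writing $q$ along such a segment as a quadratic in the segment parameter and forcing its leading and linear coefficients to vanish shows that $q$ is constant on a nondegenerate segment only if that segment lies in $\{p_A = 0\}$ or in $\{p_B = 0\}$; these two edges meet only at the vertex $(0,0,1)$, so at most two of the disjoint level segments of $g$ can be constant for $q$, whereas uncountably many exist. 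Some level segment of $g$ therefore contains policies $\pi_1, \pi_2$ with $g(p(\pi_1)) = g(p(\pi_2))$ but $q(p(\pi_1)) \ne q(p(\pi_2))$, contradicting the requirement above. Hence no ONMR specification induces $\succeq^\ast$, while LTL does --- which is exactly $ONMR \not\succeq_{EXPR} LTL$.

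I expect the main obstacle to be keeping this geometric core airtight rather than its being deep: namely, verifying that $q = p_A p_B$ is constant along a simplex segment only along the two coordinate edges, and handling the boundary of the simplex so that the ``uncountably many disjoint level segments cannot all lie on two fixed lines'' count is clean. A reader who prefers a fully elementary version can instead replace that count with a brief case split on which vertex of the simplex maximizes $g$, exhibiting an explicit witnessing pair $\pi_1, \pi_2$ in each case.
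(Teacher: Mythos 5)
Your proof is correct, and it reaches the conclusion by a genuinely different route from the paper's, even though both use the same one-state, three-action environment. The paper's LTL objective is $\varphi = \diamond a_A \land \diamond a_B \land \diamond a_C$, which collapses $\J[LTL]$ to the indicator of ``all three action probabilities are positive''; its impossibility argument then shows that the wrapper $f$ would have to send two overlapping intervals of $\J[MR]$-values to two distinct constants (with a separate case split when two of $\Reward_A,\Reward_B,\Reward_C$ coincide). You instead pick a finite-horizon formula whose value $\J[LTL](\pi)=p_Ap_B$ is a continuous polynomial in the policy, and you exploit the same structural fact --- that $\J[ONMR]$ factors through the single affine functional $g(p)=\sum_X p_X\Reward_X$ --- via a level-set argument: the quadratic $p_Ap_B$ can be constant on a nondegenerate segment of the simplex only if that segment lies in $\{p_A=0\}$ or $\{p_B=0\}$, so at most two of the uncountably many disjoint level segments of a non-constant $g$ can be $q$-constant, and the constant-$g$ case is dispatched by an explicit pair. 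Your version buys a stronger moral (ONMR fails even against a continuous, two-step LTL objective, not just against the discontinuous ``coverage'' objective the paper uses, so the failure is not about discontinuity), at the cost of a slightly more delicate geometric lemma; the paper's version is more elementary step-by-step but needs its own case analysis on coincident rewards. The details you flag as needing care (the quadratic-coefficient computation and the ``at most one level segment per coordinate edge'' count) do check out, so the argument is sound as sketched.
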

\begin{proof}[Proof by construction.]
Consider the following environment, and the linear temporal logic (LTL) formula below. 

\begin{figure}[H]
     \centering
     \includegraphics[scale=0.4]{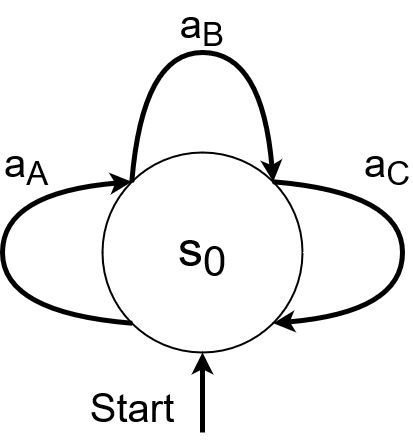}
     \caption{An environment with a single state $s_0$ with three actions $a_A,a_B$ and $a_C$ which all lead back to itself.}
     \label{}
 \end{figure}
This environment has policies $\pi_{\alpha\beta\theta}$ where $\alpha \in [0,1]$, $\beta \in [0,1]$ and $\theta \in [0,1]$ represents the probability of taking action $a_A$, $a_B$ and $a_C$ respectively. We would like the policy ordering 
\begin{align*}
\pi_{\alpha\beta\theta} \succ \pi_{\alpha'\beta'\theta'}
\end{align*}
Where: $\alpha>0 \land \beta>0 \land \theta>0$ and $\alpha'=0 \lor \beta'=0 \lor \theta'=0$
I.e assign the same reward to all policies which have a positive probability of taking all three actions and the same lower reward otherwise.
LTL can express this policy ordering with the following LTL formula:

\begin{align*}
\varphi = \diamond a_A \land \diamond a_B \land \diamond a_C
\end{align*}

This LTL formula reads "Eventually $a_A$ and eventually $a_B$ and eventually $a_C$," and is satisfied by a trajectory if and only if the actions $a_A$, $a_B$, and $a_C$ are all taken at some point. Since our trajectories are infinite and we only have one state, eventually taking an action is equivalent to having positive probability of taking an action. Therefore LTL gives us:
\begin{align*}
\J[LTL](\pi_{\alpha\beta\theta}) = 1 \ \ \ \ \forall \ \alpha>0 \land \beta>0 \land \theta>0 \\
\J[LTL](\pi_{\alpha\beta\theta}) = 0 \ \ \ \ \forall \ \alpha=0 \lor \beta=0 \lor \theta=0
\end{align*}
Which is our desired policy ordering.
Now let us show that ONMR cannot express the same policy ordering: 

Let 
\begin{align*}
\Reward_A = \Reward(s_0,a_A,s_0)\\
\Reward_B = \Reward(s_0,a_B,s_0)\\
\Reward_C = \Reward(s_0,a_C,s_0)
\end{align*}
Either one or more of these rewards are equal (we will return to the equality case below) or one reward lies between the other two. To fulfill this policy ordering the ONMR function must map all MR rewards of policies which sometimes take all actions to one value, lets call this $\Reward_H$. The function must also map all policies which do not do this to another value, $\Reward_L$. Specifically it must map all policies which only take two actions with any probability to the same value:
\begin{align*}
\left(\frac{\Reward_A}{1-\gamma},\frac{\Reward_B}{1-\gamma}\right)\rightarrow \Reward_L \\
\left(\frac{\Reward_A}{1-\gamma},\frac{\Reward_C}{1-\gamma}\right)\rightarrow \Reward_L \\
\left(\frac{\Reward_B}{1-\gamma},\frac{\Reward_C}{1-\gamma}\right)\rightarrow \Reward_L \\
\end{align*} 
However, we also need to map the range:
\begin{align*}
\left(\frac{min(\Reward_A,\Reward_B,\Reward_C)}{1-\gamma},\frac{max(\Reward_A,\Reward_B,\Reward_C)}{1-\gamma}\right)\rightarrow \Reward_H \\
\end{align*} 
These ranges clearly overlap and no function can map the same value to two different values. In the case that two or more rewards are the same the proof is even simpler. Call the rewards which are equal $\Reward_A$ and $\Reward_B$ with $\Reward_C$ being the other reward. The ONMR function must for instance map \begin{align*}
\frac{1}{3}\Reward_A+\frac{1}{3}\Reward_B+\frac{1}{3}\Reward_C \rightarrow \Reward_H \\
\frac{2}{3}\Reward_A+\frac{1}{3}\Reward_C \rightarrow \Reward_L
\end{align*}
But these two expressions are equal. Thus, we prove that ONMR cannot express this policy ordering.
\end{proof}

\newpage 

\begin{proposition}[$FTR \not \succeq_{EXPR} ONMR$]
    There is an environment and an ordering over policies in that environment that Outer Nonlinear MR (ONMR) can induce, but Functions from Trajectories to Reals (FTR) cannot.
\label{prop:FTRNONMR}
\end{proposition}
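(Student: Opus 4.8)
The plan is to exploit a structural asymmetry: in ONMR the wrapper function $f$ is applied \emph{after} the expectation over trajectories, so $J_{ONMR}(\pi) = f(\mathbb{E}_\xi[G(\xi)])$ can depend non-affinely on the policy's trajectory distribution, whereas an FTR evaluation $J_{FTR}(\pi) = \mathbb{E}_\xi[f_{FTR}(\xi)]$ is always \emph{affine} in that distribution. A step-shaped wrapper then encodes a ``hit this trajectory with probability at least one half, and otherwise be indifferent'' objective that FTR cannot reproduce. (This is essentially a failure of VNM-style independence for FTR that ONMR need not respect.)

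Concretely, I would use the environment with a single branching state $s_0$ and two absorbing states $s_A, s_B$, where action $a_A$ sends $s_0$ to $s_A$, action $a_B$ sends $s_0$ to $s_B$, and $s_A, s_B$ self-loop; $\Init$ is the point mass on $s_0$. Up to irrelevant choices in $s_A, s_B$, a stationary policy is specified by $\theta := \pi(a_A \mid s_0)$, and $\pi_\theta$ produces the trajectory $\xi_A = (s_0, a_A, s_A, \dots)$ with probability $\theta$ and $\xi_B = (s_0, a_B, s_B, \dots)$ with probability $1-\theta$. Take the ONMR specification $(\Reward, f, \gamma)$ with $\Reward(s_0, a_A, s_A) = 1$, $\Reward = 0$ on every other transition, $\gamma$ arbitrary, and $f(x) = \indicator[x \ge \tfrac12]$. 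Then $G(\xi_A) = 1$, $G(\xi_B) = 0$, so $J_{MR}(\pi_\theta) = \theta$ and $J_{ONMR}(\pi_\theta) = \indicator[\theta \ge \tfrac12]$. This is a legitimate objective specification, and it induces the policy ordering in which every policy with $\theta \ge \tfrac12$ is strictly above every policy with $\theta < \tfrac12$, with ties inside each of the two classes.

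Next I would show no FTR specification induces this ordering. Fix any $f_{FTR}$ and set $u = f_{FTR}(\xi_A)$, $v = f_{FTR}(\xi_B)$; then $J_{FTR}(\pi_\theta) = \theta u + (1-\theta)v$, which is affine in $\theta$. Choose the three policies $\pi_{0.3}, \pi_{0.6}, \pi_{0.9}$ (all taking the same deterministic action in $s_A$ and $s_B$). The ONMR ordering demands $J_{FTR}(\pi_{0.6}) = J_{FTR}(\pi_{0.9}) > J_{FTR}(\pi_{0.3})$, but $0.6u + 0.4v = 0.9u + 0.1v$ forces $u = v$, whence $J_{FTR}$ is constant on these policies, contradicting $J_{FTR}(\pi_{0.6}) > J_{FTR}(\pi_{0.3})$. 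Since any ordering in $Ord_{FTR}(E)$ would in particular have to get these three relations right, the ONMR ordering lies in $Ord_{ONMR}(E) \setminus Ord_{FTR}(E)$, giving $FTR \not\succeq_{EXPR} ONMR$.

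The one step needing care is bookkeeping around policies with stochastic behaviour in the absorbing states, since strictly speaking the orderings range over the whole stationary-policy space, not just the $\pi_\theta$ family. The clean fix is to observe that for any fixed behaviour in $s_A$ and $s_B$, marginalising the trajectory expectation over the two continuations still leaves $J_{FTR}$ affine in $\theta$ (with $u, v$ replaced by the relevant conditional expectations), so the affineness argument is unaffected; alternatively, one can simply note that restricting attention to the three explicitly named policies already rules out every candidate FTR ordering.
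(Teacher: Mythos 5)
Your proposal is correct and follows essentially the same route as the paper's proof: the same branching environment, the same threshold wrapper $f(x)=\indicator[x\geq\tfrac12]$, and the same contradiction between the affineness of $J_{FTR}$ in the mixing probability and the step-shaped ONMR objective. The only cosmetic difference is that you derive the contradiction from the triple $\pi_{0.3},\pi_{0.6},\pi_{0.9}$ whereas the paper uses the deterministic policies and the $\theta=\tfrac12$ mixture, which amounts to the same forced equality $f_{FTR}(\xi_A)=f_{FTR}(\xi_B)$.
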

\begin{proof}[Proof by construction.]
Consider the environment depicted in the diagram below. 

\begin{figure}[H]
     \centering
     \includegraphics[scale=0.3]{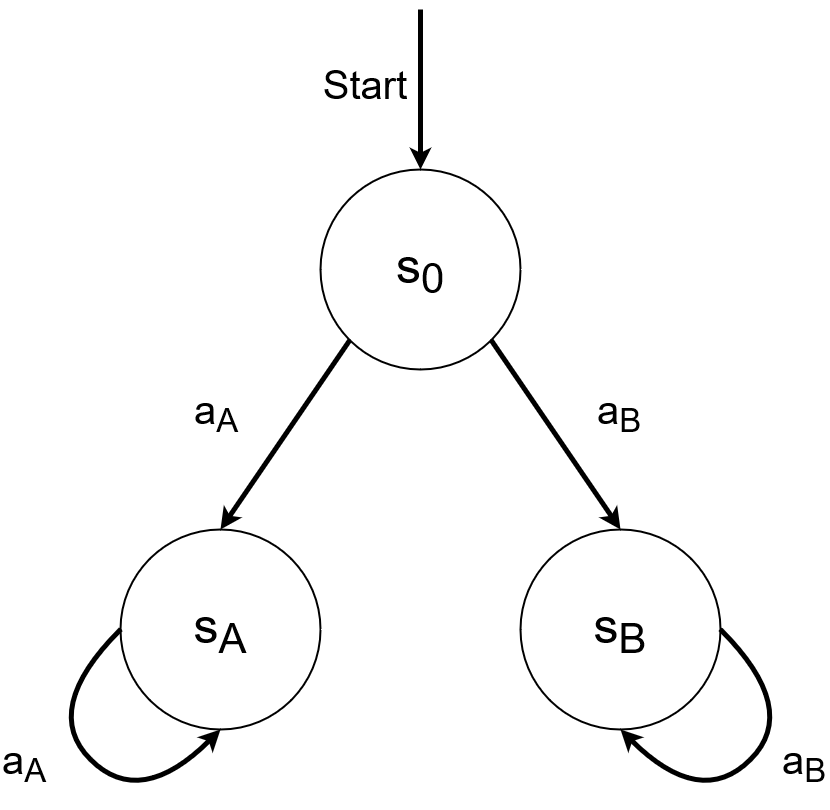}
     \caption{A three-state environment with two possible trajectories corresponding to choosing $a_A$ or $a_B$ in the starting state $s_0$.}
     \label{}
 \end{figure}
We will show that in this environment, the following ONMR specification $(\Reward,f,\gamma)$ cannot be expressed with any FTR specification.
\[\Reward_A := \Reward(s_0,a_A,s_A)=1,\Reward_B := \Reward(s_0,a_B,s_B)=0\text{ (as depicted above)}\]
\[f(x)=\indicator\left[x\geq \frac{1}{2}\right]=\begin{cases}
    1 & \text{if } x \geq \frac{1}{2} \\
    0 & \text{if } x < \frac{1}{2}
\end{cases}\]
\[\gamma=0.99\]

With this specification: 

\begin{align*}
    \J[ONMR](\pi) 
    &\coloneqq f\left(\E_{\xi \sim \pi,T,I} \left[ \sum\limits_{t=0}^\infty \gamma^t \Reward(s_t,a_t,s_{t+1}) \right]\right) 
    & \text{(by definition)}\\
    &= f\left(\E_{\xi \sim \pi,T,I} \left[\Reward(s_{t=0},a_{t=0},s_{t=1}) \right]\right)
    & \text{(Rewards are $0$ after the first step)}\\
    &= f\left(\pi(a_A|s_0)\Reward(s_0,a_A,s_A) + \pi(a_B|s_0)\Reward(s_0,a_B,s_B)\right) \\
    &= f\left(\pi(a_A|s_0)(1) + \pi(a_B|s_0)(0)\right)
    &  \\
    &= f\left(\pi(a_A|s_0)\right)
    &  \\   
    &= \begin{cases}
        1 & \text{if } \pi(a_A|s_0) \geq \frac{1}{2} \\
        0 & \text{if } \pi(a_A|s_0) < \frac{1}{2}
    \end{cases}
    &  \\
\end{align*}
So this ONMR specification prefers policies that satisfy $\pi(a_A|s_0) \geq \frac{1}{2}$ to those which do not, and has no other preferences. Now, we will show that no FTR specification can express this policy ordering.

In this environment, there are only two possible trajectories: $\xi_A := (s_0,a_A,(s_A,S_A)^*)$ and $\xi_B := (s_0,a_B,(s_B,S_B)^*)$. Suppose there is an FTR specification $(f_{FTR})$ which induces the same policy ordering as the ONMR specification described above. One property this specification must satisfy is that the policy $\pi_A$ that takes action $a_A$ deterministically must be preferred to the policy $\pi_B$ that takes action $a_B$ deterministically.
\begin{align*}
\J[FTR](\pi_A) &= \E_{\xi \sim \pi_A,T,I} [f_{FTR}(\xi)] > \J[FTR](\pi_B) = \E_{\xi \sim \pi_B,T,I} [f_{FTR}(\xi)]\\
&\implies f_{FTR}(\xi_A) > f_{FTR}(\xi_B)
\end{align*}
Additionally, if $\pi_{mix}(a_A|s_0)=\pi_{mix}(a_B|s_0)=\frac{1}{2}$, then to match the ONMR specification, the FTR specification should have no preference between $\pi_{mix}$ and $\pi_A$.
\begin{align*}
\J[FTR](\pi_A) &= \E_{\xi \sim \pi_A,T,I} [f_{FTR}(\xi)] = \J[FTR](\pi_{mix}) = \E_{\xi \sim \pi_{mix},T,I} [f_{FTR}(\xi)]\\
&\implies f_{FTR}(\xi_A) = \frac{1}{2}f_{FTR}(\xi_A)+\frac{1}{2}f_{FTR}(\xi_B)\\
&\implies \frac{1}{2}f_{FTR}(\xi_A)=\frac{1}{2}f_{FTR}(\xi_B)\\
&\implies f_{FTR}(\xi_A)=f_{FTR}(\xi_B)
\end{align*}
Since it cannot simultaneously be true that $f_{FTR}(\xi_A)>f_{FTR}(\xi_B)$ and $f_{FTR}(\xi_A)=f_{FTR}(\xi_B)$, this means there is no FTR specification which prefers $\pi_A$ to $\pi_B$ but has no preference between $\pi_A$ and $\pi_{mix}$. Therefore, FTR cannot express this policy ordering, concluding the proof.
\end{proof}

This proof highlights that the expectation in the definition of FTR limits it to linearly interpolating between the possible trajectories that a policy can produce. Meanwhile, ONMR specifications can express more complex preferences over trajectory lotteries, such as setting a minimum acceptable probability for a trajectory. 

\newpage 
\begin{proposition}[$FTR \not \succeq_{EXPR} RRL$]
    There is an environment and an ordering over policies in that environment that Regularised RL (RRL) can induce, but Functions from Trajectories to Reals (FTR) cannot.
\label{prop:FTRNRRL}
\end{proposition}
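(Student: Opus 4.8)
The plan is to reuse the environment and the FTR‑impossibility already established in the proof of Proposition~\ref{prop:FTRNONMR}, and then show that a qualifying policy ordering is realizable in RRL. Recall that proof works in a three‑state environment ($s_0 \xrightarrow{a_A} s_A$, $s_0 \xrightarrow{a_B} s_B$, with $s_A$ and $s_B$ absorbing), where only two trajectories arise, $\xi_A$ and $\xi_B$, reached deterministically by $\pi_A$ (plays $a_A$ at $s_0$) and $\pi_B$ (plays $a_B$ at $s_0$), while the policy $\pi_{mix}$ that is uniform at $s_0$ induces the trajectory lottery $\tfrac12\delta_{\xi_A}+\tfrac12\delta_{\xi_B}$. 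The proof of Proposition~\ref{prop:FTRNONMR} shows that, because $J_{FTR}(\pi)=\E_{\xi\sim\pi}[f(\xi)]$ is affine in $\pi(a_A\mid s_0)$ on this family, no FTR specification can have both $\pi_A \succ \pi_B$ and $\pi_A \sim \pi_{mix}$. So it suffices to exhibit an RRL specification whose induced ordering restricts, on $\{\pi_A,\pi_B,\pi_{mix}\}$, to $\pi_A \sim \pi_{mix} \succ \pi_B$.

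Concretely I would take the RRL specification $(\Reward,\alpha,F,\gamma)$ with $\Reward\equiv 0$, $\alpha=1$, any $\gamma\in[0,1)$, and $F:\Delta(A)\to\R$ defined by $F[\mu]=-\indicator[\mu(a_A)\ge \tfrac12]$. Restrict attention to policies that agree at $s_A$ and $s_B$ (their behaviour there is immaterial to everything that follows). On such a policy a trajectory visits $s_0$ exactly once, at $t=0$, and is thereafter confined to $s_A$ or $s_B$, so $J_{RRL}(\pi)= -F[\pi(s_0)] - C$, where $C := \sum_{t\ge 1}\gamma^t\,\E[F[\pi(s_t)]]$ is a constant independent of which of $\pi_A,\pi_B,\pi_{mix}$ is plugged in. Since $F[\pi_A(s_0)]=F[\delta_{a_A}]=-1$, $F[\pi_{mix}(s_0)]=F[\tfrac12\delta_{a_A}+\tfrac12\delta_{a_B}]=-1$, and $F[\pi_B(s_0)]=F[\delta_{a_B}]=0$, we get $J_{RRL}(\pi_A)=J_{RRL}(\pi_{mix})=1-C > -C = J_{RRL}(\pi_B)$. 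Thus the RRL ordering restricted to these three policies is $\pi_A\sim\pi_{mix}\succ\pi_B$, which by the previous paragraph no FTR specification can match; and an FTR specification reproducing the RRL ordering on all of $\PolSpace$ would in particular reproduce it here. Hence $FTR \not \succeq_{EXPR} RRL$.

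I expect the only genuinely delicate point — hence the one to nail down carefully — to be the regulariser contribution from the absorbing states: I need $\sum_{t\ge1}\gamma^t\,\E[F[\pi(s_t)]]$ to be the same constant for all three policies compared, which is exactly why I pin them to a common action at $s_A$ and $s_B$; this costs nothing, since the FTR‑impossibility of Proposition~\ref{prop:FTRNONMR} only ever mentions $\pi_A$, $\pi_B$, $\pi_{mix}$. A minor caveat worth stating explicitly is that this RRL specification does \emph{not} induce the same full policy ordering as the ONMR specification used in Proposition~\ref{prop:FTRNONMR} (it is sensitive to the $s_A$/$s_B$ behaviour, which the ONMR one is not), but this is irrelevant: all that is needed is that the RRL‑induced ordering, restricted to a finite set of policies, already lies outside the range of FTR. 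If a cleaner‑looking witness is preferred — one that also highlights RRL's characteristic preference for \emph{stochastic} policies in the spirit of MaxEnt RL — one can instead use $F[\mu]=-\indicator[\tfrac14<\mu(a_A)<\tfrac34]$, which yields $\pi_{mix}\succ\pi_A\sim\pi_B$ and has the cosmetic benefit that $F$ vanishes on deterministic action distributions, making the absorbing‑state term identically $0$.
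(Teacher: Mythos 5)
Your proposal is correct and takes essentially the same route as the paper: the paper also works in this exact environment, exhibits an RRL ordering in which the mixed policy is strictly separated from two tied deterministic policies, and kills FTR with the same convexity observation that $J_{FTR}(\pi_{mix})$ is a convex combination of $J_{FTR}(\pi_A)$ and $J_{FTR}(\pi_B)$. The only difference is cosmetic — the paper takes $F$ to be the Shannon entropy with $\Reward$ chosen so $G(\xi_A)=G(\xi_B)$, yielding the ordering $\pi_{mix}\prec\pi_A\sim\pi_B$ (essentially your footnoted alternative), whereas your primary witness uses an indicator threshold to get $\pi_A\sim\pi_{mix}\succ\pi_B$; both orderings are outside FTR's range, and your handling of the absorbing-state regulariser term (pinning all three policies to one common action at $s_A$ and $s_B$ so that $F$ takes the same value at both) closes the one gap that needed care.
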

\begin{proof}[Proof by construction.]
\begin{figure}[H]
     \centering
     \includegraphics[scale=0.3]{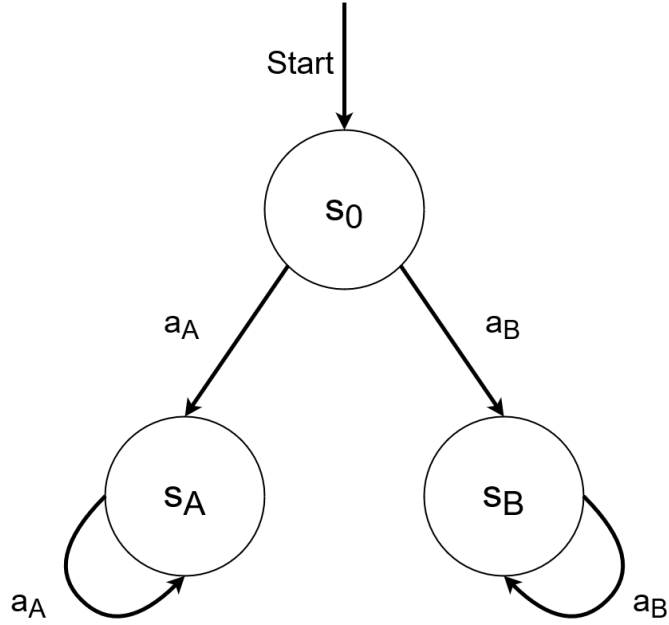}
     \caption{A three-state environment with two possible trajectories corresponding to choosing $a_A$ or $a_B$ in the starting state $s_0$.}
     \label{}
 \end{figure}
$J(\pi)=\mathbb{E}_{\xi \sim \pi} [\sum_{t=0}^\infty \gamma^t(\Reward(s_t,a_t) -\alpha F[\pi(s_t)])]$ 

where $F: \Delta(A) \rightarrow \mathbb{R}$ is a functional of the policy's distribution over actions at a given state.

We construct an example of a task that can be expressed in Regularised RL with $F[\pi(s)]=H[\pi(s)] = - \sum_i \pi(a_i \vert s) \log \pi(a_i \vert s)$ (the Shannon entropy), and then prove that it cannot be expressed by any FTR objective.

Suppose we have the environment above with only two trajectories $\xi_A, \xi_B$ through it, both of which can be reached by means of deterministic policies. Let us specify the reward function $\Reward$ so that both trajectories have the same discounted sum of rewards. Three possible policies are:

$\pi_L$: Deterministic policy that takes $\xi_A$ with probability 1

$\pi_R$: Deterministic policy that takes $\xi_B$ with probability 1

$\pi_m$: Indeterministic policy that takes $\xi_A$ with probability $p$ and $\xi_B$ with probability $1-p$ \\  

Since $H[\pi_L(s)]=H[\pi_R(s)]=0$, and $G(\xi_A)=G(\xi_B)$ by stipulation, $J(\pi_L)=J(\pi_R)$.

Moreover, for $\alpha>0$, it is easy to show that $J(\pi_m)< J(\pi_L)=J(\pi_R)$. Thus Regularised RL in this environment can express the task $\pi_m \prec \pi_L \sim \pi_R$. 

Now consider FTR. The policy evaluation function is:

$J(\pi) = \mathbb{E}_{\xi \sim \pi}[f(\xi)]=P(\xi_A)f(\xi_A) + P(\xi_B)f(\xi_B)$

So:

$J(\pi_L)=f(\xi_A) = a$

$J(\pi_R)=f(\xi_B) = b$

$J(\pi_m)=pf(\xi_A) +(1-p)f(\xi_B)$

Since $p\in (0,1)$ the latter is a convex combination of the former and so we must have EITHER:

$\pi_R \prec \pi_m \prec \pi_L \quad \quad (a>b)$

$\pi_L \prec \pi_m \prec \pi_R \quad \quad (b>a)$

$\pi_L \sim \pi_m \sim \pi_R \quad \quad (a=b)$ 

None of these gives us the policy ordering induced by the Regularised RL specification above.
\end{proof}
\newpage 

\begin{proposition}[$GOMORL \not \succeq_{EXPR} FPR$]
    There is an environment and an ordering over policies in that environment that Functions from Policies to Reals (FPR) can induce, but Generalised Outer Multi-Objective RL (GOMORL) cannot.
\label{prop:GOMORLNFPR}
\end{proposition}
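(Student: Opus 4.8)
The plan is to exhibit an environment containing a state that is never visited, together with two policies $\pi_1$ and $\pi_2$ that differ only on that unreachable state. Since an FPR specification assigns a real number to each policy directly, it can distinguish $\pi_1$ from $\pi_2$; but every GOMORL policy-evaluation vector $\vec J(\pi)$ is computed as an expectation over trajectories, so it cannot ``see'' behaviour on states of probability zero, forcing $\vec J(\pi_1) = \vec J(\pi_2)$ and hence $\pi_1 \polsim \pi_2$ under any GOMORL specification.

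Concretely, I would take $\Env = (\SSpace, \ASpace, \Trans, \Init)$ with $\SSpace = \{s_0, s_1\}$, $\ASpace = \{a_1, a_2\}$, $\Init = \delta_{s_0}$, and $\Trans(s,a) = \delta_s$ for every pair $(s,a)$ (every action is a self-loop), so that $s_1$ is never reached. Let $\pi_1$ and $\pi_2$ both take $a_1$ in $s_0$, while $\pi_1(s_1) = \delta_{a_1}$ and $\pi_2(s_1) = \delta_{a_2}$; these are distinct stationary policies, yet under either of them the unique trajectory of nonzero probability is $\xi^\star = (s_0, a_1, s_0, a_1, \dots)$, so $\pi_1$ and $\pi_2$ induce the same distribution over $\Xi$.

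The first step is to show FPR can induce the ordering in which $\pi_1$ lies strictly above every other policy and all remaining policies are tied: take $J_{FPR}(\pi_1) = 1$ and $J_{FPR}(\pi) = 0$ for all $\pi \neq \pi_1$; this is a valid scalar objective and its induced total preorder has $\pi_1 \succ \pi_2$. The second step is to show no GOMORL specification realises $\pi_1 \succ \pi_2$. Fix an arbitrary GOMORL specification $(k, \Reward, \gamma, \succeq_J)$. For each coordinate $i$, $J_i(\pi) = \E_{\xi \sim \pi, \Trans, \Init}\!\left[\sum_{t=0}^\infty \gamma^t \Reward_i(s_t, a_t, s_{t+1})\right]$ depends on $\pi$ only through the induced trajectory distribution, so $J_i(\pi_1) = J_i(\pi_2)$ for every $i$, i.e. $\vec J(\pi_1) = \vec J(\pi_2)$. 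Since $\succeq_J$ is reflexive, $\vec J(\pi_1) \succeq_J \vec J(\pi_2)$ and $\vec J(\pi_2) \succeq_J \vec J(\pi_1)$, hence $\pi_1 \polsim \pi_2$, contradicting $\pi_1 \succ \pi_2$. Therefore GOMORL cannot express this policy ordering, establishing $GOMORL \not\succeq_{EXPR} FPR$.

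I do not expect a genuine obstacle: the only things to verify carefully are that $\Env$ is a bona fide member of $Envs$, that $\pi_1 \neq \pi_2$ as stationary policies, and that the GOMORL evaluation really factors through the trajectory distribution, which is immediate from the definition in \Cref{tab:table_of_definitions}. If one prefers to argue via occupancy measures, \Cref{lem:OM:J} and \Cref{lem:OM:TL} give $\vec m(\pi_1) = \vec m(\pi_2) \Rightarrow \vec J(\pi_1) = \vec J(\pi_2)$, but the direct computation above is shorter and self-contained.
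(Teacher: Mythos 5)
Your proposal is correct and follows essentially the same route as the paper's own proof: both exhibit two policies that agree on every state visited with nonzero probability but differ on an unvisited state, observe that every GOMORL coordinate $J_i$ factors through the induced trajectory distribution so that $\vec J(\pi_1)=\vec J(\pi_2)$ forces $\pi_1 \polsim \pi_2$, and then separate them with an indicator-style FPR objective. The only (immaterial) difference is that your environment makes the extra state unreachable under every policy, whereas the paper's makes it reachable in principle but unvisited by the two policies in question.
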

\begin{proof}[Proof by construction.]
The essential idea of this proof is that FPR can express preferences between policies that are only distinct on states that neither policy ever visits, while GOMORL cannot. Consider the following environment:

\begin{figure}[H]
     \centering
     \includegraphics[scale=0.35]{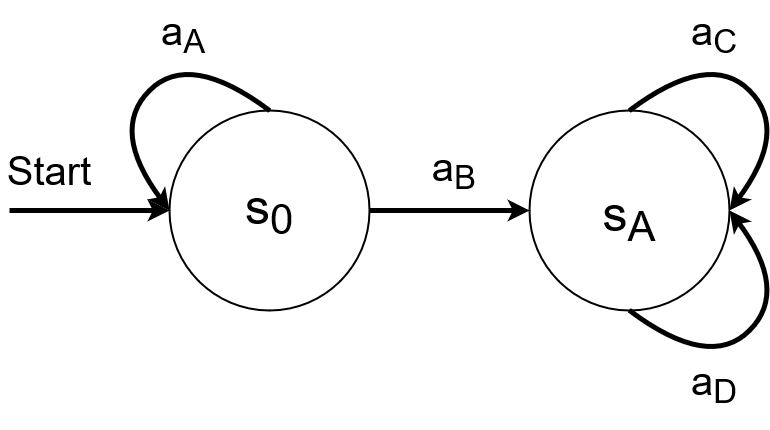}
     \caption{A two-state environment with two actions in the starting state $s_0$ taking $a_A$ back to itself and action $a_B$ leading to state $s_A$. In state $s_A$ we have two actions leading back to itself.}
     \label{}
 \end{figure}

Now consider the following two policies within this environment, $\pi_1$ and $\pi_2$: 

$\pi_1(a_A|s_0)=1, \pi_1(a_B|s_0)=0, \pi_1(a_C|s_A)=1, \pi_1(a_D|s_A)=0$. That is, $\pi_1$ deterministically chooses $a_A$ from $s_0$ and $a_C$ from $s_A$.

$\pi_2(a_A|s_0)=1, \pi_2(a_B|s_0)=0, \pi_2(a_C|s_A)=0, \pi_2(a_D|s_A)=1$. That is, $\pi_2$ deterministically chooses $a_A$ from $s_0$ and $a_D$ from $s_A$.

The policy ordering induced by any GOMORL specification $(k,\Reward,\gamma,\succeq_J)$ cannot have any preference between $\pi_1$ and $\pi_2$. To see this, first notice that both of these policies deterministically result in the trajectory $(s_0,a_A,s_0, a_A, s_0,...)$. This means that each of the $k$ policy evaluation functions, $J_i(\pi)$ for all $i \in [k]$, returns the same value for both policies:

\[J_i(\pi_1) = \E_{\xi \sim \pi_1,T,I}[\sum\limits_{t=0}^\infty \gamma^t \Reward_i(s_t,a_t,s_{t+1})] = \frac{\Reward_i(s_0,a_A,s_0)}{1-\gamma} = J_i(\pi_2).\]

Thus, $\vec J(\pi_1)=\vec J(\pi_2)$. Since $\succeq_J$ must be reflexive as a weak order, $\vec J(\pi_1) \succeq_J \vec J(\pi_2)$ and $\vec J(\pi_2) \succeq_J \vec J(\pi_1)$. So in this environment, GOMORL must induce the ordering $\pi_1 \sim \pi_2$.

Clearly, an FPR specification $(J_{FPR})$ need not respect this equality. For instance, the specification could be:

\[J_{FPR}(\pi) = 
\begin{cases} 
1 & \text{if } \pi = \pi_1 \\
0 & \text{else} 
\end{cases}
\]

The policy ordering this induces has $\pi_1 \succ \pi_2$, so cannot be expressed by GOMORL. This concludes the proof.
\end{proof}
Recall that GOMORL has the same expressivity as Trajectory Lottery Orderings (TLO) and Occupancy Measure Orderings (OMO), so FPR is also not expressively dominated by either of those formalisms.

\newpage 

\begin{proposition}[$FPR \not \succeq_{EXPR} GOMORL$]
    There is an environment and an ordering over policies in that environment that Generalised Outer Multi-Objective RL (GOMORL) can induce, but  Functions from Policies to Reals (FPR) cannot.
\label{prop:FPRNGOMORL}
\end{proposition}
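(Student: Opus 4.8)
The plan is to exploit the well-known fact that the lexicographic order on a two-dimensional region admits no real-valued utility representation (\cite{steen_counterexamples_2012-1}), whereas a GOMORL specification can realise such an order directly through its preorder $\succeq_J$ on $\mathbb{R}^k$. An FPR specification orders policies by $J_{FPR}(\pi_1) \geq J_{FPR}(\pi_2)$ for some $J_{FPR} : \Pi \to \mathbb{R}$, so any policy ordering it can express must be representable by a real-valued function. Hence it suffices to exhibit an environment in which GOMORL realises the lexicographic order on a sufficiently rich set of policy-evaluation vectors.

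For the construction I would take the one-state environment with a single state $s_0$ and three actions $a_A, a_B, a_C$, each looping back to $s_0$, so that a stationary policy is just a distribution $(p,q,1-p-q)$ over the three actions. Set $k = 2$, fix any $\gamma \in [0,1)$, let $\Reward_1$ give reward $1$ on the transition $(s_0,a_A,s_0)$ and $0$ elsewhere, and let $\Reward_2$ give reward $1$ on $(s_0,a_B,s_0)$ and $0$ elsewhere. Then $\vec J(\pi) = \bigl(\tfrac{p}{1-\gamma},\tfrac{q}{1-\gamma}\bigr)$, so the image $\mathcal I := \{\vec J(\pi) : \pi \in \Pi\}$ is the scaled two-simplex $\{(u,v) : u,v \geq 0,\ u+v \leq \tfrac{1}{1-\gamma}\}$, and every point of this triangle arises from some policy (take $p = u(1-\gamma)$, $q = v(1-\gamma)$). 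Let $\succeq_J$ be the lexicographic order on $\mathbb{R}^2$. By the discussion in \Cref{subsec: Inducing Total Preorders} this GOMORL specification induces a genuine total preorder $\succeq$ on $\Pi$; and since the lexicographic order is antisymmetric, two policies are indifferent under $\succeq$ precisely when $\vec J(\pi_1) = \vec J(\pi_2)$, i.e. the indifference classes of $\succeq$ are exactly the fibres of $\vec J$.

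The remaining step is to show no FPR specification expresses $\succeq$. Suppose $J_{FPR}$ did. Since $J_{FPR}$ must be constant on $\succeq$-indifference classes, and these are the fibres of $\vec J$, it factors through $\vec J$ to give a map $\tilde u : \mathcal I \to \mathbb{R}$ which is order-preserving and order-reflecting for $(\mathcal I, \succeq_{\mathrm{lex}})$ into $(\mathbb{R},\geq)$. I would then contradict this in the standard way: for each $u$ with $0 \leq u < \tfrac{1}{1-\gamma}$ the vertical slice $\{u\} \times [0,\tfrac{1}{1-\gamma}-u]$ contains points $a_u <_{\mathrm{lex}} c_u <_{\mathrm{lex}} b_u$, so $\bigl(\tilde u(a_u),\tilde u(b_u)\bigr)$ is a nonempty open interval of $\mathbb{R}$; and for $u < u'$ we have $b_u <_{\mathrm{lex}} a_{u'}$, hence $\tilde u(b_u) \leq \tilde u(a_{u'})$, so these intervals are pairwise disjoint. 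This yields uncountably many pairwise-disjoint nonempty open intervals of $\mathbb{R}$, which is impossible since each contains a distinct rational. Thus no such $\tilde u$, and hence no such $J_{FPR}$, exists, so GOMORL expresses a policy ordering that FPR cannot.

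The main obstacle is making the non-representability argument fully rigorous rather than invoking it as folklore: in particular, carefully using antisymmetry of the lexicographic order to identify the indifference classes of $\succeq$ with the fibres of $\vec J$ (so that $\tilde u$ is well-defined), and verifying that the image $\mathcal I$ is genuinely two-dimensional so the disjoint-intervals construction goes through; the rest is routine. I would also note, mirroring \Cref{prop:GOMORLNFPR} for the converse direction, that since GOMORL is equally expressive as OMO and TLO (\Cref{thm: GOMORL = OMO = TLO}), the same example shows FPR is not expressively above Occupancy Measure Orderings or Trajectory Lottery Orderings either.
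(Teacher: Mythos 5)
Your proposal is correct and follows essentially the same route as the paper's proof: realise a lexicographic preorder $\succeq_J$ on a genuinely two-dimensional image of $\vec J$, observe that any representing $J_{FPR}$ must factor through $\vec J$, and derive a contradiction from uncountably many pairwise-disjoint nonempty intervals each containing a distinct rational. The only difference is your choice of a one-state, three-action environment (giving a simplex as the image) instead of the paper's three-state chain (giving a rectangle), which is immaterial to the argument.
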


\begin{proof}[Proof by construction.]
Consider the following environment, and the GOMORL specification $(k,\Reward,\gamma,\succeq_J)$ below.

\begin{figure}[H]
     \centering
     \includegraphics[scale=0.3]{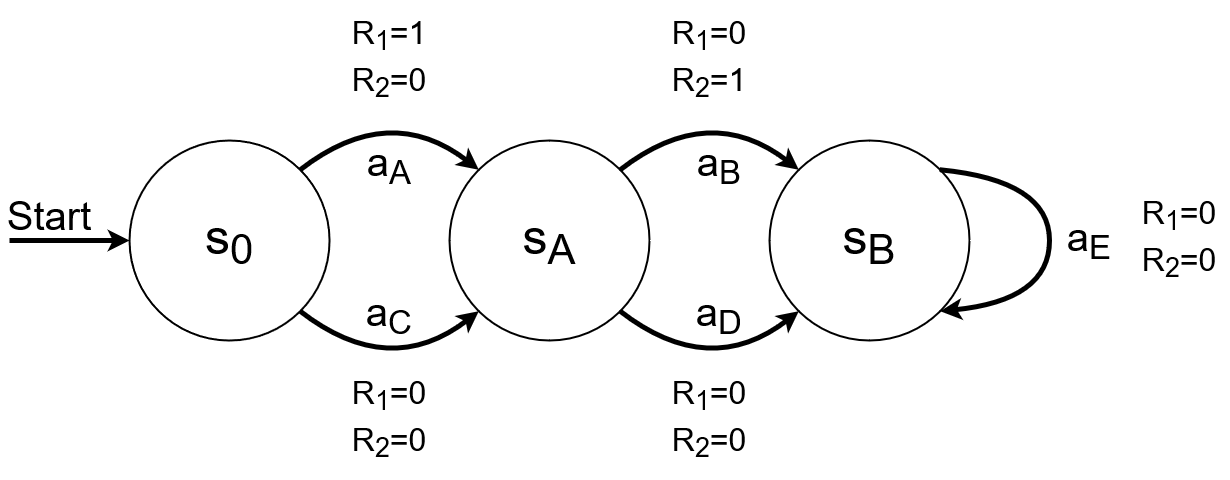}
     \caption{Three state system consisting of the starting state $s_0$ with actions $a_A$ and $a_C$ leading to $s_A$, the state $s_A$ which has two actions $a_B$ and $a_D$ which lead to $s_B$ and finally the state $s_B$ which has action $a_E$ leading to itself. The figure also shows the reward functions $\Reward_1$ which give 1 reward to action $a_A$ and 0 to everything else and $\Reward_2$ which give 1 reward to action $a_B$ and 0 to everything else.}
     \label{}
 \end{figure}

\begin{itemize}
    \item $k = 2$ 
    \item $\Reward_1$ and $\Reward_2$ are included in the diagram above. $\Reward_1$ only rewards taking action $a_A$, and $\Reward_2$ only rewards taking action $a_B$.
    \item $\gamma = 0.99$ (though this particular value is not important)
    \item Let $\vec J_1 = \langle J_{11},J_{12} \rangle \in \R^2$ and let $\vec J_2 = \langle J_{21}, J_{22} \rangle$. $\succeq_J$ is specified such that  $ \vec J_1 \succeq_J \vec J_2 $ if and only if $J_{11} > J_{21}$, or $J_{11} = J_{21}$ and $J_{12} \geq J_{22}$. This is a lexicographic ordering on $\R^2$.
\end{itemize}
Let us clarify the policy ordering expressed by this specification, starting by writing out the policy evaluation functions.
\begin{align*}
    \vec J(\pi) &= \left\langle \trajexp \left[\sum\limits_{t=0}^\infty \gamma^t \Reward_1(s_t,a_t,s_{t+1})\right],...,\trajexp\left[\sum\limits_{t=0}^\infty \gamma^t \Reward_k(s_t,a_t,s_{t+1})\right]\right\rangle\\
    &= \left\langle \trajexp \left[\sum\limits_{t=0}^\infty \gamma^t \Reward_1(s_t,a_t,s_{t+1})\right],\trajexp\left[\sum\limits_{t=0}^\infty \gamma^t \Reward_2(s_t,a_t,s_{t+1})\right]\right\rangle\\
    &= \langle \pi(a_A|s_0), \gamma \pi(a_B|s_A) \rangle
\end{align*}
The last line is true because the only transition that $\Reward_1$ gives nonzero reward to is $(s_0,a_A,s_A)$, and the only transition that $\Reward_2$ gives nonzero reward to is $(s_A,a_A,s_B)$ (which can be taken after one step, so must be discounted by $\gamma$).

Now we can look at the policy ordering in terms of the probabilities assigned to $a_A$ and $a_B$. The lexicographic ordering selected turns into the following: $ \pi_1 \succeq  \pi_2 $ if and only if $ \vec J (\pi_1) \succeq_J \vec J (\pi_2) $ if and only if $\pi_1(a_A|s_0) > \pi_2(a_A|s_0)$, or $\pi_1(a_A|s_0) = \pi_2(a_A|s_0)$ and $\pi_1(a_B|s_A) \geq \pi_2(a_B|s_A)$.

Next, let us show that FPR cannot express this policy ordering. 

Suppose towards contradiction that $(J_{FPR})$ is an FPR specification that expresses the same lexicographic policy ordering. First, note that if $\vec J(\pi_1) = \vec J(\pi_2)$, then $\vec J(\pi_1) \succeq_J \vec J(\pi_2)$ and $\vec J(\pi_2) \succeq_J \vec J(\pi_1)$, so $\pi_1 \sim \pi_2$ according to the GOMORL specification above. Therefore, for $(J_{FPR})$ to express the same policy ordering, it must be the case that $\vec J(\pi_1) = \vec J(\pi_2)$ implies $J_{FPR}(\pi_1) = J_{FPR}(\pi_2)$. This means that $J_{FPR}$, which was have assumed expresses this lexicographic policy ordering, can be decomposed into the GOMORL function from policies to $J$ vectors in $\R^2$, $\vec J: \Pi \to \R^2$, and a function from $J$ vectors in $\R^2$ to reals, $f_J: \R^2 \to \R$. So $J_{FPR} = f_J \circ \vec J$. Now we will show that no mapping $f_J: \R^2 \to \R$ can express the lexicographic policy ordering given by $\succeq_J$ above, so the supposed $J_{FPR}$ which expresses this policy ordering cannot exist.

For $f_J$ to express the same policy ordering as $\succeq_J$, it must be true that $f_J(\vec J (\pi_1)) \geq f_J(\vec J (\pi_2)) \iff \vec J (\pi_1) \succeq_J \vec J (\pi_2)$. Next, let us establish a few notations and facts which will allow us to demonstrate that there is no $f_J$ which satisfies this requirement.

\begin{enumerate}
    \item Let $B_a$ be the set of possible values for $J_2$ that a policy can have if it has $J_1(\pi)=a$. That is, let $B_a  =\{b\in\mathbb{R}:\exists\pi\in\Pi $ s.t. $J_1(\pi)=a, J_2(\pi)=b\}$, where $J_1$ and $J_2$ are the policy evaluation functions associated with $\Reward_1$ and $\Reward_2$ from the GOMORL specification above. \label{Lex1}
    
    In the environment we have specified, we have established that $J_1(\pi) = \pi(a_A|s_0)$ and $J_2(\pi) = \gamma \pi(a_B|s_A)$. So $J_1(\pi) =\pi(a_A|s_0)\in[0,1]$, $J_2(\pi) = \gamma \pi(a_B|s_A) \in [0,\gamma]$, and since the policy's behavior in $s_A$ is not constrained by its behavior in $s_0$, $\vec J (\Pi) = [0,1] \times [0,\gamma]$. Therefore, $B_a = [0,\gamma]$ for all $a \in [0,1]$. Since $\gamma = 0.99 > 0$, $B_a=[0,0.99]$ is an uncountable set. Importantly for this proof, $|B_a| > 1$ for all $ a \in J_1(\Pi)$. (Although in this case $B_a$ is the same for all values of $a$, we keep the notation $B_a$, as this makes the proof applicable to any environment in which there are uncountable values of $a$ such that $|B_a| > 1$.)     

    \item Suppose $a,a' \in J_1(\Pi)$ with $a' > a$. To induce the lexicographic policy ordering through $f_J$, it is necessary that  \label{Lex2}
    \[
    f_J(a',b') > f_J(a,b) \quad \forall b \in B_a, \, b'\in B_{a'}.
    \]
    
    \item Suppose $b,b' \in B_a$ with $b' > b$. To induce the lexicographic policy ordering, it is necessary that for any $a \in J_1(\Pi)$ \label{Lex3}
    \[
    f_J(a,b') > f_J(a,b).
    \]
    
    \item Define $f_{J,a}: B_a \to \mathbb{R}$ as $f_{J,a}(b) := f_J(a,b)$. Considering the range of $f_{J,a}$, denoted as $f_{J,a}(B_a)$, let 
    \[
    m_{1,a} = \inf(f_{J,a}(B_a)) \quad \text{and} \quad m_{2,a} = \sup(f_{J,a}(B_a)). 
    \] 
    Let $I_{f_{J,a}} = [m_{1,a}, m_{2,a}]$, a subset of the reals such that $f_J(a,b)$ lies within this range for all $b \in B_a$. Given point \ref{Lex3} and the fact that $|B_a| > 1$ for all $a \in J_1(\Pi)$, we have $m_{2,a} > m_{1,a}$. \label{Lex4}
    
    \item For $a' \neq a$, point \ref{Lex2} implies that $f_J(a',b) \notin I_{f_{J,a}}$ for any $b \in B_a$. By extension, $a' \neq a \implies I_{f_{J,a}} \cap I_{f_{J,a'}} = \emptyset$. \label{Lex5}
    
    \item Let $\mathbb{I} = \{I_{f_{J,a}}: a \in J_1(\Pi)\}$. We define a function $Int: J_1(\Pi) \to \mathbb{I}$ as 
    \[
    Int(a) = I_{f_{J,a}},
    \]
    where ``Int'' stands for interval. Note that the function $Int$ is injective.
    \item Since the rationals are dense in the reals, every closed interval of reals containing at least two distinct real numbers also contains at least one rational. Therefore, we can define a function $\sigma: \mathbb{I} \to \mathbb{Q}$ which, when given an interval as input, outputs a rational within that interval. Owing to the disjoint nature of all $I_{f_{J,a}}$ (from point \ref{Lex5}), $\sigma$ is injective.
\end{enumerate}

Recall that $J_1(\Pi) = [0,1]$ is uncountable. Given the assumption that an FPR specification induces the lexicographic policy ordering, we have constructed injective functions $Int: J_1(\Pi) \to \mathbb{I}$ and $\sigma: \mathbb{I} \to \mathbb{Q}$. Composing these, we get an injective function 
\[
\sigma \circ Int: J_1(\Pi) \to \mathbb{Q}.
\]
This leads to a contradiction: $J_1(\Pi)$ is uncountable and $\mathbb{Q}$ is countable, so there cannot be an injective function from $J_1(\Pi)$ to $ \mathbb{Q}$. Thus, FPR cannot express this lexicographic policy ordering, and GOMORL can represent policy orderings which FPR cannot.
\end{proof}

Note that this proof does not rely heavily on the specific environment depicted above or the specific reward functions used by GOMORL; the only features which are relevant to the proof are that there are uncountable values of $J_1(\pi)$ that each have at least 2 values of $J_2(\pi)$. This proof also extends to lexicographic preferences with more than 2 reward functions.

\newpage 

\begin{proposition}[$ONMR \not \succeq_{EXPR} RRL$]\label{prop:ONMRNRRL}
    There is an environment and an ordering over policies in that environment that Regularised RL (RRL) can induce, but Outer Nonlinear Markov Reward (ONMR) cannot.
\end{proposition}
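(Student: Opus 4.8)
The plan is to exhibit a one-state environment with an entropy-regularised RRL objective and show that its policy ordering cannot be reproduced by ONMR, by exploiting the fact that an ONMR ordering always ranks policies by $f$ of a single \emph{affine} functional of the occupancy measure, whereas an entropy-regularised objective is strictly convex on the policy simplex.

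Concretely, I would take the single-state, three-action environment of \Cref{f:Ex1c}: one state $s_0$ with self-looping actions $a_A,a_B,a_C$, so a stationary policy is a point $p=(p_A,p_B,p_C)$ in the $2$-simplex $\Delta$. For the RRL specification I would set $\Reward\equiv 0$, $F=H$ (Shannon entropy), $\alpha=1$, and any $\gamma\in[0,1)$, so that $\J[RRL](\pi)=-\tfrac{1}{1-\gamma}H[\pi(s_0)]=-\tfrac{1}{1-\gamma}\bigl(-\sum_i p_i\log p_i\bigr)$, which is a strictly convex, non-constant function of $p$ on $\Delta$ (value $0$ at each vertex, $-\tfrac{\log 3}{1-\gamma}<0$ at the centre). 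This is a legitimate element of $Ord_{RRL}(E)$.

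Next I would suppose for contradiction that some ONMR specification $(\Reward',f,\gamma')$ induces the same ordering, and let $g(\pi)$ be the expected discounted return of $\pi$ under reward $\Reward'$ and discount $\gamma'$; on this environment $g(\pi)=\sum_i \tfrac{\Reward'(s_0,a_i,s_0)}{1-\gamma'}\,p_i$ is an affine (indeed linear) functional of $p$, and the ONMR ordering is $\pi_1\succeq\pi_2\iff f(g(\pi_1))\ge f(g(\pi_2))$. The key observation is that, since the orderings agree, $g(\pi_1)=g(\pi_2)$ forces $f(g(\pi_1))=f(g(\pi_2))$, hence $\pi_1\sim\pi_2$ under ONMR, hence $\pi_1\sim\pi_2$ under RRL, hence $\J[RRL](\pi_1)=\J[RRL](\pi_2)$; thus $\J[RRL]$ is constant on every level set of $g$. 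But $g$ restricted to the $2$-dimensional set $\Delta$ is affine into $\R$, so the linear part of $g$ has a kernel of dimension at least one inside the tangent space $\{v:\sum_i v_i=0\}$; choosing a relative-interior point $p^\circ$ and a nonzero $v$ in that kernel with $p^\circ\pm\varepsilon v\in\Delta$, the three collinear points $p^\circ-\varepsilon v,\ p^\circ,\ p^\circ+\varepsilon v$ lie in one level set of $g$, so $\J[RRL]$ is constant on them; yet strict convexity gives $\J[RRL](p^\circ)<\tfrac12\bigl(\J[RRL](p^\circ+\varepsilon v)+\J[RRL](p^\circ-\varepsilon v)\bigr)=\J[RRL](p^\circ)$, a contradiction. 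Hence no ONMR specification reproduces the ordering, so $ONMR\not\succeq_{EXPR} RRL$.

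The main obstacle — and the reason the construction must use the full continuum of stochastic policies rather than finitely many — is that for any finite set of policies ONMR can typically ``cheat'': one can choose $\Reward'$ so that the finitely many numbers $g(\pi_i)$ are spread out and then hand-craft $f$ to realise the desired finite ordering. The rigidity only appears once $f$ must simultaneously handle uncountably many policies whose $g$-values necessarily collide because $\Delta$ has higher dimension than $\R$, which is precisely what the level-set/strict-convexity step uses. A secondary point that must be handled correctly is that three actions are genuinely required: with only two actions $\Delta$ is one-dimensional, $g$ can be injective on it, and ONMR \emph{does} express the entropy-regularised ordering, so the dimension count in the contradiction is essential.
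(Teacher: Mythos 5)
Your proof is correct, and it rests on the same structural fact as the paper's: in the chosen environment the ONMR policy value factors as $f\circ g$ where $g$ is an affine functional of the policy's action probabilities, so any two policies with equal $g$-value are forced to be ONMR-indifferent, and one then exhibits an RRL objective that strictly separates two such policies. Where you diverge is in the construction and in how the collision is produced. The paper works in a four-state, three-trajectory environment, regularises with the combinatorial functional $F[\pi(s)]=|\{a_i:\pi(a_i\mid s)=0\}|$ (rewarding small support), and produces the colliding policies explicitly: assuming WLOG $\Reward_A\le\Reward_B\le\Reward_C$, it writes $\Reward_B$ as a convex combination $q\Reward_A+(1-q)\Reward_C$ and compares the deterministic policy on $a_B$ with the $(q,1-q)$ mixture of $a_A$ and $a_C$; this forces a separate case analysis when two of the rewards coincide. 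You instead use the one-state environment of \Cref{f:Ex1c} with the Shannon-entropy regulariser, and obtain the collision abstractly: the linear part of $g$ on the two-dimensional tangent space of the simplex has a nontrivial kernel, giving a nondegenerate segment inside a single level set, on which strict convexity of $-H$ forbids constancy. This buys uniformity --- no case split on the reward values, and the argument visibly generalises to any strictly convex (or strictly concave) regulariser and any environment whose policy simplex has dimension exceeding one --- at the cost of invoking a little convex-analytic machinery where the paper's argument is purely computational. Your closing observations, that the continuum of stochastic policies is essential and that two actions would not suffice, are also correct and pinpoint exactly why the dimension count is the load-bearing step.
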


\begin{proof}[Proof by construction.]
Consider the following environment. 
\begin{figure}[H]
     \centering
     \includegraphics[scale=0.3]{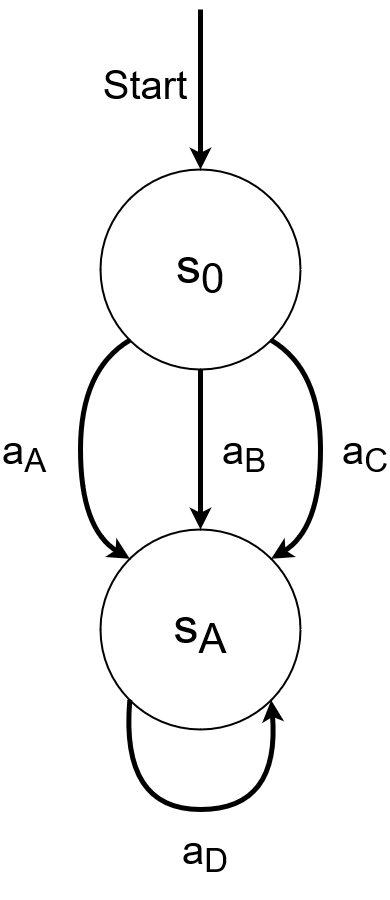}
     \caption{A four-state environment with three possible trajectories corresponding to the choice of $a_A$, $a_B$ or $a_C$ in the starting state $s_0$.}
     \label{}
 \end{figure} 

Consider the Regularised RL (RRL) specification given by:
\begin{itemize}
    \item $\forall (s,a,s')\in \SSpace \times \ASpace \times \SSpace: \ \ \Reward(s,a,s')=0$
    \item $\alpha=-1$
    \item $F[\pi(s)] = |\{a_i: \pi(a_i|s)=0\}|$. (This $F$ and $\alpha$ incentivise assigning probability 0 to as many actions as possible, which is one way of incentivising more deterministic behavior.)
    \item $\gamma = 0.9$ (although this is unimportant)
\end{itemize}
After the first step, all policies deterministically select action $a_D$. Let $F_D$ be the value of $F[\pi(s)]$ if $\pi$ deterministically selects $a_D$ from $s$. 
Then with this RRL specification:\\
\begin{align*}
    \J[RRL](\pi)&=\mathbb{E}_{\xi \sim \pi,T,I}\left[\sum\limits_{t=0}^\infty \gamma^t \left(\Reward(s_t,a_t,s_{t+1}) - \alpha F[\pi(s_t)]\right)\right]\\
    &=\mathbb{E}_{\xi \sim \pi,T,I}\left[\gamma^0 \left(0 - (-1) F[\pi(s_0)]\right) + \frac{\gamma F_D}{1- \gamma}\right]\\
    &=\mathbb{E}_{\xi \sim \pi,T,I}\left[F[\pi(s_0)] + \frac{\gamma F_D}{1- \gamma}\right]\\
    &=F[\pi(s_0)] + \frac{\gamma F_D}{1- \gamma}\\
    &=|\{a_i: \pi(a_i|s_0)=0\}| + \frac{\gamma F_D}{1- \gamma}\\ 
    &=\begin{cases}
2  + \frac{\gamma F_D}{1- \gamma}& \text{if } \pi \text{ selects any one action deterministically} \\
1  + \frac{\gamma F_D}{1- \gamma}& \text{if } \pi \text{ assigns nonzero probability to exactly two actions} \\
\frac{\gamma F_D}{1- \gamma}& \text{if } \pi \text{ assigns nonzero probability to all three actions}
\end{cases}
\end{align*}

Now let us show that ONMR cannot express the same policy ordering. 


Conceptually, the limitation of ONMR that this proof will exploit is that if there are three different possible trajectories, each receiving a different trajectory return, then ONMR cannot distinguish between a policy that deterministically takes this intermediate trajectory and a policy that produces an appropriate probabilistic mixture of the other two. 

Suppose towards contradiction that $(\Reward,f,\gamma)$ is an ONMR specification that expresses this policy ordering. Let $\Reward_A = \Reward(s_0,a_A,s_A)$, $\Reward_B = \Reward(s_0,a_B,s_B)$, $\Reward_C = \Reward(s_0,a_C,s_C)$, and $\Reward_D = \Reward(s_A, a_D, s_A)$. Without loss of generality, we can assume that we have numbered the states and actions such that $\Reward_A \leq \Reward_B \leq \Reward_C$. For now, assume that $\Reward_A < \Reward_B < \Reward_C$; we will return to the equality case below. This means that $\Reward_B = q\Reward_A + (1-q)\Reward_C$ for some real number $q\in (0,1)$. (Specifically, $q = \frac{\Reward_C - \Reward_B}{\Reward_C - \Reward_A}$.)

Now suppose we focus our attention on a class of policies of the following form:
\[\pi(a_B|s_0)=\theta, \pi(a_A|s_0)=(1-\theta)q, \pi(a_C|s_0)=(1-\theta)(1-q)\]
Note that an ONMR specification has fixed $q$ in advance, so the only variable is $\theta$. Now, we will show that the ONMR specification must assign the same value to all policies of this form for any $\theta \in [0,1]$. First, recall that $\J[ONMR](\pi_\theta) = f(\J[MR](\pi_\theta))$. Therefore, we can analyse $\J[MR](\pi_\theta)$ before returning to $\J[ONMR](\pi_\theta)$:
\begin{align*}
\J[MR](\pi_\theta) &= \E_\xi^{E,\pi} \left[\sum_{t=0}^\infty \gamma^t \Reward(s_t, A_t, s_{t+1})\right] \\
 &= \gamma^0 \left(\theta \Reward_B + (1-\theta)q\Reward_A + (1-\theta)(1-q)\Reward_C\right) + \frac{\gamma}{1-\gamma} \Reward_D \\
 &= \theta \Reward_B + (1-\theta)(q\Reward_A + (1-q)\Reward_C) + \frac{\gamma}{1-\gamma} \Reward_D \\
 &= \theta \Reward_B + (1-\theta)\Reward_B + \frac{\gamma}{1-\gamma} \Reward_D\\
 &= \Reward_B + \frac{\gamma}{1-\gamma} \Reward_D\\
\end{align*}
Connecting this analysis back to ONMR, we get:
\begin{align*}
\J[ONMR](\pi_\theta) &= f(\J[MR](\pi_\theta)) \\
 &= f \left( \Reward_B + \frac{\gamma}{1-\gamma} \Reward_D \right)
\end{align*}

Since this is independent of $\theta$, any ONMR specification that uses three distinct reward values for the three transitions must not have any preference between $\pi_{\theta=1}$ and $\pi_{\theta=0.5}$. However, the RRL specification above does have a preference: $\pi_{\theta=1}$ deterministically selects $a_B$, while $\pi_{\theta=0.5}$ assigns nonzero probability to all three actions. So $\J[RRL](\pi_{\theta=1}) = 2  + \frac{\gamma F_D}{1- \gamma} > \frac{\gamma F_D}{1- \gamma} = \J[RRL](\pi_{\theta=0.5})$.

The only case left to address is when the reward function $\Reward$ in the ONMR specification assigns the same reward to any two transitions. That is, $\Reward_\alpha = \Reward_\beta$ for some $\alpha,\beta \in \{A,B,C\}$, $\alpha \neq \beta$. No such ONMR specification can express this RRL policy ordering either, because no such specification can distinguish deterministically selecting action $a_\alpha$ or $a_\beta$ from assigning nonzero probability to both. Let $\pi_\alpha$ be a policy that selects $a_\alpha$ deterministically and let $\pi_{\alpha,\beta}$ be a policy that selects each of $a_\alpha$ and $a_\beta$ with probability 0.5.
\begin{align*}
    \J[MR](\pi_{\alpha,\beta}) &= \E_\xi^{E,\pi_{\alpha,\beta}} \left[\sum_{t=0}^\infty \gamma^t \Reward(s_t, A_t, s_{t+1})\right] \\
    &= \gamma^0 (0.5(\Reward_\alpha) + 0.5(\Reward_\beta)) + \frac{\gamma}{1-\gamma} \Reward_D \\
    &= \Reward_\alpha + \frac{\gamma}{1-\gamma} \Reward_D  \\
    &= \J[MR](\pi_\alpha)
\end{align*}

Thus,
\[ \J[ONMR](\pi_{\alpha,\beta}) = f(\J[MR](\pi_{\alpha,\beta})) = f(\J[MR](\pi_\alpha))= \J[ONMR](\pi_\alpha) \]
    
So any ONMR specification that assigns the same reward to two transitions cannot express a preference between $\pi_\alpha$ and $\pi_{\alpha,\beta}$. Meanwhile, the RRL specification above prefers $\pi_\alpha$ to $\pi_{\alpha,\beta}$, since $\pi_\alpha$ deterministically selects $a_\alpha$ while $\pi_{\alpha,\beta}$ assigns nonzero probability to both $a_\alpha$ and $a_\beta$. Therefore, no ONMR specification (whether it assigns equal rewards to two transitions or not) can express the policy ordering expressed by this RRL specification.
\end{proof}

\newpage 
\begin{proposition}[$RRL \not \succeq_{EXPR} LAR$]
    There is an environment and an ordering over policies in that environment that Limit Average Reward (LAR) can induce, but Regularised RL (RRL) cannot.
\label{prop:RRLNLAR}
\end{proposition}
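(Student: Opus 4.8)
The plan is a proof by construction that reuses the environment and ordering from the proof of Proposition~\ref{prop:ONMRNLAR} (Figure~\ref{f:Ex4}): the two states $s_0$ (start) and $s_A$, where $s_0$ admits a self-loop $a_A$ and a transition $a_X$ to $s_A$, and $s_A$ admits self-loops $a_D$ and $a_E$ (the extra self-loops $a_B,a_C$ at $s_0$ play no role in the $RRL$ half and can be dropped). The relevant policies are the deterministic $\pi_D$ and $\pi_E$ (take $a_X$ at $s_0$, then $a_D$ resp.\ $a_E$ at $s_A$) together with the one-parameter families $\pi_{\theta A D}$ and $\pi_{\theta A E}$ that take $a_A$ with probability $\theta$ and $a_X$ with probability $1-\theta$ at $s_0$, for $\theta\in(0,1)$. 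The target ordering is the one from Proposition~\ref{prop:ONMRNLAR}; the only feature I will use is that it satisfies $\pi_{\theta A E}\sim\pi_E\succ\pi_D\sim\pi_{\theta A D}$ for every $\theta\in(0,1)$. Proposition~\ref{prop:ONMRNLAR} already shows that $LAR$ induces this ordering (set $\Reward(s_A,a_E,s_A)>\Reward(s_A,a_D,s_A)$ and $\Reward(s_0,\cdot,\cdot)=0$; any policy with positive probability of ever taking $a_X$ is eventually absorbed at $s_A$, so its limit-average reward depends only on its action at $s_A$), so I can cite that half and concentrate on showing $RRL$ cannot induce it.

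For the $RRL$ half, suppose for contradiction that some $\ObSpec[RRL]=(\Reward,\alpha,F,\gamma)$ induces the ordering. Using bounded rewards to interchange sum and expectation, write the objective in occupancy form $\J[RRL](\pi)=\sum_{s,a,s'} m_\gamma^{\pi}[s,a,s']\,\Reward(s,a,s')-\alpha\sum_{s} d_\gamma^{\pi}[s]\,F[\pi(s)]$, where $m_\gamma^{\pi}$ is the discounted occupancy measure and $d_\gamma^{\pi}[s]:=\sum_{a,s'}m_\gamma^{\pi}[s,a,s']$. The pivotal observation is that $\pi_{\theta A E}$ and $\pi_{\theta A D}$ are identical at $s_0$ (same action distribution there, hence the same discounted visitation of $s_0$ and the same transitions out of it), so their entire $s_0$-contribution to $\J[RRL]$ — reward term and regulariser term alike — coincides. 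Subtracting, the $F[\pi(s_0)]$ terms cancel and the difference depends only on what happens at $s_A$:
\[
\J[RRL](\pi_{\theta A E})-\J[RRL](\pi_{\theta A D})\;=\;d_\gamma^{\theta}[s_A]\,\Big(\big(\Reward(s_A,a_E,s_A)-\alpha F[\delta_{a_E}]\big)-\big(\Reward(s_A,a_D,s_A)-\alpha F[\delta_{a_D}]\big)\Big),
\]
where $d_\gamma^{\theta}[s_A]=\tfrac{1}{1-\gamma}-\tfrac{1}{1-\gamma\theta}$ is the common discounted occupancy of $s_A$ under both families, which is strictly decreasing in $\theta$ on $(0,1)$ (from $\tfrac{\gamma}{1-\gamma}$ toward $0$).

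Now bring in the equivalences of the target ordering. From $\pi_{\theta A E}\sim\pi_E$ for all $\theta$, the value $\J[RRL](\pi_{\theta A E})$ is constant in $\theta$; likewise $\J[RRL](\pi_{\theta A D})$ is constant in $\theta$; hence their difference is constant in $\theta$. But the displayed expression is a non-constant function of $\theta$ (the factor $d_\gamma^{\theta}[s_A]$) times a fixed scalar, so that scalar must be $0$, which forces the difference to be identically $0$ and in particular $\J[RRL](\pi_E)=\J[RRL](\pi_D)$ — contradicting $\pi_E\succ\pi_D$. Hence no $RRL$ specification induces the ordering, while $LAR$ does, establishing $RRL\not\exprsucceq LAR$.

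The main obstacle, and the step to state carefully, is the cancellation claim: one must verify that because $\pi_{\theta A E}$ and $\pi_{\theta A D}$ differ only in the action chosen at $s_A$, every quantity attached to $s_0$ — including the action distribution there, and therefore $F[\pi(s_0)]$ — is the same for both, so the regulariser contributes nothing to the difference of the two objectives. Once that is pinned down, the remainder is the elementary remark that a constant cannot equal a non-constant multiple of a fixed number unless the number is zero. A secondary point to check is that $s_A$ is reached with probability one under $\pi_{\theta A\cdot}$ for every $\theta<1$, which is immediate since the escape probability from $s_0$ at each step is $1-\theta>0$; this both justifies the closed form for $d_\gamma^{\theta}[s_A]$ and underpins the $LAR$ construction cited from Proposition~\ref{prop:ONMRNLAR}.
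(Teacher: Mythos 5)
Your proof is correct, but it takes a genuinely different route from the paper's. The paper works in a two-state environment in which $s_0$ has two actions $a_A,a_B$ both leading to $s_A$, and $s_A$ can either self-loop or return to $s_0$, and it targets the ordering $\pi_{AD}\sim\pi_{BD}\succ\pi_{AC}\succ\pi_{BC}$ over four \emph{deterministic} policies. There, LAR succeeds because the limit average discards the single transient use of $a_A$ or $a_B$ in the absorbing policies while still seeing it in the cycling policies, and RRL fails because for deterministic policies the regulariser merely shifts each transition's reward by a policy-independent constant, so the required ordering collapses to two contradictory linear inequalities ($\Reward_A+f(0,1,0,0)=\Reward_B+f(0,0,1,0)$ from the equivalence, and $>$ from the strict preference) --- essentially the MR impossibility argument rerun with shifted rewards. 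Your argument instead uses a one-parameter family of stochastic policies and the observation that the RRL value is affine in the discounted occupancy $d_\gamma^{\theta}[s_A]$, which varies with $\theta$, while the LAR value is constant in $\theta$; this is much closer in spirit to the continuity argument the paper uses for \Cref{prop:RM_MRNLAR_LTL}. The trade-off: the paper's version needs no occupancy machinery and only four policies, so it is more elementary; yours isolates a structural cause (any objective that is a fixed function of the discounted state--action occupancy and the per-state action distribution cannot reproduce LAR's insensitivity to transients) and so transfers verbatim to MR and related formalisms. Your two flagged steps check out: the $F[\pi(s_0)]$ cancellation is sound because $\pi_{\theta AE}$ and $\pi_{\theta AD}$ share both the action distribution at $s_0$ and the entire state-visitation process (the choice between the self-loops $a_D,a_E$ does not change which states are visited), and $s_A$ is indeed reached almost surely for $\theta<1$. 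One degenerate case to note: if $\gamma=0$ then $d_\gamma^{\theta}[s_A]$ is identically zero rather than strictly decreasing, so the ``non-constant times a scalar'' step does not apply as stated, but the difference is then identically zero anyway and the contradiction with $\pi_E\succ\pi_D$ still follows.
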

\begin{proof}[Proof by construction.]

\begin{figure}[H]
     \centering
     \includegraphics[scale=0.4]{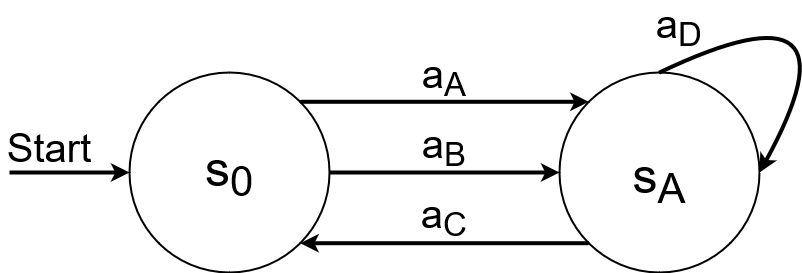}
     \caption{A 2 state environment with a starting state $s_0$ which has 2 actions $a_A$ and $a_B$ leading to $s_A$ and state $s_A$ which has an action $a_C$ leading to $s_0$ and an action $a_D$ leading to itself.}
     \label{}
 \end{figure}

In the figure above we have two states $s_0$ and $s_A$. $s_0$ has actions $a_A$ and $a_B$ which both lead to $s_A$ with reward $\Reward_A$ and $\Reward_B$ respectively. $s_A$ has actions $a_D$ which takes it to itself and $a_C$ which leads to $s_0$, these have reward  $\Reward_D$ and $\Reward_C$. Let the initial state be  $s_0$ and $s_A$ with equal probability $\frac{1}{2}$. There are 3 possible deterministic limit cycles here: 
\begin{itemize}
    \item Going from state 1 to state 1 forever via $a_D$
    \item Alternating between the states via $a_A$ and $a_C$
    \item Alternating between the states via $a_B$ and $a_C$
\end{itemize}

There are 4 possible deterministic policies which we will call $\pi_{ij}$ corresponding to taking action $i$ in state 0 and action $j$ in state 1. We would like to express the policy ordering: 

\begin{equation*}
    \pi_{AD}\sim\pi_{BD}>\pi_{AC}>\pi_{BC}
\end{equation*}

This is easy to do in LAR, $\J[LAR]$ for the different policies is:

\begin{equation*}
\J[LAR](\pi_{AD})=\J[LAR](\pi_{BD})=\Reward_D
\end{equation*}

\begin{equation*}
\J[LAR](\pi_{AC})=\frac{\Reward_A+\Reward_C}{2}
\end{equation*}

\begin{equation*}
\J[LAR](\pi_{BC})=\frac{\Reward_B+\Reward_C}{2}
\end{equation*}

We can simply set $\Reward_D>\frac{\Reward_A+\Reward_C}{2}$ and $\Reward_A>\Reward_B$ leading to the desired ordering.

It is not possible to express this policy ordering in MR however. Looking at $\J[MR]$ of the different policies we get:

\begin{align*}
\J[MR](\pi_{AD})=\mathbb{E}_{\xi\sim \pi_{AD},I}\left[G_{MR}(\xi)\right]=\frac{1}{2}\mathbb{E}_{\xi\sim \pi_{AD},s_0}\left[G_{MR}(\xi)\right]+\frac{1}{2}\mathbb{E}_{\xi\sim \pi_{AD},s_A}\left[G_{MR}(\xi)\right]= \\
\frac{1}{2}\Reward_A-\frac{1}{2}\Reward_D+\sum_{t=0}^\infty \gamma^t \Reward_D
\end{align*}

\begin{equation*}
\J[MR](\pi_{BD})=\frac{1}{2}\Reward_B-\frac{1}{2}\Reward_D+\sum_{t=0}^\infty \gamma^t \Reward_D
\end{equation*}

Setting these two to be equal we get:

\begin{equation*}
\J[MR](\pi_{AD})=\J[MR](\pi_{BD}) \implies \Reward_A=\Reward_B
\end{equation*}

However, 

\begin{align*}
\J[MR](\pi_{AC})=\frac{1}{2}\sum_{t=0}^\infty \gamma^{2t} \Reward_A+\frac{1}{2}\sum_{t=0}^\infty \gamma^{2t+1} \Reward_C+\frac{1}{2}\sum_{t=0}^\infty \gamma^{2t} \Reward_C+\frac{1}{2}\sum_{t=0}^\infty \gamma^{2t+1} \Reward_A=\\
\frac{1}{2}\sum_{t=0}^\infty\gamma^{t}\Reward_A+\frac{1}{2}\sum_{t=0}^\infty\gamma^{t}\Reward_C
\end{align*}

Similarly:

\begin{equation*}
\J[MR](\pi_{BC})=\frac{1}{2}\sum_{t=0}^\infty\gamma^{t}\Reward_B+\frac{1}{2}\sum_{t=0}^\infty\gamma^{t}\Reward_C
\end{equation*}

\begin{align*}
\J[MR](\pi_{AC})>\J[MR](\pi_{BC}) \implies \frac{1}{2}\sum_{t=0}^\infty\gamma^{t}\Reward_A+\frac{1}{2}\sum_{t=0}^\infty\gamma^{t}\Reward_C >\frac{1}{2}\sum_{t=0}^\infty\gamma^{t}\Reward_B+\frac{1}{2}\sum_{t=0}^\infty\gamma^{t}\Reward_C \implies \\
\frac{1}{2}\sum_{t=0}^\infty\gamma^{t}\Reward_A > \frac{1}{2}\sum_{t=0}^\infty\gamma^{t}\Reward_B \implies \Reward_A > \Reward_B
\end{align*}

This contradicts $\Reward_A=\Reward_B$. Therefore MR cannot express this policy ordering meaning that MR cannot express all tasks LAR can.

With a minor modification, this proof also shows that LAR can express some tasks that Regularised RL cannot. Recall the Regularised RL policy evaluation is defined as:
\begin{equation*}
\J[RRL](\pi)= \mathbb{E}_{\xi \sim \pi, T, I}[\sum_{t=0}^\infty \gamma^t(\Reward(s_t,a_t) + \alpha F[\pi(s_t)])]
\end{equation*}
In this environment we have only four actions ($a_D,a_A,a_B,a_C$), and so we can write $F[\pi(s_t)]$ as an arbitrary function of four probabilities:
\begin{equation*}
\alpha F[\pi(s_t)]=f(P(a_D\vert s_t),P(a_A\vert s_t), P(a_B\vert s_t), P(a_C\vert s_t))
\end{equation*}

Then we have a closed-form expression for each of our four policies under $\ObSpec[RRL]$:
\begin{align*}
    \J[RRL](\pi_{AD})&=\frac{1}{2}(\Reward_A + f(0,1,0,0))-\frac{1}{2}(\Reward_D + f(1,0,0,0))+\sum_{t=0}^\infty \gamma^t(\Reward_D+f(1,0,0,0)) \\
    \J[RRL](\pi_{BD})&=\frac{1}{2}(\Reward_B + f(0,0,1,0))-\frac{1}{2}(\Reward_D + f(1,0,0,0))+\sum_{t=0}^\infty \gamma^t(\Reward_D+f(1,0,0,0))\\
    \J[RRL](\pi_{AC}) &= \frac{1}{2}\sum_{t=0}^\infty \gamma^t(\Reward_A+f(0,1,0,0)) + \frac{1}{2}\sum_{t=0}^\infty \gamma^t(\Reward_C+f(0,0,0,1)) \\
    \J[RRL](\pi_{BC}) &= \frac{1}{2}\sum_{t=0}^\infty \gamma^t(\Reward_B+f(0,0,1,0)) + \frac{1}{2}\sum_{t=0}^\infty \gamma^t(\Reward_C+f(0,0,0,1)) \\
\end{align*}

Using the first two equations allows us to get:
\begin{align*}
    \pi_{AD}\sim \pi_{BD} &\Rightarrow \J[RRL](\pi_{AD})=\J[RRL](\pi_{BD}) \\ 
    &\Rightarrow \Reward_A + f(0,1,0,0) = \Reward_B+f(0,0,1,0) \\
\end{align*}

This contradicts our assumption when combined with the second equations:
\begin{align*}
    \pi_{AC}\succ \pi_{BC} &\Rightarrow \J[RRL](\pi_{AC})=\J[RRL](\pi_{BC}) \\
    &\Rightarrow \Reward_A + f(0,1,0,0) > \Reward_B+f(0,0,1,0)
    \end{align*}

So Regularised RL cannot express the policy ordering \begin{equation*}
\pi_{AD}\sim \pi_{BD} \succ \pi_{AC} \succ \pi_{BC}
\end{equation*}

\end{proof}

\newpage 

\begin{proposition}[$ONMR \not \succeq_{EXPR} RM$]
    There is an environment and an ordering over policies in that environment that Reward Machines (RM) can induce, but Outer Nonlinear Markov Reward (ONMR) cannot.
\label{prop:ONMRNRM}
\end{proposition}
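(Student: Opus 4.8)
The plan is to exhibit a tiny environment in which a Reward Machine can implement a ``product of two action probabilities'' objective, and then to show that no Outer Nonlinear Markov Reward (ONMR) specification can reproduce the resulting policy ordering, because an ONMR evaluation is always $f$ composed with an \emph{affine} functional of the relevant probabilities, whose indifference sets are line segments rather than the curved level sets of a product.

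\textbf{The environment and the Reward Machine.} I would take $E$ with states $s_0, s_1$ and actions $a_1, a_2$, where from $s_0$ both actions lead to $s_1$ and from $s_1$ both actions lead to $s_0$, with $s_0$ the initial state; a stationary policy is then captured by $p := \pi(a_1\mid s_0)$ and $q := \pi(a_1\mid s_1)$. I build an RM whose machine first records whether the step-$0$ action was $a_1$; if so, it pays reward $1$ on the step-$1$ transition exactly when the step-$1$ action is also $a_1$, then moves to an absorbing ``done'' machine state that pays $0$ thereafter; every other reward is $0$. Since for a stationary policy the step-$0$ and step-$1$ actions are independent (the intermediate state is forced), this gives $J_{RM}(\pi) = \gamma\, p\, q$, so the induced ordering is $\pi \succeq \pi' \iff pq \ge p'q'$. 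Because $J_{RM}$ depends only on $(p,q)$, this is a well-defined total preorder on all stationary policies.

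\textbf{Why no ONMR works.} Suppose $(\Reward, \gamma', f)$ were an ONMR specification inducing this ordering. On $E$ only four transitions ever occur — two at even times and two at odd times — so a direct summation of the discounted geometric series gives $J_{MR,\Reward,\gamma'}(\pi) = c_0 + c_1 p + c_2 q$ for constants $c_0, c_1, c_2$ depending only on $\Reward$ and $\gamma'$; hence $J_{ONMR}(\pi) = f(c_0 + c_1 p + c_2 q)$. The crux is to show that for \emph{every} $(c_1, c_2)$ there are policies $(p,q) \ne (p',q')$ with $c_1 p + c_2 q = c_1 p' + c_2 q'$ but $pq \ne p'q'$: then $f$ assigns them the same value, so ONMR calls them equivalent, while the RM ordering ranks them strictly, a contradiction. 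If $c_1 = c_2 = 0$, take $(1,1)$ and $(0,0)$. Otherwise, along the segment through $(\tfrac12,\tfrac12)$ given by $t \mapsto (\tfrac12 - c_2 t,\ \tfrac12 + c_1 t)$ — which keeps $c_1 p + c_2 q$ fixed and stays admissible for small $t$ — the product equals $\tfrac14 + \tfrac12(c_1 - c_2)t - c_1 c_2 t^2$, a polynomial in $t$ that is non-constant unless $c_1 = c_2 = 0$; so some small admissible $t$ yields a second policy with a different product. This establishes $ONMR \not\succeq_{EXPR} RM$.

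\textbf{Anticipated obstacle.} All the conceptual content sits in the last paragraph: ONMR only ever distinguishes policies through a single real number that is an affine functional of $(p,q)$, so its indifference classes are unions of parallel segments, whereas the RM objective $pq$ has curved indifference classes. The remaining work is routine bookkeeping — checking the RM is well-formed in the paper's formalism and that $J_{RM}(\pi) = \gamma pq$ with no dependence on anything else, and running the short case split that produces a ``bad pair'' for each $(c_1,c_2)$ — and I expect that case split to be the fiddliest piece, though it is entirely elementary.
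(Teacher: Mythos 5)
Your proof is correct, and it rests on the same structural observation as the paper's: in a fixed environment, $J_{MR}$ is an affine function of the policy's action probabilities, so any ONMR value $f(J_{MR}(\pi))$ is constant on line segments in probability space, whereas a reward machine can induce an ordering whose indifference classes are not unions of such segments. The instantiation, however, is genuinely different. The paper uses a one-state, three-action environment and a reward machine that pays off exactly when all three actions are eventually taken, so its RM objective is a $0/1$ indicator of the policy's support; it then runs a two-case argument depending on whether the three transition rewards are distinct, constructing a convex mixture of two actions that matches the $J_{MR}$ of a third. Your witness is a two-state alternating environment where the RM computes $\gamma p q$, a continuous but non-affine function of $(p,q)$, and your one-parameter segment $t \mapsto (\tfrac12 - c_2 t, \tfrac12 + c_1 t)$ through $(\tfrac12,\tfrac12)$ handles all $(c_1,c_2)\neq(0,0)$ uniformly, with $(1,1)$ versus $(0,0)$ covering the degenerate case. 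This buys two things: the case analysis is shorter and more systematic than the paper's, and it isolates the real obstruction (affineness of $J_{MR}$ in the policy parameters) from the discontinuity that the paper's chosen RM objective also happens to exhibit, which is the property the paper leans on elsewhere to separate RM from LTL and LAR. The only detail worth writing out carefully is the machine itself: in the paper's formalism $\delta_{\Reward}$ is indexed by pairs of machine states but still outputs a full reward function on transitions, so a small machine that records whether the step-$0$ action was $a_1$, pays $\indicator[a=a_1]$ on the step-$1$ transition from the ``recorded'' state, and then absorbs with zero reward, gives exactly $J_{RM}(\pi)=\gamma p q$ as you claim, using the independence of the two action draws for a stationary policy.
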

\begin{proof}[Proof by construction.]
Consider the following environment, and the Reward Machine (RM) specification $(U,u_0,\delta_U,\delta_{\Reward}, \gamma)$:

\begin{figure}
     \centering
     \includegraphics[scale=0.4]{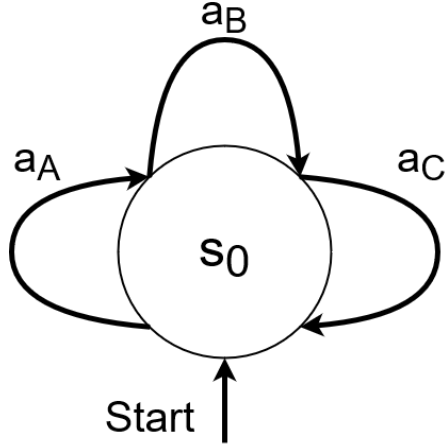}
     \caption{An environment with a single state $s_0$ with three actions $a_A,a_B$ and $a_C$ which all lead back to itself.}
     \label{fig:ONMRNRM}
 \end{figure}

\begin{itemize}
    \item The environment $\Env$ is detailed in \Cref{fig:ONMRNRM}.
    \item See \Cref{fig:ONMRNRM_RM} below for a specification of $U, u_0, $ and $\delta_U$. The transitions are labeled with actions as shorthand notation, enabled by the fact that this transition function $\delta_U$ depends only on the action. For instance, the arrow from $u_0$ to $u_A$ labeled with $a_A$ indicates that $\delta_U(u_0,s,a_A,s') = u_A$ for all $s$ and $s'$.
    \item $\delta_{\Reward}(u,u')(s,a,s') = 
\begin{cases} 
1 & \text{if } u' = u_{ABC}, \\
0 & \text{otherwise}.
\end{cases}$.
\item $\gamma = 0.99$
\end{itemize}

\begin{figure}
     \centering
     \includegraphics[scale=0.23]{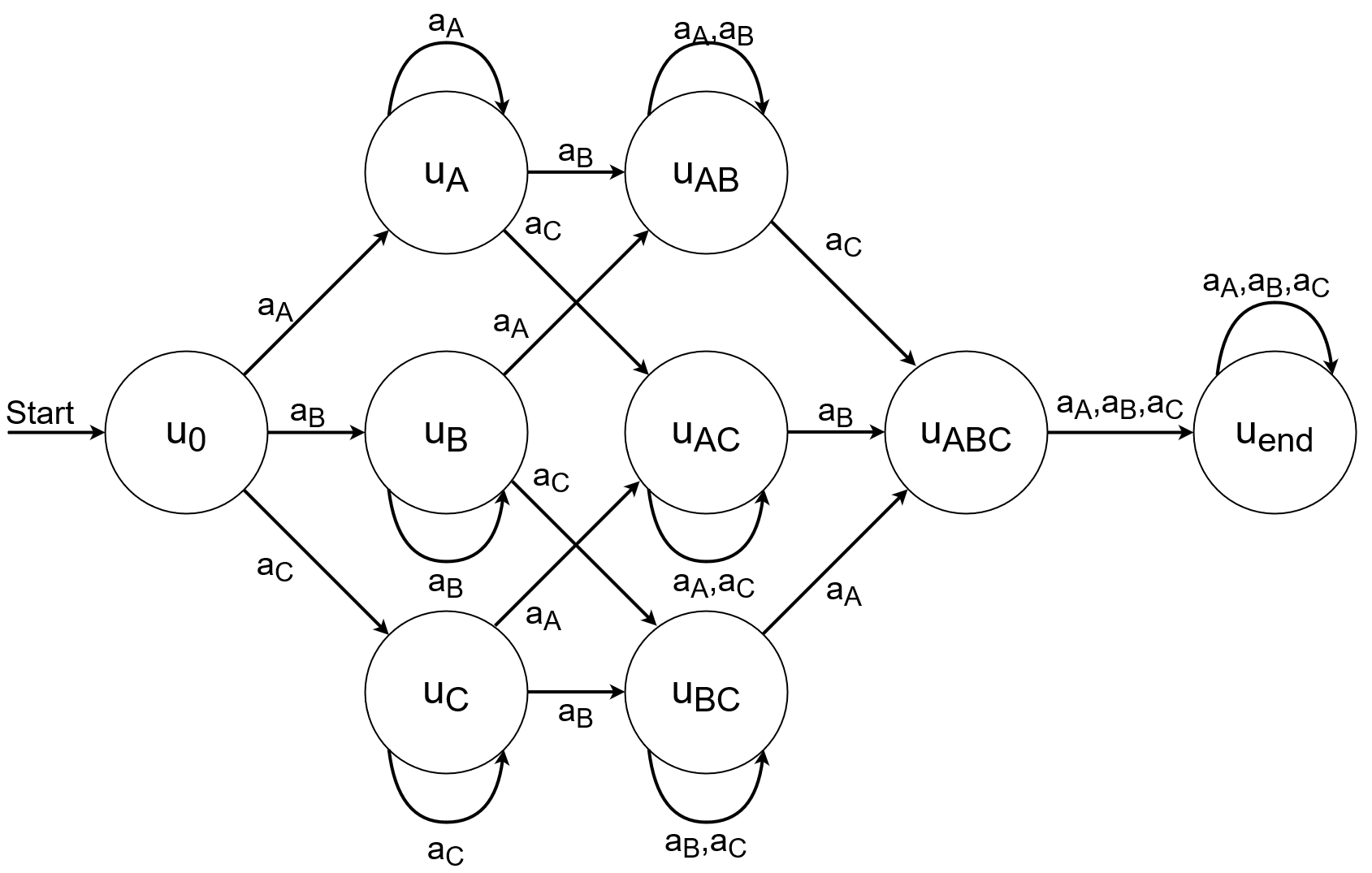}
     \caption{A reward machine which keeps track of which of the three states $s_A,s_B$ amd $s_C$ have been visited.}
     \label{fig:ONMRNRM_RM}
 \end{figure}

Intuitively, this RM specification gives reward exactly 1 in a trajectory if and only if all three actions are taken in the trajectory, because that is when machine state $u_{ABC}$ is visited. After giving a reward once, the machine enters state $u_{end}$ and can never give further rewards. The states in the reward machine keep track of which distinct actions have been taken so far (and are named accordingly). The cumulative discounted reward is 0 if any of the actions are never taken, and nonzero if all three actions are taken.

With this specification, a policy that takes all three actions with nonzero probability has probability 1 of eventually taking all three actions and receiving some nonzero discounted reward when the third distinct action is taken, while all policies that assign 0 probability to at least one action are guaranteed to receive 0 cumulative discounted reward. Therefore, a policy that takes all three actions with nonzero probability is preferred to all policies that assign 0 probability to at least one action. We can show that it is impossible to express a policy ordering that satisfies this property with ONMR. Note that much of this proof is very similar to the proof above that ONMR cannot express Regularised RL (RRL).

Suppose towards contradiction that $(\Reward,f,\gamma)$ is an ONMR specification that induces a policy ordering in which any policy that takes all three actions with nonzero probability is preferred to all policies that assign 0 probability to at least one action. Let $\Reward_A = \Reward(s_0,a_A,s_0)$, $\Reward_B = \Reward(s_0,a_B,s_0)$, and $\Reward_C = \Reward(s_0,a_C,s_0)$. Without loss of generality, we can assume that we have labeled the actions such that $\Reward_A \leq \Reward_B \leq \Reward_C$. For now, assume that $\Reward_A < \Reward_B < \Reward_C$; we will return to the equality case below. This means $\Reward_B = q\Reward_A + (1-q)\Reward_C$ for some real number $q\in (0,1)$. 

Now suppose we focus our attention on a class of policies of the following form:
\[\pi(a_B|s_0)=\theta, \pi(a_A|s_0)=(1-\theta)q, \pi(a_C|s_0)=(1-\theta)(1-q)\]
Note that an ONMR specification has fixed $q$ in advance, so the only variable is $\theta$. Now, we will show that the ONMR specification must assign the same value to all policies of this form for any $\theta \in [0,1]$. First, recall that $\J[ONMR](\pi_\theta) = f(\J[MR](\pi_\theta))$. Therefore, we can analyse $\J[MR](\pi_\theta)$ before returning to $\J[ONMR](\pi_\theta)$:
\begin{align*}
\J[MR](\pi_\theta) &= \E_\xi^{E,\pi} \left[\sum_{t=0}^\infty \gamma^t \Reward(s_t, A_t, s_{t+1})\right] \\
\J[MR](\pi_\theta) &= \gamma^0 \left(\theta \Reward_B + (1-\theta)q\Reward_A + (1-\theta)(1-q)\Reward_C\right)\\
\J[MR](\pi_\theta) &= \theta \Reward_B + (1-\theta)(q\Reward_A + (1-q)\Reward_C)\\
\J[MR](\pi_\theta) &= \theta \Reward_B + (1-\theta)\Reward_B\\
\J[MR](\pi_\theta) &= \Reward_B\\
\end{align*}
Connecting this analysis back to ONMR, we get:
\begin{align*}
\J[ONMR](\pi_\theta) &= f(\J[MR](\pi_\theta)) \\
\J[ONMR](\pi_\theta) &= f(\Reward_B)
\end{align*}

Since $\J[ONMR](\pi_\theta) = f(\Reward_B)$ for all $\theta \in [0,1]$, any ONMR specification with 3 different reward values for the three transitions must not have any preference between $\pi_{\theta=1}$ and $\pi_{\theta=0.5}$. However, the RM specification above does have a preference between these policies. $\pi_{\theta=1}$ deterministically selects $a_B$, while $\pi_{\theta=0.5}$ assigns nonzero probability to all three actions. So $\J[RM](\pi_{\theta=0.5})>\J[RM](\pi_{\theta=1})$.

The only case left to address is when the reward function $\Reward$ in the ONMR specification assigns the same reward to any two transitions. That is, $\Reward_\alpha = \Reward_\beta$ for some $\alpha,\beta \in \{A,B,C\}$, $\alpha \neq \beta$. No such ONMR specification can express this RM policy ordering either, because no such specification can distinguish between policies that assign the same total probability to $a_\alpha$ and $a_\beta$ together, but divide that probability across the two actions differently. Let $\pi_{\alpha,\epsilon}$ be a policy that assigns probability 0.5 to $a_\alpha$, probability 0 to $a_\beta$, and probability 0.5 to the third action (call it $a_\epsilon$).  Let $\pi_{\alpha,\beta,\epsilon}$ be a policy that selects each of $a_\alpha$ and $a_\beta$ with probability 0.25 and assigns probability 0.5 to $a_\epsilon$. Again, we can begin by analysing $\J[MR]$ for these policies.
\begin{align*}
    \J[MR](\pi_{\alpha,\beta,\epsilon}) &= \E_\xi^{E,\pi_{\alpha,\beta,\epsilon}} \left[\sum_{t=0}^\infty \gamma^t \Reward(s_t, A_t, s_{t+1})\right] \\
    &= \left[\sum_{t=0}^\infty \gamma^t (0.5\Reward_\epsilon + 0.25\Reward_\alpha + 0.25\Reward_\beta))\right] \\
    &= \left[\sum_{t=0}^\infty \gamma^t (0.5\Reward_\epsilon + 0.5\Reward_\alpha))\right] \\
    &= \E_\xi^{E,\pi_{\alpha,\epsilon}} \left[\sum_{t=0}^\infty \gamma^t \Reward(s_t, A_t, s_{t+1})\right] \\
    &=\J[MR](\pi_{\alpha,\epsilon}) \\
\end{align*}
Returning to the ONMR policy evaluation function and using the fact that $\J[MR](\pi_{\alpha,\beta,\epsilon}) = \J[MR](\pi_{\alpha,\epsilon})$, we get:
\begin{align*}
    \J[ONMR](\pi_{\alpha,\beta,\epsilon}) &= f(\J[MR](\pi_{\alpha,\beta,\epsilon})) \\
    &= f(\J[MR](\pi_{\alpha,\epsilon}))\\
    &= \J[ONMR](\pi_{\alpha,\epsilon}))\\
\end{align*}

So any ONMR specification that assigns the same reward to two transitions cannot express a preference between $\pi_{\alpha,\epsilon}$ and $\pi_{\alpha,\beta,\epsilon}$. Meanwhile, the RM specification above prefers $\pi_{\alpha,\beta,\epsilon}$ to $\pi_{\alpha,\epsilon}$, since $\pi_{\alpha,\beta,\epsilon}$ assigns nonzero probability to all three actions while $\pi_{\alpha,\epsilon}$ assigns probability 0 to $a_\beta$. Therefore, no ONMR specification (whether it assigns equal rewards to two transitions or not) can express the policy ordering expressed by this RM specification.
\end{proof}

\end{document}